\documentclass{new_tlp}

\usepackage{microtype}
\usepackage{amsmath}
\usepackage{amssymb}
\usepackage{amsthm}
\usepackage[vlined,linesnumbered]{algorithm2e}
\usepackage{hyperref}
\usepackage[capitalise,noabbrev]{cleveref}
\usepackage{enumitem}
\usepackage{xcolor}
\usepackage{graphicx}
\usepackage{subcaption}

\crefname{equation}{}{}
\Crefname{equation}{}{}

\makeatletter
\newcommand{\leqnomode}{\tagsleft@true}
\newcommand{\reqnomode}{\tagsleft@false}
\makeatother

\newlist{alphaenum}{enumerate}{1}
\setlist[alphaenum]{label=\upshape(\alph*)}
\crefname{alphaenumi}{Property}{Properties}
\Crefname{alphaenumi}{Property}{Properties}

\definecolor{darkgreen}{rgb}{0.00,0.40,0.13}
\newcommand\DG{\color{darkgreen}}

\newtheoremstyle{empty}{}{}{}{}{\itshape}{.}{ }{\thmnote{{\itshape#3\/}}}

\newtheoremstyle{property}{}{}{}{}{\itshape}{.}{ }{\thmname{#1} \thmnote{{\itshape#3\/}}}

\theoremstyle{plain}
\newtheorem{theorem}{Theorem}
\newtheorem{proposition}[theorem]{Proposition}
\newtheorem{lemma}[theorem]{Lemma}
\newtheorem{corollary}[theorem]{Corollary}
\newtheorem{observation}[theorem]{Observation}

\theoremstyle{definition}
\newtheorem{definition}{Definition}
\newtheorem*{definition*}{Definition}
\newtheorem{example}{Example}

\theoremstyle{remark}
\newtheorem{remark}{Remark}

\theoremstyle{empty}
\newtheorem*{proofpart}{Part}

\theoremstyle{property}
\newtheorem*{proofcase}{Case}
\newtheorem*{proofprop}{Property}

\SetKwProg{Fn}{\KwSty{function}}{}{end}
\SetKwFunction{KwGroundProgram}{Ground}
\SetKwFunction{KwGroundRule}{GroundRule}
\SetKwFunction{KwGroundComponent}{GroundComponent}
\SetKwFunction{KwGroundComponentShort}{GC}
\SetKwFunction{KwPropagate}{Propagate}
\SetKwFunction{KwAssemble}{Assemble}
\SetKwFunction{KwRewrite}{Rewrite}
\SetKwFunction{KwSelect}{Select}
\SetKwFunction{KwMatch}{match}
\SetKw{Let}{let}
\SetKw{And}{and}
\SetKw{Or}{or}
\SetKw{True}{t}
\SetKw{False}{f}
\SetKwFunction{KwMatches}{Matches}
\newcommand{\Hb}{\ensuremath{\mathcal{A}}}
\newcommand{\Matches}[3]{\ensuremath{\KwMatches^{#2}_{#1}(#3)}}
\newcommand{\Select}[2]{\ensuremath{\KwSelect_{#1}(#2)}}
\newcommand{\GroundRule}[7]{\ensuremath{\KwGroundRule^{#2, #3}_{#1, #7, #4}(#5, #6)}}
\newcommand{\GroundComponent}[3]{\ensuremath{\KwGroundComponent(#1, #2, #3)}}
 \newcommand{\GroundComponentAbbrev}[3]{\ensuremath{\KwGroundComponentShort(#1,#2,#3)}}
\newcommand{\GroundProgram}[1]{\ensuremath{\KwGroundProgram(#1)}}

\newcommand{\Assemble}[2]{\ensuremath{\KwAssemble(#1,#2)}}
\newcommand{\Propagate}[5]{\ensuremath{\KwPropagate_{#1}^{#4,#5}(#2,#3)}}
\newcommand{\Naf}{{\neg}}
\newcommand\Match[2]{\ensuremath{\KwMatch(#1,#2)}}
\newcommand{\Signature}{\ensuremath{\Sigma}}
\newcommand{\SignatureF}{\ensuremath{\mathcal{F}}}
\newcommand{\SignatureP}{\ensuremath{\mathcal{P}}}
\newcommand{\SignatureV}{\ensuremath{\mathcal{V}}}
\newcommand{\nref}[2]{\cref{#1}~\ref{#1:#2}}
\newcommand{\Nref}[2]{\Cref{#1}~\ref{#1:#2}}
\newcommand\Index{\mathbb{I}}
\newcommand\IndexAlt{\mathbb{J}}
\newcommand\Pred[1]{\ensuremath{\mathit{#1}}}
\newcommand\PredE{\ensuremath{\mathcal{E}}}

\newcommand\Reduct[2]{\ensuremath{#1^{#2}}}
\newcommand\ReductC[2]{\Reduct{{(#1)}}{#2}}
\newcommand\IDReductP[2]{\ensuremath{#1_{#2}}}
\newcommand\IDReductPC[2]{\IDReductP{{(#1)}}{#2}}
\newcommand\IDReductN[2]{\IDReductP{#1}{\overline{#2}}}
\newcommand\IDReductNC[2]{\IDReductN{{(#1)}}{#2}}
\newcommand\IDEvalP[2]{\ensuremath{\mathit{pe}_{#2}({#1})}}
\newcommand\IDEvalPC[2]{\IDEvalP{#1}{#2}}
\newcommand\IDEvalN[2]{\IDEvalP{#1}{\overline{#2}}}
\newcommand\IDEvalNC[2]{\IDEvalN{#1}{#2}}
\newcommand\Head[1]{\ensuremath{H(#1)}}
\newcommand\Body[1]{\ensuremath{B(#1)}}
\newcommand\Tuple[1]{\ensuremath{H(#1)}}
\newcommand\Cond[1]{\ensuremath{B(#1)}}
\newcommand\WeightSym{w}
\newcommand\Weight[1]{\ensuremath{\WeightSym(#1)}}
\newcommand\WeightP[1]{\ensuremath{\WeightSym^+(#1)}}
\newcommand\WeightM[1]{\ensuremath{\WeightSym^-(#1)}}
\newcommand\IDPos[1]{\ensuremath{{#1}^+}}
\newcommand\IDNeg[1]{\ensuremath{{#1}^-}}
\newcommand\IDPosNeg[1]{\ensuremath{{#1}^\pm}}
\newcommand\Predicate[1]{\ensuremath{\mathrm{pred}(#1)}}

\newcommand\ImpliesT{\text{\ implies\ }}
\newcommand\AndT{\text{\ and\ }}
\newcommand\CandT{\text{, and\ }}
\newcommand\CT{\text{,\ }}
\newcommand\ForT{\text{\ for\ }}
\newcommand\ForAllT{\text{\ for all\ }}
\newcommand\ForSomeT{\text{\ for some\ }}
\newcommand\ByT[1]{\text{\ by\ #1}}
\newcommand\BecauseT[1]{\text{\ because\ #1}}

\newcommand\ImpliesAndT{\rlap\AndT\phantom\ImpliesT}
\newcommand\StepSym{\ensuremath{T}}
\newcommand\StepO[1]{\ensuremath{\StepSym_{#1}}}
\newcommand\Step[2]{\ensuremath{\StepO{#1}(#2)}}
\newcommand\StepRO[2]{\ensuremath{\StepSym_{#1}^{#2}}}
\newcommand\StepR[3]{\ensuremath{\StepRO{#1}{#2}(#3)}}
\newcommand\StableSym{\ensuremath{S}}
\newcommand\StableO[1]{\ensuremath{\StableSym_{#1}}}
\newcommand\Stable[2]{\ensuremath{\StableO{#1}(#2)}}
\newcommand\StableRO[2]{\ensuremath{\StableSym_{#1}^{#2}}}
\newcommand\StableR[3]{\ensuremath{\StableRO{#1}{#2}(#3)}}
\newcommand\LeastM[1]{\ensuremath{\mathit{LM}(#1)}}
\newcommand\FormulasF{\ensuremath{\mathcal{F}}}
\newcommand\FormulasH{\ensuremath{\mathcal{H}}}

\newcommand\FormulasN{\ensuremath{\mathcal{N}}}
\newcommand\FormulasR{\ensuremath{\mathcal{R}}}
\newcommand\FOID{\textup{\textsc{id}}}
\newcommand\WellSym{\ensuremath{W}}
\newcommand\WellO[1]{\ensuremath{\WellSym_{#1}}}
\newcommand\Well[2]{\ensuremath{\WellO{#1}(#2)}}
\newcommand\WellM[1]{\ensuremath{\mathit{WM}(#1)}}
\newcommand\WellRO[2]{\ensuremath{\WellSym_{#1}^{#2}}}
\newcommand\WellR[3]{\ensuremath{\WellRO{#1}{#2}(#3)}}
\newcommand\WellRM[2]{\ensuremath{\mathit{WM}^{#2}(#1)}}
\newcommand\ApproxM[1]{\ensuremath{\mathit{AM}(#1)}}
\newcommand\ApproxRM[3]{\ensuremath{\mathit{AM}^{#3}_{#2}(#1)}}
\newcommand\Simp[2]{\ensuremath{#1^{#2}}}
\newcommand\Restrict[2]{\ensuremath{#1|_{#2}}}
\newcommand\TransSym{\ensuremath{\tau}}
\newcommand\TransASym{\ensuremath{\pi}} \newcommand\Trans[1]{\ensuremath{\TransSym(#1)}}
\newcommand\TransD[2]{\ensuremath{\TransSym_{#1}(#2)}}
\newcommand\TransA[1]{\ensuremath{\TransASym(#1)}}
\newcommand\TransAD[2]{\ensuremath{{\TransASym_{#1}}(#2)}}
\newcommand\GrdSym{\ensuremath{\mathrm{Inst}}} \newcommand\Grd[1]{\ensuremath{\GrdSym(#1)}}
\newcommand\GrdSimp[2]{\ensuremath{\Simp{\GrdSym}{#2}(#1)}}
\newcommand\ElemF{E}
\newcommand\ElemG{G}

\newcommand\ElemJ{D}
\newcommand\ElemC{C}
\newcommand{\justifies}{\triangleright} \newcommand{\notjustifies}{\mathrel{\not\kern-1pt\triangleright}}
\newcommand\TransRSym[1]{\ensuremath{\TransASym_{#1}}}
\newcommand\TransR[2]{\ensuremath{\TransRSym{#2}(#1)}}
\newcommand\Count{\mathrm{\#count}}
\newcommand\Sum{\mathrm{\#sum}}
\newcommand\SumP{\ensuremath{\mathrm{\#sum}^+}}
\newcommand\SumM{\ensuremath{\mathrm{\#sum}^-}}

\newcommand\Aggregate[4]{\ensuremath{#1 \{ #2 \} #3 #4}}
\newcommand\Elem[2]{\ensuremath{#1 : #2}}
\newcommand\CountA[3]{\Aggregate{\Count}{#1}{#2}{#3}}
\newcommand\SumA[3]{\Aggregate{\Sum}{#1}{#2}{#3}}
\newcommand\SumPA[3]{\Aggregate{\SumP}{#1}{#2}{#3}}
\newcommand\SumMA[3]{\Aggregate{\SumM}{#1}{#2}{#3}}

\newcommand\AggregateAtomP[3]{\ensuremath{\alpha_{#1,#2}/{#3}}}
\newcommand\AggregateAtom[3]{\ensuremath{\alpha_{#1,#2}(#3)}}
\newcommand\EmptyAtomP[3]{\ensuremath{\epsilon_{#1,#2}/{#3}}}
\newcommand\EmptyAtom[3]{\ensuremath{\epsilon_{#1,#2}(#3)}}
\newcommand\ElementAtomP[4]{\ensuremath{\eta_{#1,#2,#3}/{#4}}}
\newcommand\ElementAtom[4]{\ensuremath{\eta_{#1,#2,#3}(#4)}}
\newcommand\AggrEmpty[4]{\ensuremath{\epsilon_{#1,#2}(#3,#4)}}
\newcommand\AggrElem[4]{\ensuremath{\eta_{#1,#2}(#3,#4)}}
\newcommand\Relative{\ensuremath{C}}
\newcommand\External{\ensuremath{E}}
\newcommand\Botsym{\ensuremath{B}}
\newcommand\Topsym{\ensuremath{T}}
\newcommand\Ir[1]{\ensuremath{\mathit{#1\!\Relative}}}
\newcommand\Ia[1]{\ensuremath{\mathit{#1\!A}}}
\newcommand\Iai[2]{\ensuremath{\Ia{#1}_{#2}}}
\newcommand\Ib[1]{\ensuremath{\mathit{#1\!\Botsym}}}
\newcommand\Ie[1]{\ensuremath{\mathit{#1\!\External}}}
\newcommand\Iei[2]{\ensuremath{\Ie{#1}_{#2}}}
\newcommand\Ii[2]{\ensuremath{#1_{#2}}}
\newcommand\Ipi[2]{\ensuremath{{#1}_{#2}'}}
\newcommand\Ipr[1]{\ensuremath{\Ir{#1}'}}
\newcommand\Ipa[1]{\ensuremath{\Ia{#1}'}}
\newcommand\Iri[2]{\ensuremath{\Ir{#1}_{#2}}}
\newcommand\Irpi[2]{\ensuremath{\Ir{#1}_{#2}'}}
\newcommand\It[1]{\ensuremath{\mathit{#1\!\Topsym}}}
\newcommand\Iwhb[1]{\ensuremath{\widehat{\Ib{#1}}}}

\newcommand\Iwht[1]{\ensuremath{\widehat{\It{#1}}}}
\newcommand\Iwt[1]{\ensuremath{\widetilde{#1}}}
\newcommand\Iwtb[1]{\ensuremath{\widetilde{\Ib{#1}}}}
\newcommand\Iwte[1]{\ensuremath{\widetilde{\Ie{#1}}}}
\newcommand\Iwtt[1]{\ensuremath{\widetilde{\It{#1}}}}
\newcommand\Fa[1]{\ensuremath{{#1}^{\alpha}}}
\newcommand\Fai[2]{\ensuremath{{#1}^{\alpha}_{#2}}}
\newcommand\Fb[1]{\ensuremath{\mathit{#1\!\Botsym}}}
\newcommand\Fe[1]{\ensuremath{{#1}^{\eta}}}
\newcommand\Fei[2]{\ensuremath{{#1}^{\eta}_{#2}}}

\newcommand\Fi[2]{\ensuremath{{#1}_{#2}}}
\newcommand\Fpi[2]{\ensuremath{{#1}_{#2}'}}
\newcommand\Fr[1]{\ensuremath{\mathit{#1\!\Relative}}}

\newcommand\Fri[2]{\ensuremath{\Fr{#1}_{#2}}}

\newcommand\Ft[1]{\ensuremath{\mathit{#1\!\Topsym}}}
\newcommand\Fv[1]{\ensuremath{{#1}^{\epsilon}}}
\newcommand\Fvi[2]{\ensuremath{{#1}^{\epsilon}_{#2}}}

\newcommand\Fwhb[1]{\ensuremath{\widehat{\Fb{#1}}}}
\newcommand\Fwht[1]{\ensuremath{\widehat{\Ft{#1}}}}
\newcommand\Fwtb[1]{\ensuremath{\widetilde{\Fb{#1}}}}
\newcommand\Fwtt[1]{\ensuremath{\widetilde{\Ft{#1}}}}
\newcommand\Fsq[2]{\ensuremath{{(#1)}_{#2}}}
\newcommand\Fsqi[3]{\ensuremath{\Fsq{\Fi{#1}{#3}}{#3 \in #2}}}
\newcommand\Fsqic[3]{\ensuremath{\Fsq{\Fi{#1}{#3}}{(#3) \in #2}}}
\newcommand\Fsqti[3]{\ensuremath{\Fsq{\Trans{\Fi{#1}{#3}}}{#3 \in #2}}}
\newcommand\Fsqtic[3]{\ensuremath{\Fsq{\Trans{\Fi{#1}{#3}}}{(#3) \in #2}}}
\newcommand\OpSym{\ensuremath{O}}
\newcommand\OppSym{\ensuremath{O'}}
\newcommand\OpApp[1]{\ensuremath{\OpSym(#1)}}
\newcommand\OppApp[1]{\ensuremath{\OppSym(#1)}}

 \newcommand{\sysfont}{\textit}

\newcommand{\Gringo}{\sysfont{Gringo}}

\newcommand{\assat}{\sysfont{assat}}

\newcommand{\clasp}{\sysfont{clasp}}

\newcommand{\clingo}{\sysfont{clingo}}
\newcommand{\cmodels}{\sysfont{cmodels}}

\newcommand{\dlv}{\sysfont{dlv}}

\newcommand{\gringo}{\sysfont{gringo}}

\newcommand{\idlv}{\sysfont{idlv}}
\newcommand{\idp}{\sysfont{idp}}

\newcommand{\lparse}{\sysfont{lparse}}

\newcommand{\smodels}{\sysfont{smodels}}

\newcommand{\wasp}{\sysfont{wasp}}

\title[Foundations of Grounding in ASP]{On the Foundations of Grounding in\\Answer Set Programming}

\author[R. Kaminski and T. Schaub]{ROLAND KAMINSKI and TORSTEN SCHAUB\\
University of Potsdam\\
\email{\{kaminski,torsten\}@cs.uni-potsdam.de}}

\makeatletter
\renewenvironment{keywords}
  {\noindent\normalfont\small\rmfamily{\em \keywordsname}:}{\vspace{6.5\p@ \@plus 3\p@ \@minus 1\p@}\endtrivlist
   \vbox{\hrule \@width \hsize}}
\newcounter{Diff}
\newcounter{Start}
\def\ps@appendixheadings{\let\@mkboth\@gobbletwo
  \def\@oddhead{\setcounter{Diff}{\numexpr\value{page}-\value{Start}+2}\hfil{\itshape\@shorttitle}\hfil \llap{A-\theDiff}}\def\@evenhead{\setcounter{Diff}{\numexpr\value{page}-\value{Start}+2}\rlap{A-\theDiff}\hfil\itshape\@shortauthor\hfil}\let\@oddfoot\@empty
  \let\@evenfoot\@oddfoot
  \def\sectionmark##1{\markboth{##1}{}}\def\subsectionmark##1{\markright{##1}}}
\makeatother

\begin{document}

\maketitle

\begin{abstract}
We provide a comprehensive elaboration of the theoretical foundations of variable instantiation,
or grounding, in Answer Set Programming (ASP).
Building on the semantics of ASP's modeling language,
we introduce a formal characterization of grounding algorithms
in terms of (fixed point) operators.
A major role is played by dedicated well-founded operators whose associated models provide semantic guidance
for delineating the result of grounding along with on-the-fly simplifications.
We address an expressive class of logic programs that incorporates recursive aggregates and
thus amounts to the scope of existing ASP modeling languages.
This is accompanied with a plain algorithmic framework detailing the grounding of recursive aggregates.
The given algorithms correspond essentially to the ones used in the ASP grounder \gringo.
\end{abstract}

\begin{keywords}
answer set programming, grounding theory, grounding algorithms
\end{keywords}

\section{Introduction}\label{sec:introduction}

Answer Set Programming (ASP; \citeNP{lifschitz02a}) allows us to address knowledge-intense search and optimization problems in a greatly declarative way
due to its integrated modeling, grounding, and solving workflow~\cite{gebsch16a,kalepesc16a}.
Problems are modeled in a rule-based logical language featuring variables, function symbols, recursion, and aggregates, among others.
Moreover, the underlying semantics allows us to express defaults and reachability in an easy way.
A corresponding logic program is then turned into a propositional format by systematically replacing all variables by variable-free terms.
This process is called \emph{grounding}.
Finally, the actual ASP solver takes the resulting propositional version of the original program and computes its answer sets.

Given that both grounding and solving constitute the computational cornerstones of ASP,
it is surprising that the importance of grounding has somehow been eclipsed by that of solving.
This is nicely reflected by the unbalanced number of implementations.
With \lparse~\cite{syrjanen01a}, (the grounder in) \dlv~\cite{falepe12a}, and \gringo~\cite{gekakosc11a},
three grounder implementations face dozens of solver implementations, among them
\smodels~\cite{siniso02a},
(the solver in) \dlv~\cite{dlv03a},
\assat~\cite{linzha04a},
\cmodels~\cite{gilima06a},
\clasp~\cite{gekasc09c},
\wasp~\cite{aldoleri15a} just to name the major ones.
What caused this imbalance?
One reason may consist in the high expressiveness of ASP's modeling language and the resulting algorithmic intricacy~\cite{gekakosc11a}.
Another may lie in the popular viewpoint that grounding amounts to database materialization,
and thus that most fundamental research questions have been settled.
And finally the semantic foundations of full-featured ASP modeling languages have been established only recently~\cite{haliya14a,gehakalisc15a},
revealing the semantic gap to the just mentioned idealized understanding of grounding.
In view of this, research on grounding focused on algorithm and system design~\cite{falepe12a,gekakosc11a} and the characterization of language fragments
guaranteeing finite propositional representations~\cite{syrjanen01a,gescth07a,lielif09a,cacoiale08a}.

As a consequence, the theoretical foundations of grounding are much less explored than those of solving.
While there are several alternative ways to characterize the answer sets of a logic program~\cite{lifschitz08a}, and thus the behavior of a solver,
we still lack indepth formal characterizations of the input-output behavior of ASP grounders.
Although we can describe the resulting propositional program up to semantic equivalence,
we have no formal means to delineate the actual set of rules.

To this end, grounding involves some challenging intricacies.
First of all,
the entire set of systematically instantiated rules is infinite in the worst --- yet not uncommon --- case.
For a simple example, consider the program:
\begin{align*}
p(a) & \\
p(X) & \leftarrow p(f(X))
\end{align*}
This program induces an infinite set of variable-free terms, viz.\ $a$, $f(a)$, $f(f(a)),\dots$, that leads to an infinite propositional program by
systematically replacing variable $X$ by all these terms in the second rule, viz.\
\begin{align*}
  p(a),\quad
  p(a) \leftarrow p(f(a)),\quad
  p(f(a)) \leftarrow p(f(f(a))),\quad
  p(f(f(a))) \leftarrow p(f(f(f(a)))),\quad
  \dots
\end{align*}
On the other hand, modern grounders only produce the fact \(p(a)\) and no instances of the second rule,
which is semantically equivalent to the infinite program.
As well,
ASP's modeling language comprises (possibly recursive) aggregates,
whose systematic grounding may be infinite in itself.
To illustrate this, let us extend the above program with the rule
\begin{align}
q & \leftarrow \#\mathit{count}\{X\mathrel{:}p(X)\}=1\label{eq:count}
\end{align}
deriving \(q\) when the number of satisfied instances of \(p\) is one.
Analogous to above, the systematic instantiation of the aggregate's element results in an infinite set, viz.\
\begin{align*}
  \{
  a\mathrel{:}p(a),\
  f(a)\mathrel{:}p(f(a)),\
  f(f(a))\mathrel{:}p(f(f(a))),\
  \dots
  \}.
\end{align*}
Again, a grounder is able to reduce the rule in \cref{eq:count} to the fact \(q\) since only \(p(a)\) is obtained in our example.
That is, it detects that the set amounts to the singleton \(\{a\mathrel{:}p(a)\}\), which satisfies the aggregate.
After removing the rule's (satisfied) antecedent, it produces the fact \(q\).
In fact, a solver expects a finite set of propositional rules including aggregates over finitely many objects only.
Hence, in practice, the characterization of the grounding result boils down to identifying a finite yet semantically equivalent set of rules
(whenever possible).
Finally, in practice, grounding involves simplifications whose application depends on the ordering of rules in the input.
In fact,
shuffling a list of propositional rules only affects the order in which a solver enumerates answer sets,
whereas shuffling a logic program before grounding may lead to different though semantically equivalent sets of rules.
To see this, consider the program:
\begin{align*}
p(X) & \leftarrow \neg q(X) \wedge u(X) & u(1) \quad & u(2)            \\
q(X) & \leftarrow \neg p(X) \wedge v(X) &            & v(2) \quad v(3)
\end{align*}
This program has two answer sets; both contain $p(1)$ and $q(3)$,
while one contains $q(2)$ and the other $p(2)$.
Systematically grounding the program yields the obvious four rules.
However, depending upon the order, in which the rules are passed to a grounder,
it already produces either the fact $p(1)$ or $q(3)$ via simplification.
Clearly, all three programs are distinct but semantically equivalent in sharing the above two answer sets.

Our elaboration of the foundations of ASP grounding rests upon
the semantics of ASP's modeling language~\cite{haliya14a,gehakalisc15a},
which captures the two aforementioned sources of
infinity by associating non-ground logic programs with infinitary propositional formulas~\cite{truszczynski12a}.
Our main result shows that the stable models of a non-ground input program coincide with
the ones of the ground output program returned by our grounding algorithm upon termination.
In formal terms, this means that the stable models of the infinitary formula associated with the input program coincide
with the ones of the resulting ground program.
Clearly, the resulting program must be finite and consist of finitary subformulas only.
A major part of our work is thus dedicated to equivalence preserving transformations between ground programs.
In more detail,
we introduce a formal characterization of grounding algorithms in terms of (fixed point) operators.
A major role is played by specific well-founded operators whose associated models provide semantic guidance
for delineating the result of grounding. More precisely, we show how to obtain a finitary propositional formula capturing a logic program whenever the
corresponding well-founded model is finite,
and notably how this transfers to building a finite propositional program from an input program during grounding.
The two key instruments accomplishing this are dedicated forms of \emph{program simplification} and \emph{aggregate translation},
each addressing one of the two sources of infinity in the above example.
In practice, however, all these concepts are subject to \emph{approximation}, which leads to the order-dependence observed in the last example.

We address an expressive class of logic programs that incorporates recursive aggregates and
thus amounts to the scope of existing ASP modeling languages~\cite{PotasscoUserGuide}.
This is accompanied with an algorithmic framework detailing the grounding of recursive aggregates.
The given grounding algorithms correspond essentially to the ones used in the ASP grounder \gringo~\cite{gekakosc11a}.
In this way, our framework provides a formal characterization of one of the most widespread grounding systems.
In fact,
modern grounders like (the one in) \dlv~\cite{falepe12a} or \gringo~\cite{gekakosc11a} are based on database evaluation techniques~\cite{ullman88a,abhuvi95a}.
The instantiation of a program is seen as an iterative bottom-up process starting from the program's facts
while being guided by the accumulation of variable-free atoms possibly derivable from the rules seen so far.
During this process,
a ground rule is produced if its positive body atoms belong to the accumulated atoms,
in which case its head atom is added as well.
This process is repeated until no further such atoms can be added.
From an algorithmic perspective,
we show how a grounding framework (relying upon database evaluation techniques) can be extended to incorporate recursive aggregates.

Our paper is organized as follows.

\Cref{sec:background} lays the basic foundations of our approach.
We start in \cref{sec:operators} by recalling definitions of (monotonic) operators on lattices;
they constitute the basic building blocks of our characterization of grounding algorithms.
We then review infinitary formulas along with their stable and well-founded semantics
in \cref{sec:formulas,,sec:stable,,sec:well-founded}, respectively.
In this context, we explore several operators and define a class of infinitary logic programs that
allows us to capture full-featured ASP languages with (recursive) aggregates.
Interestingly, we have to resort to concepts borrowed from \FOID-logic~\cite{brdetr16a,truszczynski12a}
to obtain monotonic operators that are indispensable for capturing iterative algorithms.
Notably, the \FOID-well-founded model can be used for approximating regular stable models.
Finally, we define in \cref{sec:simplification} our concept of program simplification and elaborate upon its semantic properties.
The importance of program simplification can be read off two salient properties.
First, it results in a finite program whenever the interpretation used for simplification is finite.
And second, it preserves all stable models when simplified with the \FOID-well-founded model of the program.

\Cref{sec:splitting} is dedicated to the formal foundations of component-wise grounding.
As mentioned, each rule is instantiated in the context of all atoms being possibly derivable up to this point.
In addition, grounding has to take subsequent atom definitions into account.
To this end, we extend well-known operators and resulting semantic concepts with contextual information,
usually captured by two- and four-valued interpretations, respectively,
and elaborate upon their formal properties that are relevant to grounding.
In turn,
we generalize the contextual operators and semantic concepts to sequences of programs in order to
reflect component-wise grounding.
The major emerging concept is essentially a well-founded model for program sequences
that takes backward and forward contextual information into account.
We can then iteratively compute this model to approximate the well-founded model of the entire program.
This model-theoretic concept can be used for governing an ideal grounding process.

\Cref{sec:aggregates} turns to logic programs with variables and aggregates.
We align the semantics of such aggregate programs with the one of~\citeN{ferraris11a}
but consider infinitary formulas~\cite{haliya14a}.
In view of grounding aggregates, however,
we introduce an alternative translation of aggregates that is strongly equivalent to that of Ferraris
but provides more precise well-founded models.
In turn, we refine this translation to be bound by an interpretation so that it produces finitary formulas
whenever this interpretation is finite.
Together, the program simplification introduced in \cref{sec:simplification} and aggregate translation provide the basis for
turning programs with aggregates into semantically equivalent finite programs with finitary subformulas.

\Cref{sec:dependency} further refines our semantic approach to reflect actual grounding processes.
To this end, we define the concept of an instantiation sequence based on rule dependencies.
We then use the contextual operators of \cref{sec:splitting} to define approximate models of instantiation sequences.
While approximate models are in general less precise than well-founded ones, they are better suited for on-the-fly
grounding along an instantiation sequence.
Nonetheless, they are strong enough to allow for completely evaluating stratified programs.

\Cref{sec:algorithms} lays out the basic algorithms for grounding rules, components, and entire programs
and characterizes their output in terms of the semantic concepts developed in the previous sections.
Of particular interest is the treatment of aggregates, which are decomposed into dedicated normal rules before grounding,
and reassembled afterward.
This allows us to ground rules with aggregates by means of grounding algorithms for normal rules.
Finally,
we show that our grounding algorithm guarantees that
an obtained finite ground program is equivalent to the original non-ground program.

The previous sections focus on the theoretical and algorithmic cornerstones of grounding.
\Cref{sec:refinements} refines these concepts by further detailing aggregate propagation, algorithm specifics,
and the treatment of language constructs from \gringo's input language.

We relate our contributions to the state of the art in \cref{sec:related} and summarize it in \cref{sec:conclusion}.

Although the developed approach is implemented in \gringo\ series~4 and~5,
their high degree of sophistication make it hard to retrace the algorithms from \cref{sec:algorithms}.
Hence, to ease comprehensibility,
we have moreover implemented the presented approach in $\mu$-\gringo\footnote{The $\mu$-\gringo\ system is available at \url{https://github.com/potassco/mu-gringo}.\label{fn:mu-gringo}}
in a transparent way and equipped it with means for retracing the developed concepts during grounding.
This can thus be seen as the practical counterpart to the formal elaboration given below.
Also, this system may enable some readers to construct and to experiment with own grounder extensions.

This paper draws on material presented during an invited talk at the third workshop on grounding, transforming, and
modularizing theories with variables~\cite{gekasc15a}.

 \section{Foundations}\label{sec:background}

\subsection{Operators on lattices}\label{sec:operators}

This section recalls basic concepts on operators on complete lattices.

A \emph{complete lattice} is a partially ordered set \((L,\leq)\)
in which every subset \(S \subseteq L\) has a greatest lower bound
and a least upper bound in \((L,\leq)\).

An \emph{operator}~\(\OpSym\) on lattice~\((L,\leq)\) is a function from \(L\) to \(L\).
It is \emph{monotone} if \(x \leq y\) implies \(\OpApp{x} \leq \OpApp{y}\) for each \(x,y \in L\);
and it is \emph{antimonotone} if \(x \leq y\) implies \(\OpApp{y} \leq \OpApp{x}\) for each \(x,y \in L\).

Let \(\OpSym\) be an operator on lattice~\((L,\leq)\).
A \emph{prefixed point} of \(\OpSym\) is an \(x \in L\) such that \(\OpApp{x} \leq x\).
A \emph{postfixed point} of \(\OpSym\) is an \(x \in L\) such that \(x \leq \OpApp{x}\).
A \emph{fixed point of} \(\OpSym\) is an \(x \in L\) such that \(x = \OpApp{x}\), i.e., it is both a prefixed and a postfixed point.

\begin{theorem}[Knaster-Tarski; \protect\citeNP{tarski55}]\label{thm:knaster-tarski}
Let \(\OpSym\) be a monotone operator on complete lattice \((L,\leq)\).
Then, we have the following properties:
\begin{alphaenum}
\item Operator~\(\OpSym\) has a least fixed and prefixed point which are identical.\label{thm:knaster-tarski:a}
\item Operator~\(\OpSym\) has a greatest fixed and postfixed point which are identical.\label{thm:knaster-tarski:b}
\item The fixed points of~\(\OpSym\) form a complete lattice.\label{thm:knaster-tarski:c}
\end{alphaenum}
\end{theorem}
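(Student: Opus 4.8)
The plan is to handle the three parts in order, obtaining \ref{thm:knaster-tarski:b} from \ref{thm:knaster-tarski:a} by order-duality and then bootstrapping \ref{thm:knaster-tarski:c} from \ref{thm:knaster-tarski:a} applied to a suitable subinterval. For \ref{thm:knaster-tarski:a} I would work with the set $P = \{x \in L : \OpApp{x} \leq x\}$ of prefixed points, which is nonempty since the top element of $L$ belongs to it. Completeness of $(L,\leq)$ gives a greatest lower bound $a = \bigwedge P$. The first step is to show $a \in P$: for every $x \in P$ monotonicity yields $\OpApp{a} \leq \OpApp{x} \leq x$, so $\OpApp{a}$ is a lower bound of $P$ and hence $\OpApp{a} \leq a$; thus $a$ is the \emph{least} prefixed point. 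Applying $\OpSym$ once more gives $\OpApp{\OpApp{a}} \leq \OpApp{a}$, so $\OpApp{a} \in P$ and therefore $a \leq \OpApp{a}$, which combined with $\OpApp{a} \leq a$ forces $\OpApp{a} = a$. Since every fixed point is in particular a prefixed point, $a$ is at once the least fixed point and the least prefixed point.

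Part \ref{thm:knaster-tarski:b} then comes for free: $(L,\geq)$ is again a complete lattice on which $\OpSym$ is still monotone, and the greatest fixed/postfixed point claim is exactly \ref{thm:knaster-tarski:a} read in the dual order. (Equivalently, one repeats the argument verbatim with the set of postfixed points and a least upper bound.)

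The main work — and the step I expect to be the chief obstacle — is \ref{thm:knaster-tarski:c}, because the set $\mathrm{Fix}(\OpSym)$ of fixed points need not be a sublattice of $L$: the join of two fixed points computed in $L$ need not itself be fixed. The plan is therefore to exhibit suprema \emph{internally} to $\mathrm{Fix}(\OpSym)$. Given any $S \subseteq \mathrm{Fix}(\OpSym)$, set $w = \bigvee S$ in $L$; since each $s \in S$ satisfies $s = \OpApp{s} \leq \OpApp{w}$, we get $w \leq \OpApp{w}$. The key observation is that the interval $[w,\top] = \{x \in L : w \leq x\}$ is itself a complete lattice and is closed under $\OpSym$, because $w \leq x$ implies $w \leq \OpApp{w} \leq \OpApp{x}$. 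Applying \ref{thm:knaster-tarski:a} to the restriction of $\OpSym$ to $[w,\top]$ produces a least fixed point there; it is a fixed point of $\OpSym$ lying above every element of $S$, and it sits below every fixed-point upper bound of $S$ (each such bound is $\geq w$ and hence lies in $[w,\top]$), so it is the least upper bound of $S$ within $\mathrm{Fix}(\OpSym)$.

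Finally I would invoke the standard order-theoretic fact that a partially ordered set in which every subset has a least upper bound is automatically a complete lattice — infima are recovered as suprema of the sets of lower bounds, and the empty set and the whole set supply the least and greatest elements. Since the previous paragraph supplies a supremum in $\mathrm{Fix}(\OpSym)$ for \emph{every} subset $S$, this fact immediately yields that $\mathrm{Fix}(\OpSym)$ is a complete lattice, completing \ref{thm:knaster-tarski:c}. The only delicate point to verify carefully along the way is that $[w,\top]$ is genuinely complete (its infima may differ from those of $L$), but this is routine.
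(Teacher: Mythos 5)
Your proof is correct, and it is the classical argument: part \ref{thm:knaster-tarski:a} via the infimum of the prefixed points, part \ref{thm:knaster-tarski:b} by order duality, and part \ref{thm:knaster-tarski:c} by relativizing \ref{thm:knaster-tarski:a} to the interval above $\bigvee S$ — exactly the route in Tarski's original paper. Note that the paper itself offers no proof to compare against: it imports this statement as a known result with a citation to Tarski (1955), so there is no divergence to report. One small remark: the ``delicate point'' you flag at the end is even less delicate than you suggest, since for nonempty subsets of $[w,\top]$ the infima and suprema computed in $L$ already lie in the interval and agree with the interval-relative ones; only the bounds associated with the empty subset (the least and greatest elements of $[w,\top]$) differ from those of $L$.
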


\subsection{Formulas and interpretations}\label{sec:formulas}

We begin with a propositional signature~\Signature\ consisting of a set of atoms.
Following~\citeN{truszczynski12a},
we define the sets \(\FormulasF_0,\FormulasF_1, \ldots\) of formulas as follows:
\begin{itemize}
\item \(\FormulasF_0\) is the set of all propositional atoms in~\(\Signature\),
\item
\(\FormulasF_{i+1}\) is the set of all elements of \(\FormulasF_i\),
all expressions \(\FormulasH^\wedge\) and \(\FormulasH^\vee\) with \(\FormulasH \subseteq \FormulasF_i\), and
all expressions \(F \rightarrow G\) with \(F,G\in\FormulasF_i\).
\end{itemize}
The set
\(
\FormulasF = \bigcup_{i=0}^\infty\FormulasF_i
\)
contains all \emph{(infinitary propositional) formulas} over \(\Signature\).

In the following, we use the shortcuts
\begin{itemize}
\item \(\top = \emptyset^\wedge\) and \(\bot = \emptyset^\vee\),
\item \(\Naf F = F \rightarrow \bot\) where \(F\) is a formula, and
\item \(F \wedge G = {\{ F, G \}}^\wedge\) and \(F \vee G = {\{ F, G \}}^\vee\) where \(F\) and \(G\) are formulas.
\end{itemize}

We say that a formula is \emph{finitary}, if it has a finite number of subformulas.

An occurrence of a subformula in a formula is called \emph{positive},
if the number of implications containing that occurrence in the antecedent is even, and \emph{strictly positive} if that number is zero;
if that number is odd the occurrence is \emph{negative}.
The sets \(\IDPos{F}\) and \(\IDNeg{F}\) gather all atoms occurring positively or negatively in formula \(F\), respectively;
if applied to a set of formulas, both expressions stand for the union of the respective atoms in the formulas.
Also, we define \(\IDPosNeg{F}=\IDPos{F}\cup\IDNeg{F}\) as the set of all atoms occurring in \(F\).

A \emph{two-valued interpretation} over signature~\Signature\ is a set~\(I\) of propositional atoms
such that \(I \subseteq \Signature\).
Atoms in an interpretation \(I\) are considered \emph{true} and
atoms in \(\Signature \setminus I\) as \emph{false}.
The set of all interpretations together with the \(\subseteq\) relation forms a complete lattice.

The satisfaction relation between interpretations and formulas is defined as follows:
\begin{itemize}
\item \(I \models a\) for atoms \(a\) if \(a \in I\),
\item \(I \models \FormulasH^\wedge\) if \(I \models F\) for all \(F \in \FormulasH\),
\item \(I \models \FormulasH^\vee\) if \(I \models F\) for some \(F \in \FormulasH\), and
\item \(I \models F \rightarrow G\) if \(I \not\models F\) or \(I \models G\).
\end{itemize}
An interpretation~\(I\) is a \emph{model} of a set~\(\FormulasH\) of formulas,
written \(I \models \FormulasH\),
if it satisfies each formula in the set.

In the following, all atoms, formulas, and interpretations operate on the same (implicit) signature,
unless mentioned otherwise.

\subsection{Logic programs and stable models}\label{sec:stable}

Our terminology in this section keeps following the one of~\citeN{truszczynski12a}.

The \emph{reduct} \(\Reduct{F}{I}\) of a formula \(F\) w.r.t.\ an interpretation \(I\) is defined as:
\begin{itemize}
\item \(\bot\) if \(I \not\models F\),
\item \(a\) if \(I \models F\) and \(F = a\in\FormulasF_0\),
\item \({\{ \Reduct{G}{I} \mid G \in \FormulasH\}}^\wedge\) if \(I \models F\) and \(F = \FormulasH^\wedge\),
\item \({\{ \Reduct{G}{I} \mid G \in \FormulasH\}}^\vee\) if \(I \models F\) and \(F = \FormulasH^\vee\), and
\item \(\Reduct{G}{I} \rightarrow \Reduct{H}{I}\) if \(I \models F\) and \(F = G \rightarrow H\).
\end{itemize}
An interpretation \(I\) is a \emph{stable model} of a formula \(F\) if it is among the (set inclusion) minimal models of \(\Reduct{F}{I}\).

Note that the reduct removes (among other unsatisfied subformulas) all occurrences of atoms that are false in \(I\).
Thus, the satisfiability of the reduct does not depend on such atoms,
and
all minimal models of \(\Reduct{F}{I}\) are subsets of~\(I\).
Hence, if \(I\) is a stable model of \(F\), then it is the only minimal model of~\(\Reduct{F}{I}\).

Sets \(\mathcal{H}_1\) and \(\mathcal{H}_2\) of infinitary formulas are \emph{equivalent}
if they have the same stable models
and \emph{classically equivalent} if they have the same models;
they are \emph{strongly equivalent} if, for any set \(\mathcal{H}\) of infinitary formulas,
\(\mathcal{H}_1\cup\mathcal{H}\) and \(\mathcal{H}_2\cup\mathcal{H}\) are equivalent.
As shown by~\citeN{halipeva17a},
this also allows for replacing a part of any formula with a strongly equivalent formula
without changing the set of stable models.

In the following, we consider implications with atoms as consequent and formulas as antecedent.
As common in logic programming, they are referred to as rules, heads, and bodies, respectively,
and denoted by reversing the implication symbol.
More precisely,
an \emph{\(\FormulasF\)-program} is set of \emph{rules} of form
\(
h \leftarrow F
\)
where \(h \in \FormulasF_0\) and \(F\in\FormulasF\).
We use \(\Head{h \leftarrow F} = h\) to refer to rule \emph{heads}
and \(\Body{h \leftarrow F} = F\) to refer to rule \emph{bodies}.
We extend this by letting
\(\Head{P}\ = \{ \Head{r} \mid r \in P \}\) and \(\Body{P}\ = \{ \Body{r} \mid r \in P \}\)
for any program \(P\).

An interpretation \(I\) is a model of an \(\FormulasF\)-program \(P\), written \(I \models P\),
if \(I \models \Body{r} \rightarrow \Head{r}\) for all \(r \in P\).
The latter is also written as \(I \models r\).
We define the reduct of \(P\) w.r.t.\ \(I\) as
\(
\Reduct{P}{I} = \{ \Reduct{r}{I} \mid r \in P \}
\)
where \(\Reduct{r}{I} = \Head{r} \leftarrow \Reduct{\Body{r}}{I}\).
As above,
an interpretation \(I\) is a \emph{stable model} of \(P\) if \(I\) is among the minimal models of \(\Reduct{P}{I}\).
Just like the original definition of~\citeN{gellif88b},
the reduct of such programs leaves rule heads intact and only reduces rule bodies.
(This feature fits well with the various operators defined in the sequel.)

This program-oriented reduct yields the same stable models as obtained
by applying the full reduct to the corresponding infinitary formula.

\begin{proposition}\label{prp:formula-program}
Let \(P\) be an \(\FormulasF\)-program.

Then, the stable models of formula~\({\{ \Body{r} \rightarrow \Head{r} \mid r \in P \}}^\wedge\) are the same as the stable models of program~\(P\).
\end{proposition}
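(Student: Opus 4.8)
The plan is to compare the two reducts directly and reduce the whole statement to classical equivalence. Write $\FormulasG = {\{ \Body{r} \rightarrow \Head{r} \mid r \in P \}}^\wedge$ for the formula in question. First I would record the immediate observation that $I \models \FormulasG$ iff $I \models P$: by definition $I \models \FormulasG$ means $I \models \Body{r} \rightarrow \Head{r}$ for every $r \in P$, which is exactly the definition of $I \models P$. This lets me split the argument according to whether $I$ is a (classical) model.

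For the case $I \not\models \FormulasG$, I would argue that $I$ is a stable model of neither. On the formula side, $\Reduct{\FormulasG}{I} = \bot$ by the first clause of the reduct, and $\bot$ has no model, so $I$ cannot be a minimal model of $\Reduct{\FormulasG}{I}$. On the program side, $I \not\models P$ yields a rule $r$ with $I \models \Body{r}$ and $\Head{r} \notin I$; since $I \models \Body{r}$ entails $I \models \Reduct{\Body{r}}{I}$ (a standard property of the reduct, namely $I \models F$ iff $I \models \Reduct{F}{I}$), the reduced rule $\Head{r} \leftarrow \Reduct{\Body{r}}{I}$ is violated by $I$, so $I \not\models \Reduct{P}{I}$ and $I$ is not stable for $P$ either.

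The core case is $I \models \FormulasG$ (equivalently $I \models P$), where I would show that $\Reduct{\FormulasG}{I}$ and $\Reduct{P}{I}$ are classically equivalent, so that they share the same models and hence the same minimal models. Unfolding the reduct of a conjunction gives $\Reduct{\FormulasG}{I} = {\{ \Reduct{(\Body{r} \rightarrow \Head{r})}{I} \mid r \in P \}}^\wedge$, and since $I \models \Body{r} \rightarrow \Head{r}$ the implication clause yields $\Reduct{(\Body{r} \rightarrow \Head{r})}{I} = \Reduct{\Body{r}}{I} \rightarrow \Reduct{\Head{r}}{I}$. Meanwhile $\Reduct{P}{I}$ contributes the rule $\Head{r} \leftarrow \Reduct{\Body{r}}{I}$, i.e. the implication $\Reduct{\Body{r}}{I} \rightarrow \Head{r}$. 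So the two differ per rule only in the consequent: the reduced head $\Reduct{\Head{r}}{I}$ versus the untouched head $\Head{r}$.

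Here is the key observation that reconciles them, which I expect to be the only delicate point. If $\Head{r} \in I$, then $\Reduct{\Head{r}}{I} = \Head{r}$ and the two implications are literally identical. If $\Head{r} \notin I$, then from $I \models \Body{r} \rightarrow \Head{r}$ I get $I \not\models \Body{r}$, whence $\Reduct{\Body{r}}{I} = \bot$; both implications then have antecedent $\bot$ and are therefore each classically equivalent to $\top$. Thus $\Reduct{\FormulasG}{I}$ and $\Reduct{P}{I}$ are classically equivalent, so $I$ is a minimal model of one iff it is a minimal model of the other, and the claim follows. The subtlety is precisely that a false head forces the body to be unsatisfied and hence reduced to $\bot$, which is exactly what lets the head-preserving program reduct agree with the full formula reduct.
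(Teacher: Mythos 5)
Your proof is correct and follows essentially the same route as the paper's: split on whether \(I\) is a classical model of the program, show that neither side admits \(I\) as a stable model when it is not (on the formula side because the reduct collapses to \(\bot\), on the program side via the violated reduced rule), and show that \(\Reduct{P}{I}\) and \(\Reduct{\FormulasG}{I}\) are classically equivalent when it is. The only difference is that where the paper invokes Lemma~1 of \citeN{truszczynski12a} for this classical equivalence, you prove it inline via the per-rule case analysis on whether \(\Head{r} \in I\) — which is exactly the content of that lemma in this special case, so your argument is simply a self-contained version of the same proof.
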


For programs,
\citeN{truszczynski12a} introduces in an alternative reduct,
replacing each negatively occurring atom with \(\bot\), if it is falsified,
and with \(\top\), otherwise.
More precisely,
the so-called \emph{\FOID-reduct} \IDReductP{F}{I} of a formula \(F\) w.r.t.\ an interpretation \(I\) is defined as
\begin{align*}
\IDReductP{a}{I}                     & = a                                                       &
\IDReductN{a}{I}                     & = \top \text{\ if } a \in I                               \\
                                     &                                                           &
\IDReductN{a}{I}                     & = \bot \text{\ if } a \notin I                            \\
\IDReductP{\FormulasH^{\wedge}}{I}   & = {\{ \IDReductP{F}{I} \mid F \in \FormulasH \}}^\wedge   &
\IDReductN{\FormulasH^{\wedge}}{I}   & = {\{ \IDReductN{F}{I} \mid F \in \FormulasH \}}^\wedge   \\
\IDReductP{\FormulasH^{\vee}}{I}     & = {\{ \IDReductP{F}{I} \mid F \in \FormulasH \}}^\vee     &
\IDReductN{\FormulasH^{\vee}}{I}     & = {\{ \IDReductN{F}{I} \mid F \in \FormulasH \}}^\vee     \\
\IDReductPC{F \rightarrow G}{I}      & = \IDReductN{F}{I} \rightarrow \IDReductP{G}{I}           &
\IDReductNC{F \rightarrow G}{I}      & = \IDReductP{F}{I} \rightarrow \IDReductN{G}{I}
\end{align*}
where \(a\) is an atom, \(\FormulasH\) a set of formulas, and \(F\) and \(G\) are formulas.

The \FOID-reduct of an \(\FormulasF\)-program~\(P\) w.r.t.\ an interpretation~\(I\) is
\(\IDReductP{P}{I} = \{ \IDReductP{r}{I} \mid r \in P \}\) where
\(\IDReductP{r}{I} = \Head{r} \leftarrow \IDReductP{\Body{r}}{I}\).
As with \(\Reduct{r}{I}\),
the transformation of \(r\) into \IDReductP{r}{I} leaves the head of \(r\) unaffected.

\begin{example}\label{ex:reduct}
Consider the program containing the single rule
\begin{align*}
p & \leftarrow \Naf \Naf p.
\end{align*}
We get the following reduced programs w.r.t.\ interpretations \(\emptyset\) and \(\{p\}\):
\begin{align*}
  \Reduct{\{p  \leftarrow \Naf \Naf p \}}{\emptyset}&=\{p  \leftarrow \bot\}                 &
  \Reduct{\{p  \leftarrow \Naf \Naf p \}}{\{p\}}&=\{p  \leftarrow \Naf \bot\}\\
  \IDReductP{\{p  \leftarrow \Naf \Naf p \}}{\emptyset}&=\{p  \leftarrow \Naf \Naf p\}\qquad=&
  \IDReductP{\{p  \leftarrow \Naf \Naf p \}}{\{p\}}&=\{p  \leftarrow \Naf \Naf p\}
\end{align*}
Note that both reducts leave the rule's head intact.
\end{example}

Extending the definition of positive occurrences,
we define
a formula as \emph{(strictly) positive} if all its atoms occur (strictly) positively in the formula.
We define an \(\FormulasF\)-program as (strictly) positive if all its rule bodies are (strictly) positive.

For example, the program in \cref{ex:reduct} is positive but not strictly positive
because the only body atom \(p\) appears in the scope of two antecedents within the rule body \(\Naf \Naf p\).

As put forward by~\citeN{emdkow76a},
we may associate with each program \(P\) its \emph{one-step provability operator} \(\StepO{P}\),
defined for any interpretation \(X\) as
\begin{align*}
  \Step{P}{X} &= \{ \Head{r} \mid r \in P, X \models \Body{r} \}.
\end{align*}

\begin{proposition}[\protect\citeNP{truszczynski12a}]\label{prop:step-monotone}
Let \(P\) be a positive \(\FormulasF\)-program.

Then, the operator \(\StepO{P}\) is monotone.
\end{proposition}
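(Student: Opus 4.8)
The plan is to reduce the claim to a single semantic fact—that satisfaction of a positive formula is monotone in the interpretation—and then to prove that fact by structural induction. Since the underlying lattice orders interpretations by \(\subseteq\), monotonicity of \(\StepO{P}\) means that \(X \subseteq Y\) implies \(\Step{P}{X} \subseteq \Step{P}{Y}\). So I would fix \(X \subseteq Y\) and an arbitrary \(h \in \Step{P}{X}\); by definition there is a rule \(r \in P\) with \(\Head{r} = h\) and \(X \models \Body{r}\), and it suffices to establish \(Y \models \Body{r}\), since this places \(h\) in \(\Step{P}{Y}\). As \(P\) is positive, \(\Body{r}\) is a positive formula, so everything reduces to the following claim: for every positive formula \(F\) and all \(X \subseteq Y\), \(X \models F\) implies \(Y \models F\).

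The difficulty is that being positive is a condition on the polarity of atom occurrences, and this polarity flips inside the antecedent of an implication; hence a plain induction ``over positive formulas'' fails at the implication case, where the antecedent is no longer positive. To obtain a self-supporting induction I would prove two dual statements simultaneously by structural induction on \(F\): (A) if every atom occurs positively in \(F\), then \(X \models F\) implies \(Y \models F\); and (B) if every atom occurs negatively in \(F\), then \(Y \models F\) implies \(X \models F\). The atomic case uses \(X \subseteq Y\) directly for (A), while (B) is vacuous since a lone atom occurs positively; the cases \(F = \FormulasH^\wedge\) and \(F = \FormulasH^\vee\) are routine, because every member of \(\FormulasH\) inherits the polarity of the whole conjunction or disjunction, and one simply threads the induction hypothesis through the ``for all''/``for some'' quantifier.

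The one delicate case is \(F = G \to H\). When \(F\) is positive, \(H\) occurs positively and \(G\) occurs negatively in \(F\); peeling off the outer implication shifts the antecedent count of every atom of \(G\) by one, so as standalone formulas \(H\) is positive and \(G\) is negative. From \(X \models F\) I get either \(X \models H\), whence \(Y \models H\) by the induction hypothesis (A) on \(H\), or \(X \not\models G\), whence \(Y \not\models G\) by the contrapositive of the induction hypothesis (B) on \(G\); either way \(Y \models F\), proving (A), and statement (B) follows by the symmetric argument with the roles of (A) and (B) interchanged. Since (A) is exactly the claim needed above, this finishes the proof. I expect the only genuinely error-prone point to be the polarity bookkeeping in this implication case—verifying that a negatively occurring subformula of a positive formula is itself a negative formula—while all remaining steps are mechanical.
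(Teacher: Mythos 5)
Your proof is correct and takes essentially the same route as the paper: the paper cites this proposition from Truszczy\'{n}ski (2012), and the semantic fact you reduce it to is precisely the paper's \cref{lem:monotone-positive}, which the paper likewise disposes of by induction on formulas. The only difference is one of detail—you make explicit the simultaneous dual statement for negatively occurring atoms needed to get through the implication case, which is exactly the polarity bookkeeping that the paper's one-line ``induction over the rank of the formula'' sketch leaves implicit (and which it spells out elsewhere, e.g.\ in the proof of \cref{lem:foid-basic}).
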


Fixed points of \(\StepO{P}\) are models of \(P\) guaranteeing that each contained atom is supported by some rule in \(P\);
prefixed points of \(\StepO{P}\) correspond to the models of \(P\).
According to~\Nref{thm:knaster-tarski}{a}, the \(\StepO{P}\) operator has a least fixed point
for positive \(\FormulasF\)-programs.
We refer to this fixed point as the \emph{least model} of \(P\),
and write it as \(\LeastM{P}\).

Observing that the \FOID-reduct replaces all negative occurrences of atoms,
any \FOID-reduct \IDReductP{P}{I} of a program w.r.t.\ an interpretation \(I\)
is positive and thus possesses a least model \LeastM{\IDReductP{P}{I}}.
This gives rise to the following definition of a stable operator~\cite{truszczynski12a}:
Given an \(\FormulasF\)-program \(P\),
its \emph{\FOID-stable operator} is defined for any interpretation \(I\) as
\[
  \Stable{P}{I} = \LeastM{\IDReductP{P}{I}}.
\]
The fixed points of \(\StableO{P}\) are the \emph{\FOID-stable models} of \(P\).

Note that neither the program reduct \Reduct{P}{I} nor the formula reduct \Reduct{F}{I} guarantee (least) models.
Also, stable models and \FOID-stable models do not coincide in general.
\begin{example}\label{ex:stable}
Reconsider the program from \Cref{ex:reduct}, comprising rule
\begin{align*}
p & \leftarrow \Naf \Naf p.
\end{align*}
This program has the two stable models \(\emptyset\) and \(\{p\}\),
but the empty model is the only \FOID-stable model.
\end{example}

\begin{proposition}[\protect\citeNP{truszczynski12a}]\label{lem:stable:operator:antimonotone}
Let \(P\) be an \FormulasF-program.

Then, the \FOID-stable operator~\(\StableO{P}\) is antimonotone.
\end{proposition}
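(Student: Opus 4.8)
The plan is to unfold the definition $\Stable{P}{I} = \LeastM{\IDReductP{P}{I}}$ and reduce antimonotonicity of $\StableO{P}$ to a corresponding antimonotonicity property of the \FOID-reduct itself. Fix interpretations with $I \subseteq J$; the goal is to show $\Stable{P}{J} \subseteq \Stable{P}{I}$. Since every \FOID-reduct is a positive program, \cref{prop:step-monotone} together with \Nref{thm:knaster-tarski}{a} guarantees that $\LeastM{\IDReductP{P}{I}}$ and $\LeastM{\IDReductP{P}{J}}$ exist as least fixed, and hence least prefixed, points of the respective one-step provability operators. Because the prefixed points of $\StepO{\cdot}$ are exactly the models, the least model of a positive program is contained in every model of that program. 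Therefore it suffices to show that $M := \LeastM{\IDReductP{P}{I}}$ is a model of $\IDReductP{P}{J}$; this immediately yields $\Stable{P}{J} = \LeastM{\IDReductP{P}{J}} \subseteq M = \Stable{P}{I}$.

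The technical core is a lemma asserting that enlarging the interpretation weakens the positive reduct and strengthens the negative one. Concretely, I would prove by simultaneous structural induction on a formula $F$ that, for all interpretations $X$ and all $I \subseteq J$, we have (P) $X \models \IDReductP{F}{J}$ implies $X \models \IDReductP{F}{I}$, and (N) $X \models \IDReductN{F}{I}$ implies $X \models \IDReductN{F}{J}$. The atomic case is exactly where $I \subseteq J$ enters: the positive reduct of an atom is the atom itself and thus independent of the interpretation, while the negative reduct of $a$ equals $\top$ precisely when $a$ lies in the interpretation, so passing from $I$ to $J$ can only turn $\bot$ into $\top$, which establishes (N). The conjunction and disjunction cases follow by distributing the two induction hypotheses over the set of conjuncts respectively disjuncts.

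The implication case is the crux and the only step requiring care, because the \FOID-reduct swaps the polarity applied to the antecedent, as in $\IDReductPC{G \to H}{I} = \IDReductN{G}{I} \to \IDReductP{H}{I}$ and $\IDReductNC{G \to H}{I} = \IDReductP{G}{I} \to \IDReductN{H}{I}$. For (P) I would split on the two ways $\IDReductN{G}{J} \to \IDReductP{H}{J}$ can be satisfied by $X$: if the consequent $\IDReductP{H}{J}$ holds, then (P) applied to $H$ gives $X \models \IDReductP{H}{I}$; if instead the antecedent $\IDReductN{G}{J}$ fails, then the contrapositive of (N) applied to $G$ gives that $\IDReductN{G}{I}$ fails as well. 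In either subcase $X \models \IDReductN{G}{I} \to \IDReductP{H}{I}$. The argument for (N) is symmetric, now using (N) on $H$ and the contrapositive of (P) on $G$. Keeping these polarity flips and the attendant contrapositives straight is the main obstacle; everything else is routine.

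Finally, to close the argument I would apply (P) rule by rule. Each rule of $\IDReductP{P}{J}$ has the form $\Head{r} \leftarrow \IDReductP{\Body{r}}{J}$. Suppose $M \models \IDReductP{\Body{r}}{J}$; then (P) yields $M \models \IDReductP{\Body{r}}{I}$, and since $M$ is a model of the corresponding rule $\Head{r} \leftarrow \IDReductP{\Body{r}}{I}$ of $\IDReductP{P}{I}$, we obtain $\Head{r} \in M$. Hence $M$ satisfies every rule of $\IDReductP{P}{J}$, so $M$ is a model of $\IDReductP{P}{J}$, which by the reduction described above completes the proof.
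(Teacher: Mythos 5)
Your proof is correct, and it follows the same route as the machinery in the paper. The paper itself defers to Truszczy\'nski for this proposition rather than spelling out a proof, but your central lemma—the simultaneous induction showing that enlarging the interpretation weakens the positive reduct and strengthens the negative one, with the polarity-swapping implication case handled via the contrapositive of the companion claim—is exactly \nref{lem:foid-basic}{d}, proved in the appendix by precisely this induction; your concluding step, that the least model of the positive program \(\IDReductP{P}{I}\) is a model of \(\IDReductP{P}{J}\) and hence contains \(\LeastM{\IDReductP{P}{J}}\), is the same routine use of \cref{prop:step-monotone} and \nref{thm:knaster-tarski}{a} that the paper employs elsewhere.
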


No analogous antimonotone operator is obtainable for \FormulasF-programs by using the program reduct \Reduct{P}{I}
(and for general theories with the formula reduct \Reduct{F}{I}).
To see this, reconsider Example~\ref{ex:stable} along with its two stable models \(\emptyset\) and \(\{p\}\).
Given that both had to be fixed points of such an operator, it would behave monotonically on \(\emptyset\) and \(\{p\}\).

In view of this, we henceforth consider exclusively \FOID-stable operators and drop the prefix `\FOID'.
However, we keep the distinction between stable and \FOID-stable models.

\citeN{truszczynski12a} identifies in a class of programs for which stable models and \FOID-stable models coincide.
The set \(\FormulasN\) consists of all formulas \(F\) such that any implication in \(F\) has \(\bot\) as consequent and
no occurrences of implications in its antecedent.
An \(\FormulasN\)-program consists of rules of form \( h \leftarrow F \) where \(h \in \FormulasF_0\) and \(F \in \FormulasN\).

\begin{proposition}[\protect\citeNP{truszczynski12a}]
Let \(P\) be an \(\FormulasN\)-program.

Then, the stable and \FOID-stable models of \(P\) coincide.
\end{proposition}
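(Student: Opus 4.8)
The plan is to isolate a single comparison lemma between the standard reduct and the \FOID-reduct on rule bodies, and then to read off both inclusions from the definitions. The key lemma I would prove is: for every body \(F \in \FormulasN\), every interpretation \(I\), and every \(X \subseteq I\), we have \(X \models \Reduct{F}{I}\) if and only if \(X \models \IDReductP{F}{I}\). The restriction to \(X \subseteq I\) is essential, since the two reducts genuinely disagree in general — the standard reduct collapses every \(I\)-unsatisfied subformula to \(\bot\) and strips \(I\)-false positive atoms, whereas the \FOID-reduct keeps positive atoms and rewrites only negative ones. Before the induction I would record two auxiliary facts holding for arbitrary formulas: (i) satisfaction of a positive formula is monotone in the set of true atoms, and (ii) \(I \models \IDReductP{F}{I}\) iff \(I \models F\) iff \(I \models \IDReductN{F}{I}\), both by a routine induction. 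Since \(\IDReductP{F}{I}\) is positive, (i) and (ii) show that \(X \models \IDReductP{F}{I}\) with \(X \subseteq I\) forces \(I \models F\); as \(\Reduct{F}{I} = \bot\) whenever \(I \not\models F\), this disposes of the lemma in the case \(I \not\models F\), where both sides are unsatisfiable below \(I\).

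It then remains to run the structural induction on \(F \in \FormulasN\) under the assumption \(I \models F\). The atom and the conjunction case \(F = \FormulasH^\wedge\) are immediate from the induction hypothesis. In the disjunction case \(F = \FormulasH^\vee\), the disjuncts \(G\) with \(I \not\models G\) contribute nothing to either side — on the standard side \(\Reduct{G}{I} = \bot\), and on the \FOID side \(X \not\models \IDReductP{G}{I}\) for \(X \subseteq I\) by the auxiliary facts — so both sides reduce to the disjunction over the \(I\)-satisfied \(G\), where the induction hypothesis applies. The decisive case is the implication \(F = G \rightarrow \bot\): here the defining restriction of \(\FormulasN\) guarantees that \(G\) is implication-free, so \(\IDReductN{G}{I}\) contains no atoms, is therefore a constant formula, and equals \(\bot\) precisely because \(I \not\models G\) (which follows from \(I \models F\)); hence \(\IDReductP{F}{I}\) and \(\Reduct{F}{I}\) both collapse to \(\top\) and are satisfied by every \(X\). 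I expect this implication case to be the \textbf{main obstacle}, and it is exactly the point where the hypothesis \(F \in \FormulasN\), forbidding nested and non-\(\bot\) implications, is genuinely used.

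Finally I would lift the lemma from bodies to programs and conclude. Since both reducts leave heads intact, the lemma gives \(X \models \Reduct{P}{I}\) iff \(X \models \IDReductP{P}{I}\) for all \(X \subseteq I\), and \(\IDReductP{P}{I}\) being positive it has a least model \(\LeastM{\IDReductP{P}{I}}\). If \(I = \LeastM{\IDReductP{P}{I}}\), then \(I \models \IDReductP{P}{I}\), so \(I \models \Reduct{P}{I}\); any model \(X \subsetneq I\) of \(\Reduct{P}{I}\) would, being below \(I\), satisfy \(\IDReductP{P}{I}\) and hence contain the least model \(I\), which is impossible, so \(I\) is a minimal model of \(\Reduct{P}{I}\), i.e.\ a stable model. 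Conversely, if \(I\) is a minimal model of \(\Reduct{P}{I}\), then \(I \models \IDReductP{P}{I}\) (lemma with \(X = I\)), whence \(\LeastM{\IDReductP{P}{I}} \subseteq I\); this least model, lying below \(I\), also satisfies \(\Reduct{P}{I}\) by the lemma, and minimality of \(I\) forces equality, so \(I = \LeastM{\IDReductP{P}{I}} = \Stable{P}{I}\) is \FOID-stable. Thus the stable and \FOID-stable models of \(P\) coincide.
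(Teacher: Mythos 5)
Your proof is correct, but there is nothing in the paper to compare it against: the paper states this proposition as an imported result of Truszczyński (2012) and provides no proof of it in its appendix, so you have supplied what the paper delegates to a citation. Judged on its own terms, your argument is sound and, importantly, it leans only on facts the paper does establish elsewhere: positivity of \(\IDReductP{F}{I}\) and the equivalence \(I \models F\) iff \(I \models \IDReductP{F}{I}\) (\cref{lem:foid-basic}), monotonicity of satisfaction for positive formulas (\cref{lem:monotone-positive}), and the Knaster--Tarski identification of the least model of a positive program with its least prefixed point (\cref{thm:knaster-tarski}), which is what licenses both of your concluding steps of the form ``every model of \(\IDReductP{P}{I}\) contains its least model.'' The heart of the matter is indeed the implication case of your comparison lemma: for \(F = G \rightarrow \bot\) with \(G\) implication-free, \(\IDReductN{G}{I}\) is a closed constant formula whose value is the truth value of \(G\) under \(I\), so when \(I \models F\) both \(\Reduct{F}{I}\) and \(\IDReductP{F}{I}\) collapse to tautologies. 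Your claim that this is precisely where the \(\FormulasN\)-restriction is needed can be sharpened by a one-line counterexample: for \(F = b \rightarrow a\) (allowed in \(\FormulasR\) but not in \(\FormulasN\), since the consequent is not \(\bot\)), taking \(I = \{a,b\}\) and \(X = \emptyset\) gives \(X \models \Reduct{F}{I}\) (which is \(b \rightarrow a\)) but \(X \not\models \IDReductP{F}{I}\) (which is \(\top \rightarrow a\)), so the lemma already fails one level up in \(\FormulasR\); this is consistent with the paper's \cref{ex:r-program}, where an \(\FormulasR\)-program has a stable model but no \FOID-stable model. In short, your route buys the paper self-containedness at the cost of roughly a page; the citation buys brevity.
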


Note that a positive \(\FormulasN\)-program is also strictly positive.

\subsection{Well-founded models}\label{sec:well-founded}

Our terminology in this section follows the one of~\citeN{truszczynski18a} and traces back to the early work of~\citeN{belnap77a} and~\citeN{fitting02a}.\footnote{The interested reader is referred to the tutorial by~\citeN{truszczynski18a} for further details.}

We deal with pairs of sets and extend the basic set relations and operations accordingly.
Given sets \(I'\), \(I\), \(J'\), \(J\), and \(X\), we define:
\begin{itemize}
\item \((I',J') \mathrel{\bar{\prec}} (I,J)\) if \(I' \prec I\) and \(J' \prec J\) \hfill for \((\bar{\prec}, \prec) \in \{ (\sqsubset, \subset), (\sqsubseteq, \subseteq) \}\)
\item \((I',J') \mathrel{\bar{\circ}} (I,J) = (I' \circ I, J' \circ J)\)           \hfill for \((\bar{\circ}, \circ) \in \{ (\sqcup, \cup), (\sqcap, \cap), (\smallsetminus,\setminus) \}\)
\item \((I,J) \mathrel{\bar{\circ}} X = (I,J) \mathrel{\bar{\circ}} (X,X)\)        \hfill for \(\bar{\circ} \in \{ \sqcup, \sqcap, \smallsetminus \}\)
\end{itemize}

A \emph{four-valued interpretation} over signature~\Signature\ is represented by a pair \((I,J) \sqsubseteq (\Signature, \Signature)\)
where \(I\) stands for \emph{certain} and \(J\) for \emph{possible} atoms.
Intuitively, an atom that is
\begin{itemize}
\item certain and possible is \emph{true},
\item certain but not possible is \emph{inconsistent},
\item not certain but possible is \emph{unknown}, and
\item not certain and not possible is \emph{false}.
\end{itemize}

A four-valued interpretation \((I',J')\) is more precise than a four-valued interpretation \((I,J)\), written \((I,J) \leq_p (I',J')\), if \(I \subseteq I'\) and \(J' \subseteq J\).
The precision ordering also has an intuitive reading: the more atoms are certain or the fewer atoms are possible, the more precise is an interpretation.
The least precise four-valued interpretation over~\Signature\ is \((\emptyset,\Signature)\).
As with two-valued interpretations,
the set of all four-valued interpretations over a signature~\Signature\ together with the relation \(\leq_p\) forms a complete lattice.
A four-valued interpretation is called \emph{inconsistent} if it contains an inconsistent atom; otherwise, it is called \emph{consistent}.
It is \emph{total} whenever it makes all atoms either true or false.
Finally, \((I,J)\) is called \emph{finite} whenever both \(I\) and \(J\) are finite.

Following~\citeN{truszczynski18a},
we define the \emph{\FOID-well-founded operator} of
an \(\FormulasF\)-program \(P\)
for any four-valued interpretation \((I,J)\) as
\[
  \Well{P}{I,J} = (\Stable{P}{J}, \Stable{P}{I}).
\]
This operator is monotone w.r.t.\ the precision ordering \(\leq_p\).
Hence, by~\Nref{thm:knaster-tarski}{a}, \(\WellO{P}\) has a least fixed point,
which defines the \emph{\FOID-well-founded model} of \(P\), also written as \(\WellM{P}\).
In what follows,
we drop the prefix `\FOID' and simply refer to the \FOID-well-founded model of a program as its well-founded model.
(We keep the distinction between stable and \FOID-stable models.)

Any well-founded model \((I,J)\) of an \(\FormulasF\)-program~\(P\) satisfies \(I \subseteq J\).

\begin{lemma}\label{lem:well-founded:consistent}
Let \(P\) be an \(\FormulasF\)-Program.

Then, the well-founded model \(\WellM{P}\) of \(P\) is consistent.
\end{lemma}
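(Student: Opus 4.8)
The plan is to show that the least fixed point $\WellM{P}$ of the monotone operator $\WellO{P}$ is consistent, where consistency of a four-valued interpretation $(I,J)$ means precisely $I \subseteq J$. The crucial observation is that $\WellO{P}$ preserves consistency: if $I \subseteq J$, then antimonotonicity of the stable operator (\cref{lem:stable:operator:antimonotone}) yields $\Stable{P}{J} \subseteq \Stable{P}{I}$, so the two components of $\Well{P}{I,J} = (\Stable{P}{J}, \Stable{P}{I})$ again satisfy the required inclusion. Thus consistency is an invariant of $\WellO{P}$, and the successor step of the induction below reduces immediately to this fact.

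To lift this invariance to the least fixed point, I would compute $\WellM{P}$ by the usual transfinite iteration of the monotone operator $\WellO{P}$ from the least element $(\emptyset,\Signature)$ of the precision lattice, which converges to the least fixed point guaranteed by \cref{thm:knaster-tarski}. Writing $W_\alpha = (I_\alpha,J_\alpha)$ with $W_0 = (\emptyset,\Signature)$, $W_{\alpha+1} = \Well{P}{W_\alpha}$, and, at limit ordinals, $W_\lambda = \bigl(\bigcup_{\alpha<\lambda} I_\alpha,\, \bigcap_{\alpha<\lambda} J_\alpha\bigr)$ for the $\leq_p$-least upper bound of the $\leq_p$-increasing chain, I would prove by transfinite induction that every $W_\alpha$ is consistent. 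The base case holds since $\emptyset \subseteq \Signature$, and the successor case is exactly the invariance just established.

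The main obstacle is the limit case, where one must check that consistency survives passage to the supremum; this is where the two opposite monotonicities of the precision ordering interact. I would exploit that along a $\leq_p$-increasing chain the certain parts grow ($I_\alpha \subseteq I_\beta$) while the possible parts shrink ($J_\beta \subseteq J_\alpha$) for $\alpha \leq \beta$. Given $a \in \bigcup_{\alpha<\lambda} I_\alpha$, say $a \in I_{\alpha_0}$, I would verify $a \in J_\beta$ for every $\beta < \lambda$ by a case split: if $\beta \geq \alpha_0$, then $a \in I_{\alpha_0} \subseteq I_\beta \subseteq J_\beta$ using monotonicity of the chain together with the inductive consistency of $W_\beta$; if $\beta < \alpha_0$, then $a \in I_{\alpha_0} \subseteq J_{\alpha_0} \subseteq J_\beta$. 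Hence $\bigcup_{\alpha<\lambda} I_\alpha \subseteq \bigcap_{\alpha<\lambda} J_\beta$, so $W_\lambda$ is consistent.

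Concluding, every iterate $W_\alpha$ is consistent, and since $\WellM{P}$ coincides with the iterate at which the sequence stabilizes, it is consistent as well. I expect no difficulty beyond the bookkeeping of the limit step; everything else follows directly from \cref{lem:stable:operator:antimonotone} and the monotone-iteration characterization of the least fixed point.
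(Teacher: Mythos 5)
Your proof is correct, but it takes a different route from the paper: the paper disposes of this lemma by citing Proposition~14 of \citeN{dematr00a}, observing that \(\WellO{P}\) is a monotone \emph{symmetric} operator, and that the cited result in fact shows such an operator maps any consistent four-valued interpretation to a consistent one. You instead prove everything from scratch, and your two ingredients are exactly what the citation packages up: first, consistency preservation, which you derive directly from the antimonotonicity of \(\StableO{P}\) (\cref{lem:stable:operator:antimonotone}) — note that this is precisely where the symmetric structure \(\Well{P}{I,J}=(\Stable{P}{J},\Stable{P}{I})\) of the operator is used, since both components come from the same antimonotone map; second, the lifting of this invariant to the least fixed point by transfinite iteration from \((\emptyset,\Signature)\), with a correct treatment of the limit stage (your case split on \(\beta\gtrless\alpha_0\) is the standard cofinality argument, and the suprema in the precision lattice are indeed unions of certain parts and intersections of possible parts). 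Your argument does rely on the standard fact that the least fixed point of a monotone operator on a complete lattice is attained by this iteration, which \cref{thm:knaster-tarski} alone does not state; this is unproblematic, as the paper itself uses the same iterative characterization in its proof of \cref{thm:stable:well-founded}. What each approach buys: yours is elementary and self-contained, making visible where antimonotonicity and symmetry enter; the paper's is shorter and situates the lemma within approximation fixpoint theory, where the consistency-preservation result holds for arbitrary symmetric monotone approximating operators rather than just this one.
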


\begin{example}\label{ex:well-founded}
Consider program~\(\Fi{P}{\ref{ex:well-founded}}\) consisting of the following rules:
\begin{align*}
a &            \\
b & \leftarrow a \\
c & \leftarrow \Naf b \\
d & \leftarrow c \\
e & \leftarrow \Naf d
\end{align*}

We compute the well-founded model of \(\Fi{P}{\ref{ex:well-founded}}\) in four iterations starting from \((\emptyset,\Signature)\):
{\leqnomode
\begin{align*}
\Stable{P}{\Signature}        & = \{a, b\}    & \Stable{P}{\emptyset}   & = \{a, b, c, d, e\} \tag*{\hspace{1.2em}1.}\\
\Stable{P}{\{a, b, c, d, e\}} & = \{a, b\}    & \Stable{P}{\{a, b\}}    & = \{a, b, e\}       \tag*{\hspace{1.2em}2.}\\
\Stable{P}{\{a, b, e\}}       & = \{a, b, e\} & \Stable{P}{\{a, b\}}    & = \{a, b, e\}       \tag*{\hspace{1.2em}3.}\\
\Stable{P}{\{a, b, e\}}       & = \{a, b, e\} & \Stable{P}{\{a, b, e\}} & = \{a, b, e\}       \tag*{\hspace{1.2em}4.}
\end{align*}}The left and right column reflect the certain and possible atoms computed at each iteration, respectively.
We reach a fixed point at Step~4.
Accordingly,
the well-founded model of \(\Fi{P}{\ref{ex:well-founded}}\) is \((\{a, b, e\}, \{a, b, e\})\).
\end{example}

Unlike general \(\FormulasF\)-programs,
the class of \(\FormulasN\)-programs warrants the same stable and \FOID-stable models for each of its programs.
Unfortunately, \(\FormulasN\)-programs are too restricted for our purpose
(for instance, for capturing aggregates in rule bodies\footnote{Ferraris' semantics~\cite{ferraris11a} of aggregates introduces implications,
  which results in rules beyond the class of \(\FormulasN\)-programs.}).
To this end, we define a more general class of programs and refer to them as \(\FormulasR\)-programs.
Although \FOID-stable models of \(\FormulasR\)-programs may differ from their stable models (see below),
their well-founded models encompass both stable and \FOID-stable models.
Thus, well-founded models can be used for characterizing stable model-preserving program transformations.
In fact, we see in Section~\ref{sec:simplification} that the restriction of \(\FormulasF\)- to \(\FormulasR\)-programs allows us to provide tighter
semantic characterizations of program simplifications.

We define \(\FormulasR\) to be the set of all formulas \(F\) such that
implications in \(F\) have no further occurrences of implications in their antecedents.
Then, an \emph{\(\FormulasR\)-program} consists of rules of form
\(
h \leftarrow F
\)
where \(h \in \FormulasF_0\) and \(F \in \FormulasR\).
As with \(\FormulasN\)-programs, a positive \(\FormulasR\)-program is also strictly positive.

Our next result shows that
(\FOID-)well-founded models can be used for approximating (regular) stable models of \(\FormulasR\)-programs.

\begin{theorem}\label{thm:stable:well-founded}
Let \(P\) be an \(\FormulasR\)-program and \((I,J)\) be the well-founded model of \(P\).

If \(X\) is a stable model of \(P\), then \(I \subseteq X \subseteq J\).
\end{theorem}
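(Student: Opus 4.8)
The plan is to compute the well-founded model $(I,J)=\WellM{P}$ as the least fixed point of the $\leq_p$-monotone operator $\WellO{P}$ by transfinite iteration from the least precise interpretation $(\emptyset,\Signature)$ (its existence being guaranteed by \Nref{thm:knaster-tarski}{a}), and to show by transfinite induction that every iterate $(\Ii{I}{\alpha},\Ii{J}{\alpha})$ satisfies $\Ii{I}{\alpha}\subseteq X\subseteq\Ii{J}{\alpha}$; the fixed point eventually reached then yields $I\subseteq X\subseteq J$. Here $\Ii{I}{0}=\emptyset$, $\Ii{J}{0}=\Signature$, and $(\Ii{I}{\alpha+1},\Ii{J}{\alpha+1})=\Well{P}{\Ii{I}{\alpha},\Ii{J}{\alpha}}=(\Stable{P}{\Ii{J}{\alpha}},\Stable{P}{\Ii{I}{\alpha}})$, while at a limit ordinal $\lambda$ one has $\Ii{I}{\lambda}=\bigcup_{\alpha<\lambda}\Ii{I}{\alpha}$ and $\Ii{J}{\lambda}=\bigcap_{\alpha<\lambda}\Ii{J}{\alpha}$, so the base and limit cases follow directly from the induction hypothesis.

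Before the successor step I would record a general fact: for any interpretation $Z$ one has $Z\models F$ iff $Z\models\IDReductP{F}{Z}$, by an easy simultaneous structural induction on $\IDReductP{\cdot}{Z}$ and $\IDReductN{\cdot}{Z}$ (the \FOID-reduct only rewrites negative atom occurrences to their truth value under $Z$). Consequently, if $Z\models P$ then $Z$ is a model of the positive program $\IDReductP{P}{Z}$, and since $\Stable{P}{Z}=\LeastM{\IDReductP{P}{Z}}$ is its least, hence least prefixed, model, we get $\Stable{P}{Z}\subseteq Z$. Applied to $Z=X$, which is a model of $P$ because it is stable, this gives $\Stable{P}{X}\subseteq X$.

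For the successor step assume $\Ii{I}{\alpha}\subseteq X\subseteq\Ii{J}{\alpha}$. The inclusion $\Ii{I}{\alpha+1}=\Stable{P}{\Ii{J}{\alpha}}\subseteq X$ is the easy half: from $X\subseteq\Ii{J}{\alpha}$ and antimonotonicity of $\StableO{P}$ (\cref{lem:stable:operator:antimonotone}) we obtain $\Stable{P}{\Ii{J}{\alpha}}\subseteq\Stable{P}{X}\subseteq X$. The inclusion $X\subseteq\Ii{J}{\alpha+1}=\Stable{P}{\Ii{I}{\alpha}}=:Y$ is the crux. Using antimonotonicity one first checks that the iterates stay consistent and hence that $Y$ is itself sandwiched, $\Ii{I}{\alpha}\subseteq Y\subseteq\Ii{J}{\alpha}$. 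I would then show $X\cap Y\models\Reduct{P}{X}$; since $X\cap Y\subseteq X$ and $X$ is a minimal model of $\Reduct{P}{X}$, minimality forces $X\cap Y=X$, i.e.\ $X\subseteq Y$. To establish $X\cap Y\models\Reduct{P}{X}$ it suffices to treat a rule $h\leftarrow F$ with $X\models F$ (otherwise $\Reduct{F}{X}=\bot$ and the reduced rule holds trivially); there $h\in X$, and it remains to derive $h\in Y$ whenever $X\cap Y\models\Reduct{F}{X}$. As $Y=\LeastM{\IDReductP{P}{\Ii{I}{\alpha}}}$ is a fixed point of $\StepO{\IDReductP{P}{\Ii{I}{\alpha}}}$, it is enough to prove $Y\models\IDReductP{F}{\Ii{I}{\alpha}}$, for then $h\in\Step{\IDReductP{P}{\Ii{I}{\alpha}}}{Y}=Y$.

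This last implication is the main obstacle: for $F\in\FormulasR$ with $\Ii{I}{\alpha}\subseteq X\subseteq\Ii{J}{\alpha}$, $\Ii{I}{\alpha}\subseteq Y\subseteq\Ii{J}{\alpha}$, $X\models F$ and $X\cap Y\models\Reduct{F}{X}$, show $Y\models\IDReductP{F}{\Ii{I}{\alpha}}$. I expect to prove it by structural induction on $F$, suitably strengthened to speak about $\IDReductP{\cdot}{\Ii{I}{\alpha}}$ and $\IDReductN{\cdot}{\Ii{I}{\alpha}}$ simultaneously; the conjunction and disjunction cases are routine. The decisive case is an implication $G\rightarrow H$, and here the restriction to $\FormulasR$ is essential: its antecedent $G$ contains no implications and is therefore \emph{monotone} in the atoms, so $\IDReductN{G}{\Ii{I}{\alpha}}$ collapses to a truth constant determined solely by whether $\Ii{I}{\alpha}\models G$, while the regular reduct $\Reduct{G}{X}$ is governed likewise by $X\models G$. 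Aligning these two evaluations through the bounds $\Ii{I}{\alpha}\subseteq X$ and $X\cap Y\subseteq Y$ is what makes the induction go through, and it is exactly this step that fails once double negation is admitted (which leaves $\FormulasR$ and, as the program $p\leftarrow\Naf\Naf p$ shows, breaks the statement). Getting the strengthened induction hypothesis and the interpretation bounds precisely right is the delicate part of the argument.
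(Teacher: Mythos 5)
Your proposal follows essentially the same route as the paper's proof: the paper performs exactly this transfinite induction on the iterates of \(\WellO{P}\) starting from \((\emptyset,\Signature)\), obtains the lower bound from antimonotonicity of \(\StableO{P}\), and isolates your ``crux'' as \cref{lem:stable:possible}, whose proof is the same rank induction relating \(\Reduct{F}{X}\) to \(\IDReductP{F}{I}\), with the implication case resting—just as you identify—on the implication-free (hence positive) antecedents of \(\FormulasR\)-formulas. The only cosmetic difference is that you phrase the key step via \(X\cap Y\models\Reduct{F}{X}\) while the paper argues contrapositively with \(Y\models\Reduct{F}{X}\); these are equivalent because satisfaction of \(\Reduct{F}{X}\) depends only on atoms in \(X\).
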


\begin{example}\label{ex:r-program}
Consider the \(\FormulasR\)-program \(\Fi{P}{\ref{ex:r-program}}\):\footnote{The choice of the body \(b \rightarrow a\) is not arbitrary since it can be seen as representing the aggregate \(\SumA{\Elem{1}{a}, \Elem{-1}{b}}{\geq}{0}\).}
\begin{align*}
c & \leftarrow (b \rightarrow a) &
a & \leftarrow b \\
a & \leftarrow c &
b & \leftarrow a
\end{align*}
Observe that \(\{a, b, c\}\) is the only stable model of \(\Fi{P}{\ref{ex:r-program}}\),
the program does not have any \FOID-stable models, and
the well-founded model of \(\Fi{P}{\ref{ex:r-program}}\) is \((\emptyset,\{a,b,c\})\).
In accordance with \cref{thm:stable:well-founded},
the stable model of \(\Fi{P}{\ref{ex:r-program}}\) is enclosed in the well-founded model.

Note that the \FOID-reduct handles \(b \rightarrow a\) the same way as \(\Naf b \vee a\).
In fact, the program obtained by replacing
\begin{align*}
c & \leftarrow (b \rightarrow a)
\end{align*}
with
\begin{align*}
c & \leftarrow \Naf b \vee a
\end{align*}
is an \FormulasN-program and has neither stable nor \FOID-stable models.
\end{example}

Further, note that the program in Example~\ref{ex:stable} is not an \FormulasR-program,
whereas the one in Example~\ref{ex:well-founded} is an \FormulasR-program.

\subsection{Program simplification}\label{sec:simplification}

In this section,
we define a concept of program simplification relative to a four-valued interpretation and
show how its result can be characterized by the semantic means from above.
This concept has two important properties.
First, it results in a finite program whenever the interpretation used for simplification is finite.
And second, it preserves all (regular) stable models of \(\FormulasR\)-programs when simplified with their well-founded models.

\begin{definition}\label{def:simplification}
Let \(P\) be an \(\FormulasF\)-program, and \((I,J)\) be a four-valued interpretation.

We define the simplification of \(P\) w.r.t.\ \((I,J)\) as
\begin{align*}
\Simp{P}{(I,J)} & = \{ r \in P \mid J \models \IDReductP{\Body{r}}{I} \}.
\end{align*}
\end{definition}

For simplicity, we drop parentheses and we write \(\Simp{P}{I,J}\) instead of \(\Simp{P}{(I,J)}\)
whenever clear from context.

The program simplification
\(\Simp{P}{I,J}\) acts as a filter eliminating inapplicable rules that
fail to satisfy the condition~\(J \models \IDReductP{\Body{r}}{I}\).
That is,
first, all negatively occurring atoms in \(\Body{r}\) are evaluated w.r.t.\ the certain atoms in \(I\)
and replaced accordingly by \(\bot\) and \(\top\), respectively.
Then, it is checked whether the reduced body \IDReductP{\Body{r}}{I} is satisfiable by the possible atoms in \(J\).
Only in this case, the rule is kept in \(P^{I,J}\).
No simplifications are applied to the remaining rules.
This is illustrated in \Cref{ex:simplification} below.

Note that \(\Simp{P}{I,J}\) is finite whenever \((I,J)\) is finite.

Observe that for an \(\FormulasF\)-program~\(P\)
the head atoms in \({\Simp{P}{I,J}}\) correspond to the result of applying the provability operator of
program \(\IDReductP{P}{I}\) to the possible atoms in \(J\), i.e., \(\Head{\Simp{P}{I,J}} = \Step{\IDReductP{P}{I}}{J}\).

Our next result shows that programs simplified with their well-founded model maintain this model.

\begin{theorem}\label{thm:well-founded:simplification}
Let \(P\) be an \(\FormulasF\)-program and \((I,J)\) be the well-founded model of \(P\).

Then, \(P\) and \(\Simp{P}{I,J}\) have the same well-founded model.
\end{theorem}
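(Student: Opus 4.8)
Write $Q = \Simp{P}{I,J}$ and recall that $(I,J) = \WellM{P}$ is the least fixed point of $\WellO{P}$, so that $\Stable{P}{J} = I$ and $\Stable{P}{I} = J$. Since $Q \subseteq P$, we have $\IDReductP{Q}{X} \subseteq \IDReductP{P}{X}$ and hence $\Stable{Q}{X} \subseteq \Stable{P}{X}$ for every interpretation $X$, by monotonicity of the least model in the program. The plan is to establish $\WellM{Q} = (I,J)$ by two opposite inequalities in the precision order $\leq_p$. For \textbf{(A)} $\WellM{Q} \leq_p (I,J)$, I will show that $(I,J)$ is a fixed point of $\WellO{Q}$, so that its least fixed point $\WellM{Q}$ cannot be more precise. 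For \textbf{(B)} $(I,J) \leq_p \WellM{Q}$, I will show that $\WellM{Q}$ is a prefixed point of $\WellO{P}$ and invoke that the least fixed point $(I,J)$ of $\WellO{P}$ is also its least prefixed point (\Nref{thm:knaster-tarski}{a}).

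The engine is a single observation about the rules discarded by simplification, namely those $r \in P \setminus Q$ with $J \not\models \IDReductP{\Body{r}}{I}$. Two facts are needed. First, a monotonicity property of the \FOID-reduct in its interpretation argument, proved by simultaneous structural induction on $F$: if $A \subseteq B$ then $X \models \IDReductP{F}{B}$ implies $X \models \IDReductP{F}{A}$, with the dual statement for $\IDReductN{F}{I}$ carrying the induction through implications. Second, every \FOID-reduct is positive, so each $\IDReductP{\Body{r}}{X}$ is a positive formula whose satisfaction is monotone in the interpretation. Combining the two, a discarded rule \emph{never fires} in the relevant least-model computations: if $X \models \IDReductP{\Body{r}}{K}$ with either (i) $I \subseteq K$ and $X \subseteq I$, or (ii) $K = I$ and $X \subseteq J$, then in case (i) reduct monotonicity gives $X \models \IDReductP{\Body{r}}{I}$ and positivity (using $X \subseteq I \subseteq J$) gives $J \models \IDReductP{\Body{r}}{I}$, while in case (ii) positivity alone (using $X \subseteq J$) gives the same, contradicting $J \not\models \IDReductP{\Body{r}}{I}$. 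Hence under either bound the iterates of $\StepO{\IDReductP{P}{K}}$ and $\StepO{\IDReductP{Q}{K}}$ from $\emptyset$ coincide, so $\Stable{P}{K} = \Stable{Q}{K}$.

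For (A) it suffices to check $\Stable{Q}{J} = I$ and $\Stable{Q}{I} = J$, giving $\Well{Q}{I,J} = (I,J)$. The iterates of $\StepO{\IDReductP{P}{J}}$ stay within its least model $\Stable{P}{J} = I$, so with reduct index $K = J \supseteq I$ and iterates $\subseteq I$ we are in case (i) and obtain $\Stable{Q}{J} = \Stable{P}{J} = I$. Likewise the iterates of $\StepO{\IDReductP{P}{I}}$ stay within its least model $\Stable{P}{I} = J$, so with index $K = I$ and iterates $\subseteq J$ we are in case (ii) and obtain $\Stable{Q}{I} = \Stable{P}{I} = J$. Thus $(I,J)$ is a fixed point of $\WellO{Q}$ and $\WellM{Q} \leq_p (I,J)$.

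Write $\WellM{Q} = (I_Q, J_Q)$; from (A) we already have $I_Q \subseteq I \subseteq J \subseteq J_Q$. To show $\WellM{Q}$ is a prefixed point of $\WellO{P}$, I must verify $\Stable{P}{J_Q} \subseteq I_Q$ and $J_Q \subseteq \Stable{P}{I_Q}$. The latter is immediate, since $J_Q = \Stable{Q}{I_Q} \subseteq \Stable{P}{I_Q}$ by program monotonicity. The former is the main obstacle: it cannot be read off any monotonicity between the well-founded models of $Q$ and the larger program $P$, because enlarging a program is not monotone in $\leq_p$. Instead I compare the least-model computations directly. The iterates of $\StepO{\IDReductP{Q}{J_Q}}$ converge to $\Stable{Q}{J_Q} = I_Q \subseteq I$, hence stay within $I$; with reduct index $K = J_Q \supseteq I$ this is case (i), so the discarded rules never fire and the iterates of $\StepO{\IDReductP{P}{J_Q}}$ agree with them, giving $\Stable{P}{J_Q} = \Stable{Q}{J_Q} = I_Q$. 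Therefore $\WellM{Q}$ is a prefixed point of $\WellO{P}$, whence $(I,J) \leq_p \WellM{Q}$ by \Nref{thm:knaster-tarski}{a}. Combining with (A) yields $\WellM{Q} = (I,J) = \WellM{P}$, as claimed.
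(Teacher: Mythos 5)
Your proof is correct, but its second half takes a genuinely different route from the paper's. The first half is the same: for $Q = \Simp{P}{I,J}$ you show $\Stable{Q}{J} = \Stable{P}{J} = I$ and $\Stable{Q}{I} = \Stable{P}{I} = J$, so that $(I,J)$ is a fixed point of $\WellO{Q}$ and hence $\WellM{Q} \leq_p (I,J)$. For the converse inequality the paper first proves the stronger \cref{prp:simplification:well-founded} --- $\Stable{Q}{I'} = J$ for \emph{all} $I' \subseteq I$, and $\Stable{Q}{J'} = \Stable{P}{J'}$ for all $J' \supseteq J$ --- and then, writing $\WellM{Q} = (\widehat{I},\widehat{J})$, reads off $\widehat{J} = \Stable{Q}{\widehat{I}} = J$ from the first property (using $\widehat{I} \subseteq I$) and $\widehat{I} = \Stable{Q}{J} = I$ from the second. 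You instead reuse your single engine (discarded rules cannot fire below the relevant bounds) to show that $\WellM{Q} = (I_Q,J_Q)$ is a prefixed point of $\WellO{P}$: $\Stable{P}{J_Q} \subseteq I_Q$ via your case (i) argument anchored on $Q$'s iteration, and $J_Q = \Stable{Q}{I_Q} \subseteq \Stable{P}{I_Q}$ for free from $Q \subseteq P$; then \Nref{thm:knaster-tarski}{a} applied to $\WellO{P}$ gives $(I,J) \leq_p (I_Q,J_Q)$. This symmetric trick --- each operator's least fixed point is shown to be a prefixed point of the other operator --- buys you a proof that needs neither the generalized lemma nor the antimonotonicity of $\StableO{Q}$, both of which the paper's lemma invokes; what the paper's route buys is the lemma itself, which it reuses elsewhere (e.g., in the proof of \cref{thm:well-founded:simplification:superset}). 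One caveat: your iterate-coincidence step tacitly uses the fact that the least fixed point of a monotone operator such as $\StepO{\IDReductP{P}{K}}$ is reached by (possibly transfinite) iteration from $\emptyset$; this is standard, but it is not among the tools the paper states (Knaster--Tarski as quoted does not provide it). You can avoid it entirely by a model-theoretic phrasing of the same idea, as in the paper's lemma: the set $\Stable{Q}{K}$ lies below the relevant bound, so by your engine it falsifies the reduct bodies of all discarded rules and is therefore a model of $\IDReductP{P}{K}$, whence $\Stable{P}{K} \subseteq \Stable{Q}{K}$ by leastness, while $\Stable{Q}{K} \subseteq \Stable{P}{K}$ is \cref{lem:program:positive:subset}; this substitution works at every place where you appeal to coinciding iterates.
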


\begin{example}\label{ex:simplification}
In \cref{ex:well-founded}, we computed the well-founded model~\((\{a, b, e\}, \{a, b, e\})\) of \(\Fi{P}{\ref{ex:well-founded}}\).
With this, we obtain the simplified program \(\Fpi{P}{\ref{ex:well-founded}} = \Simp{\Fi{P}{\ref{ex:well-founded}}}{\{a, b, e\}, \{a, b, e\}}\)
after dropping \(c \leftarrow \Naf b\) and \(d \leftarrow c\):
\begin{align*}
a &            \\
b & \leftarrow a \\
e & \leftarrow \Naf d
\end{align*}

Next, we check that the well-founded model of \(\Fpi{P}{\ref{ex:well-founded}}\) corresponds to the well-founded model of \(\Fi{P}{\ref{ex:well-founded}}\):
{\leqnomode \begin{align*}
\Stable{\Fpi{P}{\ref{ex:well-founded}}}{\Signature}  & = \{a, b\}    & \Stable{\Fpi{P}{\ref{ex:well-founded}}}{\emptyset}   & = \{a, b, e\} \tag*{\hspace{1.2em}1.}\\
\Stable{\Fpi{P}{\ref{ex:well-founded}}}{\{a, b, e\}} & = \{a, b, e\} & \Stable{\Fpi{P}{\ref{ex:well-founded}}}{\{a,b\}}     & = \{a, b, e\} \tag*{\hspace{1.2em}2.}\\
\Stable{\Fpi{P}{\ref{ex:well-founded}}}{\{a, b, e\}} & = \{a, b, e\} & \Stable{\Fpi{P}{\ref{ex:well-founded}}}{\{a, b, e\}} & = \{a, b, e\} \tag*{\hspace{1.2em}3.}
\end{align*}}

We observe that it takes two applications of the well-founded operator to obtain the well-founded model.
This could be reduced to one step if atoms false in the well-founded model would be removed from the negative bodies by the program simplification.
Keeping them is a design decision with the goal to simplify notation in the following.
\end{example}

The next series of results further elaborates on semantic invariants guaranteed by our concept of program simplification.
The first result shows that it preserves all stable models between the sets used for simplification.

\begin{theorem}\label{thm:stable:simplification}
Let \(P\) be an \(\FormulasF\)-program, and \(I\), \(J\) and \(X\) be two-valued interpretations.

If \(I \subseteq X \subseteq J\), then \(X\) is a stable model of \(P\) iff \(X\) is a stable model of \(\Simp{P}{I,J}\).
\end{theorem}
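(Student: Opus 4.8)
The plan is to reduce the claim to a statement about the ordinary program reduct. Recall that $X$ is a stable model of a program $Q$ precisely when $X$ is a minimal model of $\Reduct{Q}{X}$. Writing $P' = \Simp{P}{I,J}$ and $D = P \setminus P'$ for the rules discarded by the simplification, I would show that $\Reduct{P}{X}$ and $\Reduct{P'}{X}$ have exactly the same models; since minimal models depend only on the set of models, this at once yields the same minimal models, so $X$ is a minimal model of one reduct iff of the other, which is the desired equivalence. Because $P' \subseteq P$ gives $\Reduct{P'}{X} \subseteq \Reduct{P}{X}$, the only thing left to verify is that every discarded rule $r \in D$ contributes a tautology to $\Reduct{P}{X}$, i.e.\ that $X \not\models \Body{r}$ for each $r \in D$: then $\Reduct{r}{X} = (\Head{r} \leftarrow \bot)$ is satisfied by every interpretation and may be removed without altering the models of the reduct.

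The core of the argument is therefore a property of the \FOID-reduct, which I would isolate as a lemma: whenever $I \subseteq X$, then $X \models F$ implies $X \models \IDReductP{F}{I}$ and, dually, $X \models \IDReductN{F}{I}$ implies $X \models F$. These two statements must be proved simultaneously by structural induction on $F$, since the clause $\IDReductPC{G \rightarrow H}{I} = \IDReductN{G}{I} \rightarrow \IDReductP{H}{I}$ makes the positive case depend on the negative one and conversely. The atomic case is where the hypothesis $I \subseteq X$ enters: for an atom $a \in I$ we have $\IDReductN{a}{I} = \top$ and $a \in I \subseteq X$, so $X \models a$, whereas for $a \notin I$ the reduct $\IDReductN{a}{I} = \bot$ makes the implication vacuous. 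Conjunction and disjunction are immediate, and the two implication cases close by feeding $G$ and $H$ into the appropriate halves of the induction hypothesis.

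With this lemma available, I would finish as follows. Since every atom in $\IDReductP{F}{I}$ occurs positively, satisfaction of $\IDReductP{F}{I}$ is monotone in the interpretation, so $X \models \IDReductP{\Body{r}}{I}$ together with $X \subseteq J$ gives $J \models \IDReductP{\Body{r}}{I}$. Combining this with the lemma, $X \models \Body{r}$ (under $I \subseteq X \subseteq J$) forces $J \models \IDReductP{\Body{r}}{I}$, hence $r$ is \emph{kept} by the simplification. Contrapositively, every discarded rule $r \in D$ satisfies $X \not\models \Body{r}$, which is exactly what the first paragraph demands; this completes the reduction and with it the proof.

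The main obstacle is setting up the simultaneous induction for the reduct lemma correctly: the polarity switch at implication antecedents forces the positive and negative statements to be carried together, and the side condition $I \subseteq X$ has to be placed precisely at the atomic level (it is needed only there). Once that lemma is in place, the remaining ingredients — monotonicity of positive formulas and the observation that an unsatisfied body reduces to $\bot$ — are routine.
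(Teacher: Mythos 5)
Your proposal is correct and follows essentially the same route as the paper's proof: show that every discarded rule has its body falsified by \(X\) (via the chain \(X \models \Body{r}\) implies \(X \models \IDReductP{\Body{r}}{I}\) under \(I \subseteq X\), then monotonicity of the positive formula \(\IDReductP{\Body{r}}{I}\) under \(X \subseteq J\)), so that discarded rules reduce to tautologies and \(\Reduct{P}{X}\) and \(\ReductC{\Simp{P}{I,J}}{X}\) have the same minimal models. The only cosmetic difference is that you fuse the paper's two auxiliary facts (\(X \models F\) iff \(X \models \IDReductP{F}{X}\), and antitonicity of the reduct in its superscript) into one lemma proved by the same kind of simultaneous positive/negative induction the paper uses internally.
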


As a consequence,
we obtain that \(\FormulasR\)-programs simplified with their well-founded model also maintain stable models.

\begin{corollary}\label{cor:stable:simplification}
Let \(P\) be an \(\FormulasR\)-program and \((I,J)\) be the well-founded model of \(P\).

Then, \(P\) and \(\Simp{P}{I,J}\) have the same stable models.
\end{corollary}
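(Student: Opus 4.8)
The plan is to derive the corollary by combining the enclosure guarantee of \cref{thm:stable:well-founded} with the stable-model preservation of \cref{thm:stable:simplification}. The key preliminary observation is that $\Simp{P}{I,J}$ is obtained from $P$ merely by discarding rules, so $\Simp{P}{I,J} \subseteq P$; since every retained rule keeps its body unchanged and $P$ is an $\FormulasR$-program, $\Simp{P}{I,J}$ is again an $\FormulasR$-program. Moreover, because $(I,J)$ is the well-founded model of $P$, \cref{thm:well-founded:simplification} tells us that $\Simp{P}{I,J}$ has the same well-founded model $(I,J)$.

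For the forward direction, suppose $X$ is a stable model of $P$. Applying \cref{thm:stable:well-founded} to $P$ and its well-founded model $(I,J)$ yields $I \subseteq X \subseteq J$. With this enclosure in hand, \cref{thm:stable:simplification} (instantiated with these very $I$, $J$, and $X$) shows that $X$ is a stable model of $\Simp{P}{I,J}$.

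For the backward direction, suppose $X$ is a stable model of $\Simp{P}{I,J}$. Here I would exploit the preliminary observation: since $\Simp{P}{I,J}$ is itself an $\FormulasR$-program whose well-founded model is $(I,J)$, \cref{thm:stable:well-founded} applies to it as well, again giving $I \subseteq X \subseteq J$. A second appeal to \cref{thm:stable:simplification} then converts $X$ back into a stable model of $P$.

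The two directions together establish that $P$ and $\Simp{P}{I,J}$ share exactly the same stable models. The only genuinely delicate point is the backward direction: we cannot invoke \cref{thm:stable:well-founded} for $P$ directly on a stable model of the simplified program, so the argument hinges on first checking that $\Simp{P}{I,J}$ inherits both the syntactic class $\FormulasR$ and the well-founded model $(I,J)$ via \cref{thm:well-founded:simplification}. Once that is secured, the inclusions $I \subseteq X \subseteq J$ are available on both sides and \cref{thm:stable:simplification} discharges the two directions uniformly.
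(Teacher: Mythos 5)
Your proof is correct and takes essentially the same route as the paper, whose own (one-line) proof derives the corollary from exactly the three results you invoke: \cref{thm:stable:well-founded}, \cref{thm:well-founded:simplification}, and \cref{thm:stable:simplification}. Your treatment of the backward direction---observing that \(\Simp{P}{I,J}\) is again an \(\FormulasR\)-program whose well-founded model is \((I,J)\), so that the enclosure \(I \subseteq X \subseteq J\) is available on both sides---is precisely the intended unpacking of that citation.
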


For instance, the \FormulasR-program in \Cref{ex:well-founded} and its simplification in
\Cref{ex:simplification} have the same stable model.
Unlike this, the program from \Cref{ex:stable} consisting of rule \(p \leftarrow \Naf \Naf p\) induces two stable models,
while its simplification w.r.t.\ its well-founded model \((\emptyset,\emptyset)\) yields an empty program
admitting the empty stable model only.

Note that given an \FormulasR-program with a finite well-founded model,
we obtain a semantically equivalent finite program via simplification.
As detailed in the following sections,
grounding algorithms only compute approximations of the well-founded model.
However, as long as the approximation is finite,
we still obtain semantically equivalent finite programs.
This is made precise by the next two results showing that any program between the original and its simplification relative to its well-founded model
preserves the well-founded model,
and that this extends to all stable models for \(\FormulasR\)-programs.

\begin{theorem}\label{thm:well-founded:simplification:superset}
Let \(P\) and \(Q\) be \(\FormulasF\)-programs, and \((I,J)\) be the well-founded model of \(P\).

If \(\Simp{P}{I,J} \subseteq Q \subseteq P\), then \(P\) and \(Q\) have the same well-founded models.
\end{theorem}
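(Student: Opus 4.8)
The plan is to show that $(I,J)$ is the well-founded model of $Q$, i.e.\ the $\leq_p$-least fixed point of $\WellO{Q}$; since $(I,J)=\WellM{P}$ by assumption, this yields $\WellM{Q}=\WellM{P}$. I would first record a purely set-theoretic observation: because $\Simp{P}{I,J}$ consists of exactly those rules $r\in P$ with $J\models\IDReductP{\Body{r}}{I}$, and $\Simp{P}{I,J}\subseteq Q\subseteq P$, the rules of $Q$ passing this very test are precisely those of $\Simp{P}{I,J}$; hence $\Simp{Q}{I,J}=Q\cap\Simp{P}{I,J}=\Simp{P}{I,J}$. By \Cref{thm:well-founded:simplification} applied to $P$, the program $\Simp{P}{I,J}$ therefore also has well-founded model $(I,J)$, so $P$ and $\Simp{P}{I,J}$ both share the candidate model with the (still to be determined) $Q$.

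Next I would establish the easy half, $\WellM{Q}\leq_p(I,J)$, by verifying that $(I,J)$ is a fixed point of $\WellO{Q}$. Since every \FOID-reduct is positive, its one-step operator is monotone (\Cref{prop:step-monotone}), so least models are monotone under program inclusion; from $\IDReductP{(\Simp{P}{I,J})}{X}\subseteq\IDReductP{Q}{X}\subseteq\IDReductP{P}{X}$ I obtain $\Stable{\Simp{P}{I,J}}{X}\subseteq\Stable{Q}{X}\subseteq\Stable{P}{X}$ for every $X$. Evaluating this sandwich at $X=J$ and $X=I$, and using the fixed-point equations $\Stable{\Simp{P}{I,J}}{J}=\Stable{P}{J}=I$ and $\Stable{\Simp{P}{I,J}}{I}=\Stable{P}{I}=J$ of the two known well-founded models, pins $\Stable{Q}{J}=I$ and $\Stable{Q}{I}=J$, i.e.\ $\Well{Q}{I,J}=(I,J)$. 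As $\WellM{Q}$ is the $\leq_p$-least fixed point, $\WellM{Q}\leq_p(I,J)$ follows by \Nref{thm:knaster-tarski}{a}.

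The difficult part is the reverse inequality $(I,J)\leq_p\WellM{Q}$, and here I must resist lifting the stable-operator sandwich to a comparison of well-founded operators: because $\StableO{\cdot}$ is antimonotone (\Cref{lem:stable:operator:antimonotone}), the inclusion $\Stable{Q}{X}\subseteq\Stable{P}{X}$ runs the wrong way in the \emph{possible} component of $\WellO{\cdot}$, so $\WellO{\Simp{P}{I,J}}$, $\WellO{Q}$, and $\WellO{P}$ are pairwise $\leq_p$-incomparable and well-founded models are not monotone under program inclusion in general. The leverage must instead come from the defining property of the extra rules $r\in Q\setminus\Simp{P}{I,J}$, namely $J\not\models\IDReductP{\Body{r}}{I}$. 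Writing $\WellM{Q}=(I^*,J^*)$ with $I^*\subseteq I\subseteq J\subseteq J^*$, the precision-monotonicity of reduct-body satisfaction that underlies the monotonicity of $\WellO{\cdot}$ gives $J\models\IDReductP{\Body{r}}{I}\Rightarrow J^*\models\IDReductP{\Body{r}}{I^*}$, whence $\Simp{P}{I,J}\subseteq\Simp{Q}{I^*,J^*}$; combined with $\Simp{Q}{I^*,J^*}\subseteq Q\subseteq P$ and \Cref{thm:well-founded:simplification} applied to $Q$ at its own model, this places $\Simp{Q}{I^*,J^*}$ in the same sandwich with $\WellM{\Simp{Q}{I^*,J^*}}=(I^*,J^*)$. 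It then suffices to show $\Simp{Q}{I^*,J^*}$ collapses to $\Simp{P}{I,J}$, which forces $(I^*,J^*)=\WellM{\Simp{P}{I,J}}=(I,J)$.

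The main obstacle is exactly this collapse: controlling the inapplicable rules once the arguments to $\StableO{Q}$ drift outside the interval $[I,J]$ during the computation of $\WellM{Q}$. Such rules may fire \emph{transiently} (one sees this already on small programs where a negatively-guarded head appears as a possible atom at an early iterate and is then withdrawn), yet they must not survive in the limit, nor enlarge $J^*$ beyond $J$ or shrink $I^*$ below $I$. Making this rigorous needs a dedicated monotonicity lemma for reduct-body satisfaction along the precision order—showing that a rule failing $J\models\IDReductP{\Body{r}}{I}$ can never contribute a new head to the least models $\Stable{Q}{J^*}$ and $\Stable{Q}{I^*}$—rather than the naive program-inclusion sandwich, and this is where the bulk of the work lies.
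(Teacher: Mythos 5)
Your first two paragraphs are sound and coincide with the paper's opening move: the inclusion sandwich $\Stable{\Simp{P}{I,J}}{X}\subseteq\Stable{Q}{X}\subseteq\Stable{P}{X}$ (from \cref{lem:program:positive:subset}), together with the fixed-point equations of $\WellO{P}$ and $\WellO{\Simp{P}{I,J}}$ (\cref{thm:well-founded:simplification}), pins $\Well{Q}{I,J}=(I,J)$, so $\WellM{Q}\leq_p(I,J)$ by \nref{thm:knaster-tarski}{a}. The genuine gap is the reverse inequality, and you say so yourself: your route funnels everything into the collapse $\Simp{Q}{I^*,J^*}=\Simp{P}{I,J}$, of which you establish only the inclusion $\Simp{P}{I,J}\subseteq\Simp{Q}{I^*,J^*}$ --- the direction that follows from precision-monotonicity of reduct satisfaction. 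The converse inclusion cannot be obtained that way: for $r\in Q\setminus\Simp{P}{I,J}$, failing $J\models\IDReductP{\Body{r}}{I}$ does not preclude $J^*\models\IDReductP{\Body{r}}{I^*}$, because with $I^*\subseteq I$ and $J\subseteq J^*$ the latter test is strictly weaker. Indeed, the collapse is essentially equivalent to the conclusion $(I^*,J^*)=(I,J)$ you are after, so deferring it (as your last paragraph does, calling it ``the bulk of the work'') leaves the proof without its core.

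What you are missing is that the ``dedicated lemma'' you ask for is available and short, and that the right place to apply it is not the iteration computing $\WellM{Q}$ but an arbitrary prefixed point. The paper uses \nref{prp:simplification:well-founded}{b}: for every $J'$ with $J\subseteq J'$ one has $\Stable{\Simp{P}{I,J}}{J'}=\Stable{P}{J'}$; the proof shows that the least model of the reduct of the simplified program already falsifies the reduced body of every deleted rule, so adding those rules back cannot change the least fixed point. By the sandwich this yields $\Stable{Q}{J'}=\Stable{P}{J'}$ for all $J'\supseteq J$. Now take any prefixed point $(\widehat{I},\widehat{J})$ of $\WellO{Q}$ with $(\widehat{I},\widehat{J})\leq_p(I,J)$; since $\widehat{J}\supseteq J$, the certain component gives $\Stable{P}{\widehat{J}}=\Stable{Q}{\widehat{J}}\subseteq\widehat{I}$, and the possible component needs only the naive inclusion $\widehat{J}\subseteq\Stable{Q}{\widehat{I}}\subseteq\Stable{P}{\widehat{I}}$. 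Hence $(\widehat{I},\widehat{J})$ is a prefixed point of $\WellO{P}$, and \nref{thm:knaster-tarski}{a} (minimality of $(I,J)$ among prefixed points of $\WellO{P}$) forces $(I,J)\leq_p(\widehat{I},\widehat{J})$, i.e.\ equality. Applying this with $(\widehat{I},\widehat{J})=\WellM{Q}$ --- a prefixed point below $(I,J)$ by your first half --- closes the gap without ever having to track transient firings along the iteration, which is exactly the analysis your plan could not avoid.
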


\begin{corollary}\label{cor:stable:simplification:superset}
Let \(P\) and \(Q\) be \(\FormulasR\)-programs, and \((I,J)\) be the well-founded model of \(P\).

If \(\Simp{P}{I,J} \subseteq Q \subseteq P\), then \(P\) and \(Q\) are equivalent.
\end{corollary}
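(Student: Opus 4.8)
The plan is to reduce the claim to the already-established equivalence between an \(\FormulasR\)-program and its simplification by its own well-founded model (\cref{cor:stable:simplification}), applied simultaneously to both \(P\) and \(Q\). Since here ``equivalent'' means ``having the same stable models,'' it suffices to exhibit a single program whose stable models coincide with those of both \(P\) and \(Q\); the natural candidate is the common simplification \(\Simp{P}{I,J}\).

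First I would invoke \cref{thm:well-founded:simplification:superset}: from the sandwich \(\Simp{P}{I,J} \subseteq Q \subseteq P\) together with the fact that \((I,J)\) is the well-founded model of \(P\), it follows that \(P\) and \(Q\) share the well-founded model \((I,J)\). This step is what licenses treating \((I,J)\) as the well-founded model of \(Q\) as well, which is needed before \cref{cor:stable:simplification} can be applied to \(Q\).

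The key step is to show that the two simplifications coincide, that is, \(\Simp{Q}{I,J} = \Simp{P}{I,J}\). Recalling \cref{def:simplification}, \(\Simp{P}{I,J}\) retains exactly those rules \(r \in P\) with \(J \models \IDReductP{\Body{r}}{I}\), and likewise for \(Q\). Since this filtering condition depends only on \((I,J)\) and on each rule individually, and since \(Q \subseteq P\), we immediately get \(\Simp{Q}{I,J} \subseteq \Simp{P}{I,J}\); conversely, every rule of \(\Simp{P}{I,J}\) already satisfies the filtering condition and, by \(\Simp{P}{I,J} \subseteq Q\), lies in \(Q\), hence in \(\Simp{Q}{I,J}\). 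The two filtered programs are therefore equal. This set-theoretic bookkeeping is the crux of the argument, but it is routine once the sandwich hypothesis is unpacked.

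Finally I would apply \cref{cor:stable:simplification} twice: once to the \(\FormulasR\)-program \(P\) with well-founded model \((I,J)\), yielding that \(P\) and \(\Simp{P}{I,J}\) have the same stable models; and once to the \(\FormulasR\)-program \(Q\) with well-founded model \((I,J)\), yielding that \(Q\) and \(\Simp{Q}{I,J}\) have the same stable models. Chaining these through the equality \(\Simp{P}{I,J} = \Simp{Q}{I,J}\) gives that \(P\) and \(Q\) have the same stable models, i.e.\ are equivalent. I expect no genuine obstacle: the whole corollary is a direct composition of \cref{thm:well-founded:simplification:superset,cor:stable:simplification} with the elementary observation that simplification commutes with the sandwiched inclusion.
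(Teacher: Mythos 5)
Your proposal is correct and follows essentially the same route as the paper: the paper's proof likewise observes that \(\Simp{P}{I,J} = \Simp{Q}{I,J}\) and then combines \cref{thm:well-founded:simplification:superset} (to transfer the well-founded model to \(Q\)) with \cref{cor:stable:simplification} applied to both programs. Your write-up merely makes explicit the per-rule filtering argument and the double application of \cref{cor:stable:simplification} that the paper leaves implicit.
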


 \section{Splitting}\label{sec:splitting}

One of the first steps during grounding is to group rules into components suitable for successive instantiation.
This amounts to splitting a logic program into a sequence of subprograms.
The rules in each such component are then instantiated with respect to the atoms possibly derivable from previous components,
starting with some component consisting of facts only.
In other words, grounding is always performed relative to a set of atoms that provide a context.
Moreover, atoms found to be true or false can be used for on-the-fly simplifications.

Accordingly, this section parallels the above presentation by
extending the respective formal concepts with contextual information provided by atoms in a two- and four-valued setting.
We then assemble the resulting concepts to enable their consecutive application to sequences of subprograms.
Interestingly, the resulting notion of splitting allows for more fine-grained splitting than
the traditional concept~\cite{liftur94a} since it allows us to partition rules in an arbitrary way.
In view of grounding,
we show that once a program is split into a sequence of programs,
we can iteratively compute an approximation of the well-founded model by considering in turn each element in the sequence.

In what follows, we append letter `\(\Relative\)' to names of interpretations having a contextual nature.

To begin with,
we extend the one-step provability operator accordingly.

\begin{definition}
Let \(P\) be an \(\FormulasF\)-program and \(\Ir{I}\) be a two-valued interpretation.

For any two-valued interpretation \(I\),
we define the \emph{one-step provability operator of \(P\) relative to} \(\Ir{I}\) as
\begin{align*}
\StepR{P}{\Ir{I}}{I} &= \Step{P}{\Ir{I} \cup I}.
\end{align*}
\end{definition}

A prefixed point of \(\StepRO{P}{\Ir{I}}\) is a also a prefixed point of \(\StepO{P}\).
Thus, each prefixed point of \(\StepRO{P}{\Ir{I}}\) is a model of \(P\) but not vice versa.

To see this, consider program \(P=\{a \leftarrow b\}\).
We have \(\Step{P}{\emptyset} = \emptyset\) and \(\StepR{P}{\{b\}}{\emptyset} = \{a\}\).
Hence, \(\emptyset\) is a {(pre)}fixed point of \(\StepO{P}\) but not of \(\StepRO{P}{\{b\}}\) since \(\{a\} \not\subseteq \emptyset\).
The set \(\{a\}\) is a prefixed point of both operators.

\begin{proposition}\label{prop:rel-step-monotone}
Let \(P\) be a positive program, and \(\Ir{I}\) and \(J\) be two valued interpretations.

Then, the operators \(\StepRO{P}{\Ir{I}}\) and \(\StepR{P}{\,\boldsymbol{\cdot}}{J}\) are both monotone.
\end{proposition}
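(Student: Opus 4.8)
The plan is to reduce both monotonicity claims to the monotonicity of the ordinary one-step provability operator~\(\StepO{P}\) recorded in \cref{prop:step-monotone}, observing that each relative operator is just \(\StepO{P}\) precomposed with the operation of taking a union against a fixed set of atoms, and that a composite of monotone maps is again monotone.

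First I would treat \(\StepRO{P}{\Ir{I}}\), the operator sending a two-valued interpretation \(I\) to \(\StepR{P}{\Ir{I}}{I} = \Step{P}{\Ir{I} \cup I}\) for the fixed context \(\Ir{I}\). Given \(I_1 \subseteq I_2\), monotonicity of the union yields \(\Ir{I} \cup I_1 \subseteq \Ir{I} \cup I_2\); since \(P\) is positive, \cref{prop:step-monotone} applies and gives \(\Step{P}{\Ir{I} \cup I_1} \subseteq \Step{P}{\Ir{I} \cup I_2}\), i.e.\ \(\StepR{P}{\Ir{I}}{I_1} \subseteq \StepR{P}{\Ir{I}}{I_2}\). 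This is exactly monotonicity of \(\StepRO{P}{\Ir{I}}\).

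The second operator, written \(\StepR{P}{\,\boldsymbol{\cdot}}{J}\) in the statement, sends a contextual interpretation \(\Ir{I}\) to \(\Step{P}{\Ir{I} \cup J}\) for the fixed \(J\). The argument is entirely symmetric: from \(\Iri{I}{1} \subseteq \Iri{I}{2}\) one obtains \(\Iri{I}{1} \cup J \subseteq \Iri{I}{2} \cup J\), and a second appeal to \cref{prop:step-monotone} gives \(\Step{P}{\Iri{I}{1} \cup J} \subseteq \Step{P}{\Iri{I}{2} \cup J}\), which is the desired monotonicity.

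In both cases the only substantive ingredient is that a relative operator is the composite of the monotone map \(X \mapsto \Step{P}{X}\) with the monotone map ``adjoin the fixed context''. I therefore expect no real obstacle; the single point to keep in mind is that positivity of \(P\) is precisely the hypothesis needed to invoke \cref{prop:step-monotone}, and dropping it would break monotonicity of \(\StepO{P}\) and hence of both relative operators.
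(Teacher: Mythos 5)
Your proof is correct, and it rests on the same key fact as the paper's, namely \cref{prop:step-monotone}. The paper proves the second claim (monotonicity of \(\StepR{P}{\,\boldsymbol{\cdot}}{J}\)) exactly as you do, but for the first claim it takes a small detour: by \cref{obs:step:relative:evaluate} we have \(\StepR{P}{\Ir{I}}{I} = \Step{\IDEvalP{P}{\Ir{I}}}{I}\), where \(\IDEvalP{P}{\Ir{I}}\) is the partial evaluation of \(P\) w.r.t.\ \(\Ir{I}\); since partial evaluation preserves positivity, \cref{prop:step-monotone} applies to that program and yields monotonicity of \(\StepRO{P}{\Ir{I}}\). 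Your route — composing the monotone map \(X \mapsto \Ir{I} \cup X\) with the monotone operator \(\StepO{P}\) — is more elementary and handles both coordinates by one symmetric argument. What the paper's detour buys is the structural identification of \(\StepRO{P}{\Ir{I}}\) as the ordinary one-step provability operator of a concrete positive program; this identification is reused immediately afterwards, e.g., in \cref{lem:step-op-eval}, where the relative stable operator \(\StableR{P}{\Ir{I}}{J}\) is expressed as \(\LeastM{\IDReductP{\IDEvalP{P}{\Ir{I}}}{J}}\). So the two proofs differ only in packaging, not in substance, and yours is a perfectly valid (and arguably cleaner) replacement for the statement as such.
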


We use \cref{thm:knaster-tarski,prop:rel-step-monotone} to define a contextual stable operator.

\begin{definition}
Let \(P\) be an \(\FormulasF\)-program and \(\Ir{I}\) be a two-valued interpretation.

For any two-valued interpretation~\(J\),
we define the \emph{stable operator relative} to \(\Ir{I}\), written \(\StableR{P}{\Ir{I}}{J}\), as the least fixed point of \(\StepRO{\IDReductP{P}{J}}{\Ir{I}}\).
\end{definition}

While the operator is antimonotone w.r.t.\ its argument \(J\),
it is monotone regarding its parameter~\(\Ir{I}\).

\begin{proposition}\label{prp:stable:relative:monotonicity}
Let \(P\) be an \(\FormulasF\)-program, and \(\Ir{I}\) and \(J\) be two-valued interpretations.

Then, the operators \(\StableRO{P}{\Ir{I}}\) and \(\StableR{P}{\,\boldsymbol{\cdot}}{J}\) are antimonotone and monotone, respectively.
\end{proposition}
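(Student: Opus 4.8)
The plan is to reduce both claims to a single monotonicity principle for least fixed points: if \(\OpSym\) and \(\OppSym\) are monotone operators on a common complete lattice with \(\OpApp{x} \leq \OppApp{x}\) for every \(x\), then \(\mathrm{lfp}(\OpSym) \leq \mathrm{lfp}(\OppSym)\). This follows from \nref{thm:knaster-tarski}{a}: the least (pre)fixed point of \(\OppSym\) satisfies \(\OpApp{\mathrm{lfp}(\OppSym)} \leq \OppApp{\mathrm{lfp}(\OppSym)} = \mathrm{lfp}(\OppSym)\), so it is a prefixed point of \(\OpSym\), and hence the least prefixed point \(\mathrm{lfp}(\OpSym)\) lies below it. Since every \FOID-reduct \(\IDReductP{P}{J}\) is positive, \cref{prop:rel-step-monotone} guarantees that the operators \(\StepRO{\IDReductP{P}{J}}{\Ir{I}}\) are monotone, so the least fixed points defining \(\StableR{P}{\Ir{I}}{J}\) exist and the principle applies. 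It thus suffices, in each case, to establish the pointwise ordering of the corresponding relative provability operators.

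For monotonicity in the parameter, fix \(J\), take \(\Iri{I}{1} \subseteq \Iri{I}{2}\), and abbreviate \(Q = \IDReductP{P}{J}\), a positive program. For every interpretation \(X\), monotonicity of the map \(\Ir{I} \mapsto \StepR{Q}{\Ir{I}}{X}\) (\cref{prop:rel-step-monotone} applied to \(Q\)) yields \(\StepR{Q}{\Iri{I}{1}}{X} \subseteq \StepR{Q}{\Iri{I}{2}}{X}\). The fixed-point principle then gives \(\StableR{P}{\Iri{I}{1}}{J} \subseteq \StableR{P}{\Iri{I}{2}}{J}\), so \(\StableR{P}{\,\boldsymbol{\cdot}}{J}\) is monotone.

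The substantive part is antimonotonicity in the argument \(J\), which rests on a monotonicity property of the \FOID-reduct. Fixing \(\Ir{I}\) and taking \(J_1 \subseteq J_2\), I would prove by simultaneous structural induction on a formula \(F\) the dual statements that, for every interpretation \(Y\), \(Y \models \IDReductP{F}{J_2}\) implies \(Y \models \IDReductP{F}{J_1}\) (the positive reduct is antimonotone in \(J\)), while \(Y \models \IDReductN{F}{J_1}\) implies \(Y \models \IDReductN{F}{J_2}\) (the negative reduct is monotone in \(J\)). The atomic and negated-atomic base cases are immediate from \(J_1 \subseteq J_2\), and the conjunction and disjunction cases pass through the induction hypotheses unchanged. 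The crucial case is the implication \(F = G \rightarrow H\), where the positive reduct is \(\IDReductN{G}{J} \rightarrow \IDReductP{H}{J}\) and the negative reduct is \(\IDReductP{G}{J} \rightarrow \IDReductN{H}{J}\); here the two halves of the hypothesis feed into each other across the antecedent/consequent split, which is exactly why they must be carried together. Applying the positive half to rule bodies, \(Y \models \IDReductP{\Body{r}}{J_2}\) implies \(Y \models \IDReductP{\Body{r}}{J_1}\), and since the \FOID-reduct preserves heads this gives \(\Step{\IDReductP{P}{J_2}}{Y} \subseteq \Step{\IDReductP{P}{J_1}}{Y}\); instantiating \(Y = \Ir{I} \cup X\) yields the pointwise inequality \(\StepR{\IDReductP{P}{J_2}}{\Ir{I}}{X} \subseteq \StepR{\IDReductP{P}{J_1}}{\Ir{I}}{X}\) for all \(X\). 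The fixed-point principle then delivers \(\StableR{P}{\Ir{I}}{J_2} \subseteq \StableR{P}{\Ir{I}}{J_1}\), i.e.\ \(\StableRO{P}{\Ir{I}}\) is antimonotone.

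The main obstacle is getting the reduct induction right: the interplay between positive and negative occurrences forces the two monotonicity directions to be proved simultaneously, and the implication case is where a naive single-statement induction breaks down. Everything else — existence of the least fixed points, the pointwise-to-lfp passage, and the \(\Ir{I}\) direction — is routine and parallels the non-relative antimonotonicity result (\cref{lem:stable:operator:antimonotone}), of which this proposition is the contextual refinement.
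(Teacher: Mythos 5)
Your proof is correct, and its monotonicity-in-\(\Ir{I}\) half coincides with the paper's: a pointwise comparison of the monotone operators \(\StepRO{\IDReductP{P}{J}}{\Ipr{I}}\) and \(\StepRO{\IDReductP{P}{J}}{\Ir{I}}\), followed by the least-fixed-point comparison principle — which is exactly the paper's \cref{lem:operator:less}; your inline derivation of it from \nref{thm:knaster-tarski}{a} is the same argument. For antimonotonicity in \(J\), however, you take a genuinely different route. The paper uses \cref{lem:step-op-eval} to rewrite \(\StableR{P}{\Ir{I}}{J}\) as \(\Stable{\IDEvalP{P}{\Ir{I}}}{J}\), i.e.\ as the ordinary stable operator of the partially evaluated program \(\IDEvalP{P}{\Ir{I}}\), and then simply cites the known antimonotonicity of the standard stable operator (\cref{lem:stable:operator:antimonotone}). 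You instead prove it directly: a simultaneous two-directional induction showing the \FOID-reduct is antitone in the reducing interpretation on positive occurrences and monotone on negative ones, giving the pointwise inclusion \(\StepR{\IDReductP{P}{J_2}}{\Ir{I}}{X} \subseteq \StepR{\IDReductP{P}{J_1}}{\Ir{I}}{X}\) for \(J_1 \subseteq J_2\), and then the same lfp principle. Your induction is precisely the paper's \nref{lem:foid-basic}{d}, proved there by the same paired induction, so you could cite that lemma instead of redoing it. What each approach buys: yours is uniform (both halves are instances of one pointwise-to-lfp argument) and self-contained, making explicit that the antitonicity stems from the reduct freezing negative occurrences; the paper's is shorter, reusing the partial-evaluation identity to delegate the harder half to an already-established result about the non-relative operator.
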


By building on the relative stable operator, we next define its well-founded counterpart.
Unlike above,
the context is now captured by a four-valued interpretation.

\begin{definition}\label{def:well-founded:operator:relative}
Let \(P\) be an \(\FormulasF\)-program and \((\Ir{I},\Ir{J})\) be a four-valued interpretation.

For any four-valued interpretation \((I,J)\),
we define the \emph{well-founded operator relative} to \((\Ir{I},\Ir{J})\) as
\begin{align*}
  \WellR{P}{(\Ir{I},\Ir{J})}{I,J} = (\StableR{P}{\Ir{I}}{J \cup \Ir{J}}, \StableR{P}{\Ir{J}}{I \cup \Ir{I}}).
\end{align*}
\end{definition}

As above, we drop parentheses and simply write \(\WellRO{P}{I,J}\) instead of \(\WellRO{P}{(I,J)}\).
Also, we keep refraining from prepending the prefix `\FOID' to the well-founded operator
along with all concepts derived from it below.

Unlike the stable operator,
the relative well-founded one is monotone on both its argument and parameter.

\begin{proposition}\label{prp:well-founded:relative:monotonicity}
Let \(P\) be an \(\FormulasF\)-program, and
\((I,J)\) and \((\Ir{I},\Ir{J})\) be four-valued interpretations.

Then, the operators \(\WellRO{P}{\Ir{I},\Ir{J}}\) and \(\WellR{P}{\,\boldsymbol{\cdot}}{I,J}\) are both monotone w.r.t.\ the precision ordering.
\end{proposition}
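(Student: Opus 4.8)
The plan is to reduce both claims to \cref{prp:stable:relative:monotonicity}, which states that the relative stable operator $\StableR{P}{\Ir{I}}{J}$ is antimonotone in its argument $J$ and monotone in its parameter $\Ir{I}$. Since \cref{def:well-founded:operator:relative} builds the relative well-founded operator componentwise from two applications of the relative stable operator, and since the precision ordering is contravariant in its second (possible) coordinate, I expect both monotonicity statements to follow by unfolding the definition and applying these two properties in the appropriate places. Throughout I would use that $X \mapsto X \cup \Ir{I}$ is monotone, so that shrinking or growing a summand of a union moves the union in the same direction.

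First I would recall that $(I,J) \leq_p (I',J')$ means $I \subseteq I'$ and $J' \subseteq J$. Hence, to establish an inequality $\WellR{P}{(\cdot)}{\cdot} \leq_p \WellR{P}{(\cdot)}{\cdot}$ it suffices to show the certain (first) components are related by $\subseteq$ and the possible (second) components by $\supseteq$.

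For monotonicity in the argument, i.e.\ that $\WellRO{P}{\Ir{I},\Ir{J}}$ is monotone, I fix the parameter $(\Ir{I},\Ir{J})$ and assume $(I,J) \leq_p (I',J')$. Unfolding \cref{def:well-founded:operator:relative}, the certain component is $\StableR{P}{\Ir{I}}{J \cup \Ir{J}}$; from $J' \subseteq J$ we get $J' \cup \Ir{J} \subseteq J \cup \Ir{J}$, and antimonotonicity of $\StableRO{P}{\Ir{I}}$ in its argument gives $\StableR{P}{\Ir{I}}{J \cup \Ir{J}} \subseteq \StableR{P}{\Ir{I}}{J' \cup \Ir{J}}$, the inclusion required for the certain component. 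Symmetrically, the possible component is $\StableR{P}{\Ir{J}}{I \cup \Ir{I}}$; from $I \subseteq I'$ we get $I \cup \Ir{I} \subseteq I' \cup \Ir{I}$, so antimonotonicity yields $\StableR{P}{\Ir{J}}{I' \cup \Ir{I}} \subseteq \StableR{P}{\Ir{J}}{I \cup \Ir{I}}$, which is exactly the $\supseteq$ demanded in the possible coordinate. For monotonicity in the parameter, i.e.\ that $\WellR{P}{\,\boldsymbol{\cdot}}{I,J}$ is monotone, I fix the argument $(I,J)$ and assume $(\Ir{I},\Ir{J}) \leq_p (\Ipr{I},\Ipr{J})$, so $\Ir{I} \subseteq \Ipr{I}$ and $\Ipr{J} \subseteq \Ir{J}$. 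Now the certain component moves from $\StableR{P}{\Ir{I}}{J \cup \Ir{J}}$ to $\StableR{P}{\Ipr{I}}{J \cup \Ipr{J}}$, so both parameter and argument change at once; I would chain the two properties, first enlarging the parameter via $\Ir{I} \subseteq \Ipr{I}$ and parameter-monotonicity, then shrinking the argument via $\Ipr{J} \subseteq \Ir{J}$ and argument-antimonotonicity, to obtain $\StableR{P}{\Ir{I}}{J \cup \Ir{J}} \subseteq \StableR{P}{\Ipr{I}}{J \cup \Ir{J}} \subseteq \StableR{P}{\Ipr{I}}{J \cup \Ipr{J}}$. A mirror-image chain handles the possible component and produces the reverse inclusion required by $\leq_p$.

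The step I expect to demand the most care is not any individual inclusion but keeping straight the crossed structure of \cref{def:well-founded:operator:relative}: the certain component of the output is computed from the possible atoms (with parameter $\Ir{J}$), while the possible component is computed from the certain atoms (with parameter $\Ir{I}$). Because $\leq_p$ flips direction in the possible coordinate, one must verify that this flip, the antimonotonicity in the argument, and (in the parameter case) the additional monotonicity in the parameter all align so that each of the four inclusions points the correct way. Once this bookkeeping is fixed, every inclusion is an immediate consequence of \cref{prp:stable:relative:monotonicity} together with monotonicity of $\cup$.
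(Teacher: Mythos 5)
Your proposal is correct and follows essentially the same route as the paper's own proof: unfold \cref{def:well-founded:operator:relative} componentwise and apply \cref{prp:stable:relative:monotonicity} (argument-antimonotonicity for the argument case, and a chain of parameter-monotonicity plus argument-antimonotonicity for the parameter case), keeping track of the crossed certain/possible structure and the direction flip in the second coordinate of \(\leq_p\). The only cosmetic difference is that you spell out the two-step chains explicitly where the paper cites the proposition once for the combined move.
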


From \cref{thm:knaster-tarski,prp:well-founded:relative:monotonicity}, we get that the relative well-founded operator has a least fixed point.

\begin{definition}
Let \(P\) be an \(\FormulasF\)-program and \((\Ir{I},\Ir{J})\) be a four-valued interpretation.

We define the \emph{well-founded model} of \(P\) \emph{relative} to \((\Ir{I},\Ir{J})\),
written \(\WellRM{P}{(\Ir{I},\Ir{J})}\),
as the least fixed point of \(\WellRO{P}{\Ir{I},\Ir{J}}\).
\end{definition}

Whenever clear from context, we keep dropping parentheses and simply write \(\WellRM{P}{I,J}\) instead of \(\WellRM{P}{(I,J)}\).

In what follows,
we use the relativized concepts defined above to delineate the semantics and resulting simplifications of the sequence of subprograms
resulting from a grounder's decomposition of the original program.
For simplicity,
we first present a theorem capturing the composition under the well-founded operation,
before we give the general case involving a sequence of programs.

Just like suffix~\(\Relative\),
we use the suffix~\(\External\) (and similarly letter~\(E\) further below) to indicate atoms whose defining
rules are yet to come.

As in traditional splitting,
we begin by differentiating a bottom and a top program.
In addition to the input atoms~\((I,J)\) and context atoms in~\((\Ir{I},\Ir{J})\),
we moreover distinguish a set of external atoms, \((\Ie{I},\Ie{J})\),
which occur in the bottom program but are defined in the top program.
Accordingly,
the bottom program has to be evaluated relative to \((\Ir{I},\Ir{J}) \sqcup (\Ie{I},\Ie{J})\) (and not just \((\Ir{I},\Ir{J})\) as above)
to consider what could be derived by the top program.
Also, observe that our notion of splitting aims at computing well-founded models rather than stable models.

\begin{theorem}\label{prp:splitting:well-founded}
Let \(\Fb{P}\) and \(\Ft{P}\) be \(\FormulasF\)-programs,
\((\Ir{I},\Ir{J})\) be a four-valued interpretation,
\((I,J) = \WellRM{\Fb{P} \cup \Ft{P}}{\Ir{I},\Ir{J}}\),
\((\Ie{I},\Ie{J}) = (I, J) \sqcap (\IDPosNeg{\Body{\Fb{P}}} \cap \Head{\Ft{P}})\),
\((\Ib{I},\Ib{J}) = \WellRM{\Fb{P}}{(\Ir{I}, \Ir{J}) \sqcup (\Ie{I},\Ie{J})}\),
and \(({\It{I}},{\It{J}}) = \WellRM{\Ft{P}}{(\Ir{I}, \Ir{J}) \sqcup (\Ib{I},\Ib{J})}\).

Then, we have \((I,J) = (\Ib{I},\Ib{J}) \sqcup ({\It{I}},{\It{J}})\).
\end{theorem}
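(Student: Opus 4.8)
The plan is to establish the identity by proving the two precision inequalities $(I,J) \leq_p (\Ib{I},\Ib{J}) \sqcup (\It{I},\It{J})$ and $(\Ib{I},\Ib{J}) \sqcup (\It{I},\It{J}) \leq_p (I,J)$ separately, exploiting throughout that each of the three relative well-founded models is, by \cref{thm:knaster-tarski} together with \cref{prp:well-founded:relative:monotonicity}, the \emph{least} fixed point of its relative well-founded operator. Writing $(M,N) = (\Ib{I},\Ib{J}) \sqcup (\It{I},\It{J})$ and $P = \Fb{P} \cup \Ft{P}$, I would first spell out the fixed-point equations these least fixed points satisfy: $(I,J)$ obeys $I = \StableR{P}{\Ir{I}}{J \cup \Ir{J}}$ and $J = \StableR{P}{\Ir{J}}{I \cup \Ir{I}}$, while $(\Ib{I},\Ib{J})$ and $(\It{I},\It{J})$ obey the analogous equations over $\Fb{P}$ and $\Ft{P}$ with their respective enlarged contexts.

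The technical core is a \emph{decomposition lemma for the relative stable operator}. Since both the \FOID-reduct and the one-step provability operator distribute over the union (that is, $\IDReductP{P}{Y} = \IDReductP{\Fb{P}}{Y} \cup \IDReductP{\Ft{P}}{Y}$ and $\Step{P}{Z} = \Step{\Fb{P}}{Z} \cup \Step{\Ft{P}}{Z}$), the value $\StableR{P}{C}{Y}$ is the least model of a union of two positive programs relative to context $C$. I would show that this least model can be computed in two stages: first take the contribution of $\Fb{P}$ once the interface atoms in $\IDPosNeg{\Body{\Fb{P}}} \cap \Head{\Ft{P}}$ are supplied from $(I,J)$ as the externals $(\Ie{I},\Ie{J})$, and then close off under $\Ft{P}$ using the bottom atoms as additional context. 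Here I would use that the relative stable operator always returns a subset of the head atoms of the program it is applied to, so the two stages interact only through the designated interface. Pinning the externals to their values in $(I,J)$ is exactly what makes the bottom stage agree with the joint computation despite the feedback from $\Ft{P}$ into the bodies of $\Fb{P}$.

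With this lemma in hand, the first inequality follows by verifying that $(M,N)$ is a fixed point of $\WellRO{P}{\Ir{I},\Ir{J}}$: substituting the component fixed-point equations for $(\Ib{I},\Ib{J})$ and $(\It{I},\It{J})$ into the decomposition lemma reconstructs both $\StableR{P}{\Ir{I}}{N \cup \Ir{J}} = M$ and $\StableR{P}{\Ir{J}}{M \cup \Ir{I}} = N$; leastness of $(I,J)$ as the least prefixed point (\cref{thm:knaster-tarski}) then yields $(I,J) \leq_p (M,N)$. For the converse the decomposition lemma is used in the other direction: because the interface restriction of $(I,J)$ is by definition $(\Ie{I},\Ie{J})$, the joint fixed-point equations for $(I,J)$, read through the decomposition, show that $(I,J)$ restricted to the bottom atoms is itself a fixed point of $\WellRO{\Fb{P}}{(\Ir{I},\Ir{J}) \sqcup (\Ie{I},\Ie{J})}$, whence leastness forces $(\Ib{I},\Ib{J}) \leq_p (I,J)$. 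Monotonicity in the context parameter (\cref{prp:well-founded:relative:monotonicity}) transports this bound into the context used for $\Ft{P}$, and the same argument on the top atoms gives $(\It{I},\It{J}) \leq_p (I,J)$; joining the two bounds delivers $(M,N) \leq_p (I,J)$.

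The step I expect to be the main obstacle is the decomposition lemma, and in particular carrying it out faithfully in the four-valued setting. Each component of $\WellRO{P}{\,\boldsymbol{\cdot}}$ crosses the certain and possible parts — the certain atoms are computed relative to a possible-type argument and vice versa — so the nested least-model computation must track both polarities of the interface simultaneously, which is precisely why the externals are taken as the four-valued restriction $(I,J) \sqcap (\IDPosNeg{\Body{\Fb{P}}} \cap \Head{\Ft{P}})$ rather than a plain set. Additional care is needed when $\Head{\Fb{P}}$ and $\Head{\Ft{P}}$ overlap, since then a single atom may receive contributions from both programs; I would handle this by attributing each derivation to its program and checking that the join $\sqcup$ absorbs any double counting without disturbing the fixed-point equations.
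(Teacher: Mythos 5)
Your overall architecture coincides with the paper's: the technical core is the same splitting lemma for the \emph{relative stable operator} (the paper's \cref{prp:splitting:stable}, proved via partial evaluation of the reduced positive programs), and both directions are closed by Knaster--Tarski leastness arguments. Your second inequality, \((\Ib{I},\Ib{J}) \sqcup (\It{I},\It{J}) \leq_p (I,J)\), is essentially the paper's easy direction and is sound: applying the decomposition \emph{at the argument \((I,J)\) itself}, where the interface atoms carry their correct values by the very definition of \((\Ie{I},\Ie{J})\), shows that the bottom stage \(\WellR{\Fb{P}}{(\Ir{I},\Ir{J}) \sqcup (\Ie{I},\Ie{J})}{I,J}\) is a fixed point of \(\WellRO{\Fb{P}}{(\Ir{I},\Ir{J}) \sqcup (\Ie{I},\Ie{J})}\); leastness bounds \((\Ib{I},\Ib{J})\) by it, and monotonicity in the context parameter (\cref{prp:well-founded:relative:monotonicity}) transports the bound to \((\It{I},\It{J})\).

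The gap is in your first inequality. You want to show that \((M,N)=(\Ib{I},\Ib{J}) \sqcup (\It{I},\It{J})\) is an \emph{exact} fixed point of \(\WellRO{\Fb{P} \cup \Ft{P}}{\Ir{I},\Ir{J}}\) by ``substituting the component fixed-point equations into the decomposition lemma''. But the lemma, applied to \(\StableR{\Fb{P} \cup \Ft{P}}{\Ir{I}}{N \cup \Ir{J}}\), splits that value using externals taken from \emph{its own} restriction to the interface, not from \((I,J)\); likewise the inner bottom stage is computed with argument \(N \cup \Ir{J}\), whereas \((\Ib{I},\Ib{J})\) is defined with argument \(\Ib{J} \cup \Ie{J} \cup \Ir{J}\). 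Identifying the lemma's stages with \((\Ib{I},\Ib{J})\) and \((\It{I},\It{J})\) therefore requires knowing that the interface parts of these self-computed values agree with \((\Ie{I},\Ie{J})\) --- equivalently, that the interface part of \((M,N)\) agrees with that of \((I,J)\) --- which is part of what is being proven; a priori nothing even guarantees \(\Ie{I} \subseteq M\). Note that the claim ``\((M,N)\) is a fixed point of the joint operator'' is equivalent in strength to the theorem itself (it is true only because \((M,N)=(I,J)\)), so it cannot serve as an intermediate step without an independent device. The paper supplies exactly such a device: instead of a fixed point it exhibits a \emph{prefixed} point, namely \((\Ib{I},\Ib{J}) \sqcup (\Ie{I},\Ie{J}) \sqcup (\Iwht{I},\Iwht{J})\), where the externals are pinned explicitly and the top part is the one-step value \((\Iwht{I},\Iwht{J}) = \WellR{\Ft{P}}{(\Ir{I},\Ir{J}) \sqcup (\Iwhb{I},\Iwhb{J})}{I,J}\) rather than the least fixed point \((\It{I},\It{J})\). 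A prefixed point only requires inequalities, which monotonicity of the stable and well-founded operators supplies, and \cref{thm:knaster-tarski} (least fixed point \(=\) least prefixed point) then yields \((I,J) \leq_p (\Ib{I},\Ib{J}) \sqcup (\Ie{I},\Ie{J}) \sqcup (\Iwht{I},\Iwht{J})\), after which all components are identified. Your proof needs this prefixed-point construction (or an equivalent way to break the circularity); as written, the ``reconstruction'' step assumes its own conclusion.
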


Partially expanding the statements of the two previous result nicely reflects the decomposition of
the application of the well-founded founded model of a program:
\begin{align*}
  \WellRM{\Fb{P} \cup \Ft{P}}{\Ir{I},\Ir{J}} &= \WellRM{\Fb{P}}{(\Ir{I}, \Ir{J}) \sqcup (\Ie{I},\Ie{J})} \sqcup \WellRM{\Ft{P}}{(\Ir{I}, \Ir{J}) \sqcup (\Ib{I},\Ib{J})}.
\end{align*}
Note that the formulation of the theorem forms the external interpretation \((\Ie{I},\Ie{J})\),
by selecting atoms from the overarching well-founded model \((I,J)\).
This warrants the correspondence of the overall interpretations to the union of the bottom and top well-founded model.
This global approach is dropped below (after the next example) and leads to less precise composed models.

\begin{example}\label{ex:splitting}
Let us illustrate the above approach via the following program:
\begin{align}
a &                    \tag{\(\Fb{P}\)}\\
b &                    \tag{\(\Fb{P}\)}\\
c & \leftarrow a       \tag{\(\Ft{P}\)}\\
d & \leftarrow \Naf b  \tag{\(\Ft{P}\)}
\end{align}

The well-founded model of this program relative to \((\Ir{I},\Ir{J})=(\emptyset, \emptyset)\) is
\[(I,J) = (\{a,b,c\},\{a,b,c\}).\]

First, we partition the four rules of the program into
\(\Fb{P}\)
and
\(\Ft{P}\)
as given above.
We get
\((\Ie{I},\Ie{J})= (\emptyset,\emptyset)\)
since
\(
\IDPosNeg{\Body{\Fb{P}}} \cap \Head{\Ft{P}}=\emptyset
\).
Let us evaluate \(\Fb{P}\) before \(\Ft{P}\).
The well-founded model of \(\Fb{P}\) relative to \((\Ir{I},\Ir{J}) \sqcup (\Ie{I},\Ie{J})\) is
\[(\Ib{I},\Ib{J}) = (\{a,b\},\{a,b\}).\]
With this, we calculate the well-founded model of \(\Ft{P}\) relative to \((\Ir{I},\Ir{J}) \sqcup (\Ib{I},\Ib{J})\):
\[(\It{I},\It{J}) = (\{c\},\{c\}).\]
We see that the union of \((\Ib{I},\Ib{J}) \sqcup (\It{I},\It{J})\) is the same as the well-founded model of \(\Fb{P} \cup \Ft{P}\) relative to \((\Ir{I},\Ir{J})\).

This corresponds to standard splitting in the sense that \(\{a,b\}\) is a splitting set for \(\Fb{P}\cup \Ft{P}\)
and \(\Fb{P}\) is the ``bottom'' and \(\Ft{P}\) is the ``top''~\cite{liftur94a}.
\end{example}

\begin{example}
For a complement, let us reverse the roles of programs
\(\Fb{P}\)
and
\(\Ft{P}\)
in \cref{ex:splitting}.
Unlike above, body atoms in \(\Fb{P}\) now occur in rule heads of \(\Ft{P}\), i.e.,
\(\IDPosNeg{\Body{\Fb{P}}} \cap {\Head{\Ft{P}}} = \{a,b\}\).
We thus get \((\Ie{I},\Ie{J})= (\{a,b\}, \{a, b\})\).
The well-founded model of \(\Fb{P}\) relative to \((\Ir{I},\Ir{J}) \sqcup (\Ie{I},\Ie{J})\) is
\begin{align*}
(\Ib{I},\Ib{J}) & = (\{c\}, \{c\}).
\end{align*}
And the well-founded model of \(\Ft{P}\) relative to \((\Ir{I},\Ir{J}) \sqcup (\Ib{I},\Ib{J})\) is
\begin{align*}
(\It{I},\It{J}) & = (\{a,b\}, \{a,b\}).
\end{align*}
Again, we see that the union of both models is identical to \((I,J)\).

This decomposition has no direct correspondence to standard splitting~\cite{liftur94a}
since there is no splitting set.
\end{example}

Next, we generalize the previous results from two programs to sequences of programs.
For this, we let \(\Index\) be a well-ordered index set and direct our attention to sequences \({(\Fi{P}{i})}_{i \in \Index}\) of \(\FormulasF\)-programs.

\begin{definition}\label{def:well-founded:sequence}
Let \({(\Fi{P}{i})}_{i \in \Index}\) be a sequence of \(\FormulasF\)-programs.

We define the \emph{well-founded model} of \({(\Fi{P}{i})}_{i \in \Index}\) as
\begin{align}
\WellM{{(\Fi{P}{i})}_{i \in \Index}} & = \bigsqcup_{i \in \Index} (\Ii{I}{i},\Ii{J}{i})\label{eq:well-founded:sequence}\\
\intertext{where}
\Ii{E}{i} & = \IDPosNeg{\Body{\Fi{P}{i}}} \cap \bigcup_{i < j} \Head{\Fi{P}{j}},\label{eq:well-founded:sequence:external}\\
(\Iri{I}{i},\Iri{J}{i}) & = \bigsqcup_{j < i} (\Ii{I}{j},\Ii{J}{j})\text{, and}\label{eq:well-founded:sequence:relative}\\
(\Ii{I}{i},\Ii{J}{i}) & = \WellRM{\Fi{P}{i}}{(\Iri{I}{i},\Iri{J}{i}) \sqcup (\emptyset,\Ii{E}{i})}.\label{eq:well-founded:sequence:intermediate}
\end{align}
\end{definition}

The well-founded model of a program sequence is itself assembled in~\eqref{eq:well-founded:sequence}
from a sequence of well-founded models of the individual subprograms in~\eqref{eq:well-founded:sequence:intermediate}.
This provides us with semantic guidance for successive program simplification, as shown below.
In fact,
proceeding along the sequence of subprograms reflects the iterative approach of a grounding algorithm,
one component is grounded at a time.
At each stage \(i \in \Index\),
this takes into account the truth values of atoms instantiated in previous iterations,
viz.\ \((\Iri{I}{i},\Iri{J}{i})\),
as well as dependencies to upcoming components in \(\Ii{E}{i}\).
Note that unlike \cref{prp:splitting:well-founded},
the external atoms in \(\Ii{E}{i}\) are identified purely syntactically,
and the interpretation \((\emptyset,\Ii{E}{i})\) treats them as unknown.
Grounding is thus performed under incomplete information and each well-founded model in~\eqref{eq:well-founded:sequence:intermediate}
can be regarded as an over-approximation of the actual one.
This is enabled by the monotonicity of the well-founded operator in \cref{prp:well-founded:relative:monotonicity}
that only leads to a less precise result when overestimating its parameter.

Accordingly, the next theorem shows that once we split a program into a sequence of \(\FormulasF\)-programs,
we can iteratively compute an approximation of the well-founded model by considering in turn each element in the sequence.

\begin{theorem}\label{thm:sequence:well-founded}
Let \({(\Fi{P}{i})}_{i \in \Index}\) be a sequence of \(\FormulasF\)-programs.

Then, \(\WellM{{(\Fi{P}{i})}_{i \in \Index}} \leq_p \WellM{\bigcup_{i \in \Index}\Fi{P}{i}}\).
\end{theorem}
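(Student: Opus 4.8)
The plan is to prove the inequality by transfinite induction along the well-ordered index set~\(\Index\), comparing the component-wise models assembled in \cref{def:well-founded:sequence} against an \emph{exact} decomposition of the overall model obtained by iterating the two-program splitting result \cref{prp:splitting:well-founded}. Throughout I write \((I,J) = \WellM{\bigcup_{i \in \Index}\Fi{P}{i}}\) and use that the plain well-founded model agrees with the relative one under the trivial context, i.e., \(\WellM{\bigcup_{i}\Fi{P}{i}} = \WellRM{\bigcup_i \Fi{P}{i}}{(\emptyset,\emptyset)}\), which is immediate since \(\WellRO{P}{\emptyset,\emptyset}\) reduces to the ordinary well-founded operator. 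I also need the auxiliary fact that the relative well-founded \emph{model} \(\WellRM{P}{(\Ir{I},\Ir{J})}\) is monotone in its parameter \((\Ir{I},\Ir{J})\) w.r.t.\ \(\leq_p\); this follows from the parameter-monotonicity of the operator in \cref{prp:well-founded:relative:monotonicity} together with \cref{thm:knaster-tarski}, because the least fixed point of a monotone operator is itself monotone in any parameter under which the operator is monotone. Finally, I record the elementary observation that \(\sqcup\) is monotone w.r.t.\ \(\leq_p\): if \((A,B)\leq_p(A',B')\) and \((C,D)\leq_p(C',D')\), then \(A\cup C \subseteq A'\cup C'\) and \(B'\cup D' \subseteq B\cup D\), so \((A,B)\sqcup(C,D)\leq_p(A',B')\sqcup(C',D')\).

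For the induction I cut the union at a point \(k \in \Index\) into a bottom \(\bigcup_{j<k}\Fi{P}{j}\) and a top \(\bigcup_{j\ge k}\Fi{P}{j}\), and apply \cref{prp:splitting:well-founded} (relative to \((\emptyset,\emptyset)\)) to express the bottom's exact contribution to \((I,J)\) as a relative well-founded model whose external parameter is the semantic restriction of \((I,J)\) to the syntactically external atoms. The induction hypothesis states that the cumulative sequence model \(\bigsqcup_{j<k}(\Ii{I}{j},\Ii{J}{j})\) is \(\leq_p\) this exact bottom contribution. In the successor step from \(k\) to \(k+1\) I split off \(\Fi{P}{k}\) using \cref{prp:splitting:well-founded} once more and compare the sequence's component model \((\Ii{I}{k},\Ii{J}{k}) = \WellRM{\Fi{P}{k}}{(\Iri{I}{k},\Iri{J}{k}) \sqcup (\emptyset,\Ii{E}{k})}\) with the exact one. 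Two estimates drive the comparison. First, the purely syntactic external interpretation satisfies \((\emptyset,\Ii{E}{k}) \leq_p (I,J) \sqcap \Ii{E}{k}\), because \(\emptyset \subseteq I \cap \Ii{E}{k}\) and \(J \cap \Ii{E}{k} \subseteq \Ii{E}{k}\). Second, the sequence context \((\Iri{I}{k},\Iri{J}{k}) = \bigsqcup_{j<k}(\Ii{I}{j},\Ii{J}{j})\) is \(\leq_p\) the exact context by the induction hypothesis. By \(\sqcup\)-monotonicity the combined parameter used by the sequence is \(\leq_p\) the exact parameter, so parameter-monotonicity of the relative well-founded model gives \((\Ii{I}{k},\Ii{J}{k}) \leq_p\) the exact component model; one further application of \(\sqcup\)-monotonicity closes the successor step. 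Taking \(k\) to the supremum of \(\Index\) makes the top empty and the external parameter trivial, whence the exact bottom contribution becomes \((I,J)\) and the hypothesis yields \(\bigsqcup_{i\in\Index}(\Ii{I}{i},\Ii{J}{i}) \leq_p (I,J)\), as required.

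I expect the main obstacle to be the transfinite bookkeeping rather than any single estimate. At limit ordinals I must show that the exact bottom contribution is the \(\sqcup\)-supremum of the exact contributions below it, i.e., a continuity property of the relative well-founded model along the increasing chain of prefixes; combined with \(\sqcup\)-monotonicity this lets the hypothesis pass through the limit. Equally delicate is verifying that the iterated application of \cref{prp:splitting:well-founded} is internally consistent: the external and context parameters handed to each level of the peeling must coincide with the semantic restriction of \((I,J)\) predicted by the global splitting, so that the exact contributions genuinely reassemble \((I,J)\). The conceptual crux, however, is easy to isolate: \cref{prp:splitting:well-founded} forms its external atoms from the overarching model \((I,J)\), whereas \cref{def:well-founded:sequence} knows only their \emph{names} and treats them as unknown via \((\emptyset,\Ii{E}{k})\), and it is exactly this loss of information — propagated by the parameter-monotonicity of the well-founded operator — that degrades the equality of the two-program theorem into the one-sided \(\leq_p\) claimed here.
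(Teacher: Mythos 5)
Your proposal is correct and follows essentially the same route as the paper's own proof: a transfinite induction that compares the sequence's intermediate models, whose external atoms are treated as unknown via \((\emptyset,\Ii{E}{k})\), against the exact contributions obtained by iterating \cref{prp:splitting:well-founded} with parameter \((I,J)\sqcap\Ii{E}{k}\), closing the gap with the parameter-monotonicity from \cref{prp:well-founded:relative:monotonicity}. The paper states only this key idea and omits the transfinite bookkeeping you flag (limit stages, consistency of the iterated peeling), so your extra care there elaborates rather than diverges from its argument.
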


The next two results transfer \cref{thm:sequence:well-founded} to program simplification
by successively simplifying programs with the respective well-founded models of the previous programs.

\begin{theorem}\label{thm:sequence:simplification}
Let \({(\Fi{P}{i})}_{i \in \Index}\) be a sequence of \(\FormulasF\)-programs,
\((I,J)=\WellM{{(\Fi{P}{i})}_{i \in \Index}}\),
and \(\Ii{E}{i}\), \((\Iri{I}{i},\Iri{J}{i})\), and \((\Ii{I}{i},\Ii{J}{i})\) be defined as in \cref{eq:well-founded:sequence:external,eq:well-founded:sequence:relative,eq:well-founded:sequence:intermediate}.

Then, \(\Simp{\Fi{P}{k}}{I,J} \subseteq \Simp{\Fi{P}{k}}{(\Iri{I}{k},\Iri{J}{k}) \sqcup (\Ii{I}{k},\Ii{J}{k}) \sqcup (\emptyset,\Ii{E}{k})} \subseteq \Fi{P}{k} \) for all \(k\in\Index\).
\end{theorem}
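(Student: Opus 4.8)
The plan is to treat the two inclusions separately. The right inclusion $\Simp{\Fi{P}{k}}{(\Iri{I}{k},\Iri{J}{k}) \sqcup (\Ii{I}{k},\Ii{J}{k}) \sqcup (\emptyset,\Ii{E}{k})} \subseteq \Fi{P}{k}$ is immediate, since by \cref{def:simplification} every simplification is a subset of the program obtained by filtering out rules. All the work goes into the left inclusion. I would abbreviate the middle interpretation by $(I^*,J^*)$, so that $I^* = \Iri{I}{k}\cup\Ii{I}{k}$ and $J^* = \Iri{J}{k}\cup\Ii{J}{k}\cup\Ii{E}{k}$, and let $A = \IDPosNeg{\Body{\Fi{P}{k}}}$ collect the atoms that the filtering condition for rules of $\Fi{P}{k}$ can inspect (the negative atoms it reads from the certain part, the positive ones from the possible part).

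First I would compare $(I,J)$ and $(I^*,J^*)$ on $A$. Unfolding \cref{eq:well-founded:sequence,eq:well-founded:sequence:relative} gives $I^* = \bigcup_{j\le k}\Ii{I}{j}$, hence $I^*\subseteq I = \bigcup_{i\in\Index}\Ii{I}{i}$, and likewise $\bigcup_{j\le k}\Ii{J}{j}\subseteq J$. The crux is the reverse containment $J\cap A\subseteq J^*$. Here I would invoke the structural fact that the possible atoms produced by the relative well-founded operator are always head atoms, i.e. $\Ii{J}{i}\subseteq\Head{\Fi{P}{i}}$: by \cref{def:well-founded:operator:relative} the possible component of $\Fi{P}{i}$'s relative well-founded model is a relative stable model, and every iterate of the underlying relative one-step provability operator emits only heads of $\Fi{P}{i}$. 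Given $a\in J\cap A$, pick $i$ with $a\in\Ii{J}{i}$: if $i\le k$ then $a\in\Iri{J}{k}\cup\Ii{J}{k}\subseteq J^*$; if $i>k$ then $a\in\Head{\Fi{P}{i}}$ together with $a\in A$ places $a$ in $A\cap\bigcup_{k<j}\Head{\Fi{P}{j}} = \Ii{E}{k}\subseteq J^*$ by \cref{eq:well-founded:sequence:external}. Thus $J\cap A\subseteq J^*$, while $I^*\subseteq I$ holds globally.

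The remaining ingredient is a monotonicity property of the condition $J\models\IDReductP{\Body{r}}{I}$, which I would record as two small lemmas proved by induction over $\FormulasF$. After taking the \FOID-reduct, $\IDReductP{\Body{r}}{I}$ contains atoms only at positive positions (each negatively occurring atom has been replaced by $\top$ or $\bot$ according to $I$), so its satisfaction is \emph{monotone} in $J$; and lowering $I$ only turns negatively placed $\top$'s into $\bot$'s, which can only make the reduct easier to satisfy, so its satisfaction is \emph{antimonotone} in $I$. Applying these to a rule $r\in\Simp{\Fi{P}{k}}{I,J}$: from $J\models\IDReductP{\Body{r}}{I}$ and $I^*\subseteq I$ antimonotonicity gives $J\models\IDReductP{\Body{r}}{I^*}$; since the atoms of $\IDReductP{\Body{r}}{I^*}$ lie in $\IDPos{\Body{r}}\subseteq A$ and $J\cap A\subseteq J^*$, monotonicity in the possible atoms yields $J^*\models\IDReductP{\Body{r}}{I^*}$, i.e. $r\in\Simp{\Fi{P}{k}}{I^*,J^*}$.

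I expect the main obstacle to be the bookkeeping around $\Ii{E}{k}$: establishing $J\cap A\subseteq J^*$ hinges precisely on showing that any possible atom contributed by a \emph{later} subprogram and occurring in a body of $\Fi{P}{k}$ must be a head of that later program, and is therefore already accounted for by the external set $\Ii{E}{k}$. The two monotonicity lemmas are routine but require care about which positions the reduct leaves as atoms versus constants; everything else reduces to set manipulations that are insensitive to the (possibly transfinite) structure of the index set $\Index$.
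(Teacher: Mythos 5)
Your proof is correct and follows essentially the same route as the paper's: both arguments compare the two interpretations only on the atoms $\IDPosNeg{\Body{\Fi{P}{k}}}$, use the fact that $\Ii{J}{i} \subseteq \Head{\Fi{P}{i}}$ so that any possible atom contributed by a later component and occurring in a body of $\Fi{P}{k}$ falls into $\Ii{E}{k}$, and then conclude via the two monotonicity properties of the condition $J \models \IDReductP{\Body{r}}{I}$ (these are \nref{lem:foid-basic}{d} and \cref{lem:monotone-positive} in the paper). The one divergence is the reading of $(I,J)$: the paper treats it as the well-founded model of the union $\bigcup_{i\in\Index}\Fi{P}{i}$ (this is what its use in \cref{cor:sequence:simplification:stable} and in \cref{ex:sequence:a} requires), and accordingly opens by invoking \cref{thm:sequence:well-founded} to obtain $\bigsqcup_{i\in\Index}(\Ii{I}{i},\Ii{J}{i}) \leq_p (I,J)$, whereas you unfolded the statement literally through \cref{eq:well-founded:sequence}, which turns that step into a definitional identity. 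Under the paper's reading, your equalities $I = \bigcup_{i\in\Index}\Ii{I}{i}$ and $J = \bigcup_{i\in\Index}\Ii{J}{i}$ must be weakened to the inclusions $\bigcup_{i\in\Index}\Ii{I}{i} \subseteq I$ and $J \subseteq \bigcup_{i\in\Index}\Ii{J}{i}$ provided by \cref{thm:sequence:well-founded}; with that one substitution the rest of your argument goes through verbatim. In fact your version establishes the statement for the less precise interpretation $\bigsqcup_{i\in\Index}(\Ii{I}{i},\Ii{J}{i})$, which is formally stronger and yields the paper's version by the very same monotonicity lemmas, so nothing essential is missing.
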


\begin{corollary}\label{cor:sequence:simplification:stable}
Let \({(\Fi{P}{i})}_{i \in \Index}\) be a sequence of \(\FormulasR\)-programs,
and \(\Ii{E}{i}\), \((\Iri{I}{i},\Iri{J}{i})\), and \((\Ii{I}{i},\Ii{J}{i})\) be defined as in \cref{eq:well-founded:sequence:external,eq:well-founded:sequence:relative,eq:well-founded:sequence:intermediate}.

Then, \(\bigcup_{i \in \Index}{\Fi{P}{i}}\) and \(\bigcup_{i \in \Index}\Simp{\Fi{P}{i}}{(\Iri{I}{i},\Iri{J}{i}) \sqcup
(\Ii{I}{i},\Ii{J}{i}) \sqcup (\emptyset,\Ii{E}{i})}\) have the same well-founded and stable models.
\end{corollary}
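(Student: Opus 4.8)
The plan is to reduce everything to the single-program ``sandwich'' results \cref{thm:well-founded:simplification:superset} and \cref{cor:stable:simplification:superset}, applied to $P = \bigcup_{i\in\Index}\Fi{P}{i}$ together with its \emph{own} well-founded model. Write $Q = \bigcup_{i\in\Index}\Simp{\Fi{P}{i}}{(\Iri{I}{i},\Iri{J}{i}) \sqcup (\Ii{I}{i},\Ii{J}{i}) \sqcup (\emptyset,\Ii{E}{i})}$ for the program appearing in the claim, let $(I,J) = \WellM{{(\Fi{P}{i})}_{i \in \Index}}$ be the well-founded model of the \emph{sequence}, and let $(I^{\ast},J^{\ast}) = \WellM{P}$ be the well-founded model of the \emph{union}. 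Since \cref{thm:well-founded:simplification:superset} and \cref{cor:stable:simplification:superset} are stated relative to the well-founded model of the program being simplified, the whole argument hinges on exhibiting the chain $\Simp{P}{I^{\ast},J^{\ast}} \subseteq Q \subseteq P$; once this is in hand, both conclusions follow at once, using that every $\FormulasR$-program is an $\FormulasF$-program so that \cref{thm:well-founded:simplification:superset} applies for the well-founded part and \cref{cor:stable:simplification:superset} for the stable part.

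The inclusion $Q \subseteq P$ is immediate from \cref{def:simplification}, since simplification only drops rules. For the left inclusion I would proceed in three steps. First, because simplification is a per-rule filter, it distributes over unions of programs, so $\Simp{P}{I,J} = \bigcup_{i\in\Index}\Simp{\Fi{P}{i}}{I,J}$; combined with the left inclusion of \cref{thm:sequence:simplification} taken over all $i\in\Index$, this gives $\Simp{P}{I,J} \subseteq Q$. Second, \cref{thm:sequence:well-founded} yields $(I,J) \leq_p (I^{\ast},J^{\ast})$, i.e.\ $I \subseteq I^{\ast}$ and $J^{\ast} \subseteq J$. Third, I would invoke the anti-monotonicity of simplification in its interpretation argument: whenever $(I,J) \leq_p (I^{\ast},J^{\ast})$, we have $\Simp{P}{I^{\ast},J^{\ast}} \subseteq \Simp{P}{I,J}$. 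Chaining these three facts delivers $\Simp{P}{I^{\ast},J^{\ast}} \subseteq \Simp{P}{I,J} \subseteq Q$, completing the sandwich.

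The main obstacle is the anti-monotonicity lemma, where the polarity bookkeeping of the \FOID-reduct must be handled carefully. To show $r \in \Simp{P}{I^{\ast},J^{\ast}}$ implies $r \in \Simp{P}{I,J}$, i.e.\ that $J^{\ast} \models \IDReductP{\Body{r}}{I^{\ast}}$ implies $J \models \IDReductP{\Body{r}}{I}$ under $I \subseteq I^{\ast}$ and $J^{\ast} \subseteq J$, I would treat the two coordinates separately. On the one hand, $\IDReductP{\Body{r}}{I}$ is a \emph{positive} formula: every negatively occurring atom of $\Body{r}$ has been replaced by $\top$ or $\bot$, so only positively occurring atoms survive, and they occur positively; hence satisfaction of $\IDReductP{\Body{r}}{I}$ is monotone in the checking interpretation, and enlarging $J^{\ast}$ to $J$ preserves it. On the other hand, shrinking the certain set from $I^{\ast}$ to $I$ can only turn negative-position constants from $\top$ into $\bot$ (for atoms in $I^{\ast}\setminus I$), which, occurring negatively, can only make the reduct \emph{easier} to satisfy; so $J^{\ast} \models \IDReductP{\Body{r}}{I^{\ast}}$ implies $J^{\ast} \models \IDReductP{\Body{r}}{I}$. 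Composing these, applying the $I$-coordinate step first and the $J$-coordinate step second, gives $J \models \IDReductP{\Body{r}}{I}$. I would phrase the two implications as separate sub-lemmas (one fixing $I$ and varying $J$, the other fixing $J$ and varying $I$) to keep the bookkeeping clean.

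With the sandwich $\Simp{P}{I^{\ast},J^{\ast}} \subseteq Q \subseteq P$ established and $(I^{\ast},J^{\ast})$ being the well-founded model of $P$, \cref{thm:well-founded:simplification:superset} yields that $P$ and $Q$ have the same well-founded model, and \cref{cor:stable:simplification:superset}, applicable because the $\Fi{P}{i}$ and hence $P$ and $Q$ are $\FormulasR$-programs, yields that they have the same stable models. This is exactly the assertion of the corollary.
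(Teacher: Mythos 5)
Your proof is correct and follows essentially the same route as the paper: the paper obtains the corollary directly from \cref{thm:sequence:simplification} together with an auxiliary lemma (itself a consequence of \cref{thm:well-founded:simplification:superset,cor:stable:simplification:superset} and the fact that simplification distributes over unions) stating that any per-component programs squeezed between \(\Simp{\Fi{P}{i}}{I^{\ast},J^{\ast}}\) and \(\Fi{P}{i}\), with \((I^{\ast},J^{\ast})\) the well-founded model of \(\bigcup_{i\in\Index}\Fi{P}{i}\), yield a union with the same well-founded and stable models. Your explicit anti-monotonicity bridge between the sequence's and the union's well-founded models is sound, but it is work the paper folds into its proof of \cref{thm:sequence:simplification}, whose argument actually establishes the sandwich with respect to the union's well-founded model (the same two facts you isolate, \Nref{lem:foid-basic}{d} and monotonicity of positive formulas, appear there in the step ``because the body literals determine the simplification'').
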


Let us mention that the previous result extends to sequences of \(\FormulasF\)-programs and their well-founded models
but not their stable models.

\begin{example}\label{ex:sequence:a}
To illustrate \cref{thm:sequence:well-founded},
let us consider the following programs, \(\Fi{P}{1}\) and \(\Fi{P}{2}\):
\begin{align*}
a & \leftarrow \Naf c \tag{\(\Fi{P}{1}\)} \\
b & \leftarrow \Naf d \tag{\(\Fi{P}{1}\)} \\
c & \leftarrow \Naf b \tag{\(\Fi{P}{2}\)} \\
d & \leftarrow e      \tag{\(\Fi{P}{2}\)}
\end{align*}

The well-founded model of \(\Fi{P}{1} \cup \Fi{P}{2}\) is
\[(I,J) = (\{a,b\},\{a,b\}).\]

Let us evaluate \(\Fi{P}{1}\) before \(\Fi{P}{2}\).
While no head literals of \(\Fi{P}{2}\) occur positively in \(\Fi{P}{1}\),
the head literals \(c\) and \(d\) of \(\Fi{P}{2}\) occur negatively in rule bodies of \(\Fi{P}{1}\).
Hence, we get \(\Ii{E}{1} = \{c,d\}\)
and treat both atoms as unknown while calculating the well-founded model of \(\Fi{P}{1}\) relative to \((\emptyset, \{c,d\})\):
\[(\Ii{I}{1},\Ii{J}{1}) = (\emptyset,\{a,b\}).\]

We obtain that both \(a\) and \(b\) are unknown.
With this and \(\Ii{E}{2} = \emptyset\), we can calculate the well-founded model of \(\Fi{P}{2}\) relative to \((\Ii{I}{1},\Ii{J}{1})\):
\[(\Ii{I}{2},\Ii{J}{2}) = (\emptyset,\{c\}).\]
We see that because \(a\) is unknown, we have to derive \(c\) as unknown, too.
And because there is no rule defining \(e\), we cannot derive \(d\).
Hence,
\((\Ii{I}{1},\Ii{J}{1}) \sqcup (\Ii{I}{2},\Ii{J}{2})\) is less precise than \((I,J)\) because, when evaluating \(\Fi{P}{1}\),
it is not yet known that \(c\) is true and \(d\) is false.

Next, we illustrate the simplified programs according to \cref{thm:sequence:simplification}:
\begin{align*}
a & \leftarrow \Naf c & a & \leftarrow \Naf c \tag{\(\Fi{P}{1}\)}\\
b & \leftarrow \Naf d & b & \leftarrow \Naf d \tag{\(\Fi{P}{1}\)}\\
  &                   & c & \leftarrow \Naf b \tag{\(\Fi{P}{2}\)}
\end{align*}
The left column contains the simplification of \(\Fi{P}{1} \cup \Fi{P}{2}\) w.r.t. \((I,J)\) and the right column the simplification of
\(\Fi{P}{1}\) w.r.t.\ \((\Ii{I}{1},\Ii{J}{1})\) and
\(\Fi{P}{2}\) w.r.t.\ \((\Ii{I}{1},\Ii{J}{1}) \sqcup (\Ii{I}{2},\Ii{J}{2})\).
Note that \(d \leftarrow e\) has been removed in both columns because \(e\) is false in both \((I,J)\) and \((\Ii{I}{1},\Ii{J}{1}) \sqcup (\Ii{I}{2},\Ii{J}{2})\).
But we can only remove \(c \leftarrow \Naf b\) from the left column because, while \(b\) is false in \((I,J)\), it is unknown in \((\Ii{I}{1},\Ii{J}{1}) \sqcup (\Ii{I}{2},\Ii{J}{2})\).

Finally, observe that in accordance with \cref{thm:stable:simplification,cor:stable:simplification,cor:sequence:simplification:stable}, the program \(\Fi{P}{1} \cup \Fi{P}{2}\) and the two simplified programs have the same stable and well-founded models.
\end{example}

Clearly, the best simplifications are obtained when simplifying with the actual well-founded model of the overall program.
This can be achieved for a sequence as well whenever \(\Ii{E}{i}\) is empty, that is, if there is no need to approximate the impact of upcoming atoms.

\begin{corollary}\label{cor:sequence:well-founded:stratified}
Let \({(\Fi{P}{i})}_{i \in \Index}\) be a sequence of \(\FormulasF\)-programs
and \(\Ii{E}{i}\) be defined as in \cref{eq:well-founded:sequence:external}.

If \(\Ii{E}{i} = \emptyset\) for all \(i \in \Index\)
then \(\WellM{{(\Fi{P}{i})}_{i \in \Index}} = \WellM{\bigcup_{i \in \Index}\Fi{P}{i}}\).
\end{corollary}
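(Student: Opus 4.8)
The plan is to sandwich the monolithic model $\WellM{\bigcup_{i\in\Index}\Fi{P}{i}}$ between two bounds. Set $R=\bigcup_{i\in\Index}\Fi{P}{i}$ and let $(I,J)=\WellM{{(\Fi{P}{i})}_{i\in\Index}}=\bigsqcup_{i\in\Index}(\Ii{I}{i},\Ii{J}{i})$, so that $I=\bigcup_{i\in\Index}\Ii{I}{i}$ and $J=\bigcup_{i\in\Index}\Ii{J}{i}$ (recall $\sqcup$ is componentwise union). One inequality, $(I,J)\leq_p\WellM{R}$, is already provided by \cref{thm:sequence:well-founded} and uses the hypothesis nowhere. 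For the converse I would show that $(I,J)$ is a \emph{prefixed point} of the monotone operator $\WellO{R}$; since $\WellM{R}$ is its least prefixed point by \cref{thm:knaster-tarski}, this gives $\WellM{R}\leq_p(I,J)$, and the two bounds yield the stated equality. Because $\Well{R}{I,J}=(\Stable{R}{J},\Stable{R}{I})$, the prefixed-point condition unfolds into the two inclusions $\Stable{R}{J}\subseteq I$ and $J\subseteq\Stable{R}{I}$, where $\Stable{R}{X}=\LeastM{\IDReductP{R}{X}}$. Intuitively, $\Ii{E}{i}=\emptyset$ makes the syntactic over-approximation $(\emptyset,\Ii{E}{i})$ vanish, so each component is evaluated \emph{exactly}; \cref{prp:splitting:well-founded} is the two-component instance of this phenomenon.

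The raw material for both inclusions is the per-component fixed-point equations. As $\Ii{E}{i}=\emptyset$, each $(\Ii{I}{i},\Ii{J}{i})$ equals $\WellRM{\Fi{P}{i}}{\Iri{I}{i},\Iri{J}{i}}$ and is therefore a fixed point of the relative well-founded operator, giving $\Ii{I}{i}=\StableR{\Fi{P}{i}}{\Iri{I}{i}}{\Ii{J}{i}\cup\Iri{J}{i}}$ and $\Ii{J}{i}=\StableR{\Fi{P}{i}}{\Iri{J}{i}}{\Ii{I}{i}\cup\Iri{I}{i}}$, with $(\Iri{I}{i},\Iri{J}{i})=\bigsqcup_{j<i}(\Ii{I}{j},\Ii{J}{j})$. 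Unfolding each relative stable operator as a least fixed point turns these into closure properties such as $\Ii{I}{i}=\Step{\IDReductPC{\Fi{P}{i}}{\Ii{J}{i}\cup\Iri{J}{i}}}{\Ii{I}{i}\cup\Iri{I}{i}}$. I will also use the elementary fact that $\Ii{I}{m}\subseteq\Head{\Fi{P}{m}}$ and $\Ii{J}{m}\subseteq\Head{\Fi{P}{m}}$, since the provability operator only ever returns head atoms.

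The inclusion $J\subseteq\Stable{R}{I}$ holds even without the stratification hypothesis. Writing $M=\Stable{R}{I}=\LeastM{\IDReductP{R}{I}}$, I would prove $\Ii{J}{i}\subseteq M$ by transfinite induction on $i\in\Index$: assuming $\Ii{J}{j}\subseteq M$ for all $j<i$ (hence $\Iri{J}{i}\subseteq M$), iterate from $\emptyset$ the relative provability operator whose least fixed point is $\Ii{J}{i}$ and check that every iterate stays inside $M$. The step uses that $\Ii{I}{i}\cup\Iri{I}{i}\subseteq I$, so $\IDReductPC{\Fi{P}{i}}{\Ii{I}{i}\cup\Iri{I}{i}}$ fires no more heads than $\IDReductPC{\Fi{P}{i}}{I}\subseteq\IDReductP{R}{I}$, together with monotonicity of the one-step provability operator and the fixed-point identity $M=\Step{\IDReductP{R}{I}}{M}$. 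Taking the union over all $i$ yields $J\subseteq M$.

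The inclusion $\Stable{R}{J}\subseteq I$ is where $\Ii{E}{i}=\emptyset$ is indispensable, and I expect it to be the crux. The idea is to show that $I$ is already a model of the positive program $\IDReductP{R}{J}$, whence the least model $\Stable{R}{J}=\LeastM{\IDReductP{R}{J}}$ is contained in $I$. The hypothesis says that no atom of $\IDPosNeg{\Body{\Fi{P}{i}}}$ is a head of a later component; combined with $\Ii{I}{m},\Ii{J}{m}\subseteq\Head{\Fi{P}{m}}$ this confines every body atom of $\Fi{P}{i}$ to the strata $\leq i$. Hence, on the atoms of $\Body{\Fi{P}{i}}$ the global pair $(I,J)$ agrees atom-by-atom with the local pair $(\Ii{I}{i}\cup\Iri{I}{i},\Ii{J}{i}\cup\Iri{J}{i})$; consequently, for every rule $r\in\Fi{P}{i}$ the reduced body $\IDReductP{\Body{r}}{J}$ coincides with $\IDReductP{\Body{r}}{\Ii{J}{i}\cup\Iri{J}{i}}$, and it is satisfied by $I$ iff it is satisfied by $\Ii{I}{i}\cup\Iri{I}{i}$. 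In that case the closure equation for $\Ii{I}{i}$ forces $\Head{r}\in\Ii{I}{i}\subseteq I$, so $I\models\IDReductP{r}{J}$; ranging over all components gives $I\models\IDReductP{R}{J}$ and closes the argument. The delicate points are keeping the two monotonicity directions apart (larger reduct sets weaken negative literals and fire more heads, while larger interpretations satisfy more positive bodies) and verifying that the stratification indeed makes global and local reducts agree atom-by-atom on each body; once that agreement is in place, the component closure equations do the rest.
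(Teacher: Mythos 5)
Your overall architecture is sound, and it is a genuinely different route from the paper's: the paper proves this corollary by re-running the transfinite induction behind \cref{thm:sequence:well-founded}, which rests on the splitting result \cref{prp:splitting:well-founded}, and observing that when \(\Ii{E}{i}=\emptyset\) the syntactic approximation \((\emptyset,\Ii{E}{i})\) coincides with the exact external interpretation \((I,J)\sqcap\Ii{E}{i}\), so each intermediate model of the sequence equals the corresponding piece of \(\WellM{\bigcup_{i\in\Index}\Fi{P}{i}}\). Your sandwich argument --- one bound imported from \cref{thm:sequence:well-founded}, the other by exhibiting \((I,J)\) as a prefixed point of \(\WellO{R}\) and invoking \cref{thm:knaster-tarski} --- avoids the splitting machinery for the second direction, and your third paragraph (the inclusion \(\Stable{R}{J}\subseteq I\)) is correct: under the hypothesis, heads of later components avoid \(\IDPosNeg{\Body{\Fi{P}{i}}}\), so the global and local interpretations agree on all body atoms, the reducts coincide literally, and the component fixed-point equations force the head into \(\Ii{I}{i}\).

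There is, however, a genuine flaw in your second paragraph. You claim \(J\subseteq\Stable{R}{I}\) holds \emph{without} the stratification hypothesis, justified by the step that \(\IDReductPC{\Fi{P}{i}}{\Ii{I}{i}\cup\Iri{I}{i}}\) ``fires no more heads than'' \(\IDReductPC{\Fi{P}{i}}{I}\) because \(\Ii{I}{i}\cup\Iri{I}{i}\subseteq I\). The monotonicity runs the other way: by \nref{lem:foid-basic}{d}, the \FOID-reduct with respect to the \emph{smaller} certain set is the \emph{weaker} formula (fewer negative literals get falsified), so it fires \emph{more} heads --- this is precisely the antimonotonicity of the stable operator (\cref{lem:stable:operator:antimonotone}). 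Moreover, the unconditional claim itself is false. Take \(\Fi{P}{1}=\{a\leftarrow\Naf b\}\) and \(\Fi{P}{2}=\{b\}\), so \(\Ii{E}{1}=\{b\}\neq\emptyset\); then \((\Ii{I}{1},\Ii{J}{1})=(\emptyset,\{a\})\), \((\Ii{I}{2},\Ii{J}{2})=(\{b\},\{b\})\), hence \((I,J)=(\{b\},\{a,b\})\), while with \(R=\Fi{P}{1}\cup\Fi{P}{2}\) we get \(\Stable{R}{I}=\LeastM{\{a\leftarrow\bot,\ b\}}=\{b\}\), so \(J\nsubseteq\Stable{R}{I}\). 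The repair is available inside your own proposal: run the same agreement argument as in your third paragraph. Under \(\Ii{E}{i}=\emptyset\), \(I\) and \(\Ii{I}{i}\cup\Iri{I}{i}\) agree on \(\IDNeg{\Body{\Fi{P}{i}}}\) (since \(\Ii{I}{j}\subseteq\Head{\Fi{P}{j}}\) and, for \(j>i\), \(\Head{\Fi{P}{j}}\cap\IDPosNeg{\Body{\Fi{P}{i}}}=\emptyset\)), so for \(r\in\Fi{P}{i}\) the reducts \(\IDReductP{\Body{r}}{I}\) and \(\IDReductP{\Body{r}}{\Ii{I}{i}\cup\Iri{I}{i}}\) are identical; then \(M=\Stable{R}{I}\) being a fixed point of \(\StepO{\IDReductP{R}{I}}\) gives the needed closure \(\Step{\IDReductP{\Fi{P}{i}}{\Ii{I}{i}\cup\Iri{I}{i}}}{M\cup\Iri{J}{i}}\subseteq M\). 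With that correction, both inclusions hold and your proof goes through; as written, the justification of \(J\subseteq\Stable{R}{I}\) does not.
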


\begin{corollary}\label{cor:sequence:simplification:stratified}
Let \({(\Fi{P}{i})}_{i \in \Index}\) be a sequence of \(\FormulasF\)-programs,
\((I,J) = \WellM{{(\Fi{P}{i})}_{i \in \Index}}\),
and \(\Ii{E}{i}\), \((\Iri{I}{i},\Iri{J}{i})\), and \((\Ii{I}{i},\Ii{J}{i})\) be defined as in \cref{eq:well-founded:sequence:external,eq:well-founded:sequence:relative,eq:well-founded:sequence:intermediate}.

If \(\Ii{E}{i} = \emptyset\) for all \(i \in \Index\), then \(\Simp{\Fi{P}{k}}{I,J} = \Simp{\Fi{P}{k}}{(\Iri{I}{k},\Iri{J}{k}) \sqcup (\Ii{I}{k},\Ii{J}{k})}\) for all \(k\in\Index\).
\end{corollary}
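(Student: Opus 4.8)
The plan is to trace the two simplifications down to \cref{def:simplification}, which decides membership of a rule $r\in\Fi{P}{k}$ in a simplification $\Simp{\Fi{P}{k}}{K,L}$ by the single condition $L\models\IDReductP{\Body{r}}{K}$. First I would record a \emph{locality} observation: inspecting the mutual definition of $\IDReductP{\cdot}{K}$ and $\IDReductN{\cdot}{K}$, the atoms that the $\FOID$-reduct replaces by $\top$ or $\bot$ are exactly those occurring negatively, so the formula $\IDReductP{\Body{r}}{K}$ is determined by $K\cap\IDNeg{\Body{r}}$, while the atoms that survive occur positively, so whether $L$ satisfies it is determined by $L\cap\IDPos{\Body{r}}$. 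Hence the truth of $L\models\IDReductP{\Body{r}}{K}$ depends only on how $(K,L)$ restricts to $\IDPosNeg{\Body{r}}$; two interpretations agreeing there keep or discard $r$ alike.

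It therefore suffices to show that $(I,J)$ and $(\Iri{I}{k},\Iri{J}{k})\sqcup(\Ii{I}{k},\Ii{J}{k})$ agree on every atom in $\IDPosNeg{\Body{\Fi{P}{k}}}$. Since $\sqcup$ is componentwise union, \cref{eq:well-founded:sequence,eq:well-founded:sequence:relative} let me write $(I,J)=\bigsqcup_{i\in\Index}(\Ii{I}{i},\Ii{J}{i})$ and $(\Iri{I}{k},\Iri{J}{k})\sqcup(\Ii{I}{k},\Ii{J}{k})=\bigsqcup_{j\le k}(\Ii{I}{j},\Ii{J}{j})$, so the two differ precisely by the contributions $(\Ii{I}{j},\Ii{J}{j})$ with $j>k$. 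The problem thus reduces to showing that these later contributions avoid $\IDPosNeg{\Body{\Fi{P}{k}}}$.

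For this I would prove the support bound $\Ii{I}{j}\cup\Ii{J}{j}\subseteq\Head{\Fi{P}{j}}$. By \cref{eq:well-founded:sequence:intermediate} and \cref{def:well-founded:operator:relative}, at the fixed point both components of $(\Ii{I}{j},\Ii{J}{j})$ are values of the relative stable operator for $\Fi{P}{j}$, each a least fixed point of a one-step provability operator (relative to some interpretation) over a $\FOID$-reduct $\IDReductP{\Fi{P}{j}}{L}$ of $\Fi{P}{j}$. As every image of a one-step provability operator is contained in the program's head atoms, and the $\FOID$-reduct leaves heads intact so that $\Head{\IDReductP{\Fi{P}{j}}{L}}=\Head{\Fi{P}{j}}$, the least fixed point lies in $\Head{\Fi{P}{j}}$. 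Now the hypothesis $\Ii{E}{k}=\emptyset$ unfolds through \cref{eq:well-founded:sequence:external} to $\IDPosNeg{\Body{\Fi{P}{k}}}\cap\bigcup_{k<j}\Head{\Fi{P}{j}}=\emptyset$; together with the support bound this gives $\IDPosNeg{\Body{\Fi{P}{k}}}\cap(\Ii{I}{j}\cup\Ii{J}{j})=\emptyset$ for every $j>k$. Consequently $(I,J)$ and $\bigsqcup_{j\le k}(\Ii{I}{j},\Ii{J}{j})$ coincide on $\IDPosNeg{\Body{\Fi{P}{k}}}$, and the locality observation yields $\Simp{\Fi{P}{k}}{I,J}=\Simp{\Fi{P}{k}}{(\Iri{I}{k},\Iri{J}{k})\sqcup(\Ii{I}{k},\Ii{J}{k})}$.

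I expect the main obstacle to be the support bound $\Ii{I}{j}\cup\Ii{J}{j}\subseteq\Head{\Fi{P}{j}}$, as it is the one place where the fixed-point construction behind the relative well-founded model must be opened up; by contrast the locality of simplification and the disjointness bookkeeping are routine once it is available. It is worth noting that, although the conclusion quantifies over all $k\in\Index$, the argument for a fixed $k$ uses only $\Ii{E}{k}=\emptyset$, so the uniform hypothesis of the corollary is exactly what the proof consumes.
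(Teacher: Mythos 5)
Your proposal is correct and takes essentially the same route as the paper: the paper proves this corollary by rerunning the proof of \cref{thm:sequence:simplification} with the \(\leq_p\) and \(\subseteq\) steps promoted to equalities, and that proof rests on exactly your three ingredients --- the support bound \(\Ii{I}{j} \cup \Ii{J}{j} \subseteq \Head{\Fi{P}{j}}\) (\nref{obs:stable:relative:properties}{b}, which you reprove inline), the unfolding of \(\Ii{E}{k}=\emptyset\), and the fact that whether a rule survives simplification depends only on the interpretation restricted to \(\IDPosNeg{\Body{\Fi{P}{k}}}\). The only presentational difference is that you work directly with the sequence join \(\bigsqcup_{i\in\Index}(\Ii{I}{i},\Ii{J}{i})\) and therefore never invoke the comparison with \(\WellM{\bigcup_{i\in\Index}\Fi{P}{i}}\) (\cref{thm:sequence:well-founded,cor:sequence:well-founded:stratified}) through which the paper's referenced proof passes.
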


\begin{example}\label{ex:sequence:b}
Next, let us illustrate \cref{cor:sequence:well-founded:stratified} on an example.
We take the same rules as in \cref{ex:sequence:a} but use a different sequence:
\begin{align*}
d & \leftarrow e      \tag{\(\Fi{P}{1}\)} \\
b & \leftarrow \Naf d \tag{\(\Fi{P}{1}\)} \\
c & \leftarrow \Naf b \tag{\(\Fi{P}{2}\)} \\
a & \leftarrow \Naf c \tag{\(\Fi{P}{2}\)}
\end{align*}

Observe that the head literals of \(\Fi{P}{2}\) do not occur in the bodies of \(\Fi{P}{1}\), i.e.,
\(\Ii{E}{1} = \IDPosNeg{\Body{\Fi{P}{1}}} \cap {\Head{\Fi{P}{2}}} = \emptyset\).
The well-founded model of \(\Fi{P}{1}\) is
\begin{align*}
(\Ii{I}{1},\Ii{J}{1}) & = (\{b\}, \{b\}).
\end{align*}
And the well-founded model of \(\Fi{P}{2}\) relative to \((\{b\}, \{b\})\) is
\begin{align*}
(\Ii{I}{2},\Ii{J}{2}) & = (\{a\}, \{a\}).
\end{align*}
Hence, the union of both models is identical to the well-founded model of \(\Fi{P}{1} \cup \Fi{P}{2}\).

Next, we investigate the simplified program according to \cref{cor:sequence:simplification:stratified}:
\begin{align*}
b & \leftarrow \Naf d \tag{\(\Fi{P}{1}\)}\\
a & \leftarrow \Naf c \tag{\(\Fi{P}{2}\)}
\end{align*}
As in \cref{ex:sequence:a}, we delete rule \(d \leftarrow e\) because \(e\) is false in \((\Ii{I}{1},\Ii{J}{1})\).
But this time, we can also remove rule \(c \leftarrow \Naf b\) because \(b\) is true in \((\Ii{I}{1},\Ii{J}{1}) \sqcup (\Ii{I}{2},\Ii{J}{2})\).
\end{example}

 \section{Aggregate programs}\label{sec:aggregates}

We now turn to programs with aggregates and, at the same time, to programs with variables.
That is, we now deal with finite nonground programs whose instantiation
may lead to infinite ground programs including infinitary subformulas.
This is made precise by~\citeN{haliya14a} and \citeN{gehakalisc15a} where
aggregate programs are associated with infinitary propositional formulas~\cite{truszczynski12a}.
However,
the primary goal of grounding is to produce a finite set of ground rules with finitary subformulas only.
In fact, the program simplification introduced in \cref{sec:simplification} allows us to produce
an equivalent finite ground program whenever the well-founded model is finite.
The source of infinitary subformulas lies in the instantiation of aggregates.
We address this below by introducing an aggregate translation bound by an interpretation that produces finitary formulas
whenever this interpretation is finite.
Together, our concepts of program simplification and aggregate translation provide the backbone for
turning programs with aggregates into semantically equivalent finite programs with finitary subformulas.

Our concepts follow the ones of~\citeN{gehakalisc15a}; the semantics of aggregates is aligned with that of~\citeN{ferraris11a}
yet lifted to infinitary formulas~\cite{truszczynski12a,haliya14a}.

We consider a \emph{signature} \(\Signature=(\SignatureF,\SignatureP,\SignatureV)\) consisting of sets of function, predicate, and variable symbols.
The sets of variable and function symbols are disjoint.
\emph{Function} and \emph{predicate symbols} are associated with non-negative arities.
For short, a predicate symbol \(p\) of arity \(n\) is also written as \(p/n\).
In the following, we use lower case strings for function and predicate symbols,
and upper case strings for variable symbols.
Also, we often drop the term `symbol' and simply speak of functions, predicates, and variables.

As usual, \emph{terms} over \Signature\ are defined inductively as follows:
\begin{itemize}
\item \(v \in \SignatureV\) is a term and
\item \(f(t_1,\dots,t_n)\) is a term if \(f \in \SignatureF\) is a function symbol of arity~\(n\) and each \(t_i\) is a term over \Signature.
\end{itemize}
Parentheses for terms over function symbols of arity~0 are omitted.

Unless stated otherwise,
we assume that the set of (zero-ary) functions includes a set of numeral symbols being in a one-to-one correspondence to
the integers.
For simplicity, we drop this distinction and identify numerals with the respective integers.

An \emph{atom} over signature~\Signature\ has form \(p(t_1,\dots,t_n)\) where
\(p \in \SignatureP\) is a predicate symbol of arity~\(n\) and each \(t_i\) is a term over \Signature.
As above, parentheses for atoms over predicate symbols of arity~0 are omitted.
Given an atom~\(a\) over~\Signature, a \emph{literal} over \Signature\ is either the atom itself or its negation \(\Naf a\).
A literal without negation is called \emph{positive}, and \emph{negative} otherwise.

A \emph{comparison} over \(\Signature\) has form
\begin{align}
t_1 \prec t_2\label{eq:comparison}
\end{align}
where \(t_1\) and \(t_2\) are terms over \(\Signature\) and
\(\prec\) is a relation symbol among \(<\), \(\leq\), \(>\), \(\geq\), \(=\), and \(\neq\).

An \emph{aggregate element} over \(\Signature\) has form
\begin{align}
t_1, \dots, t_m & : a_1 \wedge \dots \wedge a_n\label{eq:aggregate:element}
\end{align}
where \(t_i\) is a term and \(a_j\) is an atom, both over \(\Signature\) for \(0 \leq i \leq m\) and \(0 \leq j\leq n\).
The terms \(t_1, \dots, t_m\) are seen as a tuple, which is empty for \(m=0\);
the conjunction \(a_1 \wedge \dots \wedge a_n\) is called the \emph{condition} of the aggregate element.
For an aggregate element~\(e\) of form~\eqref{eq:aggregate:element},
we use \(\Tuple{e} = (t_1, \dots, t_m)\) and \(\Cond{e} = \{ a_1, \dots, a_n\}\).
We extend both to sets of aggregate elements in the straightforward way, that is,
\(\Tuple{E} = \{\Tuple{e}\mid e\in E\}\) and \(\Cond{E} = \{\Cond{e}\mid e\in E\}\).

An \emph{aggregate atom} over \(\Signature\) has form

\begin{align}
& f \{ e_1, \dots, e_n \} \prec s\label{eq:aggregate}
\end{align}
where \(n \geq 0\),
\(f\) is an aggregate name among \(\Count\), \(\Sum\), \(\SumP\), and \(\SumM\),
each \(e_i\) is an aggregate element,
\(\prec\) is a relation symbol among \(<\), \(\leq\), \(>\), \(\geq\), \(=\), and \(\neq\) (as above),
and \(s\) is a term representing the aggregate's \emph{bound}.

Without loss of generality,
we refrain from introducing negated aggregate atoms.\footnote{Grounders like \lparse\ and \gringo\ replace aggregates systematically by auxiliary atoms and
  place them in the body of new rules implying the respective auxiliary atom.
  This results in programs without occurrences of negated aggregates.}
We often refer to aggregate atoms simply as aggregates.

An \emph{aggregate program} over \(\Signature\) is a finite set of \emph{aggregate rules} of form
\begin{align*}
h & \leftarrow b_1 \wedge \dots \wedge b_n
\end{align*}
where \(n \geq 0\),
\(h\) is an atom over \(\Signature\) and
each \(b_i\) is either a literal, a comparison, or an aggregate over \(\Signature\).
We refer to \(b_1, \dots, b_n\) as body literals,
and extend functions \(\Head{r}\) and \(\Body{r}\) to any aggregate rule~$r$.

\begin{example}\label{ex:company}
An example for an aggregate program is shown below,
giving an encoding of the \emph{Company Controls Problem}~\cite{mupira90a}:
A company \(X\) controls a company \(Y\)
if \(X\) directly or indirectly controls more than 50\% of the shares of \(Y\).
\begin{align*}
\Pred{controls}(X,Y) \leftarrow {}
& \!\begin{aligned}[t]
\Sum^+ \{
& S : \Pred{owns}(X,Y,S); \\
& S,Z: \Pred{controls}(X,Z) \wedge \Pred{owns}(Z,Y,S) \} > 50
\end{aligned} \\
{} \wedge {}
& \Pred{company}(X) \wedge \Pred{company}(Y) \wedge X \neq Y
\end{align*}
The aggregate \(\Sum^+\) implements summation over positive integers.
Notably, it takes part in the recursive definition of predicate \Pred{controls}.
In the following, we use an instance with ownership relations between four companies:
\begin{align*}
\Pred{company}(c_1) & & \Pred{company}(c_2) & & \Pred{company}(c_3) & & \Pred{company}(c_4) & \\
\Pred{owns}(c_1,c_2,60) & & \Pred{owns}(c_1,c_3,20) & & \Pred{owns}(c_2,c_3,35) & & \Pred{owns}(c_3,c_4,51) &
\end{align*}
\end{example}

We say that an aggregate rule \(r\) is \emph{normal} if its body does not contain aggregates.
An aggregate program is normal if all its rules are normal.

A term, literal, aggregate element, aggregate, rule, or program is \emph{ground}
whenever it does not contain any variables.

We assume that all ground terms are totally ordered by a relation~\(\leq\),
which is used to define the relations \(<\), \(>\), \(\geq\), \(=\), and \(\neq\) in the standard way.
For ground terms \(t_1,t_2\) and a corresponding relation symbol \(\prec\),
we say that \(\prec\) holds between \(t_1\) and \(t_2\) whenever the corresponding relation holds between \(t_1\) and \(t_2\).
Furthermore, \(>\), \(\geq\), and \(\neq\) hold between \(\infty\) and any other term,
and \(<\), \(\leq\), and \(\neq\) hold between \(-\infty\) and any other term.
Finally, we require that integers are ordered as usual.

For defining sum-based aggregates, we define for a tuple \(t = t_1, \dots, t_m\) of ground terms the following weight functions:
\begin{align*}
\Weight{t} & =
\begin{cases}
t_1 & \text{if \(m > 0\) and \(t_1\) is an integer}\\
0 & \text{otherwise,}\\
\end{cases}\\
\WeightP{t} & = \max\{\Weight{t}, 0\} \text{, and} \\
\WeightM{t} & = \min\{\Weight{t}, 0\}.
\end{align*}

With this at hand,
we now define how to apply aggregate functions to sets of tuples of ground terms
in analogy to~\citeN{gehakalisc15a}.

\begin{definition}
Let \(T\) be a set of tuples of ground terms.

We define
\begin{align*}
\Count(T) & =
\begin{cases}
|T| & \text{if \(T\) is finite}, \\
\infty & \text{otherwise},
\end{cases}\\
\Sum(T) & =
\begin{cases}
\Sigma_{t \in T}\Weight{t} & \text{if \(\{ t \in T \mid \Weight{t} \neq 0\}\) is finite,} \\
0 & \text{otherwise},
\end{cases}\\
\SumP(T) & =
\begin{cases}
\Sigma_{t \in T}\WeightP{t} & \text{if \(\{ t \in T \mid \Weight{t} > 0\}\) is finite,} \\
\infty & \text{otherwise, and}
\end{cases}\\
\SumM(T) & =
\begin{cases}
\Sigma_{t \in T}\WeightM{t} & \text{if \(\{ t \in T \mid \Weight{t} < 0\}\) is finite,} \\
-\infty & \text{otherwise.}
\end{cases}
\end{align*}
\end{definition}
Note that in our setting the application of aggregate functions to infinite sets of ground terms is of
theoretical relevance only,
since we aim at reducing them to their finite equivalents so that they can be evaluated by a grounder.

A variable is \emph{global} in
\begin{itemize}
\item a literal if it occurs in the literal,
\item a comparison if it occurs in the comparison,
\item an aggregate if it occurs in its bound, and
\item a rule if it is global in its head atom or in one of its body literals.
\end{itemize}

For example, the variables \(X\) and \(Y\) are global in the aggregate rule in \cref{ex:company},
while \(Z\) and \(S\) are neither global in the rule nor the aggregate.

\begin{definition}\label{def:safety}
Let \(r\) be an aggregate rule.

We define \(r\) to be \emph{safe}
\begin{itemize}
\item if all its global variables occur in some positive literal in the body of \(r\) and
\item if all its non-global variables occurring in an aggregate element \(e\) of an aggregate in the body of \(r\), also
occur in some positive literal in the condition of \(e\).
\end{itemize}
\end{definition}

For instance, the aggregate rule in \cref{ex:company} is safe.

Note that comparisons are disregarded in the definition of safety.
That is, variables in comparisons have to occur in positive body literals.\footnote{In fact, \gringo\ allows for variables in some comparisons to guarantee safety, as detailed in \cref{sec:gringo}.}

An aggregate program is safe if all its rules are safe.

An \emph{instance} of an aggregate rule \(r\) is obtained by substituting ground terms for all its global variables.
We use~\Grd{r} to denote the set of all instances of~\(r\)
and~\Grd{P} to denote the set of all ground instances of rules in aggregate program \(P\).
An \emph{instance} of an aggregate element \(e\) is obtained by substituting ground terms for all its variables.
We let \Grd{E} stand for all instances of aggregate elements in a set \(E\).
Note that \Grd{E} consists of ground expressions, which is not necessarily the case for \Grd{r}.
As seen from the first example in the introductory section, both \Grd{r} and \Grd{E} can be infinite.

A literal, aggregate element, aggregate, or rule is \emph{closed} if it does not contain any global variables.

For example,
the following rule is an instance of the aggregate rule in \cref{ex:company}.
\begin{align*}
\Pred{controls}(c_1,c_2) \leftarrow {}
& \!\begin{aligned}[t]
\SumP \{
& S : \Pred{owns}(c_1,c_2,S); \\
& S,Z: \Pred{controls}(c_1,Z), \Pred{owns}(Z,c_2,S) \} > 50
\end{aligned} \\
{} \wedge {}
& \Pred{company}(c_1) \wedge \Pred{company}(c_2) \wedge c_1 \neq c_2
\end{align*}
Note that both the rule and its aggregate are closed.
It is also noteworthy to realize that the two elements of the aggregate induce an infinite set of instances,
among them
\begin{align*}
20     &: \Pred{owns}(c_1,c_2,20)  & \text{and}\\
35,c_2 &: \Pred{controls}(c_1,c_2), \Pred{owns}(c_2,c_3,35).
\end{align*}

We now turn to the semantics of aggregates as introduced by~\citeN{ferraris11a}
but follow its adaptation to closed aggregates by~\citeN{gehakalisc15a}:
Let \(a\) be a closed aggregate of form~\eqref{eq:aggregate} and \(E\) be its set of aggregate elements.
We say that a set \(\ElemJ \subseteq \Grd{E}\) of its elements' instances \emph{justifies} \(a\), written \(\ElemJ \justifies a\),
if \(f(\Head{\ElemJ}) \prec s\) holds.

An aggregate \(a\) is \emph{monotone} whenever
\(\ElemJ_1\justifies a\) implies \(\ElemJ_2\justifies a\)
for all \(\ElemJ_1 \subseteq \ElemJ_2 \subseteq \Grd{E}\),
and accordingly \(a\) is \emph{antimonotone} if
\(\ElemJ_2\justifies a\) implies \(\ElemJ_1\justifies a\)
for all \(\ElemJ_1 \subseteq \ElemJ_2 \subseteq \Grd{E}\).

We observe the following monotonicity properties.

\begin{proposition}[\protect\citeNP{haliya14a}]\label{prp:aggregate:monotonicity}\leavevmode
\begin{itemize}
\item Aggregates over functions \(\SumP\) and \(\Count\) together with relations \(>\) and \(\geq\) are monotone.
\item Aggregates over functions \(\SumP\) and \(\Count\) together with relations \(<\) and \(\leq\) are antimonotone.
\item Aggregates over function \(\SumM\) have the same monotonicity properties as \(\SumP\) aggregates with the complementary relation.
\end{itemize}
\end{proposition}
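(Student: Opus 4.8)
The plan is to reduce all three bullets to a single structural fact: each aggregate function occurring in the statement, viewed as a map from sets of ground tuples ordered by \(\subseteq\) into the totally ordered value domain extended with \(\pm\infty\), is either non-decreasing or non-increasing. First I would record the bridge between element sets and tuple sets: if \(\ElemJ_1 \subseteq \ElemJ_2 \subseteq \Grd{E}\), then \(\Head{\ElemJ_1} \subseteq \Head{\ElemJ_2}\), since \(\Head{\cdot}\) acts elementwise and the image of a subset is contained in the image of the superset. Consequently the justification condition \(f(\Head{\ElemJ}) \prec s\) is governed entirely by how \(f\) behaves on the growing tuple set \(\Head{\ElemJ}\), and collapsing of duplicate tuples under the set semantics causes no trouble.

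Second I would establish the monotonicity of the functions themselves. For \(\Count\), set inclusion immediately gives \(\Count(T_1) \leq \Count(T_2)\) whenever \(T_1 \subseteq T_2\), using the convention that every value is \(\leq \infty\). For \(\SumP\), the key observation is that \(\WeightP{t} = \max\{\Weight{t},0\} \geq 0\) for every tuple \(t\), so enlarging \(T\) never decreases the sum; here I would check the three boundary cases dictated by the definition (both witness sets \(\{t : \Weight{t} > 0\}\) finite; only the larger one infinite; both infinite), each yielding \(\SumP(T_1) \leq \SumP(T_2)\). Symmetrically, since \(\WeightM{t} = \min\{\Weight{t},0\} \leq 0\), enlarging \(T\) never increases \(\SumM\), giving \(\SumM(T_1) \geq \SumM(T_2)\). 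Hence \(\Count\) and \(\SumP\) are non-decreasing and \(\SumM\) is non-increasing with respect to \(\subseteq\).

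Third I would translate these facts through the definition of \(\justifies\). If \(f\) is non-decreasing and \(\prec \in \{>, \geq\}\), then \(f(\Head{\ElemJ_1}) \prec s\) together with \(f(\Head{\ElemJ_1}) \leq f(\Head{\ElemJ_2})\) forces \(f(\Head{\ElemJ_2}) \prec s\); that is, \(\ElemJ_1 \justifies a\) implies \(\ElemJ_2 \justifies a\), so \(a\) is monotone. If instead \(\prec \in \{<, \leq\}\), the same inequality read in the opposite direction shows \(\ElemJ_2 \justifies a\) implies \(\ElemJ_1 \justifies a\), so \(a\) is antimonotone; this settles the first two bullets for \(\Count\) and \(\SumP\). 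For the third bullet I would rerun the argument with \(\SumM\): because \(\SumM\) is non-increasing, the inequalities flip, so now \(\prec \in \{<, \leq\}\) yields monotonicity and \(\prec \in \{>, \geq\}\) yields antimonotonicity, which is precisely the behaviour of \(\SumP\) under the complementary relation (\(> \leftrightarrow \leq\) and \(\geq \leftrightarrow <\)).

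The only real care needed is in the second step, where the \(\infty\) and \(-\infty\) conventions for infinite tuple sets must be matched against the case distinctions in the definitions of the aggregate functions; once the functions are known to be monotone in the appropriate direction, the remaining steps are purely order-theoretic and uniform across all cases. I therefore expect no genuine difficulty beyond the bookkeeping of pairing each relation symbol with the correct direction of monotonicity.
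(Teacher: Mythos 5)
Your proof is correct. Note that the paper itself gives no proof of this proposition---it is imported from Harrison, Lifschitz, and Yang (the cited \emph{haliya14a})---so there is nothing internal to compare against; your argument, which factors the claim through the observation that \(\Count\) and \(\SumP\) are non-decreasing and \(\SumM\) is non-increasing as functions of the tuple set \(\Head{\ElemJ}\) under \(\subseteq\), and then settles each relation symbol by pure order-theoretic transitivity, is exactly the natural self-contained route, and your handling of the boundary cases (\(\pm\infty\) for infinite witness sets, duplicate tuples collapsing under \(\Head{\cdot}\)) is sound.
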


Next,
we give the translation \(\TransSym\) from aggregate programs to \(\FormulasR\)-programs,
derived from the ones of~\citeN{ferraris11a} and \citeN{haliya14a}:

For a closed literal \(l\), we have
\begin{align}
\Trans{l} & = l\text{,}\nonumber\\
\intertext{for a closed comparison \(l\) of form~\eqref{eq:comparison}, we have}
\Trans{l} & = \begin{cases}
\top & \text{if \(\prec\) holds between \(t_1\) and \(t_2\)}\\
\bot & \text{otherwise}
\end{cases}\nonumber\\
\intertext{and for a set \(L\) of closed literals, comparisons and aggregates, we have}
\Trans{L} &= {\{ \Trans{l} \mid l \in L \}}\text{.}\nonumber
\intertext{For a closed aggregate \(a\) of form~\eqref{eq:aggregate} and its set \(E\) of aggregate elements, we have}
\Trans{a} & = {\{\Trans{\ElemJ}^\wedge \rightarrow \TransD{a}{\ElemJ}^\vee \mid \ElemJ \subseteq \Grd{E}, \ElemJ \notjustifies a \}}^\wedge\label{eq:aggregate:translation}
\intertext{where}
\TransD{a}{\ElemJ} & = \Trans{\Grd{E} \setminus \ElemJ}\text{\ for \(\ElemJ \subseteq \Grd{E}\),}\nonumber\\
\Trans{\ElemJ} & = {\{\Trans{e} \mid e \in \ElemJ\}}\text{\ for \(\ElemJ \subseteq \Grd{E}\), and}\nonumber\\
\Trans{e} & = {\Trans{\Body{e}}}^\wedge \text{\ for \(e \in \Grd{E}\)}.\nonumber
\intertext{For a closed aggregate rule \(r\), we have}
\Trans{r} & = \Trans{\Head{r}} \leftarrow \Trans{\Body{r}}^\wedge.\nonumber
\intertext{For an aggregate program~\(P\), we have}
\Trans{P} & = \{\Trans{r} \mid r \in \Grd{P} \}.
\end{align}

While aggregate programs like \(P\) are finite sets of (non-ground) rules,
\(\Trans{P}\) can be infinite and contain (ground) infinitary expressions.
Observe that \(\Trans{P}\) is an \(\FormulasR\)-program.
In fact, only the translation of aggregates introduces \(\FormulasR\)-formulas;
rules without aggregates form \(\FormulasN\)-programs.

\begin{example}\label{ex:translation}
To illustrate Ferraris' approach to the semantics of aggregates, consider a count aggregate \(a\) of form
\begin{align*}
\CountA{\Elem{X}{p(X)}}{\geq}{n}.
\end{align*}
Since the aggregate is non-ground, the set \(\ElemG\) of its element's instances consists of all \(\Elem{t}{p(t)}\) for each ground term \(t\).

The count aggregate cannot be justified by any subset \(\ElemJ\) of \(\ElemG\)
satisfying
\(
|\{t \mid \Elem{t}{p(t)} \in \ElemJ\}| < n
\),
or \(\ElemJ \notjustifies a\) for short.
Accordingly,
we have that \Trans{a} is the conjunction of all formulas
\begin{align}\label{eq:translation}
{\{ p(t) \mid \Elem{t}{p(t)} \in \ElemJ \}}^\wedge
& \rightarrow {\{ p(t) \mid \Elem{t}{p(t)} \in (\ElemG \setminus \ElemJ) \}}^\vee
\end{align}
such that \(\ElemJ \subseteq \ElemG\) and \(\ElemJ \notjustifies a\).
Restricting the set of ground terms to the numerals \(1,2,3\) and letting \(n=2\) results in the formulas
\begin{align*}
\top & \rightarrow p(1) \vee p(2) \vee p(3),\\
p(1) & \rightarrow p(2) \vee p(3),\\
p(2) & \rightarrow p(1) \vee p(3), \AndT\\
p(3) & \rightarrow p(1) \vee p(2).
\end{align*}
Note that a smaller number of ground terms than \(n\) yields an unsatisfiable set of formulas.
\end{example}

However, it turns out that a Ferraris-style translation of aggregates~\cite{ferraris11a,haliya14a}
is too weak for propagating monotone aggregates in our \FOID-based setting.
That is,
when propagating possible atoms (i.e., the second component of the well-founded model),
an \FOID-reduct may become satisfiable although the original formula is not.
So, we might end up with too many possible atoms and a well-founded model that is not as precise as it could be.
To see this, consider the following example.

\begin{example}\label{ex:translation:reduct}
For some \(m,n \geq 0\), the program \(\Fi{P}{m,n}\) consists of the following rules:
\begin{align*}
p(i) & \leftarrow \Naf q(i) & & \ForT 1 \leq i \leq m\\
q(i) & \leftarrow \Naf p(i) & & \ForT 1 \leq i \leq m\\
r    & \leftarrow \CountA{\Elem{X}{p(X)}}{\geq}{n}
\end{align*}

Given the ground instances \(\ElemG\) of the aggregate's elements and some two-valued interpretation \(I\),
observe that
\begin{align}
& \IDReductP{\Trans{\CountA{\Elem{X}{p(X)}}{\geq}{n}}}{I}\nonumber
\intertext{is classically equivalent to}
& \IDReductP{\Trans{\CountA{\Elem{X}{p(X)}}{\geq}{n}}}{I} \vee {\{ p(t) \in \Body{\ElemG} \mid p(t) \notin I \}}^\vee.\label{eq:count:translation}
\end{align}
To see this, observe that the formula obtained via \TransSym\ for the aggregate in the last rule's body consists of
positive occurrences of implications of the form \(\mathcal{G}^\wedge \rightarrow \mathcal{H}^\vee\)
where either \(p(t) \in \mathcal{G}\) or \(p(t) \in \mathcal{H}\).
The \FOID-reduct makes all such implications with some \(p(t) \in \mathcal{G}\) such that \(p(t) \notin I\) true
because their antecedent is false.
All of the remaining implications in the \FOID-reduct are equivalent to \(\mathcal{H}^\vee\)
where \(\mathcal{H}\) contains all \(p(t) \notin I\).
Thus, we can factor out the formula on the right-hand side of~\eqref{eq:count:translation}.

Next, observe that for \(1 \leq m < n\),
the four-valued interpretation \((I,J) = (\emptyset, \Head{\Trans{\Fi{P}{m,n}}})\) is the well-founded model of \(\Fi{P}{m,n}\):
\begin{align*}
\Stable{\Trans{\Fi{P}{m,n}}}{J} & = I\text{\ and}\\
\Stable{\Trans{\Fi{P}{m,n}}}{I} & = J.
\end{align*}
Ideally, atom \(r\) should not be among the possible atoms because it can never be in a stable model.
Nonetheless, it is due to the second disjunct in~\eqref{eq:count:translation}.
\end{example}

Note that not just monotone aggregates exhibit this problem.
In general, we get for a closed aggregate \(a\) with elements \(E\) and an interpretation \(I\) that
\begin{align*}
\IDReductP{\Trans{a}}{I} \text{\ is classically equivalent to\ } \IDReductP{\Trans{a}}{I} \vee {\{c \in \Cond{\Grd{E}} \mid I \not\models c\}}^\vee.
\end{align*}
The second disjunct is undesirable when propagating possible atoms.

To address this shortcoming,
we augment the aggregate translation so that it provides stronger propagation.
The result of the augmented translation is strongly equivalent to that of the original translation
(cf.~\cref{prp:translation:equivalence}).
Thus, even though we get more precise well-founded models, the stable models are still contained in them.

\begin{definition}\label{def:translation:aggregate}
We define \TransASym\ as the translation obtained from \TransSym\ by replacing
the case of closed aggregates in~\eqref{eq:aggregate:translation} by the following:

For a closed aggregate \(a\) of form~\eqref{eq:aggregate} and its set \(E\) of aggregate elements, we have
\begin{align*}
\TransA{a} & = {\{ {\Trans{\ElemJ}}^\wedge \rightarrow \TransAD{a}{\ElemJ}^\vee \mid \ElemJ \subseteq \Grd{E}, \ElemJ \notjustifies a \}}^\wedge\\
\intertext{where}
\TransAD{a}{\ElemJ} & = \{ {\Trans{\ElemC}}^\wedge \mid \ElemC \subseteq \Grd{E} \setminus \ElemJ, \ElemC \cup \ElemJ \justifies a \}
\ForT \ElemJ \subseteq \Grd{E}.
\end{align*}
\end{definition}

Note that just as \TransSym\ also \TransASym\ is recursively applied to the whole program.

Let us illustrate the modified translation by revisiting \cref{ex:translation}.

\begin{example}\label{ex:translation:aggregate}
Let us reconsider the count aggregate \(a\):
\begin{align*}
\CountA{\Elem{X}{p(X)}}{\geq}{n}
\end{align*}
As with \Trans{a} in \cref{ex:translation},
\TransA{a} yields a conjunction of formulas,
one conjunct for each set \(\ElemJ\subseteq \Grd{E}\) satisfying \(\ElemJ\notjustifies a\) of the form:
\begin{align}\label{ex:aggregate:kaminski}
{\{ \Body{e} \mid e \in \ElemJ\}}^\wedge
\rightarrow
{\big\{ {\{ \Body{e} \mid e \in (\ElemC \setminus \ElemJ) \}}^\wedge \mid \ElemC \justifies a, \ElemJ \subseteq \ElemC \subseteq \Grd{E} \big\}}^\vee
\end{align}
Restricting again the set of ground terms to the numerals \(1,2,3\) and letting \(n=2\) results now in the formulas
\begin{align*}
\top & \rightarrow (p(1) \wedge p(2)) \vee (p(1) \wedge p(3)) \vee (p(2)\wedge p(3))\vee(p(1)\wedge p(2)\wedge p(3)),\\
p(1) & \rightarrow p(2) \vee p(3) \vee (p(2) \wedge p(3)),\\
p(2) & \rightarrow p(1) \vee p(3) \vee (p(1) \wedge p(3)), \AndT\\
p(3) & \rightarrow p(1) \vee p(2) \vee (p(1) \wedge p(2)).
\end{align*}
Note that the last disjunct can be dropped from each rule's consequent.
And as above, a smaller number of ground terms than \(n\) yields an unsatisfiable set of formulas.
\end{example}

The next result ensures that \(\Trans{P}\) and \(\TransA{P}\) have the same stable models for any aggregate program $P$.

\begin{proposition}\label{prp:translation:equivalence}
Let \(a\) be a closed aggregate.

Then, \(\Trans{a}\) and \(\TransA{a}\) are strongly equivalent.
\end{proposition}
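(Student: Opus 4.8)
The plan is to reduce strong equivalence to equivalence in the (infinitary) logic of here-and-there, using the characterization established by~\citeN{halipeva17a}: it suffices to show that $\Trans{a}$ and $\TransA{a}$ hold at exactly the same here-and-there interpretations $(H,T)$ with $H\subseteq T$. I write $\mathcal{H}_1\models_{HT}\mathcal{H}_2$ when every here-and-there model of $\mathcal{H}_1$ is one of $\mathcal{H}_2$. Note that both formulas are conjunctions indexed by the \emph{same} family, namely all $\ElemJ\subseteq\Grd{E}$ with $\ElemJ\notjustifies a$, and that for a fixed such $\ElemJ$ they share the antecedent $\Trans{\ElemJ}^\wedge$; only the consequents $\TransD{a}{\ElemJ}^\vee$ and $\TransAD{a}{\ElemJ}^\vee$ differ. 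Throughout I would exploit that, since each $\Trans{e}$ is a conjunction of the atoms in $\Body{e}$, for a two-valued interpretation $X$ one has $X\models\Trans{e}$ iff $X$ contains all atoms of $\Body{e}$; this applies both to the there-component $T$ and, reading $H$ as a two-valued interpretation, to the here-component.

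First, the direction $\TransA{a}\models_{HT}\Trans{a}$ is routine and can be handled conjunct by conjunct. Every disjunct $\Trans{\ElemC}^\wedge$ of $\TransAD{a}{\ElemJ}^\vee$ satisfies $\ElemC\neq\emptyset$, since $\ElemC\cup\ElemJ\justifies a$ while $\ElemJ\notjustifies a$; hence $\Trans{\ElemC}^\wedge$ entails each of its conjuncts $\Trans{e}$ with $e\in\ElemC\subseteq\Grd{E}\setminus\ElemJ$, and each such $\Trans{e}$ is a disjunct of $\TransD{a}{\ElemJ}^\vee$. Thus $\TransAD{a}{\ElemJ}^\vee\models_{HT}\TransD{a}{\ElemJ}^\vee$, and since weakening the consequent preserves an implication in here-and-there, each conjunct of $\TransA{a}$ entails the corresponding conjunct of $\Trans{a}$.

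The converse direction, $\Trans{a}\models_{HT}\TransA{a}$, cannot be argued conjunct by conjunct, as \cref{ex:translation:aggregate} shows that the consequent of $\TransA{a}$ is strictly stronger; I would therefore use the whole conjunction. Fix $(H,T)\models\Trans{a}$ and a set $\ElemJ\notjustifies a$, and verify the two conditions for $(H,T)\models\Trans{\ElemJ}^\wedge\rightarrow\TransAD{a}{\ElemJ}^\vee$, namely classical satisfaction at $T$ and the here-condition. In both cases I pick the witness to be the maximal justifying set: for $X\in\{T,H\}$ assumed to satisfy $\Trans{\ElemJ}^\wedge$, set $\ElemC=\{e\in\Grd{E}\mid X\models\Trans{e}\}\setminus\ElemJ$, so that $\ElemC\subseteq\Grd{E}\setminus\ElemJ$, $X\models\Trans{\ElemC}^\wedge$, and $\ElemJ\subseteq\{e\mid X\models\Trans{e}\}$. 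The crux is the claim that $\ElemC\cup\ElemJ=\{e\mid X\models\Trans{e}\}$ justifies $a$: otherwise this set would itself index a conjunct of $\Trans{a}$, whose satisfaction by $(H,T)$ — classically at $T$, or via the here-part at $H$ — would force some element lying \emph{outside} the set to be satisfied in $X$, contradicting its definition as the set of \emph{all} satisfied elements. Hence $\ElemC\cup\ElemJ\justifies a$, so $\Trans{\ElemC}^\wedge$ is a genuine disjunct of $\TransAD{a}{\ElemJ}^\vee$ satisfied in $X$, delivering the required condition.

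I expect the main obstacle to be precisely this last claim — that the set of all elements satisfied by an interpretation justifies $a$ whenever $\Trans{a}$ holds — together with keeping the here-and-there bookkeeping straight so that the same construction discharges both the there-component $T$ and the here-component $H$. Once that lemma is in place, combining the two entailments yields here-and-there equivalence of $\Trans{a}$ and $\TransA{a}$, and thus the desired strong equivalence.
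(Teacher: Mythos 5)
Your proof is correct and takes essentially the same route as the paper: the paper invokes the reduct-based characterization of strong equivalence (itself proved via HT-models), handles one direction by observing that the consequents of \(\TransA{a}\) are stronger than those of \(\Trans{a}\), and handles the other by saturating \(\ElemJ\) with all elements satisfied under the reduct — exactly your maximal-set construction, merely phrased contrapositively with \((I,\Reduct{\cdot}{J})\) in place of your \((H,T)\) pairs. The only differences are cosmetic (HT notation versus reducts, direct versus contrapositive organization).
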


The next example illustrates that we get more precise well-founded models using the strongly equivalent refined translation.

\begin{example}\label{ex:translation:continued}
Reconsider Program~\(\Fi{P}{m,n}\) from \cref{ex:translation:reduct}.

As above, we apply the well-founded operator to program \(\Fi{P}{m,n}\) for \(m < n\) and four-valued interpretation \((I,J) = (\emptyset, \Head{\TransA{\Fi{P}{m,n}}})\):
\begin{align*}
\Stable{\TransA{\Fi{P}{m,n}}}{J} & = I \AndT\\
\Stable{\TransA{\Fi{P}{m,n}}}{I} & = J \setminus \{ r \}.
\end{align*}
Unlike before,
\(r\) is now found to be false since it does not belong to \(\Stable{\TransA{\Fi{P}{m,n}}}{\emptyset}\).
\end{example}

To see this,
we can take advantage of the following proposition.

\begin{proposition}\label{prp:translation:monotonicity}
Let \(a\) be a closed aggregate.

If \(a\) is monotone, then
\(\IDReductP{\TransA{a}}{I}\) is classically equivalent to \(\TransA{a}\)
for any two-valued interpretation~\(I\).
\end{proposition}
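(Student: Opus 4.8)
The plan is to compute the $\FOID$-reduct explicitly, observe that it only affects the antecedents of the implications comprising $\TransA{a}$, and then use monotonicity to collapse both $\TransA{a}$ and its reduct to one and the same disjunction. Writing $\TransA{a}$ as the conjunction of implications $\Trans{\ElemJ}^\wedge \rightarrow \TransAD{a}{\ElemJ}^\vee$ over $\ElemJ \subseteq \Grd{E}$ with $\ElemJ \notjustifies a$, I first note that each such implication occurs positively, so $\IDReductP{\cdot}{I}$ rewrites it as $\IDReductN{\Trans{\ElemJ}^\wedge}{I} \rightarrow \IDReductP{\TransAD{a}{\ElemJ}^\vee}{I}$. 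The consequent is a disjunction of conjunctions of positively occurring atoms, hence strictly positive, so $\IDReductP{\TransAD{a}{\ElemJ}^\vee}{I} = \TransAD{a}{\ElemJ}^\vee$ is left syntactically unchanged. The antecedent $\Trans{\ElemJ}^\wedge$ is a conjunction of the atoms in $\bigcup_{e \in \ElemJ}\Cond{e}$, now in negative position, so $\IDReductN{\Trans{\ElemJ}^\wedge}{I}$ equals $\top$ if $\bigcup_{e \in \ElemJ}\Cond{e} \subseteq I$ and $\bot$ otherwise. Consequently $\IDReductP{\TransA{a}}{I}$ is classically equivalent to the conjunction of the formulas $\TransAD{a}{\ElemJ}^\vee$ over those $\ElemJ \notjustifies a$ with $\bigcup_{e \in \ElemJ}\Cond{e} \subseteq I$.

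The core of the argument is a semantic reading of these consequents. For a two-valued interpretation $X$, I would set $E_X = \{ e \in \Grd{E} \mid \Cond{e} \subseteq X \}$, so that $X \models \Trans{\ElemC}^\wedge$ iff $\ElemC \subseteq E_X$. Using that $a$ is monotone, that is, that $\justifies$ is upward closed on subsets of $\Grd{E}$, I claim $X \models \TransAD{a}{\ElemJ}^\vee$ iff $E_X \cup \ElemJ \justifies a$. The backward direction takes the maximal witness $\ElemC = E_X \setminus \ElemJ$, which lies in $\Grd{E} \setminus \ElemJ$ and satisfies $\ElemC \cup \ElemJ = E_X \cup \ElemJ \justifies a$; the forward direction lifts any witness $\ElemC \cup \ElemJ \justifies a$ with $\ElemC \subseteq E_X$ to $E_X \cup \ElemJ \justifies a$ by monotonicity.

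With this in hand I would split on the trivial case $\emptyset \justifies a$, in which monotonicity forces every subset of $\Grd{E}$ to justify $a$, so both $\TransA{a}$ and its reduct are the empty conjunction $\top$ and the equivalence is immediate. In the remaining case $\emptyset \notjustifies a$, the conjunct for $\ElemJ = \emptyset$, namely $D := \TransAD{a}{\emptyset}^\vee$, is present in $\TransA{a}$ (as $\top \rightarrow D$) and in the simplified reduct (since $\bigcup_{e \in \emptyset}\Cond{e} = \emptyset \subseteq I$). I would then show both formulas are classically equivalent to $D$: each entails $D$ because $D$ is one of its conjuncts, and conversely $D$ entails every conjunct $\TransAD{a}{\ElemJ}^\vee$ occurring in either formula, since $X \models D$ gives $E_X \justifies a$ by the claim with $\ElemJ = \emptyset$, whence $E_X \cup \ElemJ \justifies a$ by monotonicity, i.e.\ $X \models \TransAD{a}{\ElemJ}^\vee$. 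Thus $\TransA{a}$ and $\IDReductP{\TransA{a}}{I}$ are both classically equivalent to $D$, hence to each other.

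The main obstacle I anticipate lies in the monotonicity collapse of the middle step: one must verify that the maximal witness $\ElemC = E_X \setminus \ElemJ$ respects the side condition $\ElemC \subseteq \Grd{E} \setminus \ElemJ$ and that upward closure of $\justifies$ is only invoked between sets contained in $\Grd{E}$, since monotonicity as stated quantifies over subsets of $\Grd{E}$. The positive/negative occurrence bookkeeping in the reduct and the identification of the $\ElemJ = \emptyset$ conjunct as the semantic kernel are the other points requiring care but are otherwise routine.
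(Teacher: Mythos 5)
Your proof is correct and takes essentially the same route as the paper's: after splitting on whether \(\emptyset \justifies a\), both arguments identify the conjunct \(\TransAD{a}{\emptyset}^\vee\) (with trivially true antecedent, left untouched by the reduct because all consequents are positive) as the strongest conjunct, to which both \(\TransA{a}\) and \(\IDReductP{\TransA{a}}{I}\) collapse by monotonicity. Your semantic characterization via \(E_X\) merely spells out in detail the entailment that the paper states directly, so the two proofs coincide in structure and key idea.
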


Note that \(\TransA{a}\) is a negative formula whenever \(a\) is antimonotone; cf.\ \cref{prp:translation:antimonotonicity}.

Let us briefly return to \cref{ex:translation:continued}.
We now observe that \(\IDReductP{\TransA{a}}{I}=\TransA{a}\)
for \(a=\CountA{\Elem{X}{p(X)}} \geq n\) and any interpretation \(I\)
in view of the last proposition.
Hence, our refined translation \(\TransASym\) avoids the problematic disjunct in~\cref{eq:count:translation} on the right.
By \cref{prp:translation:equivalence}, we can use \(\TransA{\Fi{P}{m,n}}\) instead of \(\Trans{\Fi{P}{m,n}}\);
both formulas have the same stable models.

Using \cref{prp:translation:monotonicity},
we augment the translation \(\TransASym\) to replace monotone aggregates~\(a\) by the strictly positive formula~\(\IDReductP{\TransA{a}}{\emptyset}\).
That is, we only keep the implication with the trivially true antecedent in the aggregate translation
(cf.~\cref{sec:dependency}).

While \(\TransASym\) improves on propagation,
it may still produce infinitary \(\FormulasR\)-formulas when applied to aggregates.
This issue is addressed by restricting the translation to a set of (possible) atoms.

\begin{definition}\label{def:translation:aggregate:restricted}
Let \(J\) be a two-valued interpretation.
We define the translation \TransRSym{J} as the one obtained from \TransSym\ by replacing
the case of closed aggregates in~\eqref{eq:aggregate:translation} by the following:

For a closed aggregate \(a\) of form~\eqref{eq:aggregate} and its set \(E\) of aggregate elements, we have
\begin{align*}
\TransR{a}{J} & = {\{ \Trans{\ElemJ}^\wedge \rightarrow \TransR{\ElemJ}{a,J}^\vee \mid \ElemJ \subseteq \Restrict{\Grd{E}}{J}, \ElemJ \notjustifies a \}}^\wedge
\intertext{where}
\TransR{\ElemJ}{a,J}  & = {\{ \Trans{\ElemC}^\wedge \mid \ElemC \subseteq \Restrict{\Grd{E}}{J} \setminus \ElemJ, \ElemC \cup \ElemJ \justifies a \}}\AndT\\
\Restrict{\Grd{E}}{J} & = \{ e \in \Grd{E} \mid \Body{e} \subseteq J \}.
\end{align*}
\end{definition}

Note that \(\TransAD{J}{a}\) is a finitary formula whenever \(J\) is finite.

Clearly,
\(\TransRSym{J}\) also conforms to \(\TransASym\)
except for the restricted translation for aggregates defined above.
The next proposition elaborates this by showing that \(\TransRSym{J}\) and \(\TransASym\) behave alike whenever \(J\)
limits the set of possible atoms.

\begin{theorem}\label{prp:translation:restricted:properties}
Let \(a\) be a closed aggregate, and \(I \subseteq J\) and \(X \subseteq J\) be two-valued interpretations.

Then,
\begin{alphaenum}
\item \(X \models \TransA{a}\) iff \(X \models \TransR{a}{J}\),\label{prp:translation:restricted:properties:a}
\item \(X \models \IDReductP{\TransA{a}}{I}\) iff \(X \models \IDReductP{\TransR{a}{J}}{I}\), and\label{prp:translation:restricted:properties:b}
\item \(X \models \Reduct{\TransA{a}}{I}\) iff \(X \models \Reduct{\TransR{a}{J}}{I}\).\label{prp:translation:restricted:properties:c}
\end{alphaenum}

\end{theorem}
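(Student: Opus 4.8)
The plan is to exploit that both interpretations of interest are bounded by $J$, i.e.\ $X \subseteq J$ and $I \subseteq J$, so that the only difference between $\TransA{a}$ and $\TransR{a}{J}$ --- namely that the latter ranges over $\Restrict{\Grd{E}}{J} = \{ e \in \Grd{E} \mid \Body{e} \subseteq J \}$ instead of all of $\Grd{E}$ --- concerns subformulas that are ``inactive'' relative to any interpretation contained in $J$. The single observation driving every case is that $\Trans{e} = \Trans{\Body{e}}^\wedge$ is a conjunction of the (positive) atoms in $\Body{e}$, whence for any $Y \subseteq J$ and any $\ElemJ \subseteq \Grd{E}$, $Y \models \Trans{\ElemJ}^\wedge$ implies $\ElemJ \subseteq \Restrict{\Grd{E}}{J}$. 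From this I extract two matching facts to reuse throughout: (i) $X \models \Trans{\ElemJ}^\wedge$ forces $\ElemJ \subseteq \Restrict{\Grd{E}}{J}$; and (ii) for any $\ElemJ$, $X \models \TransAD{a}{\ElemJ}^\vee$ iff $X \models \TransR{\ElemJ}{a,J}^\vee$, since a disjunct $\Trans{\ElemC}^\wedge$ satisfied by $X \subseteq J$ necessarily has $\ElemC \subseteq \Restrict{\Grd{E}}{J}$ and hence already occurs in the restricted consequent, while the restricted disjuncts form a subset of the unrestricted ones.

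For part~(a) I would match conjuncts. Every conjunct of $\TransR{a}{J}$, indexed by $\ElemJ \subseteq \Restrict{\Grd{E}}{J}$ with $\ElemJ \notjustifies a$, is also a conjunct of $\TransA{a}$, and by fact~(ii) these agree under $X$. Conversely, a conjunct of $\TransA{a}$ indexed by some $\ElemJ \not\subseteq \Restrict{\Grd{E}}{J}$ has, by fact~(i), an antecedent $\Trans{\ElemJ}^\wedge$ false in $X$, so $X$ satisfies it vacuously; the remaining conjuncts are exactly those of $\TransR{a}{J}$ and again agree under $X$ by fact~(ii). Hence $X \models \TransA{a}$ iff $X \models \TransR{a}{J}$.

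For part~(b) I would compute the \FOID-reduct explicitly. Since condition atoms occur negatively in the antecedents $\Trans{\ElemJ}^\wedge$ and positively in the consequents, $\IDReductP{(\Trans{\ElemJ}^\wedge \rightarrow \TransAD{a}{\ElemJ}^\vee)}{I}$ equals $\IDReductN{\Trans{\ElemJ}^\wedge}{I} \rightarrow \TransAD{a}{\ElemJ}^\vee$, where the reduced antecedent is logically $\top$ if $\bigcup_{e \in \ElemJ}\Body{e} \subseteq I$ and logically $\bot$ otherwise, while the positive consequent is unchanged. Thus $X \models \IDReductP{\TransA{a}}{I}$ iff $X \models \TransAD{a}{\ElemJ}^\vee$ for every $\ElemJ \notjustifies a$ with $\bigcup_{e \in \ElemJ}\Body{e} \subseteq I$, and likewise for $\TransR{a}{J}$. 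Because $I \subseteq J$, any such $\ElemJ$ already satisfies $\ElemJ \subseteq \Restrict{\Grd{E}}{J}$, so the two index sets coincide, and fact~(ii) equates the two consequents under $X$.

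Part~(c) is the main obstacle, because the classical reduct $\Reduct{\cdot}{I}$ branches on satisfaction by $I$ at every level rather than rewriting atoms uniformly. I would first apply part~(a) with $X := I$ (legitimate since $I \subseteq J$) to get $I \models \TransA{a}$ iff $I \models \TransR{a}{J}$; if neither holds both reducts are $\bot$ and $X$ satisfies neither. Otherwise both reducts are conjunctions of reduced implications $\Reduct{\Trans{\ElemJ}^\wedge}{I} \rightarrow \Reduct{\TransAD{a}{\ElemJ}^\vee}{I}$, and I would argue case by case that the extra material in $\TransA{a}$ is inert: a conjunct whose $\ElemJ$ contains an element with $\Body{e} \not\subseteq J$ has a condition atom outside $J \supseteq I$, so $I \not\models \Trans{\ElemJ}^\wedge$, its antecedent reduces to $\bot$, and $X$ satisfies the conjunct vacuously; within a shared conjunct, any disjunct $\Trans{\ElemC}^\wedge$ with $\ElemC \not\subseteq \Restrict{\Grd{E}}{J}$ has $I \not\models \Trans{\ElemC}^\wedge$ and so reduces to $\bot$, dropping out of the reduced disjunction. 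Invoking fact~(ii) once more --- for $I$ to align the two disjunction-satisfaction tests that gate the reduct, and for $X$ to align the surviving disjuncts --- shows the reduced consequents agree under $X$ on the shared conjuncts, whose antecedents are syntactically identical. Combining these yields $X \models \Reduct{\TransA{a}}{I}$ iff $X \models \Reduct{\TransR{a}{J}}{I}$. The delicate bookkeeping is entirely in tracking which conjuncts and disjuncts survive each nested $I \models (\cdot)$ test; the geometric content is the same restriction argument used for the first two parts.
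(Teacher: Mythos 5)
Your proof is correct and follows essentially the same route as the paper's: the subformulas omitted by the restricted translation are exactly those rendered vacuous (false antecedents) or inert (unsatisfiable disjuncts) by any interpretation contained in \(J\). The paper dispatches parts (b) and (c) with ``follow for similar reasons''; your explicit computation of the \FOID-reduct for (b) and the case analysis tracking how the Ferraris reduct's nested \(I \models (\cdot)\) tests interact with the restriction for (c) supply precisely the details the paper elides.
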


In view of \cref{prp:translation:equivalence},
this result extends to Ferraris' original aggregate translation~\cite{ferraris11a,haliya14a}.

The next example illustrates how a finitary formula can be obtained for an aggregate,
despite a possibly infinite set of terms in the signature.

\begin{example}
Let \(\Fi{P}{m,n}\) be the program from \cref{ex:translation:reduct}.
The well-founded model \((I,J)\) of \(\TransA{\Fi{P}{m,n}}\) is \((\emptyset, \Head{\TransA{\Fi{P}{m,n}}})\) if \(n \leq m\).

The translation \(\TransAD{J}{\Fi{P}{3,2}}\) consists of the rules
\begin{align*}
p(1) & \leftarrow \Naf q(1), &
q(1) & \leftarrow \Naf p(1), \\
p(2) & \leftarrow \Naf q(2), &
q(2) & \leftarrow \Naf p(2), \\
p(3) & \leftarrow \Naf q(3), &
q(3) & \leftarrow \Naf p(3), \AndT\\
r & \omit\rlap{\({} \leftarrow \TransAD{J}{\CountA{\Elem{X}{p(X)}}{\geq}{2}}\)}
\end{align*}
where the aggregate translation corresponds to the conjunction of the formulas in \cref{ex:translation:aggregate}.
Note that the translation \(\TransA{\Fi{P}{3,2}}\) depends on the signature whereas the translation \(\TransAD{J}{\Fi{P}{3,2}}\) is fixed by the atoms in \(J\).
\end{example}

Importantly,
\cref{prp:translation:restricted:properties} shows that given a finite (approximation of the) well-founded
model of an \(\FormulasR\)-program,
we can replace aggregates with finitary formulas.
Moreover, in this case, \cref{thm:stable:simplification} and \cref{prp:translation:equivalence} together indicate
how to turn a program with aggregates into a semantically equivalent finite \(\FormulasR\)-program with finitary formulas as bodies.
That is,
given a finite well-founded model of an \(\FormulasR\)-program,
the program simplification from \cref{def:simplification} results in a finite program and
the aggregate translation from \cref{def:translation:aggregate:restricted} produces finitary formulas only.

This puts us in a position to outline how and when (safe non-ground) aggregate programs can be turned into
equivalent finite ground programs consisting of finitary subformulas only.
To this end, consider an aggregate program \(P\) along with the well-founded model \((I,J)\) of \(\TransA{P}\).
We have already seen in \cref{cor:stable:simplification} that \(\TransA{P}\) and its simplification \(\Simp{\TransA{P}}{I,J}\)
have the same stable models,
just like \(\Simp{\TransA{P}}{I,J}\) and its counterpart \(\Simp{\TransAD{J}{P}}{I,J}\) in view of \cref{prp:translation:restricted:properties}.

Now, if \((I,J)\) is finite, then \(\Simp{\TransA{P}}{I,J}\) is finite, too.
Seen from the perspective of grounding,
the safety of all rules in \(P\) implies that all global variables appear in positive body literals.
Thus, the number of ground instances of each rule in \(\Simp{\TransA{P}}{I,J}\) is determined by
the number of possible substitutions for its global variables.
Clearly, there are only finitely many possible substitutions such that
all positive body literals are satisfied by a finite interpretation~\(J\)
(cf.\ \cref{def:simplification}).
Furthermore, if \(J\) is finite,
aggregate translations in \(\Simp{\TransAD{J}{P}}{I,J}\) introduce finitary subformulas only.
Thus, in this case, we obtain from \(P\) a finite set of rules with finitary propositional formulas as bodies, viz.~\(\Simp{\TransAD{J}{P}}{I,J}\), that has the same stable models as \(\TransA{P}\)
(as well as \(\Trans{P}\), the traditional Ferraris-style semantics of \(P\)~\cite{ferraris11a,haliya14a}).

An example of a class of aggregate programs inducing finite well-founded models as above
consists of programs over a signature with nullary function symbols only.
Any such program can be turned into an equivalent finite set of propositional rules with finitary bodies.

 \section{Dependency analysis}\label{sec:dependency}

We now further refine our semantic approach to reflect actual grounding processes.
In fact,
modern grounders process programs on-the-fly by grounding one rule after another without storing any rules.
At the same time, they try to determine certain, possible, and false atoms.
Unfortunately, well-founded models cannot be computed on-the-fly,
which is why we introduce below the concept of an approximate model.
More precisely,
we start by defining instantiation sequences of (non-ground) aggregate programs based on their rule dependencies.
We show that approximate models of instantiation sequences are underapproximations
of the well-founded model of the corresponding sequence of (ground) \FormulasR-programs, as defined in \cref{sec:splitting}.
The precision of both types of models coincides on stratified programs.
We illustrate our concepts comprehensively at the end of this section in \cref{ex:dependency,ex:dependency:company}.

To begin with,
we extend the notion of positive and negative literals to aggregate programs.
For atoms \(a\), we define \(a^+ = {(\Naf a)}^- = \{a\}\) and \(a^- = {(\Naf a)}^+ = \emptyset\).
For comparisons \(a\), we define \(a^+ = a^- = \emptyset\).
For aggregates \(a\) with elements \(E\),
we define positive and negative atom occurrences,
using \cref{prp:translation:monotonicity} to refine the case for monotone aggregates:
\begin{itemize}
\item \(a^+ = \bigcup_{e \in E} \Body{e}\),
\item \(a^- = \emptyset\) if \(a\) is monotone, and
\item \(a^- = \bigcup_{e \in E} \Body{e}\) if \(a\) is not monotone.
\end{itemize}
For a set of body literals \(B\), we define \(B^+ = \bigcup_{b \in B}b^+\) and \(B^- = \bigcup_{b \in B}b^-\),
as well as \(B^\pm=B^+\cup B^-\).

We see in the following, that a special treatment of monotone aggregates yields better approximations of well-founded models.
A similar case could be made for antimonotone aggregates but had led to a more involved algorithmic treatment.

Inter-rule dependencies are determined via the predicates appearing in their heads and bodies.
We define~\(\Predicate{a}\) to be the predicate symbol associated with atom~\(a\)
and~\(\Predicate{A} = \{ \Predicate{a} \mid a \in A \}\) for a set~\(A\) of atoms.
An aggregate rule \(r_1\) \emph{depends} on another aggregate rule \(r_2\)
if \(\Predicate{\Head{r_2}} \in \Predicate{{\Body{r_1}}^\pm}\).
Rule \(r_1\) depends \emph{positively} or \emph{negatively} on \(r_2\) if \(\Predicate{\Head{r_2}} \in \Predicate{\Body{r_1}^+}\) or \(\Predicate{\Head{r_2}} \in \Predicate{{\Body{r_2}}^-}\), respectively.

For simplicity,
we first focus on programs without aggregates in examples and delay a full example with aggregates until the end of the section.

\begin{example}\label{ex:aggregate-program:dependency}
Let us consider the following rules from the introductory example:
\begin{align*}
u(1) & \tag{\(r_1\)} \\
p(X) & \leftarrow \Naf q(X) \wedge u(X) \tag{\(r_2\)} \\
q(X) & \leftarrow \Naf p(X) \wedge v(X) \tag{\(r_3\)}
\end{align*}

We first determine the rule heads and positive and negative atom occurrences in rule bodies:
\begin{align*}
\Head{r_1}   & = u(1) &
\Body{r_1}^+ & = \emptyset &
\Body{r_1}^- & = \emptyset \\
\Head{r_2}   & = p(X) &
\Body{r_2}^+ & = \{ u(X) \} &
\Body{r_2}^- & = \{ q(X) \} \\
\Head{r_3}   & = q(X) &
\Body{r_3}^+ & = \{ v(X) \} &
\Body{r_3}^- & = \{ p(X) \}
\end{align*}

With this, we infer the corresponding predicates:
\begin{align*}
\Predicate{\Head{r_1}}   & = u/1 &
\Predicate{\Body{r_1}^+} & = \emptyset &
\Predicate{\Body{r_1}^-} & = \emptyset \\
\Predicate{\Head{r_2}}   & = p/1 &
\Predicate{\Body{r_2}^+} & = \{ u/1 \} &
\Predicate{\Body{r_2}^-} & = \{ q/1 \} \\
\Predicate{\Head{r_3}}   & = q/1 &
\Predicate{\Body{r_3}^+} & = \{ v/1 \} &
\Predicate{\Body{r_3}^-} & = \{ p/1 \}
\end{align*}

We see that \(r_2\) depends positively on \(r_1\) and that \(r_2\) and \(r_3\) depend negatively on each other.
View~\cref{fig:dependency} in~\cref{ex:aggregate-program:sequence} for a graphical representation of these inter-rule dependencies.
\end{example}

The \emph{strongly connected components} of an aggregate program \(P\) are the equivalence classes
under the transitive closure of the dependency relation between all rules in \(P\).
A strongly connected component \(\Fi{P}{1}\) \emph{depends} on another strongly connected component \(\Fi{P}{2}\) if there is a rule in \(\Fi{P}{1}\) that depends on some rule in \(\Fi{P}{2}\).
The transitive closure of this relation is antisymmetric.

A strongly connected component of an aggregate program is \emph{unstratified}
if it depends negatively on itself or
if it depends on an unstratified component.
A component is \emph{stratified} if it is not unstratified.

A topological ordering of the strongly connected components is then used to guide grounding.

For example,
the sets \(\{r_1\}\) and \(\{r_2, r_3\}\) of rules from \cref{ex:aggregate-program:dependency} are strongly connected components
in a topological order.
There is only one topological order because \(r_2\) depends on \(r_1\).
While the first component is stratified, the second component is unstratified
because \(r_2\) and \(r_3\) depend negatively on each other.

\begin{definition}
We define an \emph{instantiation sequence} for \(P\) as a sequence~\(\Fsqi{P}{\Index}{i}\)
of its strongly connected components such that \(i < j\) if \(\Fi{P}{j}\) depends on \(\Fi{P}{i}\).
\end{definition}

Note that the components can always be well ordered because aggregate programs consist of finitely many rules.

The consecutive construction of the well-founded model along an instantiation sequence results
in the well-founded model of the entire program.

\begin{theorem}\label{thm:instantiation-sequence:well-founded}
Let \(\Fsqi{P}{\Index}{i}\) be an instantiation sequence for aggregate program~\(P\).

Then, \(\WellM{\Fsq{\TransA{P_i}}{i\in\Index}} = \WellM{\TransA{P}}\).
\end{theorem}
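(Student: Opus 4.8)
The plan is to derive the statement directly from \cref{cor:sequence:well-founded:stratified} applied to the sequence \(\Fsq{\TransA{\Fi{P}{i}}}{i \in \Index}\) of \(\FormulasR\)-programs (which are in particular \(\FormulasF\)-programs). Since \(P\) has finitely many rules, it has finitely many strongly connected components, so \(\Index\) is finite and well-ordered and the well-founded model of the sequence is well-defined. Two facts then suffice: first, that \(\bigcup_{i \in \Index}\TransA{\Fi{P}{i}} = \TransA{P}\); and second, that the external atoms \(\Ii{E}{i}\) of \cref{eq:well-founded:sequence:external} satisfy \(\Ii{E}{i} = \emptyset\) for every \(i \in \Index\).

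The first fact is routine. The strongly connected components partition the rules of \(P\), so \(\Grd{P} = \bigcup_{i \in \Index}\Grd{\Fi{P}{i}}\). As \(\TransASym\) is applied rule by rule, and the instance set \(\Grd{E}\) of an aggregate's elements does not depend on which component its rule lies in, we get \(\TransA{P} = \{\TransA{r} \mid r \in \Grd{P}\} = \bigcup_{i \in \Index}\TransA{\Fi{P}{i}}\).

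The second fact is the heart of the proof and relies on the topological ordering of the instantiation sequence. First I would establish the auxiliary claim that every atom occurring in the body of a translated rule \(\TransA{r}\) has its predicate in \(\Predicate{\Body{r}^\pm}\). This is checked case by case from \cref{def:translation:aggregate}: a literal translates to itself and contributes an atom whose predicate is in \(\Predicate{\Body{r}^\pm}\); a comparison translates to \(\top\) or \(\bot\) and contributes no atom; and a closed aggregate \(a\) with elements \(E\) translates to a formula all of whose atoms are condition atoms \(\Body{e}\) with \(e \in \Grd{E}\), whose predicates therefore lie in \(\Predicate{\IDPos{a}} \subseteq \Predicate{\Body{r}^\pm}\) (the monotone refinement of \cref{prp:translation:monotonicity} only drops some of these atoms and leaves this containment intact). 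Now suppose \(\Ii{E}{i} \neq \emptyset\) for some \(i\); then there are \(j > i\) and an atom \(a \in \IDPosNeg{\Body{\TransA{\Fi{P}{i}}}} \cap \Head{\TransA{\Fi{P}{j}}}\). By the auxiliary claim \(\Predicate{a} \in \Predicate{\Body{r}^\pm}\) for some \(r \in \Grd{\Fi{P}{i}}\), while \(a = \Head{r'}\) for some \(r' \in \Grd{\Fi{P}{j}}\) (the translation leaves heads unchanged), so \(\Predicate{a} = \Predicate{\Head{r'}}\). Since grounding preserves predicates, the non-ground rules in \(\Fi{P}{i}\) and \(\Fi{P}{j}\) of which \(r\) and \(r'\) are instances witness that \(\Fi{P}{i}\) depends on \(\Fi{P}{j}\), which by the defining property of an instantiation sequence forces \(j < i\), contradicting \(i < j\). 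Hence \(\Ii{E}{i} = \emptyset\).

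With both facts in place, \cref{cor:sequence:well-founded:stratified} yields \(\WellM{\Fsq{\TransA{\Fi{P}{i}}}{i \in \Index}} = \WellM{\bigcup_{i \in \Index}\TransA{\Fi{P}{i}}} = \WellM{\TransA{P}}\), which is the claim. I expect the only delicate point to be the auxiliary claim: one must verify that the aggregate translation introduces no body atom whose predicate escapes the footprint \(\Predicate{\Body{r}^\pm}\), which is precisely why the occurrence sets of \cref{sec:dependency} were set up so that \(\IDPosNeg{a}\) collects all condition atoms of an aggregate irrespective of its monotonicity.
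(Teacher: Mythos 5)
Your proof is correct and takes essentially the same route as the paper: you establish that the external atom sets of the translated sequence are empty (this is exactly the paper's \cref{lem:sequence:external}, proved there via \nref{obs:depend}{a} and the antisymmetry of component dependencies) and then conclude with \cref{cor:sequence:well-founded:stratified}. Your auxiliary claim that every body atom of \(\TransA{r}\) has its predicate in \(\Predicate{\Body{r}^\pm}\) is precisely the content of the paper's unproven \cref{obs:depend}, so you have in fact supplied slightly more detail than the paper itself.
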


\begin{example}\label{ex:aggregate-program:sequence}
The following example shows how to split an aggregate program into an instantiation sequence and
gives its well-founded model.
Let \(P\) be the following aggregate program,
extending the one from the introductory section:
\begin{align*}
u(1) & &
u(2) & \\
v(2) & &
v(3) & \\
p(X) & \leftarrow \Naf q(X) \wedge u(X) &
q(X) & \leftarrow \Naf p(X) \wedge v(X) \\
x & \leftarrow \Naf p(1) &
y & \leftarrow \Naf q(3)
\end{align*}

\begin{figure}
\centering
\includegraphics{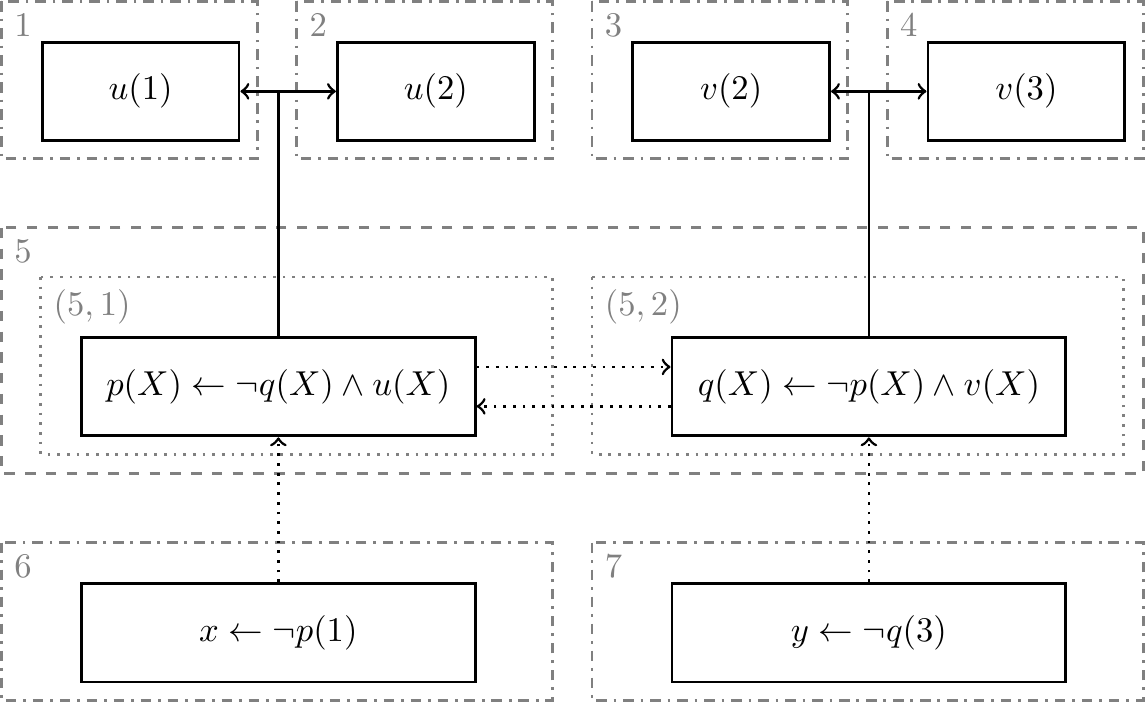}
\caption{Rule dependencies for \cref{ex:dependency}.\label{fig:dependency}}
\end{figure}

We have already seen how to determine inter-rule dependencies in \cref{ex:aggregate-program:dependency}.
A possible instantiation sequence for program \(P\) is given in \cref{fig:dependency}.
Rules are depicted in solid boxes.
Solid and dotted edges between such boxes depict positive and negative dependencies between the corresponding rules, respectively.
Dashed and dashed/dotted boxes represent components in the instantiation sequence (we ignore dotted boxes for now but turn to them in \cref{ex:aggregate-program:sequence:refined}).
The number in the corner of a component box indicates the index in the corresponding instantiation sequence.

For \(F = \{u(1),u(2),v(2),v(3)\}\),
the well-founded model of \(\TransA{P}\) is
\begin{align*}
\WellM{\TransA{P}} &= (\{p(1),q(3)\},\{p(1),p(2),q(2),q(3)\}) \sqcup F.
\end{align*}

By \cref{thm:instantiation-sequence:well-founded},
the ground sequence~\(\Fsqti{P}{\Index}{i}\) has the same well-founded model as \(\TransA{P}\):
\begin{align*}
\WellM{\Fsq{\TransA{P}}{i\in\Index}} &= (\{p(1),q(3)\},\{p(1),p(2),q(2),q(3)\}) \sqcup F.
\end{align*}

Note that the set \(F\) comprises the facts derived from stratified components.
In fact, for stratified components, the set of external atoms~\eqref{eq:well-founded:sequence:external} is empty.
We can use \cref{cor:sequence:well-founded:stratified} to confirm that
the well founded model \((F,F)\) of sequence \(\Fsq{\TransA{\Fi{P}{i}}}{1 \leq i \leq 4}\) is total.
In fact, each of the intermediate interpretations~\eqref{eq:well-founded:sequence:intermediate} is total
and can be computed with just one application of the stable operator.
For example, \(\Ii{I}{1}=\Ii{J}{1}=\{u(1)\}\) for component \(\Fi{P}{1}\).
\end{example}

We further refine instantiation sequences by partitioning each component along its positive dependencies.

\begin{definition}
Let \(P\) be an aggregate program and \(\Fsqi{P}{\Index}{i}\) be an instantiation sequence for \(P\).
Furthermore, for each \(i \in \Index\), let \(\Fsq{\Fi{P}{i,j}}{j \in \Index_i}\) be an instantiation sequence of \(\Fi{P}{i}\) considering positive dependencies only.

A \emph{refined instantiation sequence} for \(P\) is a sequence~\(\Fsqic{P}{\IndexAlt}{i,j}\) where
the index set~\(\IndexAlt = \{ (i,j) \mid i \in \Index, j \in \Index_i \}\) is ordered lexicographically.

We call \(\Fsqic{P}{\IndexAlt}{i,j}\) a refinement of \(\Fsqi{P}{\Index}{i}\).

We define a component \(\Fi{P}{i,j}\) to be \emph{stratified} or \emph{unstratified} if the encompassing component \(\Fi{P}{i}\) is stratified or unstratified, respectively.
\end{definition}

Examples of refined instantiation sequences are given in \cref{fig:dependency,fig:dependency:company}.

The advantage of such refinements is that they yield better or equal approximations
(cf.\ \cref{thm:approximate:sequence:well-founded,ex:dependency}).
On the downside, we do not obtain that \(\WellM{\Fsqi{P}{\IndexAlt}{i}}\) equals \(\WellM{\TransA{P}}\) for refined instantiation sequences in general.

\begin{example}\label{ex:aggregate-program:sequence:refined}
The refined instantiation sequence for program \(P\) from \cref{ex:aggregate-program:sequence} is given in \cref{fig:dependency}.
A dotted box indicates a component in a refined instantiation sequence.
Components that cannot be refined further are depicted with a dashed/dotted box.
The number or pair in the corner of a component box indicates the index in the corresponding refined instantiation sequence.

Unlike in \cref{ex:aggregate-program:sequence},
the well-founded model of the refined sequence of ground programs~\(\Fsqtic{P}{\IndexAlt}{i,j}\) is
\begin{align*}
\WellM{\Fsqtic{P}{\IndexAlt}{i,j}} &= (\{q(3)\},\{p(1),p(2),q(2),q(3),x\}) \sqcup F,
\end{align*}
which is actually less precise than the well-founded model of \(P\).
This is because literals over \(\Naf q(X)\) are unconditionally assumed to be true
because their instantiation is not yet available when \(\Fi{P}{5,1}\) is considered.
Thus, we get \((\Ii{I}{5,1},\Ii{J}{5,1}) = (\emptyset, \{ p(1), p(2) \})\) for the intermediate interpretation~\eqref{eq:well-founded:sequence:intermediate}.
Unlike this, the atom \(p(3)\) is false when considering component \(\Fi{P}{5,2}\) and \(q(3)\) becomes true.
In fact, we get \((\Ii{I}{5,2},\Ii{J}{5,2}) = (\{q(3)\}, \{ q(2), q(3) \})\).
Observe that \((\Ii{I}{5}, \Ii{J}{5})\) from above is less precise than \((\Ii{I}{5,1}, \Ii{J}{5,1}) \sqcup (\Ii{I}{5,2}, \Ii{J}{5,2})\).

We continue with this example below and show in~\cref{ex:dependency} that
refined instantiation sequences can still be advantageous to get better approximations of well-founded models.
\end{example}

We have already seen in \cref{sec:splitting} that external atoms may lead to less precise semantic characterizations.
This is just the same in the non-ground case, whenever a component comprises predicates that are defined in a following
component of a refined instantiation sequence.
This leads us to the concept of an approximate model obtained by overapproximating the extension of such externally defined predicates.

\begin{definition}\label{def:approximate}
Let \(P\) be an aggregate program,
\((\Ir{I},\Ir{J})\) be a four-valued interpretation,
\(\PredE\) be a set of predicates, and
\(P'\) be the program obtained from \(P\) by removing all rules \(r\) with \(\Predicate{\Body{r}^-} \cap \PredE \neq \emptyset\).

We define the \emph{approximate model of \(P\) relative} to \((\Ir{I},\Ir{J})\) as
\begin{align*}
\ApproxRM{P}{\PredE}{(\Ir{I},\Ir{J})} & = (I,J)\nonumber\\
\intertext{where}
I &= \StableR{\TransA{P'}}{\Ir{I}}{\Ir{J}}\AndT\\
J &= \StableR{\TransA{P}}{\Ir{J}}{\Ir{I} \cup I}
\end{align*}
\end{definition}

We keep dropping parentheses and simply write \(\ApproxRM{P}{\PredE}{\Ir{I},\Ir{J}}\) instead of \(\ApproxRM{P}{\PredE}{(\Ir{I},\Ir{J})}\).

The approximate model amounts to an immediate consequence operator,
similar to the relative well-founded operator in \cref{def:well-founded:operator:relative};
it refrains from any iterative applications, as used for defining a well-founded model.
More precisely,
the relative stable operator is applied twice to obtain the approximate model.
This is similar to Van~Gelder's alternating transformation~\cite{vangelder93a}.
The certain atoms in \(I\) are determined by applying the operator to
the ground program obtained after removing all rules whose negative body literals comprise externally defined predicates,
while the possible atoms \(J\) are computed from the entire program by taking the already computed certain atoms in \(I\) into account.
In this way, the approximate model may result in fewer unknown atoms than the relative well-founded operator
when applied to the least precise interpretation (as an easy point of reference).
How well we can approximate the certain atoms with the approximate operator depends on the set of external predicates~\(\PredE\).
When approximating the model of a program~\(P\) in a sequence, the set~\(\PredE\) comprises all negative predicates occurring in~\(P\) for which possible atoms have not yet been fully computed.
This leads to fewer certain atoms obtained from the reduced program,
\(P'=\{r \in P\mid \Predicate{\Body{r}^-} \cap \PredE = \emptyset\}\),
stripped of all rules from \(P\) that have negative body literals whose predicates occur in \(\PredE\).

The next theorem identifies an essential prerequisite for
an approximate model of a non-ground program to be an underapproximation of the well-founded model of the corresponding ground program.

\begin{theorem}\label{lem:approximate:well-founded}
Let \(P\) be an aggregate program,
\(\PredE\) be a set of predicates, and
\((\Ir{I},\Ir{J})\) be a four-valued interpretation.

If \(\Predicate{\Head{P}} \cap \Predicate{\Body{P}^-} \subseteq \PredE\) then \(\ApproxRM{P}{\PredE}{\Ir{I},\Ir{J}} \leq_p \WellRM{\TransA{P}}{\Ir{I},\Ir{J} \cup \Ir{E}}\) where \(\Ir{E}\) is the set of all ground atoms over predicates in \(\PredE\).
\end{theorem}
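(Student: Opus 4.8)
The plan is to unfold both sides and reduce the claimed precision inequality to two ordinary set inclusions. Writing $(I,J)=\ApproxRM{P}{\PredE}{\Ir{I},\Ir{J}}$ and $(\Ib{I},\Ib{J})=\WellRM{\TransA{P}}{\Ir{I},\Ir{J}\cup\Ir{E}}$, the definition of $\leq_p$ turns the goal into the two inclusions $I\subseteq\Ib{I}$ and $\Ib{J}\subseteq J$. Abbreviating $Q=\TransA{P}$ and $Q'=\TransA{P'}$, I would first record the defining equations $I=\StableR{Q'}{\Ir{I}}{\Ir{J}}$ and $J=\StableR{Q}{\Ir{J}}{\Ir{I}\cup I}$, and then, using that $\WellRO{Q}{\Ir{I},\Ir{J}\cup\Ir{E}}$ is monotone by \cref{prp:well-founded:relative:monotonicity} so that its least fixed point exists by \cref{thm:knaster-tarski}, the fixed-point identities $\Ib{I}=\StableR{Q}{\Ir{I}}{\Ib{J}\cup\Ir{J}\cup\Ir{E}}$ and $\Ib{J}=\StableR{Q}{\Ir{J}\cup\Ir{E}}{\Ib{I}\cup\Ir{I}}$.

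\textbf{Certain atoms.} For $I\subseteq\Ib{I}$ the key is a reduct-agreement lemma enabled precisely by the hypothesis. Every rule retained in $P'$ satisfies $\Predicate{\Body{r}^-}\cap\PredE=\emptyset$; together with $\Predicate{\Head{P}}\cap\Predicate{\Body{P}^-}\subseteq\PredE$ this forces each predicate occurring negatively in a retained rule to lie neither in $\PredE$ nor in $\Predicate{\Head{P}}$, i.e.\ to be purely \emph{external}. Since the $\FOID$-reduct depends on its argument only through negative occurrences, and since monotone aggregates are argument-independent by \cref{prp:translation:monotonicity} (which is exactly why they are dropped from $\Body{r}^-$), I would argue that $\IDReductP{Q'}{\Ir{J}}$ and $\IDReductP{Q'}{\Ib{J}\cup\Ir{J}\cup\Ir{E}}$ coincide: $\Ib{J}$ ranges over head atoms of $P$ only while $\Ir{E}$ ranges over $\PredE$-atoms only, so the two arguments agree on the negative predicates of $Q'$. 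Hence $I=\StableR{Q'}{\Ir{I}}{\Ib{J}\cup\Ir{J}\cup\Ir{E}}$, and because $Q'\subseteq Q$ makes the reduct a subprogram, monotonicity of the relative least model in the program (\cref{prop:rel-step-monotone} with \cref{thm:knaster-tarski}) gives $I\subseteq\StableR{Q}{\Ir{I}}{\Ib{J}\cup\Ir{J}\cup\Ir{E}}=\Ib{I}$.

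\textbf{Possible atoms.} For $\Ib{J}\subseteq J$ I would combine the inclusion just established with the antimonotonicity of the stable operator in its argument (\cref{prp:stable:relative:monotonicity}). From $I\subseteq\Ib{I}$ the certain argument grows, $\Ir{I}\cup I\subseteq\Ir{I}\cup\Ib{I}$, which by antimonotonicity pushes the possible extension downward. The remaining discrepancy is the enlarged parameter $\Ir{J}\cup\Ir{E}$ on the well-founded side, and neutralizing it is the crux: I would show that atoms over $\PredE$ feed into provability in $Q$ only through negative literals — again via the hypothesis together with \cref{prp:translation:monotonicity} for monotone aggregates — so that adjoining $\Ir{E}$ to the parameter enables no new positive derivation, giving $\StableR{Q}{\Ir{J}\cup\Ir{E}}{\Ib{I}\cup\Ir{I}}=\StableR{Q}{\Ir{J}}{\Ib{I}\cup\Ir{I}}$. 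Chaining the two steps then yields $\Ib{J}=\StableR{Q}{\Ir{J}}{\Ib{I}\cup\Ir{I}}\subseteq\StableR{Q}{\Ir{J}}{I\cup\Ir{I}}=J$.

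\textbf{Main obstacle.} The hard part is the last parameter-neutralization step for the possible atoms: unlike the certain case, the enlarged parameter acts monotonically in the unfavourable direction, so the argument must pin down exactly how externally defined predicates may occur in $P$ — only negatively, as guaranteed by the dependency-based role of $\PredE$ and the hypothesis — and verify that the refined translation $\TransASym$ propagates this property through its $\FOID$-reducts, leaning on \cref{prp:translation:monotonicity} and the occurrence bookkeeping $a^{\pm}$ for aggregates. Care is also needed to treat aggregate subformulas uniformly with plain literals throughout the reduct-agreement arguments, and to confirm that both least fixed points are taken over the same signature, so that the bound restricting $\StableR{\,\boldsymbol{\cdot}}{\,\boldsymbol{\cdot}}{\,\boldsymbol{\cdot}}$ to head atoms can be invoked safely.
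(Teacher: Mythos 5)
Your overall decomposition and your entire treatment of the certain atoms coincide with the paper's own proof: like you, the paper shows \(I \subseteq \Ib{I}\) by noting that negative body atoms of \(\TransA{P'}\) lie neither in \(\Ir{E}\) (retained rules avoid \(\PredE\)) nor in \(\Ib{J} \subseteq \Head{\TransA{P}}\) (a negative predicate of a retained rule that were also a head predicate would be in \(\PredE\) by the hypothesis, contradicting retention), so the relevant \FOID-reducts agree, and it then finishes with monotonicity in the program from \(\TransA{P'} \subseteq \TransA{P}\). That half of your proposal is correct and essentially identical to the paper's.

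The genuine gap is in the possible-atoms half, at exactly the step you yourself flag as the crux. The neutralization \(\StableR{\TransA{P}}{\Ir{J}\cup\Ir{E}}{X} = \StableR{\TransA{P}}{\Ir{J}}{X}\) holds only when no \(\PredE\)-predicate occurs among the positive body atoms \(\IDPos{\Body{\TransA{P}}}\), and the hypothesis \(\Predicate{\Head{P}} \cap \Predicate{\Body{P}^-} \subseteq \PredE\) cannot deliver this: it forces predicates \emph{into} \(\PredE\) but places no restriction on where \(\PredE\)-predicates occur, so positive occurrences are not excluded, and for those the parameter monotonicity of \cref{prp:stable:relative:monotonicity} runs in the unfavourable direction. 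Indeed the statement itself fails in that situation: for \(P = \{a \leftarrow a \wedge \Naf b;\ b \leftarrow \Naf a\}\) the hypothesis forces \(\{a/0, b/0\} \subseteq \PredE\); taking \(\PredE = \{a/0, b/0\}\) and \((\Ir{I},\Ir{J}) = (\emptyset,\emptyset)\) yields \(P' = \emptyset\) and \(\ApproxRM{P}{\PredE}{\emptyset,\emptyset} = (\emptyset,\{b\})\), since the positive body atom \(a\) is never derivable there, whereas \(\WellRM{\TransA{P}}{\emptyset,\Ir{E}} = (\emptyset,\{a,b\})\), since the context atom \(a \in \Ir{E}\) feeds that positive occurrence; \(\{a,b\} \nsubseteq \{b\}\) refutes the claimed \(\leq_p\). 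So your neutralization step would fail, and it cannot be repaired from the stated hypotheses; one would need to add something like \(\PredE \cap \Predicate{\IDPos{\Body{P}}} = \emptyset\). For what it is worth, you have isolated a hole in the published proof itself: its final line silently identifies \(\Ib{J}\) with \(\StableR{\TransA{P}}{\Ir{J}}{\Ir{I} \cup \Ib{I}}\), discarding \(\Ir{E}\) from the parameter without justification --- precisely the step you recognized as the one needing an argument.
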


In general, a grounder cannot calculate on-the-fly a well-founded model.
Implementing this task efficiently requires an algorithm storing the grounded program, as, for example, implemented in an ASP solver.
But modern grounders are able to calculate the stable operator on-the-fly.
Thus, an approximation of the well-founded model is calculated.
This is where we use the approximate model, which might be less precise than the well-founded model but can be computed more easily.

With the condition of \cref{lem:approximate:well-founded} in mind,
we define the approximate model for an instantiation sequence.
We proceed similar to \cref{def:well-founded:sequence}
but treat in~\eqref{def:approximate:sequence:external} all atoms over negative predicates that have not been completely defined as external.

\begin{definition}\label{def:approximate:sequence}
Let \(\Fsqi{P}{\Index}{i}\) be a (refined) instantiation sequence for \(P\).

Then, the \emph{approximate model} of \(\Fsqi{P}{\Index}{i}\) is
\begin{align}
\ApproxM{\Fsqi{P}{\Index}{i}} &= \bigsqcup_{i \in \Index}(\Ii{I}{i},\Ii{J}{i}).\nonumber\\
\intertext{where}
\Ii{\PredE}{i} & = \Predicate{\IDNeg{\Body{\Fi{P}{i}}}} \cap \Predicate{\bigcup_{i \leq j}\Head{\Fi{P}{j}}},\label{def:approximate:sequence:external}\\
(\Iri{I}{i},\Iri{J}{i}) & = \bigsqcup_{j < i} (\Ii{I}{j},\Ii{J}{j})\text{, and}\label{def:approximate:sequence:relative}\\
(\Ii{I}{i},\Ii{J}{i}) & = \ApproxRM{\Fi{P}{i}}{\Ii{\PredE}{i}}{\Iri{I}{i},\Iri{J}{i}}\label{def:approximate:sequence:intermediate}.
\end{align}
\end{definition}

Note that the underlying approximate model removes rules containing negative literals over predicates in \(\Fi{\PredE}{i}\) when calculating certain atoms.
This amounts to assuming all ground instances of atoms over \(\Fi{\PredE}{i}\) to be possible.\footnote{To be precise, rules involving aggregates that could in principle derive certain atoms might be removed, too.
Here, we are interested in a syntactic criteria that allows us to underapproximate the set of certain atoms.}
Compared to~\eqref{eq:well-founded:sequence:external}, however,
this additionally includes recursive predicates in~\eqref{def:approximate:sequence:external}.
The set \(\Fi{\PredE}{i}\) is empty for stratified components.

The next result relies on \cref{thm:sequence:well-founded} to show that
an approximate model of an instantiation sequence
constitutes an underapproximation of the
well-founded model of the translated entire program.
In other words,
the approximate model of a sequence of aggregate programs (as computed by a grounder)
is less precise than the well-founded model of the whole ground program.

\begin{theorem}\label{thm:approximate:sequence:well-founded}
Let \(\Fsqi{P}{\Index}{i}\) be an instantiation sequence for aggregate program~\(P\)
and \(\Fsqi{P}{\IndexAlt}{j}\) be a refinement of \(\Fsqi{P}{\Index}{i}\).

Then, \(\ApproxM{\Fsqi{P}{\Index}{i}} \leq_p \ApproxM{\Fsqi{P}{\IndexAlt}{j}} \leq_p \WellM{\TransA{P}}\).
\end{theorem}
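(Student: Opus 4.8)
The statement splits into two inequalities, $\ApproxM{\Fsqi{P}{\Index}{i}} \leq_p \ApproxM{\Fsqi{P}{\IndexAlt}{j}}$ and $\ApproxM{\Fsqi{P}{\IndexAlt}{j}} \leq_p \WellM{\TransA{P}}$, which I would treat separately. For the right inequality the plan is to factor it through the well-founded model of the \emph{sequence} of translated components. Since the refined components partition the rules of $P$, translation distributes over the partition, giving $\bigcup_{j\in\IndexAlt}\TransA{\Fi{P}{j}} = \TransA{P}$; hence \cref{thm:sequence:well-founded}, applied to the sequence of $\FormulasR$-programs $\Fsq{\TransA{\Fi{P}{j}}}{j\in\IndexAlt}$, yields $\WellM{\Fsq{\TransA{\Fi{P}{j}}}{j\in\IndexAlt}} \leq_p \WellM{\TransA{P}}$. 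It then remains to show $\ApproxM{\Fsqi{P}{\IndexAlt}{j}} \leq_p \WellM{\Fsq{\TransA{\Fi{P}{j}}}{j\in\IndexAlt}}$.

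I would prove this last inequality by transfinite induction along $\IndexAlt$, comparing the approximate intermediates $(\Ii{I}{k},\Ii{J}{k})$ of \cref{def:approximate:sequence} with the well-founded intermediates $(\Ipi{I}{k},\Ipi{J}{k})$ of \cref{def:well-founded:sequence}. Observing first that componentwise union $\sqcup$ is monotone with respect to $\leq_p$, the induction hypothesis $(\Ii{I}{l},\Ii{J}{l}) \leq_p (\Ipi{I}{l},\Ipi{J}{l})$ for all $l<k$ transfers to the accumulated contexts, giving $(\Iri{I}{k},\Iri{J}{k}) \leq_p (\Irpi{I}{k},\Irpi{J}{k})$. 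At stage $k$ the hypothesis of \cref{lem:approximate:well-founded} holds because $\Predicate{\Head{\Fi{P}{k}}} \subseteq \Predicate{\bigcup_{k\leq l}\Head{\Fi{P}{l}}}$, so that $\Predicate{\Head{\Fi{P}{k}}} \cap \Predicate{\IDNeg{\Body{\Fi{P}{k}}}} \subseteq \Ii{\PredE}{k}$; the theorem then gives $(\Ii{I}{k},\Ii{J}{k}) \leq_p \WellRM{\TransA{\Fi{P}{k}}}{\Iri{I}{k},\Iri{J}{k}\cup \Iri{E}{k}}$, where $\Iri{E}{k}$ collects all ground atoms over the predicates in $\Ii{\PredE}{k}$. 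Since the relative well-founded model is the least fixed point of an operator monotone in its parameter (\cref{prp:well-founded:relative:monotonicity}), it is itself monotone in the parameter; comparing this bound with $(\Ipi{I}{k},\Ipi{J}{k}) = \WellRM{\TransA{\Fi{P}{k}}}{(\Irpi{I}{k},\Irpi{J}{k})\sqcup(\emptyset,\Ii{E}{k})}$ reduces the step to the context inequality $(\Iri{I}{k},\Iri{J}{k}\cup\Iri{E}{k}) \leq_p (\Irpi{I}{k},\Irpi{J}{k}\cup \Ii{E}{k})$.

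The certain part of this context inequality is immediate from the hypothesis, so the crux — and the step I expect to be the main obstacle for the right inequality — is the possible-atom inclusion $\Ii{E}{k} \subseteq \Iri{J}{k}\cup\Iri{E}{k}$. Here I would argue that any ground atom in $\Ii{E}{k} = \IDPosNeg{\Body{\TransA{\Fi{P}{k}}}}\cap\bigcup_{k<l}\Head{\TransA{\Fi{P}{l}}}$ occurs in the body of $\TransA{\Fi{P}{k}}$ and is defined only in a strictly later component; because the refinement orders components so that positive dependencies point strictly backward, such an atom cannot occur positively, so its predicate lies in $\Predicate{\IDNeg{\Body{\Fi{P}{k}}}}$ and hence in $\Ii{\PredE}{k}$, placing the atom in $\Iri{E}{k}$. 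This relies on the alignment between positive and negative occurrences in the translation $\TransA{\cdot}$ and the syntactic sets $\IDPos{\Body{\cdot}}$ and $\IDNeg{\Body{\cdot}}$ of \cref{sec:dependency} (including the refined treatment of monotone aggregates via \cref{prp:translation:monotonicity}), which I would record as a separate observation. Monotonicity of $\sqcup$ then assembles the componentwise bounds into the desired inequality.

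For the left inequality I would again use transfinite induction, now over the coarse index set $\Index$, maintaining that the refined approximate model accumulated through all pieces of the components $\le i$ is at least as precise as the coarse model accumulated through components $\le i$. Since the approximate operator $\ApproxRM{\Fi{P}{i}}{\PredE}{\,\boldsymbol{\cdot}}$ is monotone in its context parameter with respect to $\leq_p$ — a consequence of the monotonicity and antimonotonicity of the relative stable operator in \cref{prp:stable:relative:monotonicity} — the inductive step reduces to a single-component claim: splitting one component $\Fi{P}{i}$ along its positive dependencies and evaluating the pieces in sequence produces a join at least as precise as the single-shot model $\ApproxRM{\Fi{P}{i}}{\Ii{\PredE}{i}}{\Iri{I}{i},\Iri{J}{i}}$. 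I expect this single-component refinement claim to be the main obstacle of the whole proof. Its certain-atom direction is favorable: the refined external set $\Ii{\PredE}{i,j}$ drops predicates defined in strictly earlier pieces, so fewer rules are discarded when computing certain atoms and, by monotonicity of the relative stable operator, the refined certain set only grows. The delicate point is the possible-atom direction, where splitting the fixed-point computation across pieces would on its own \emph{enlarge} the possible set (as in \cref{thm:sequence:well-founded}); one must show that the gain in certain atoms, fed through the antimonotone argument of the relative stable operator in the definition of the possible component, more than compensates, so that the refined possible set does not exceed the coarse one. Carefully tracking these two competing effects across the pieces of a component is where the argument demands the most care.
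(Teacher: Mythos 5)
Your handling of the second inequality, \(\ApproxM{\Fsqi{P}{\IndexAlt}{j}} \leq_p \WellM{\TransA{P}}\), follows essentially the same route as the paper: you factor through \cref{thm:sequence:well-founded}, run the same induction comparing the approximate intermediates of \cref{def:approximate:sequence} against the well-founded intermediates of \cref{def:well-founded:sequence} using \cref{lem:approximate:well-founded} and \cref{prp:well-founded:relative:monotonicity}, and the ``separate observation'' you plan to record (external atoms of a refined component occur only negatively, so their predicates land in \(\Ii{\PredE}{j}\)) is precisely the paper's \cref{lem:sequence:refined:external} together with \cref{obs:depend}, used there to show that the ground external atoms of the sequence are contained in the ground atoms over the approximate external predicates. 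This half is fine.

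The first inequality, \(\ApproxM{\Fsqi{P}{\Index}{i}} \leq_p \ApproxM{\Fsqi{P}{\IndexAlt}{j}}\), is where you and the paper part ways, and where your plan has a genuine gap. The paper never attempts a compensation argument; it separates the two changes you are handling simultaneously: (i) it notes \(\Ii{\PredE}{i,j} \subseteq \Ii{\PredE}{i}\); (ii) it asserts that evaluating the \emph{refined} sequence while still using the \emph{coarse} external sets \(\Ii{\PredE}{i}\) in \cref{def:approximate:sequence:intermediate} reproduces the coarse approximate model exactly (appealing to rule dependencies and \cref{obs:stable:relative:properties}); and (iii) only then does it shrink the external sets and invoke monotonicity of the relative stable operator (\cref{prp:stable:relative:monotonicity}). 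Your route instead rests on the single-component claim that the refined, split evaluation is at least as precise as the coarse single-shot one, with the gain in certain atoms outweighing the possible-atom enlargement caused by splitting. That claim is false. Take the fact \(q(1)\) as one component, followed by the component \(\{e \leftarrow \Naf a;\ q(2) \leftarrow e;\ a \leftarrow q(X)\}\), which forms a single SCC; its refinement along positive dependencies is forced to be \(\{e \leftarrow \Naf a\} < \{q(2) \leftarrow e\} < \{a \leftarrow q(X)\}\). In the coarse evaluation the external predicate set is \(\{a\}\), so the certain computation keeps the two positive rules and derives \(a\) from the context atom \(q(1)\); the subsequent possible computation falsifies \(\Naf a\), and the component contributes \((\{a\},\{a\})\). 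In the refined evaluation the piece \(\{e \leftarrow \Naf a\}\) is processed \emph{before} \(a\) has been established as certain, so \(e\) and then \(q(2)\) become possible, and the pieces together contribute \((\{a\},\{e,q(2),a\})\). The certain atoms agree, yet the refined possible set strictly exceeds the coarse one; no bookkeeping of certain-atom gains can close your induction step.

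You should also be aware that the same configuration falsifies the equality asserted without proof in step (ii) of the paper's own sketch (the refined sequence run with the coarse external sets likewise makes \(e\) and \(q(2)\) possible), so your instinct that this is the genuinely hard part is correct: the paper explicitly omits the ``very technical'' induction here, and the obstruction you identified --- a negative dependency pointing forward in the refinement order onto atoms that a later piece derives as certain from the context through purely positive rules --- is exactly what any complete argument (or a suitably restricted statement) must exclude or handle. If you redo this half, adopt the paper's decomposition into steps (i)--(iii) rather than the compensation route, and treat the exactness claim of step (ii) as the real proof obligation rather than as a consequence of generic monotonicity.
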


The finer granularity of refined instantiation sequences leads to more precise models.
Intuitively, this is because a refinement of a component may result in a series of approximate models,
which yield a more precise result than the approximate model of the entire component
because in some cases fewer predicates are considered external in~\eqref{def:approximate:sequence:external}.

We remark that all instantiation sequences of a program have the same approximate model.
However, this does not carry over to refined instantiation sequences because their evaluation is order
dependent.

The two former issues are illustrated in \cref{ex:dependency}.

The actual value of approximate models for grounding lies in their underlying series of consecutive interpretations delineating each ground program
in a (refined) instantiation sequence.
In fact, as outlined after \cref{prp:translation:restricted:properties},
whenever all interpretations \((\Ii{I}{i},\Ii{J}{i})\) in~\eqref{def:approximate:sequence:intermediate} are finite
so are the \FormulasR-programs \(\Simp{\TransR{\Fi{P}{i}}{\Iri{J}{i} \cup \Ii{J}{i}}}{(\Iri{I}{i},\Iri{J}{i}) \sqcup (\Ii{I}{i},\Ii{J}{i})}\)
obtained from each \(\Fi{P}{i}\) in the instantiation sequence.

\begin{theorem}\label{thm:approximate:sequence:simplification}
Let \(\Fsqi{P}{\Index}{i}\) be a (refined) instantiation sequence of an aggregate program~\(P\),
and let \((\Iri{I}{i},\Iri{J}{i})\) and \((\Ii{I}{i},\Ii{J}{i})\) be defined as in \cref{def:approximate:sequence:relative,def:approximate:sequence:intermediate}.

Then,
\(\bigcup_{i \in \Index}\Simp{\TransR{\Fi{P}{i}}{\Iri{J}{i} \cup \Ii{J}{i}}}{(\Iri{I}{i},\Iri{J}{i}) \sqcup (\Ii{I}{i},\Ii{J}{i})}\) and \(\TransA{P}\) have the same well-founded and stable models.
\end{theorem}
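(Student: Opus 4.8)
The plan is to route everything through the whole-program translation \(\TransA{P}\) and its well-founded model, invoking the two superset results \cref{thm:well-founded:simplification:superset,cor:stable:simplification:superset}. I set \((I,J)=\WellM{\TransA{P}}\) and, writing \((\Ii{I}{i},\Ii{J}{i})\) and \((\Iri{I}{i},\Iri{J}{i})\) for the interpretations of \cref{def:approximate:sequence}, I introduce the intermediary program
\[
  Q \;=\; \bigcup_{i\in\Index}\Simp{\TransA{\Fi{P}{i}}}{(\Iri{I}{i},\Iri{J}{i})\sqcup(\Ii{I}{i},\Ii{J}{i})},
\]
which differs from the program in the statement only by using the full translation \(\TransASym\) in place of the \(J\)-restricted one \(\TransRSym{\cdot}\). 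I will show (Step~A) that \(\Simp{\TransA{P}}{I,J}\subseteq Q\subseteq\TransA{P}\), whence \(Q\) and \(\TransA{P}\) share both well-founded and stable models, and (Step~B) that replacing \(\TransASym\) by \(\TransR{\cdot}{\Iri{J}{i}\cup\Ii{J}{i}}\) preserves these models, so that the statement's program agrees with \(Q\). Note that \(\TransA{P}\) is an \(\FormulasR\)-program and so is every subset of it, so both corollaries apply.

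For Step~A I would first record that \(\bigcup_{i}\TransA{\Fi{P}{i}}=\TransA{P}\), since the strongly connected components partition the rules and \(\TransASym\) acts rule-wise on \(\Grd{P}\); the inclusion \(Q\subseteq\TransA{P}\) is then immediate because simplification is a filter. The inclusion \(\Simp{\TransA{P}}{I,J}\subseteq Q\) is the heart of the matter and rests on two observations. The first is a monotonicity property of the simplification filter with respect to the precision ordering: if \(I'\subseteq I''\) and \(J''\subseteq J'\), then \(J''\models\IDReductP{\Body{r}}{I''}\) implies \(J'\models\IDReductP{\Body{r}}{I'}\) — fewer certain and more possible atoms can only keep more rules. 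This follows from the behaviour of the \FOID-reduct on \(\FormulasR\)-bodies, whose implications carry no implications in their antecedents: the filter is monotone in the possible set by \cref{prop:step-monotone} and antitone in the certain set for the same reason the stable operator is antimonotone (\cref{lem:stable:operator:antimonotone}). The second observation is that, for a rule \(r\) of component \(i\), the body \(\TransA{\Body{r}}\) mentions only atoms over predicates defined in components \(\le i\) — a later definer would force \(i\) to depend on it, contradicting the sequence order — while each intermediate model \((\Ii{I}{j},\Ii{J}{j})\) contributes only atoms over head predicates of \(\Fi{P}{j}\). Combining this with \(\ApproxM{\Fsqi{P}{\Index}{i}}\leq_p\WellM{\TransA{P}}\) from \cref{thm:approximate:sequence:well-founded}, I obtain, on the atoms relevant to \(r\), that \((\Iri{I}{i},\Iri{J}{i})\sqcup(\Ii{I}{i},\Ii{J}{i})\) is less precise than \((I,J)\); the filter monotonicity then yields that every rule kept by \(\Simp{\TransA{P}}{I,J}\) is kept by the component simplification, i.e.\ lies in \(Q\). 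Applying \cref{thm:well-founded:simplification:superset,cor:stable:simplification:superset} to \(\TransA{P}\) and \(Q\) finishes Step~A.

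For Step~B I would use \cref{prp:translation:restricted:properties}. Writing \(J_i=\Iri{J}{i}\cup\Ii{J}{i}\), consistency of the approximate model — an underapproximation of the consistent well-founded model (\cref{lem:well-founded:consistent}) — gives \(\Iri{I}{i}\cup\Ii{I}{i}\subseteq J_i\), so part~(b) of \cref{prp:translation:restricted:properties} shows that \(\TransR{\Body{r}}{J_i}\) and \(\TransA{\Body{r}}\) pass the simplification test against \((\Iri{I}{i},\Iri{J}{i})\sqcup(\Ii{I}{i},\Ii{J}{i})\) for exactly the same rules; hence the two unions keep corresponding rules. To see that the resulting programs share well-founded and stable models, I observe that any such model is bounded by the global possible atoms (for stable models via \cref{thm:stable:well-founded}, and the surviving heads are confined to \(J_i\)), and that each aggregate of component \(i\) references only atoms whose possible instances already lie in \(J_i\), by the dependency argument of Step~A. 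Thus, on the interpretations relevant to the fixpoint and minimal-model conditions, parts~(a)--(c) of \cref{prp:translation:restricted:properties} make the restricted and full aggregate bodies interchangeable for satisfaction and for both reducts, so the well-founded operators and the reducts of \(Q\) and of the statement's program coincide on the relevant sublattice. Chaining Step~A and Step~B gives the claim.

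The main obstacle I expect is the boundedness bookkeeping in Step~B: \cref{prp:translation:restricted:properties} only guarantees that \(\TransR{a}{J_i}\) and \(\TransA{a}\) agree for interpretations contained in \(J_i\), whereas the well-founded iteration for these ground programs passes through interpretations (starting at \((\emptyset,\Signature)\)) that are far larger, and for which the restricted and full translations genuinely differ as formulas. The real work is therefore to show that the restriction to the atoms actually occurring in each aggregate — which the dependency order confines to \(J_i\) — suffices to invoke the proposition, so that \(Q\) and the statement's program induce the same operator on the part of the lattice reached by the fixpoint construction despite disagreeing on unbounded arguments.
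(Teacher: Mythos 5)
Your two-stage plan (first relate the component-wise simplification of \(\TransA{}\) to \(\Simp{\TransA{P}}{I,J}\) via the superset results, then trade \(\TransASym\) for the restricted translation via \cref{prp:translation:restricted:properties}) is the same decomposition the paper uses, but Step~A rests on a claim that is false for \emph{refined} instantiation sequences, which the theorem explicitly covers. You assert that the body of a rule in component \(i\) mentions only atoms over predicates defined in components \(\leq i\), because ``a later definer would force \(i\) to depend on it.'' Refined sequences are ordered by \emph{positive} dependencies only; negative literals may refer to predicates defined in later components --- this is precisely why the external predicates \(\Ii{\PredE}{i}\) appear in \cref{def:approximate:sequence}. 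Concretely, in \cref{ex:aggregate-program:sequence:refined} the rule \(p(2) \leftarrow \Naf q(2) \wedge u(2)\) of \(\Fi{P}{5,1}\) mentions \(q(2)\), which is defined in the later component \(\Fi{P}{5,2}\); \(q(2)\) is possible in the global well-founded model but absent from \(\Iri{J}{5,1} \cup \Ii{J}{5,1}\), so your intermediate claim that the component interpretation is less precise than \((I,J)\) on the atoms relevant to \(r\) fails on the paper's own running example. The inclusion you want does still hold, but the argument must separate occurrences: the possible set enters the filter \(J \models \IDReductP{\Body{r}}{I}\) only through \emph{positively} occurring atoms (the \FOID-reduct eliminates negative occurrences using the certain set alone), positive occurrences are confined to components \(\leq i\) even in refined sequences, and for negative occurrences one only needs \(\Iri{I}{i} \cup \Ii{I}{i} \subseteq I\), which holds globally by \cref{thm:approximate:sequence:well-founded}. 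This repair is exactly the paper's observation that atoms over \(\Ii{\PredE}{i}\) occur only negatively in rule bodies, which, combined with \cref{thm:sequence:simplification,cor:sequence:simplification:superset} and the per-component comparison of approximate and sequence well-founded interpretations, constitutes the first half of its proof.

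In Step~B you correctly identify the central difficulty --- \(\TransA{a}\) and \(\TransR{a}{J_i}\) genuinely differ as formulas, and the well-founded construction passes through interpretations like \((\emptyset,\Signature)\) where \cref{prp:translation:restricted:properties} does not apply --- but you then declare it ``the real work'' rather than resolving it, so your proof is incomplete at its crux. The paper closes it as follows: since aggregates depend positively on their element predicates, the atoms occurring in \(\TransA{a}\) for an aggregate of component \(i\) avoid heads of later components, so \(\TransR{a}{\Iri{J}{i} \cup \Ii{J}{i}}\) is literally the same formula as \(\TransR{a}{J}\) with \(J = \bigcup_i \Ii{J}{i}\); moreover every stable model of either program and every \emph{fixed point} of either well-founded operator is contained in the kept rule heads, hence in \(J\) (the starting point \((\emptyset,\Signature)\) of the iteration is irrelevant --- only fixed points matter, and they are automatically bounded). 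On interpretations bounded by \(J\), \cref{prp:translation:restricted:properties} makes the two bodies interchangeable for satisfaction and for both reducts, so the two programs have the same fixed points, the same well-founded model, and the same stable models. Note also that invoking part~(b) of that proposition requires the certain sets to lie inside the restricting set, i.e.\ \(\Iri{I}{i} \cup \Ii{I}{i} \subseteq \Iri{J}{i} \cup \Ii{J}{i}\); your appeal to the global bound from \cref{thm:approximate:sequence:well-founded} only gives containment in \(\bigcup_{j\in\Index}\Ii{J}{j}\), so you need the per-component consistency coming from \cref{lem:approximate:well-founded} instead.
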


Notably,
this union of \FormulasR-programs is exactly the one obtained by the grounding algorithm proposed in the next section
(cf.\ \cref{thm:ground:program}).

\begin{example}\label{ex:dependency}
We continue \cref{ex:aggregate-program:sequence:refined}.

The approximate model of the instantiation sequence~\(\Fsqi{P}{\Index}{i}\),
defined in \cref{def:approximate:sequence}, is less precise than the well-founded model of the sequence, viz.\
\begin{align*}
\ApproxM{\Fsqi{P}{\Index}{i}} &= (\emptyset,\{p(1),p(2),q(2),q(3),x,y\}) \sqcup F.
\end{align*}
This is because we have to use \(\ApproxRM{\Fi{P}{5}}{\PredE}{F,F}\) to approximate the well-founded model of component \(\Fi{P}{5}\).
Here, the set \(\PredE = \{ a/1, b/1 \}\) determined by \cref{def:approximate:sequence:external} forces us to unconditionally assume instances of \(\Naf q(X)\) and \(\Naf p(X)\) to be true.
Thus, we get \((\Ii{I}{5}, \Ii{J}{5}) = (\emptyset, \{p(1),p(2),q(2),q(3)\})\) for the intermediate interpretation in~\eqref{def:approximate:sequence:intermediate}.
This is also reflected in \cref{def:approximate}, which makes us drop all rules containing negative literals over predicates in~\(\PredE\) when calculating true atoms.

In accord with \cref{thm:approximate:sequence:well-founded},
we approximate the well-founded model w.r.t.\ the refined instantiation sequence~\(\Fsqic{P}{\IndexAlt}{i,j}\)
and obtain
\begin{align*}
\ApproxM{\Fsqic{P}{\IndexAlt}{i,j}} &= (\{q(3)\},\{p(1),p(2),q(2),q(3),x\}) \sqcup F,
\end{align*}
which, for this example, is equivalent to the well-founded model of the corresponding ground refined instantiation sequence
and more precise than the approximate model of the instantiation sequence.

In an actual grounder implementation the approximate model is only a byproduct,
instead, it outputs a program equivalent to the one in \cref{thm:approximate:sequence:simplification}:
\begin{align*}
u(1) & &
u(2) & \\
v(2) & &
v(3) & \\
p(1) & \leftarrow \Naf q(1) \wedge u(1) &
p(2) & \leftarrow \Naf q(2) \wedge u(2) \\
q(2) & \leftarrow \Naf p(2) \wedge v(2) &
q(3) & \leftarrow \Naf p(3) \wedge v(3) \\
x & \leftarrow \Naf p(1)
\end{align*}
Note that the rule \(y \leftarrow \Naf q(3)\) is not part of the simplification because \(q(3)\) is certain.
\end{example}

\begin{remark}
The reason why we use the refined grounding is that we cannot expect a grounding algorithm to calculate the well-founded model for a component without further processing.
But at least some consequences should be considered.
\Gringo\ is designed to ground on-the-fly without storing any rules,
so it cannot be expected to compute all possible consequences
but it should at least take all consequences from preceding interpretations into account.
With the help of a solver, we could calculate the exact well-founded model of a component after it has been grounded.
\end{remark}

Whenever an aggregate program is stratified,
the approximate model of its instantiation sequence is total
(and coincides with the well-founded model of the entire ground program).

\begin{theorem}\label{thm:approximate:sequence:well-founded:stratified}
Let \(\Fsqi{P}{\Index}{i}\) be an instantiation sequence of an aggregate program~\(P\)
such that \(\Ii{\PredE}{i}=\emptyset\) for each \(i \in \Index\) as defined in \cref{def:approximate:sequence:external}.

Then, \(\ApproxM{\Fsqi{P}{\Index}{i}}\) is total.
\end{theorem}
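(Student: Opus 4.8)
The plan is to proceed by induction along the well-ordered index set $\Index$ and to show that each intermediate interpretation $(\Ii{I}{i},\Ii{J}{i})$ from \eqref{def:approximate:sequence:intermediate} is already total, i.e.\ $\Ii{I}{i}=\Ii{J}{i}$. Since $\ApproxM{\Fsqi{P}{\Index}{i}}=\bigsqcup_{i\in\Index}(\Ii{I}{i},\Ii{J}{i})=(\bigcup_{i}\Ii{I}{i},\bigcup_{i}\Ii{J}{i})$, the equality of all components immediately yields $\bigcup_i\Ii{I}{i}=\bigcup_i\Ii{J}{i}$ and hence totality of the join. A first simplification is that the hypothesis $\Ii{\PredE}{i}=\emptyset$ collapses the two ingredients of \cref{def:approximate}: no rule $r$ can satisfy $\Predicate{\Body{r}^-}\cap\Ii{\PredE}{i}\neq\emptyset$, so $P'=\Fi{P}{i}$, and therefore $\Ii{I}{i}=\StableR{\TransA{\Fi{P}{i}}}{\Iri{I}{i}}{\Iri{J}{i}}$ while $\Ii{J}{i}=\StableR{\TransA{\Fi{P}{i}}}{\Iri{J}{i}}{\Iri{I}{i}\cup\Ii{I}{i}}$.

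For the inductive step, assume $\Ii{I}{j}=\Ii{J}{j}$ for all $j<i$. Then $(\Iri{I}{i},\Iri{J}{i})=\bigsqcup_{j<i}(\Ii{I}{j},\Ii{J}{j})$ has equal components; write $C:=\Iri{I}{i}=\Iri{J}{i}$ (this also covers the base case, where the empty join gives $C=\emptyset$). Substituting $C$, we obtain $\Ii{I}{i}=\StableR{\TransA{\Fi{P}{i}}}{C}{C}$ and $\Ii{J}{i}=\StableR{\TransA{\Fi{P}{i}}}{C}{C\cup\Ii{I}{i}}$. By definition these are the least fixed points of $\StepRO{\IDReductP{\TransA{\Fi{P}{i}}}{C}}{C}$ and $\StepRO{\IDReductP{\TransA{\Fi{P}{i}}}{C\cup\Ii{I}{i}}}{C}$, respectively, so it suffices to show that the two $\FOID$-reducts $\IDReductP{\TransA{\Fi{P}{i}}}{C}$ and $\IDReductP{\TransA{\Fi{P}{i}}}{C\cup\Ii{I}{i}}$ coincide.

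The crux is a reduct-invariance fact: $\IDReductP{F}{I}$ depends on $I$ only through the atoms occurring negatively in $F$; formally, $I_1\cap\IDNeg{F}=I_2\cap\IDNeg{F}$ implies $\IDReductP{F}{I_1}=\IDReductP{F}{I_2}$, which follows by a routine structural induction (positive occurrences are copied verbatim, and only $\IDReductN{a}{I}$ inspects $I$). Now, by the shape of the one-step provability operator, $\Ii{I}{i}=\StableR{\TransA{\Fi{P}{i}}}{C}{C}$ consists only of head atoms of $\TransA{\Fi{P}{i}}$, i.e.\ atoms over predicates in $\Predicate{\Head{\Fi{P}{i}}}$, so $C$ and $C\cup\Ii{I}{i}$ differ only by atoms over head predicates of $\Fi{P}{i}$. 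On the other hand, since $\Predicate{\Head{\Fi{P}{i}}}\subseteq\Predicate{\bigcup_{i\leq j}\Head{\Fi{P}{j}}}$, the hypothesis $\Ii{\PredE}{i}=\emptyset$ forces $\Predicate{\IDNeg{\Body{\Fi{P}{i}}}}\cap\Predicate{\Head{\Fi{P}{i}}}=\emptyset$. As the predicates occurring negatively in $\TransA{\Fi{P}{i}}$ are contained in $\Predicate{\IDNeg{\Body{\Fi{P}{i}}}}$, they are disjoint from the head predicates; hence $C$ and $C\cup\Ii{I}{i}$ agree on every negatively occurring atom, the two reducts coincide, and $\Ii{I}{i}=\Ii{J}{i}$, which closes the induction.

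The step I expect to be the main obstacle is justifying that the negatively occurring predicates of the translated program $\TransA{\Fi{P}{i}}$ are indeed captured by the syntactic set $\Predicate{\IDNeg{\Body{\Fi{P}{i}}}}$ computed at the aggregate-program level. This hinges on the special treatment of aggregates in the definition of $a^-$: for a non-monotone aggregate the condition atoms are all collected, which safely over-approximates the negative occurrences introduced by $\TransA{a}$, whereas for a monotone aggregate the choice $a^-=\emptyset$ is sound precisely because the augmented translation replaces $a$ by the strictly positive formula $\IDReductP{\TransA{a}}{\emptyset}$ (\cref{prp:translation:monotonicity}), which contains no negative occurrences at all. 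Verifying that grounding preserves these predicate sets, so that the containment holds for every ground instance appearing in $\TransA{\Fi{P}{i}}$, is the remaining bookkeeping.
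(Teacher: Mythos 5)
Your proof is correct, and it is essentially the paper's argument run directly on the approximate-model construction rather than factored through a lemma about well-founded models. The paper proves the theorem by noting that the hypothesis \(\Ii{\PredE}{i}=\emptyset\) amounts to all components being stratified and then invoking \cref{lem:sequence:well-founded:stratified}, which concerns the intermediate interpretations of the \emph{well-founded} model of the ground sequence (its proof, via \cref{lem:well-founded:stratified}, shows that for stratified components these intermediates are total and are computed by a single application of the relative stable operator); the identification of the construction in \cref{def:approximate:sequence} with that well-founded construction is left implicit. You instead induct directly on \cref{def:approximate:sequence}: with \(\Ii{\PredE}{i}=\emptyset\) the reduced program in \cref{def:approximate} is \(\Fi{P}{i}\) itself, totality of the preceding intermediates gives a single context \(C\), and the two stable-operator applications collapse because their \FOID-reducts coincide. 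Your three supporting facts are precisely the paper's tools: the reduct-invariance claim is essentially \nref{obs:stable:relative:properties}{c}, the head-containment is \nref{obs:stable:relative:properties}{b}, and the bridge from \(\Predicate{\IDNeg{\Body{\Fi{P}{i}}}}\) to the negative atom occurrences of \(\TransA{\Fi{P}{i}}\) is \nref{obs:depend}{c} --- which the paper records as an observation without proof, and whose justification you correctly locate in the convention, stated after \cref{prp:translation:monotonicity}, that monotone aggregates are replaced by the strictly positive formula \(\IDReductP{\TransA{a}}{\emptyset}\), while for non-monotone aggregates \(a^-\) over-approximates with all condition atoms. What the paper's detour buys is the stronger byproduct that the approximate model coincides with the well-founded model of the entire ground program (as claimed in the surrounding text); what your route buys is a self-contained argument for totality that never needs to relate the two sequence constructions.
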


\begin{example}\label{ex:dependency:company}
The dependency graph of the company controls encoding is given in \cref{fig:dependency:company}
and follows the conventions in \cref{ex:dependency}.
Because the encoding only uses positive literals and monotone aggregates,
grounding sequences cannot be refined further.
Since the program is positive, we can apply \cref{thm:approximate:sequence:well-founded:stratified}.
Thus, the approximate model of the grounding sequence is total and corresponds to the well-founded model of the program.
We use the same abbreviations for predicates as in \cref{fig:dependency:company}.
The well-founded model is \((F \cup I,F \cup I)\) where
\begin{align*}
F = \{ & c(c_1),c(c_2),c(c_3),c(c_4),\\ & o(c_1,c_2,60),o(c_1,c_3,20),o(c_2,c_3,35), o(c_3,c_4,51) \} \AndT\\
I = \{ & s(c_1,c_2),s(c_3,c_4),s(c_1,c_3),s(c_1,c_4) \}.
\end{align*}
\begin{figure}[ht]
\centering
\includegraphics{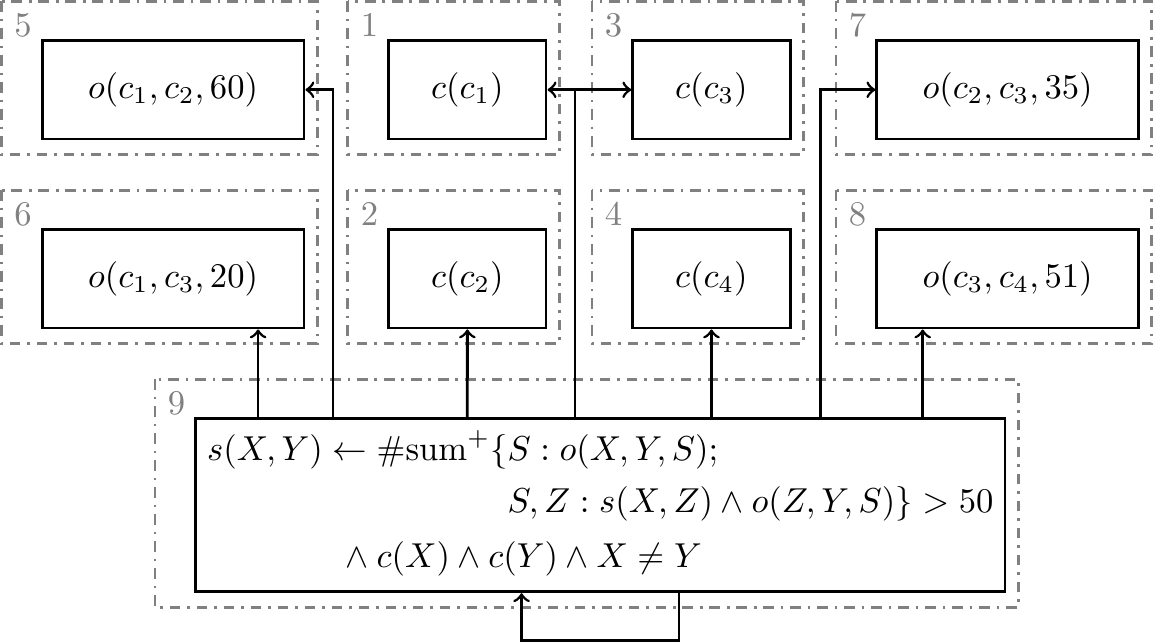}
\caption{Rule dependencies for the company controls encoding and instance in \cref{ex:company} where \(c=\Pred{company}\), \(o=\Pred{owns}\), and \(s=\Pred{controls}\).\label{fig:dependency:company}}
\end{figure}
\end{example}

 \section{Algorithms}\label{sec:algorithms}

This section lays out the basic algorithms for grounding rules, components, and entire programs
and characterizes their output in terms of the semantic concepts developed in the previous sections.
Of particular interest is the treatment of aggregates, which are decomposed into dedicated normal rules before grounding
and reassembled afterward.
This allows us to ground rules with aggregates by means of grounding algorithms for normal rules.
Finally,
we show that our grounding algorithm guarantees that
an obtained finite ground program is equivalent to the original non-ground program.

In the following, we refer to terms, atoms, comparisons, literals, aggregate elements, aggregates, or rules as \emph{expressions}.
As in the preceding sections,
all expressions, interpretations, and concepts introduced below
operate on the same (implicit) signature \(\Signature\) unless mentioned otherwise.

A \emph{substitution} is a mapping from the variables in \(\Signature\) to terms over \(\Signature\).
We use \(\iota\) to denote the identity substitution mapping each variable to itself.
A \emph{ground substitution} maps all variables to ground terms or themselves.
The result of \emph{applying} a substitution \(\sigma\) to an expression \(e\), written \(e\sigma\),
is the expression obtained by replacing each variable \(v\) in \(e\) by \(\sigma(v)\).
This directly extends to sets \(E\) of expressions, that is, \(E\sigma = \{e\sigma \mid e \in E\}\).

The \emph{composition} of substitutions \(\sigma\) and \(\theta\)
is the substitution \(\sigma \circ \theta\) where \((\sigma \circ \theta)(v) = \theta(\sigma(v))\) for each variable \(v\).

A substitution \(\sigma\) is a \emph{unifier} of a set \(E\) of expressions
if \(e_1 \sigma = e_2\sigma\) for all \(e_1,e_2 \in E\).
In what follows, we are interested in one-sided unification, also called \emph{matching}.
A substitution \(\sigma\) matches a non-ground expression~\(e\) to a ground expression~\(g\),
if \(e\sigma=g\) and \(\sigma\) maps all variables not occurring in~\(e\) to themselves.
We call such a substitution the \emph{matcher} of~\(e\) to~\(g\).
Note that a matcher is a unique ground substitution unifying~\(e\) and~\(g\), if it exists.
This motivates the following definition.

For a (non-ground) expression~\(e\) and a ground expression~\(g\),
we define:
\begin{align*}
\Match{e}{g} = \begin{cases}
\{ \sigma \} & \text{if there is a matcher \(\sigma\) from \(e\) to \(g\)}\\
\emptyset    & \text{otherwise}
\end{cases}
\end{align*}

When grounding rules,
we look for matches of non-ground body literals in the possibly derivable atoms accumulated so far.
The latter is captured by a four-valued interpretation to distinguish certain atoms among the possible ones.
This is made precise in the next definition.

\begin{definition}\label{def:matches}
Let \(\sigma\) be a substitution,
\(l\) be a literal or comparison,
and \((I,J)\) be a four-valued interpretation.

We define the set of \emph{matches} for \(l\) in \((I,J)\) w.r.t.\ \(\sigma\),
written \Matches{l}{I,J}{\sigma},
\begin{align*}
\intertext{for an atom \(l = a\) as}
\Matches{a}{I,J}{\sigma} &= \{ \sigma \circ \sigma' \mid a' \in J, \sigma' \in \Match{a\sigma}{a'} \}\text{,}
\intertext{for a ground literal \(l=\Naf a\) as}
\Matches{\Naf a}{I,J}{\sigma} &= \{ \sigma \mid a\sigma \not\in I \}\text{, and}
\intertext{for a ground comparison \(l=t_1 \prec t_2\) as in~\eqref{eq:comparison} as}
\Matches{t_1 \prec t_2}{I,J}{\sigma} &= \{ \sigma \mid \text{\(\prec\) holds between \(t_1\sigma\) and \(t_2\sigma\)} \}.
\end{align*}
\end{definition}

In this way,
positive body literals yield a (possibly empty) set of substitutions, refining the one at hand,
while negative and comparison literals are only considered when ground and then act as a test on the given substitution.

\begin{algorithm}[ht]
\caption{Grounding Rules\label{fun:ground:rule}}
\Fn{\(\GroundRule{r}{I}{J}{J'}{\sigma}{L}{f}\)}{\uIf(\tcp*[f]{match next}){\(L\neq\emptyset\)}{\label{fun:ground:rule:next}
    \((G, l) \leftarrow (\emptyset, \Select{\sigma}{L})\)\label{fun:ground:rule:select}\;
    \ForEach{\(\sigma' \in \Matches{l}{I,J}{\sigma}\)}{\label{fun:ground:rule:loop:begin}\(G \leftarrow G \cup \GroundRule{r}{I}{J}{J'}{\sigma'}{L\setminus \{l\}}{f}\)\label{fun:ground:rule:recurse}\label{fun:ground:rule:loop:end}\;
    }
    \Return\ \(G\)\label{fun:ground:rules:result}\;
  }
  \uElseIf(\tcp*[f]{rule instance}){\(f=\True\)\ \Or\ \(\IDPos{\Body{r\sigma}} \nsubseteq J'\)\label{fun:ground:rule:test}}{\Return\ \(\{r \sigma\}\)\label{fun:ground:rule:result}\;
  }
  \Else(\tcp*[f]{rule seen}){\label{fun:ground:rule:seen}\Return\ \(\emptyset\)\;
  }
}
\end{algorithm}

Our function for rule instantiation is given in \cref{fun:ground:rule}.
It takes a substitution~\(\sigma\) and a set~\(L\) of literals and yields a set of ground instances of a safe normal rule~\(r\),
passed as a parameter;
if called with the identity substitution and the body literals \(\Body{r}\) of \(r\),
it yields ground instances of the rule.
The other parameters consist of
a four-valued interpretation~\((I,J)\) comprising the set of possibly derivable atoms along with the certain ones,
a two-valued interpretation~\(J'\) reflecting the previous value of~\(J\), and
a Boolean flag~\(f\) used to avoid duplicate ground rules in consecutive calls to \cref{fun:ground:rule}.
The idea is to extend the current substitution in \crefrange{fun:ground:rule:loop:begin}{fun:ground:rule:loop:end}
until we obtain a ground substitution~\(\sigma\) that induces a ground instance~\(r\sigma\) of rule~\(r\).
To this end,
\(\Select{\sigma}{L}\) picks for each call some literal \(l \in L\) such that \(l \in L^+\) or \(l\sigma\) is ground.
That is, it yields either a positive body literal or a ground negative or ground comparison literal,
as needed for computing \Matches{l}{I,J}{\sigma}.
Whenever an application of \KwMatches\ for the selected literal in \(\Body{r}\) results in a non-empty set of substitutions,
the function is called recursively for each such substitution.
The recursion terminates if at least one match is found for each body literal and an instance \(r\sigma\) of \(r\) is obtained in \cref{fun:ground:rule:result}.
The set of all such ground instances is returned in \cref{fun:ground:rules:result}.
(Note that we refrain from applying any simplifications to the ground rules and rather leave them intact to
obtain more direct formal characterizations of the results of our grounding algorithms.)
The test \(\IDPos{\Body{r\sigma}} \nsubseteq J'\) in \cref{fun:ground:rule:test} makes sure that no ground rules are
generated that were already obtained by previous invocations of \cref{fun:ground:rule}.
This is relevant for recursive rules and reflects the approach of semi-naive database evaluation~\cite{abhuvi95a}.

For characterizing the result of \cref{fun:ground:rule} in terms of aggregate programs, we need the following definition.

\begin{definition}
Let \(P\) be an aggregate program and \((I,J)\) be a four-valued interpretation.

We define \(\GrdSimp{P}{I,J}\subseteq\Grd{P}\) as the set of all instances \(g\) of rules in \(P\)
satisfying \(J \models \IDReductP{\TransA{\Body{g}}^\wedge}{I}\).
\end{definition}

In terms of the program simplification in \cref{def:simplification},
an instance \(g\) belongs to \(\GrdSimp{P}{I,J}\)
iff \(\Head{g} \leftarrow \TransA{\Body{g}}^\wedge \in \Simp{\TransA{r}}{I,J}\).
Note that the members of \(\GrdSimp{P}{I,J}\) are not necessarily ground, since non-global variables may remain within aggregates;
though they are ground for normal rules.

We use \cref{fun:ground:rule} to iteratively compute ground instances of a rule w.r.t.\ an increasing set of atoms.
The Boolean flag~\(f\) and the set of atoms \(J'\) are used to avoid duplicating ground instances in successive iterations.
The flag \(f\) is initially set to true to not filter any rule instances.
In subsequent iterations,
duplicates are omitted by setting the flag to false and filtering rules whose positive bodies are a subset of the atoms \(J'\) used in previous iterations.
This is made precise in the next result.

\begin{proposition}\label{lem:ground:rule:split}
Let \(r\) be a safe normal rule and \((I,J)\) be a finite four-valued interpretation.

Then,
\begin{alphaenum}
\item \(\GrdSimp{\{r\}}{I,J} = \GroundRule{r}{I}{J}{\emptyset}{\iota}{\Body{r}}{\True}\) and\label{lem:ground:rule:split:a}
\item \(\GrdSimp{\{r\}}{I,J} = \GrdSimp{\{r\}}{I,J'} \cup \GroundRule{r}{I}{J}{J'}{\iota}{\Body{r}}{\False}\) for all \(J' \subseteq J\).\label{lem:ground:rule:split:b}
\end{alphaenum}
\end{proposition}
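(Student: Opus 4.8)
The plan is to reduce both identities to a single inductive characterization of \cref{fun:ground:rule}, after first unfolding what membership in \(\GrdSimp{\{r\}}{I,J}\) means for a normal rule. Since \(r\) is normal, every instance \(r\sigma\) is ground and its body is a set of literals and comparisons, so \(\TransA{\Body{r\sigma}}^\wedge\) contains implications only through negative literals. Unfolding the cases of \(\IDReductP{\cdot}{I}\) then shows that, for a ground substitution \(\sigma\), the condition \(J \models \IDReductP{\TransA{\Body{r\sigma}}^\wedge}{I}\) holds exactly when (i) \(a\sigma \in J\) for every positive body literal \(a\), (ii) \(a\sigma \notin I\) for every negative body literal \(\Naf a\) (here \(\IDReductP{\Naf a}{I}\) becomes \(\top\rightarrow\bot\) if \(a\sigma \in I\) and \(\bot\rightarrow\bot\) otherwise), and (iii) \(\prec\) holds between \(t_1\sigma\) and \(t_2\sigma\) for every comparison \(t_1 \prec t_2\) in \(\Body{r}\) (where \(\TransA{t_1 \prec t_2}\) is \(\top\) or \(\bot\) accordingly). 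Calling a ground substitution \emph{admissible} when it satisfies (i)--(iii), we get \(\GrdSimp{\{r\}}{I,J} = \{ r\sigma \mid \sigma \text{ admissible}\}\). I would also note that for normal rules \(\IDPos{\Body{r\sigma}}\) is exactly the set of positive body atoms, which is the quantity tested in \cref{fun:ground:rule:test}.

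The technical core is a lemma, proved by induction on \(|L|\), stating that for any \(\sigma\) and any \(L \subseteq \Body{r}\) the call \(\GroundRule{r}{I}{J}{J'}{\sigma}{L}{f}\) returns exactly those \(r\tau\) where \(\tau\) extends \(\sigma\) to ground the variables occurring in \(L\), satisfies the matching conditions for all literals of \(L\) under \((I,J)\), and passes the filter (that is, \(f = \True\) or \(\IDPos{\Body{r\tau}} \nsubseteq J'\)). In the base case \(L = \emptyset\), safety of \(r\) forces \(\sigma\) to already be ground on all of \(r\): all variables of a normal rule are global, hence occur in a positive literal, which must have been matched earlier; the branch at \cref{fun:ground:rule:test} then returns \(\{r\sigma\}\) or \(\emptyset\) by the filter. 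In the inductive step, \(\Select{\sigma}{L}\) returns an admissible literal \(l\) (positive, or already ground under \(\sigma\)); safety again guarantees such an \(l\) exists, since once every positive literal has been matched the substitution is ground and all remaining comparisons and negative literals become ground. Then \(\Matches{l}{I,J}{\sigma}\) enumerates precisely the extensions of \(\sigma\) satisfying \(l\) — matching \(l\sigma\) against the atoms of \(J\) for positive \(l\), and acting as a pass/fail test for ground negative or comparison \(l\) — so combining this with the induction hypothesis on \(L \setminus \{l\}\) yields the claim. The main obstacle I anticipate is precisely this step: making rigorous the interplay of \Select, matching, and safety, namely that \Select\ always has an admissible literal while variables remain, that matching a positive literal grounds exactly the variables it introduces, and that the returned set is independent of the order in which \Select\ visits the literals.

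With the lemma in hand, both parts follow by specializing to \(\sigma = \iota\) and \(L = \Body{r}\). For \cref{lem:ground:rule:split:a}, taking \(f = \True\) makes the filter vacuous, so the procedure returns \(\{ r\sigma \mid \sigma \text{ admissible}\} = \GrdSimp{\{r\}}{I,J}\). For \cref{lem:ground:rule:split:b}, taking \(f = \False\) the procedure returns the admissible instances \(r\sigma\) with \(\IDPos{\Body{r\sigma}} \nsubseteq J'\). Since \(J' \subseteq J\) and conditions (ii)--(iii) do not depend on the possible atoms, an instance is admissible w.r.t.\ \((I,J')\) iff it is admissible w.r.t.\ \((I,J)\) and \(\IDPos{\Body{r\sigma}} \subseteq J'\); hence \(\GrdSimp{\{r\}}{I,J'} \subseteq \GrdSimp{\{r\}}{I,J}\) and \(\GrdSimp{\{r\}}{I,J'}\) is exactly the set of admissible instances with positive body inside \(J'\). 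A case split on whether \(\IDPos{\Body{r\sigma}} \subseteq J'\) then writes \(\GrdSimp{\{r\}}{I,J}\) as the union of \(\GrdSimp{\{r\}}{I,J'}\) and the procedure's output, giving the stated identity. The reduct computation of the first paragraph and these two specializations are routine; all the weight lies in the inductive lemma.
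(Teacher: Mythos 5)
Your proposal is correct and follows essentially the same route as the paper: the paper likewise first proves an inductive characterization of \KwGroundRule\ (its \cref{prp:ground:rule}, establishing exactly your "admissibility plus filter" description via invariants on the partial substitutions), from which \cref{lem:ground:rule:split:a} is immediate and \cref{lem:ground:rule:split:b} follows by rewriting \(\GrdSimp{\{r\}}{I,J'}\) as the admissible instances with \(\IDPos{\Body{g}}\subseteq J'\) and letting the two complementary containment conditions cancel. The order-independence of \Select\ that you flag as the main obstacle is handled in the paper exactly as you propose, by proving the characterization for an arbitrary admissible selection at each step.
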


Now, let us turn to the treatment of aggregates.
To this end,
we define the following translation of aggregate programs to normal programs.

\begin{definition}\label{def:rewrite}
Let \(P\) be a safe aggregate program over signature \(\Sigma\).

Let \(\Sigma'\) be the signature obtained by extending \(\Sigma\) with fresh predicates
\begin{align}
& \AggregateAtomP{a}{r}{n}\text{, and}\label{eq:rewrite:aggregate:predicate}\\
& \EmptyAtomP{a}{r}{n}\label{eq:rewrite:empty:predicate}
\end{align}
for each aggregate \(a\) occurring in a rule~\(r \in P\)
where \(n\) is the number of global variables in \(a\), and
fresh predicates
\begin{align}
& \ElementAtomP{e}{a}{r}{(m+n)}\label{eq:rewrite:element:predicate}
\end{align}
for each aggregate element \(e\) occurring in aggregate~\(a\) in rule \(r\)
where \(m\) is the size of the tuple~\(\Tuple{e}\).

We define \(\Fa{P}\), \(\Fv{P}\), and \(\Fe{P}\) as normal programs over \(\Sigma'\) as follows.

\begin{itemize}
\item
Program~\(\Fa{P}\) is obtained from \(P\) by replacing each aggregate occurrence~\(a\) in~\(P\) with
\begin{align}
\AggregateAtom{a}{r}{X_1,\dots,X_n}\label{eq:rewrite:aggregate:atom}
\end{align}
where \(\AggregateAtomP{a}{r}{n}\) is defined as in~\eqref{eq:rewrite:aggregate:predicate} and \(X_1,\dots,X_n\) are the global variables in \(a\).
\item
Program~\(\Fv{P}\) consists of rules
\begin{align}
\EmptyAtom{a}{r}{X_1,\dots,X_n} & \leftarrow t \prec b \wedge b_1 \wedge \dots \wedge b_l\label{eq:rewrite:empty:rule}
\end{align}
for each predicate~\(\EmptyAtomP{a}{r}{n}\) as in~\eqref{eq:rewrite:empty:predicate}
where
\(X_1,\dots,X_n\) are the global variables in \(a\),
\(a\) is an aggregate of form \(f \{ E \} \prec b\) occurring in \(r\),
\(t = f(\emptyset)\) is the value of the aggregate function applied to the empty set,
and~\(b_1,\dots,b_l\) are the body literals of \(r\) excluding aggregates.
\item
Program~\(\Fe{P}\) consists of rules
\begin{align}
\ElementAtom{e}{a}{r}{t_1,\dots,t_m,X_1,\dots,X_n} & \leftarrow c_1 \wedge \dots \wedge c_k \wedge b_1 \wedge \dots \wedge b_l\label{eq:rewrite:element:rule}
\end{align}
for each predicate~\(\ElementAtomP{e}{a}{r}{m+n}\) as in~\eqref{eq:rewrite:element:predicate}
where~\((t_1,\dots,t_m) = \Head{e}\),
\(X_1,\dots,X_n\) are the global variables in \(a\),
\(\{c_1,\dots,c_k\} = \Body{e}\),
and~\(b_1,\dots,b_l\) are the body literals of \(r\) excluding aggregates.
\end{itemize}
\end{definition}

Summarizing the above,
we translate
an aggregate program~\(P\) over \(\Sigma\) into
a normal program~\(\Fa{P}\) along with auxiliary normal rules in~\(\Fv{P}\) and~\(\Fe{P}\),
all over a signature extending \(\Sigma\) by the special-purpose predicates in
\cref{eq:rewrite:aggregate:predicate,eq:rewrite:empty:predicate,eq:rewrite:element:predicate}.
In fact, there is a one-to-one correspondence between the rules in~\(P\) and~\(\Fa{P}\),
so that we get \(P=\Fa{P}\) and \(\Fv{P}=\Fe{P}=\emptyset\) whenever~\(P\) is normal.

\begin{example}\label{ex:rewrite}
We illustrate the translation of aggregate programs on the company controls example in \Cref{ex:company}.
We rewrite the rule
\begin{align*}
\begin{split}
\Pred{controls}(X,Y) \leftarrow {}
& \!\overbrace{\begin{aligned}[t]
\Sum^+ \{
& \underbrace{S : \Pred{owns}(X,Y,S)}_{e_1}; \\
& \underbrace{S,Z: \Pred{controls}(X,Z) \wedge \Pred{owns}(Z,Y,S)}_{e_2} \} > 50
\end{aligned}}^{a} \\
{} \wedge {}
& \Pred{company}(X) \wedge \Pred{company}(Y) \wedge X \neq Y
\end{split}
\tag{\(r\)}
\end{align*}
containing aggregate~\(a\) with elements \(e_1\) and \(e_2\), into rules \(r_1\) to \(r_4\):
\begin{align*}
\begin{split}
\Pred{controls}(X,Y) \leftarrow {} & \AggregateAtom{a}{r}{X,Y}\\
{} \wedge {} & \Pred{company}(X) \wedge \Pred{company}(Y) \wedge X \neq Y,
\end{split}
\tag{\(r_1\)}\label{ex:rewrite:r1} \\
\begin{split}
\EmptyAtom{a}{r}{X,Y} \leftarrow {} & 0 >50 \\
{} \wedge {} & \Pred{company}(X) \wedge \Pred{company}(Y) \wedge X \neq Y,
\end{split}
\tag{\(r_2\)}\label{ex:rewrite:r2}\\
\begin{split}
\ElementAtom{e_1}{a}{r}{S,X,Y} \leftarrow {} & \Pred{owns}(X,Y,S) \\
{} \wedge {} & \Pred{company}(X) \wedge \Pred{company}(Y) \wedge X \neq Y,\AndT
\end{split}
\tag{\(r_3\)}\label{ex:rewrite:r3}\\
\begin{split}
\ElementAtom{e_2}{a}{r}{S,Z,X,Y} \leftarrow {} & \Pred{controls}(X,Z) \wedge \Pred{owns}(Z,Y,S)\\
{} \wedge {} & \Pred{company}(X) \wedge \Pred{company}(Y) \wedge X \neq Y.
\end{split}
\tag{\(r_4\)}\label{ex:rewrite:r4}
\end{align*}
We have \(\Fa{P} = \{r_1\}\), \(\Fv{P} = \{ r_2 \}\), and \(\Fe{P} = \{ r_3, r_4 \}\).
\end{example}

This example illustrates how possible instantiations of aggregate elements are gathered via the rules in \(\Fe{P}\).
Similarly, the rules in \(\Fv{P}\) collect instantiations warranting that the result of applying aggregate functions to
the empty set is in accord with the respective bound.
In both cases, the relevant variable bindings are captured by the special head atoms of the rules.
In turn, groups of corresponding instances of aggregate elements are used in
\cref{def:propagate} to sanction the derivation of ground atoms of form \cref{eq:rewrite:aggregate:atom}.
These atoms are ultimately replaced in \(\Fa{P}\) with the original aggregate contents.

We next define two functions gathering information from instances of rules in \(\Fv{P}\) and \(\Fe{P}\).
In particular, we make precise how groups of aggregate element instances are obtained from ground rules in \(\Fe{P}\).

\begin{definition}
Let \(P\) be an aggregate program,
and \(\Fv{G}\) and \(\Fe{G}\) be subsets of ground instances of rules in \(\Fv{P}\) and \(\Fe{P}\), respectively.
Furthermore, let \(a\) be an aggregate occurring in some rule~\(r \in P\)
and \(\sigma\) be a substitution mapping the global variables in \(a\) to ground terms.

We define
\begin{align*}
\AggrEmpty{r}{a}{\Fv{G}}{\sigma} & = \bigcup_{g \in \Fv{G}} \Match{r_a\sigma}{g}
\intertext{where \(r_a\) is a rule of form~\eqref{eq:rewrite:empty:rule} for aggregate occurrence \(a\), and}
\AggrElem{r}{a}{\Fe{G}}{\sigma} & = \{ e\sigma\theta \mid g \in \Fe{G}, e \in E, \theta \in \Match{r_e\sigma}{g} \}
\end{align*}
where \(E\) are the aggregate elements of \(a\)
and \(r_e\) is a rule of form~\eqref{eq:rewrite:element:rule} for aggregate element occurrence~\(e\) in \(a\).
\end{definition}

Given that \(\sigma\) maps the global variables in \(a\) to ground terms,
\(r_a\sigma\) is ground whereas
\(r_e\sigma\) may still contain local variables from \(a\).
The set \(\AggrEmpty{r}{a}{\Fv{G}}{\sigma}\) has an indicative nature:
For an aggregate \(a\sigma\), it
contains the identity substitution when the result of applying its aggregate function to the empty set is in accord with its bound,
and it is empty otherwise.
The construction of \(\AggrElem{r}{a}{\Fe{G}}{\sigma}\) goes one step further and
reconstitutes all ground aggregate elements of \(a\sigma\)
from variable bindings obtained by rules in \(\Fe{G}\).
Both functions play a central role below in defining the function \KwPropagate\
for deriving ground aggregate placeholders of form \cref{eq:rewrite:aggregate:atom} from ground rules in \(\Fv{G}\) and \(\Fe{G}\).

\begin{example}\label{ex:rewrite:gather}
We show how to extract aggregate elements from ground instances of rules~\eqref{ex:rewrite:r3} and~\eqref{ex:rewrite:r4} in \cref{ex:rewrite}.

Let \(\Fv{G}\) be empty and \(\Fe{G}\) be the program consisting of the following rules:
\begin{align*}
\ElementAtom{e_1}{a}{r}{60,c_1,c_2} \leftarrow {} & \Pred{owns}(c_1,c_2,60)\\
{} \wedge {} & \Pred{company}(c_1) \wedge \Pred{company}(c_2) \wedge c_1 \neq c_2,\\
\ElementAtom{e_1}{a}{r}{20,c_1,c_3} \leftarrow {} & \Pred{owns}(c_1,c_3,20)\\
{} \wedge {} & \Pred{company}(c_1) \wedge \Pred{company}(c_3) \wedge c_1 \neq c_3,\\
\ElementAtom{e_1}{a}{r}{35,c_2,c_3} \leftarrow {} & \Pred{owns}(c_2,c_3,35)\\
{} \wedge {} & \Pred{company}(c_2) \wedge \Pred{company}(c_3) \wedge c_2 \neq c_3,\\
\ElementAtom{e_1}{a}{r}{51,c_3,c_4} \leftarrow {} & \Pred{owns}(c_3,c_4,51)\\
{} \wedge {} & \Pred{company}(c_3) \wedge \Pred{company}(c_4) \wedge c_3 \neq c_4,\AndT\\
\ElementAtom{e_2}{a}{r}{35,c_2,c_1,c_3} \leftarrow {} & \Pred{controls}(c_1,c_2) \wedge \Pred{owns}(c_2,c_3,35)\\
{} \wedge {} & \Pred{company}(c_1) \wedge \Pred{company}(c_3) \wedge c_1 \neq c_3.
\end{align*}

Clearly, we have \(\AggrEmpty{r}{a}{\Fv{G}}{\sigma} = \emptyset\) for any substitution~\(\sigma\) because \(\Fv{G} = \emptyset\).
This means that aggregate \(a\) can only be satisfied if at least one of its elements is satisfiable.
In fact,
we obtain non-empty sets \(\AggrElem{r}{a}{\Fe{G}}{\sigma}\) of ground aggregate elements for four substitutions~\(\sigma\):
\begin{align*}
\AggrElem{r}{a}{\Fe{G}}{\sigma_1} &= \{ 60: \Pred{owns}(c_1,c_2,60) \}  & \ForT & \sigma_1: X \mapsto c_1, Y \mapsto c_2,\\
\AggrElem{r}{a}{\Fe{G}}{\sigma_2} &= \{ 51: \Pred{owns}(c_3,c_4,51) \}  & \ForT & \sigma_2: X \mapsto c_3, Y \mapsto c_4,\\
\AggrElem{r}{a}{\Fe{G}}{\sigma_3} &= \rlap{\(\{ 35, c_2: \Pred{controls}(c_1,c_2) \wedge \Pred{owns}(c_2,c_3,35)\);}\\
                                  &\phantom{{} = \{} 20: \Pred{owns}(c_1,c_3,20) \}  & \ForT & \sigma_3: X \mapsto c_1, Y \mapsto c_3, \AndT\\
\AggrElem{r}{a}{\Fe{G}}{\sigma_4} &= \{ 35: \Pred{owns}(c_2,c_3,35) \}  & \ForT & \sigma_4: X \mapsto c_2, Y \mapsto c_3.
\end{align*}
\end{example}

For capturing the result of grounding aggregates relative to groups of aggregate elements gathered via \(\Fe{P}\),
we restrict their original translation to subsets of their ground elements.
That is, while \TransA{a} and \TransAD{a}{\cdot} draw in \cref{def:translation:aggregate} on all instances of
aggregate elements in \(a\),
their counterparts \TransAD{\ElemG}{a} and \TransAD{a,\ElemG}{\cdot} are restricted to a subset
of such aggregate element instances:\footnote{Note that the restriction to sets of ground aggregate elements is similar to the one to two-valued interpretations in~\cref{def:translation:aggregate:restricted}.}

\begin{definition}\label{def:translate:restricted:elements}
Let \(a\) be a closed aggregate and of form~\eqref{eq:aggregate}, \(E\) be the set of its aggregate elements,
and \(\ElemG \subseteq \Grd{E}\) be a set of aggregate element instances.

We define the translation \(\TransAD{\ElemG}{a}\) of \(a\) w.r.t.\ \(\ElemG\) as follows:
\begin{align*}
\TransAD{\ElemG}{a} & = {\{ {\Trans{\ElemJ}}^\wedge \rightarrow {\TransAD{a,\ElemG}{\ElemJ}}^\vee \mid \ElemJ \subseteq \ElemG, \ElemJ \notjustifies a \}}^\wedge
\intertext{where}
\TransAD{a,\ElemG}{\ElemJ} & = \{ {\Trans{\ElemC}}^\wedge \mid \ElemC \subseteq \ElemG \setminus \ElemJ, \ElemC \cup \ElemJ \justifies a \}.
\end{align*}
\end{definition}

As before,
this translation maps aggregates, possibly including non-global variables, to a conjunction of (ground) \(\FormulasR\)-rules.
The resulting \(\FormulasR\)-formula is used below in the definition of functions \KwPropagate\ and \KwAssemble.

\begin{example}\label{ex:rewrite:trans:elem}
We consider the four substitutions~\(\sigma_1\) to~\(\sigma_4\)
together with the sets~\(G_1 = \AggrElem{r}{a}{\Fe{G}}{\sigma_1}\) to~\(G_4 = \AggrElem{r}{a}{\Fe{G}}{\sigma_4}\)
from \cref{ex:rewrite:gather} for aggregate~\(a\).

Following the discussion after \cref{prp:translation:monotonicity}, we get the formulas
\begin{align*}
\TransAD{G_1}{a\sigma_1} & = \Pred{owns}(c_1, c_2, 60),\\
\TransAD{G_2}{a\sigma_2} & = \Pred{owns}(c_3, c_4, 51),\\
\TransAD{G_3}{a\sigma_3} & = \Pred{controls}(c_1, c_2) \wedge \Pred{owns}(c_2,c_3,35) \wedge \Pred{owns}(c_1, c_3, 20)\CandT\\
\TransAD{G_4}{a\sigma_4} & = \bot.
\end{align*}
\end{example}

The function \KwPropagate\ yields a set of ground atoms of form~\eqref{eq:rewrite:aggregate:atom}
that are used in \cref{fun:ground:component} to ground rules having such placeholders among their body literals.
Each such special atom is supported by a group of ground instances of its aggregate elements.

\begin{definition}\label{def:propagate}
Let \(P\) be an aggregate program,
\((I,J)\) be a four-valued interpretation,
and \(\Fv{G}\) and \(\Fe{G}\) be subsets of ground instances of rules in \(\Fv{P}\) and \(\Fe{P}\), respectively.

We define \(\Propagate{P}{\Fv{G}}{\Fe{G}}{I}{J}\) as the set of all atoms of form \(\alpha\sigma\)
such that \(\AggrEmpty{r}{a}{\Fv{G}}{\sigma} \cup G \neq \emptyset\) and \(J \models \IDReductP{\TransAD{G}{a\sigma}}{I}\)
with \(G = \AggrElem{r}{a}{\Fe{G}}{\sigma}\)
where \(\alpha\) is an atom of form~\eqref{eq:rewrite:aggregate:atom} for aggregate \(a\) in rule \(r\)
and \(\sigma\) is a ground substitution for~\(r\) mapping all global variables in \(a\) to ground terms.
\end{definition}

An atom \(\alpha\sigma\) is only considered if \(\sigma\) warrants ground rules in \(\Fv{G}\) or \(\Fe{G}\), signaling
that the application of \(\alpha\) to the empty set is feasible when applying \(\sigma\) or
that there is a non-empty set of ground aggregate elements of \(\alpha\) obtained after applying \(\sigma\), respectively.
If this is the case, it is checked whether the set \(G\) of aggregate element instances warrants that
\(\TransAD{G}{a\sigma}\) admits stable models between \(I\) and  \(J\).

\begin{example}\label{ex:rewrite:progagate}
We show how to propagate aggregates using the sets \(G_1\) to \(G_4\) and their associated formulas from \cref{ex:rewrite:trans:elem}.
Suppose that \(I = J = F \cup \{ \Pred{controls}(c_1,c_2) \}\) using \(F\) from \cref{ex:dependency:company}.

Observe that \(J \models \IDReductP{\TransAD{G_1}{a\sigma_1}}{I}\), \(J \models \IDReductP{\TransAD{G_2}{a\sigma_2}}{I}\), \(J \models \IDReductP{\TransAD{G_3}{a\sigma_3}}{I}\), and \(J \not\models \IDReductP{\TransAD{G_4}{a\sigma_4}}{I}\).
Thus, we get \(\Propagate{P}{\Fv{G}}{\Fe{G}}{I}{J} = \{ \AggregateAtom{a}{r}{c_1,c_2}, \AggregateAtom{a}{r}{c_1,c_3}, \AggregateAtom{a}{r}{c_3,c_4} \}\).
\end{example}

The function \KwAssemble\ yields an \FormulasR-program
in which aggregate placeholder atoms of form~\eqref{eq:rewrite:aggregate:atom}
have been replaced by their corresponding \(\FormulasR\)-formulas.

\begin{definition}\label{def:assemble}
Let \(P\) be an aggregate program,
and \(\Fa{G}\) and \(\Fe{G}\) be subsets of ground instances of rules in \(\Fa{P}\) and \(\Fe{P}\), respectively.

We define \(\Assemble{\Fa{G}}{\Fe{G}}\) as the \(\FormulasR\)-program obtained from \(\Fa{G}\) by replacing
\begin{itemize}
\item all comparisons by \(\top\) and
\item
all atoms of form \(\alpha\sigma\) by the corresponding formulas \(\TransAD{G}{a\sigma}\)
with \(G = \AggrElem{r}{a}{\Fe{G}}{\sigma}\)
where \(\alpha\) is an atom of form~\eqref{eq:rewrite:aggregate:atom} for aggregate~\(a\) in rule~\(r\)
and~\(\sigma\) is a ground substitution for~\(r\) mapping all global variables in~\(a\) to ground terms.
\end{itemize}
\end{definition}

\begin{example}\label{ex:rewrite:assemble}
We show how to assemble aggregates using the sets \(G_1\) to \(G_3\) for aggregate atoms that have been propagated in \cref{ex:rewrite:progagate}.
Therefore, let \(\Fa{G}\) be the program consisting of the following rules:
\begin{align*}
\Pred{controls}(c_1,c_2) \leftarrow {} & \AggregateAtom{a}{r}{c_1,c_2} \wedge {} \Pred{company}(c_1) \wedge \Pred{company}(c_2) \wedge c_1 \neq c_2,\\
\Pred{controls}(c_3,c_4) \leftarrow {} & \AggregateAtom{a}{r}{c_3,c_4} \wedge {} \Pred{company}(c_3) \wedge \Pred{company}(c_4) \wedge c_3 \neq c_4, \AndT\\
\Pred{controls}(c_1,c_3) \leftarrow {} & \AggregateAtom{a}{r}{c_1,c_3} \wedge {} \Pred{company}(c_1) \wedge \Pred{company}(c_3) \wedge c_1 \neq c_3.
\end{align*}

Then, program \(\Assemble{\Fa{G}}{\Fe{G}}\) consists of the following rules:
\begin{align*}
\Pred{controls}(c_1,c_2) \leftarrow {} & \TransAD{G_1}{a\sigma_1} \wedge {} \Pred{company}(c_1) \wedge \Pred{company}(c_2) \wedge \top,\\
\Pred{controls}(c_3,c_4) \leftarrow {} & \TransAD{G_2}{a\sigma_2} \wedge {} \Pred{company}(c_3) \wedge \Pred{company}(c_4) \wedge \top, \AndT\\
\Pred{controls}(c_1,c_3) \leftarrow {} & \TransAD{G_3}{a\sigma_3} \wedge {} \Pred{company}(c_1) \wedge \Pred{company}(c_3) \wedge \top.
\end{align*}
\end{example}

The next result shows how a (non-ground) aggregate program \(P\) is transformed into a (ground) \(\FormulasR\)-program \(\Simp{\TransR{P}{J}}{I,J}\)
in the context of certain and possible atoms \((I,J)\)
via the interplay of
grounding \(\Fv{P}\) and \(\Fe{P}\),
deriving aggregate placeholders from their ground instances \(\Fv{G}\) and \(\Fe{G}\),
and finally replacing them in \(\Fa{G}\) by the original aggregates' contents.

\begin{proposition}\label{lem:propagate}
Let \(P\) be an aggregate program,
\((I,J)\) be a finite four-valued interpretation,
\(\Fv{G} = \GrdSimp{\Fv{P}}{I,J}\),
\(\Fe{G} = \GrdSimp{\Fe{P}}{I,J}\),
\(\Ia{J} = \Propagate{P}{\Fv{G}}{\Fe{G}}{I}{J}\), and
\(\Fa{G} = \GrdSimp{\Fa{P}}{I,J \cup \Ia{J}}\).

Then,
\begin{alphaenum}
\item \(\Assemble{\Fa{G}}{\Fe{G}} = \Simp{\TransR{P}{J}}{I,J}\) and\label{lem:propagate:a}
\item \(\Head{\Fa{G}} = T_{\IDReductP{\TransA{P}}{I}}(J)\).\label{lem:propagate:b}
\end{alphaenum}
\end{proposition}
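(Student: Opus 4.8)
The plan is to prove both parts rulewise, using the one-to-one correspondence between $\Grd{P}$ and $\Grd{\Fa{P}}$. By \cref{def:rewrite} every rule $r\in P$ has a unique counterpart $r^\alpha\in\Fa{P}$ obtained by replacing each aggregate with its placeholder atom, and $r$ and $r^\alpha$ share the same global variables; hence a ground substitution $\sigma$ of these variables yields matching instances $g=r\sigma\in\Grd{P}$ and $g^\alpha=r^\alpha\sigma\in\Grd{\Fa{P}}$. Since $\IDReductP{\cdot}{I}$ and $J\models\cdot$ both distribute over conjunctive bodies, the membership test defining $\Fa{G}=\GrdSimp{\Fa{P}}{I,J\cup\Ia{J}}$, the simplification filter of $\Simp{\TransR{P}{J}}{I,J}$, and the filter of $\GrdSimp{P}{I,J}$ all split into one condition per body element. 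For ordinary literals and comparisons these conditions coincide verbatim (a positive literal contributes $b_i\sigma\in J$, a negative one $b_i\sigma\notin I$, a comparison the requirement that it hold); call an instance \emph{admissible} when all of them are met. A non-admissible instance is discarded by every filter, so it suffices to treat admissible instances; in particular only admissible instances have holding comparisons, which is exactly why $\Assemble$ may replace every comparison by $\top$. It thus remains to match the contribution of each aggregate occurrence.

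The crux is the following lemma: for an admissible instance and each aggregate $a$ in $r$, writing $E$ for the elements of the instantiated aggregate $a\sigma$, \[\AggrElem{r}{a}{\Fe{G}}{\sigma}=\Restrict{\Grd{E}}{J}.\] The inclusion $\subseteq$ is immediate, since membership of $r_e\sigma\theta$ in $\Fe{G}=\GrdSimp{\Fe{P}}{I,J}$ forces the positive element condition $\Body{e}\sigma\theta$ into $J$. For $\supseteq$ I would invoke safety (\cref{def:safety}): the local variables of $e$ occur in its own positive condition, and any global variable of $r$ not appearing in $a$ cannot occur in $e$; therefore the gathered element $e\sigma\theta$ depends only on $\sigma$ and the element-local part of $\theta$, while the auxiliary literals $b_1,\dots,b_l$ of the rule $r_e$ merely re-impose satisfiability of the rule's non-aggregate body, which holds by admissibility. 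An analogous computation shows $\AggrEmpty{r}{a}{\Fv{G}}{\sigma}\neq\emptyset$ iff $\emptyset\justifies a\sigma$ for admissible instances. Because \cref{def:translate:restricted:elements,def:translation:aggregate:restricted} yield syntactically identical formulas once the element set equals $\Restrict{\Grd{E}}{J}$, the lemma gives $\TransAD{G}{a\sigma}=\TransR{a\sigma}{J}$ for $G=\AggrElem{r}{a}{\Fe{G}}{\sigma}$.

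Part (a) then follows. On an admissible instance $\Assemble$ leaves literals untouched, turns comparisons into $\top$, and rewrites each placeholder $\alpha\sigma$ into $\TransAD{G}{a\sigma}=\TransR{a\sigma}{J}$, so the assembled rule is exactly $\TransR{g}{J}$. The membership tests agree because, by \cref{def:propagate}, $\alpha\sigma\in\Ia{J}$ unfolds into $\AggrEmpty{r}{a}{\Fv{G}}{\sigma}\cup G\neq\emptyset$ together with $J\models\IDReductP{\TransAD{G}{a\sigma}}{I}=\IDReductP{\TransR{a\sigma}{J}}{I}$, which is precisely the aggregate part of the simplification filter; the $\neq\emptyset$ clause is subsumed, as $\TransR{a\sigma}{J}$ collapses to $\bot$ exactly when $G=\emptyset$ and $\emptyset\notjustifies a\sigma$. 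Hence $\Assemble{\Fa{G}}{\Fe{G}}$ and $\Simp{\TransR{P}{J}}{I,J}$ contain the same rules.

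For part (b) I would first rewrite the right-hand side via the identity $\Head{\Simp{Q}{I,J}}=\Step{\IDReductP{Q}{I}}{J}$ from \cref{sec:simplification} with $Q=\TransA{P}$, reducing the goal to $g^\alpha\in\Fa{G}$ iff $g\in\GrdSimp{P}{I,J}$ for each $g\in\Grd{P}$. The non-aggregate conditions coincide as before, and for each aggregate the equivalence $\alpha\sigma\in\Ia{J}\Leftrightarrow J\models\IDReductP{\TransA{a\sigma}}{I}$ follows from the crux lemma together with \nref{prp:translation:restricted:properties}{b} (applied with $X=J$, using $I\subseteq J$), which bridges $\IDReductP{\TransR{a\sigma}{J}}{I}$ and $\IDReductP{\TransA{a\sigma}}{I}$; taking heads gives the claim. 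I expect the crux lemma to be the main obstacle, chiefly its $\supseteq$ direction and the bookkeeping for global variables of $r$ that occur inside an aggregate's elements but not in the placeholder's argument list: one must verify that gathering element instances through $\Fe{P}$ neither drops nor spuriously adds ground elements, which is where safety and admissibility do the real work. The empty-aggregate edge cases, reconciling the $\neq\emptyset$ clause of $\Propagate{P}{\Fv{G}}{\Fe{G}}{I}{J}$ with the syntactic collapse of the aggregate translation, are the second delicate point.
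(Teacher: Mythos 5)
Your proposal is correct and follows essentially the same route as the paper's proof: both rest on the one-to-one correspondence between instances of $P$ and $\Fa{P}$, the key identity that $\AggrElem{r}{a}{\Fe{G}}{\sigma}$ recovers exactly the element instances of $a\sigma$ whose conditions lie in $J$ (so that $\TransAD{G}{a\sigma}=\TransR{a\sigma}{J}$, with $\Fv{G}$ covering the empty-set case), and the derivation of part~(b) from part~(a) via \cref{prp:translation:restricted:properties} and the identity $\Head{\Simp{Q}{I,J}}=\Step{\IDReductP{Q}{I}}{J}$. Your explicit appeal to safety for the $\supseteq$ direction of the crux lemma only spells out what the paper leaves implicit.
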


\cref{lem:propagate:b} highlights the relation of the possible atoms contributed by \Fa{G} to
a corresponding application of the immediate consequence operator.
In fact, this is the first of three such relationships between grounding algorithms and consequence operators.

\begin{algorithm}[ht]
\caption{Grounding Components\label{fun:ground:component}}
\Fn{\(\GroundComponent{P}{I}{J}\)}{\((\Fa{G}, \Fv{G}, \Fe{G}, f, \Ia{J}, \Ipa{J}, J') \leftarrow (\emptyset, \emptyset, \emptyset, \True, \emptyset, \emptyset, \emptyset)\)\;
    \Repeat{\(J'=J\)}{\label{fun:ground:component:loop:begin}
      \tcp{ground aggregate elements}
      \(\Fv{G} \leftarrow \Fv{G} \cup \bigcup_{r\in\Fv{P}} \GroundRule{r}{I}{J}{J'}{\iota}{\Body{r}}{f}\)\;\label{fun:ground:component:empty}
      \(\Fe{G} \leftarrow \Fe{G} \cup \bigcup_{r\in\Fe{P}} \GroundRule{r}{I}{J}{J'}{\iota}{\Body{r}}{f}\)\;\label{fun:ground:component:elements}
      \tcp{propagate aggregates}
      \(\Ia{J} \leftarrow \Propagate{P}{\Fv{G}}{\Fe{G}}{I}{J}\)\;\label{fun:ground:component:propagate}
      \tcp{ground remaining rules}
      \(\Fa{G} \leftarrow \Fa{G} \cup \bigcup_{r \in \Fa{P}}\GroundRule{r}{I}{J \cup \Ia{J}}{J' \cup \Ipa{J}}{\iota}{\Body{r}}{f}\)\;\label{fun:ground:component:aggregate}
      \((f, \Ipa{J}, J', J) \leftarrow (\False, \Ia{J}, J, J \cup \Head{\Fa{G}})\)\;\label{fun:ground:component:variables:set}
    }\label{fun:ground:component:loop:end}
    \Return\ \(\Assemble{\Fa{G}}{\Fe{G}}\)\;\label{fun:ground:component:assemble}
}
\end{algorithm}

Let us now turn to grounding components of instantiation sequences in \cref{fun:ground:component}.
The function \KwGroundComponent\ takes an aggregate program \(P\) along with two sets \(I\) and \(J\) of ground atoms.
Intuitively, \(P\) is a component in a (refined) instantiation sequence
and
\(I\) and \(J\) form a four-valued interpretation \((I,J)\)
comprising the certain and possible atoms gathered while grounding previous components
(although their roles get reversed in \cref{fun:ground:program}).
After variable initialization,
\KwGroundComponent\ loops over consecutive rule instantiations in \(\Fa{P}\), \(\Fv{P}\), and \(\Fe{P}\)
until no more possible atoms are obtained.
In this case, it returns in \cref{fun:ground:component:assemble} the \FormulasR-program obtained from \(\Fa{G}\) by replacing all
ground aggregate placeholders of form~\eqref{eq:rewrite:aggregate:atom}
with the \(\FormulasR\)-formula corresponding to the respective ground aggregate.
The body of the loop can be divided into two parts:
\crefrange{fun:ground:component:empty}{fun:ground:component:propagate} deal with aggregates and
\cref{fun:ground:component:aggregate,fun:ground:component:variables:set} care about grounding the actual program.
In more detail,
\cref{fun:ground:component:empty,fun:ground:component:elements} instantiate programs \(\Fv{P}\) and \(\Fe{P}\),
whose ground instances, \(\Fv{G}\) and \(\Fe{G}\), are then used in \cref{fun:ground:component:propagate} to
derive ground instances of aggregate placeholders of form~\eqref{eq:rewrite:aggregate:atom}.
The grounded placeholders are then added via variable \(\Ia{J}\) to the possible atoms \(J\) when grounding
the actual program \(\Fa{P}\) in \cref{fun:ground:component:aggregate},
where \(J'\) and \(\Ipa{J}\) hold the previous value of \(J\) and \(\Ia{J}\), respectively.
For the next iteration,
\(J\) is augmented in \cref{fun:ground:component:variables:set} with all rule heads in \(\Fa{G}\)
and the flag \(f\) is set to false.
Recall that the purpose of \(f\) is to ensure that initially all rules are grounded.
In subsequent iterations,
duplicates are omitted by setting the flag to false and filtering rules whose positive bodies are a subset of the atoms \(J' \cup \Ipa{J}\) used in previous iterations.

While the inner workings of \cref{fun:ground:component} follow the blueprint given by \cref{lem:propagate}.
its outer functionality boils down to applying the stable operator of the corresponding ground program in the
context of the certain and possible atoms gathered so far.

\begin{proposition}\label{prp:ground:component}
Let \(P\) be an aggregate program, \((\Ir{I},\Ir{J})\) be a finite four-valued interpretation,
and \(J = \StableR{\TransA{P}}{\Ir{J}}{\Ir{I}}\).

Then,
\begin{alphaenum}
\item \(\GroundComponent{P}{\Ir{I}}{\Ir{J}}\) terminates iff \(J\) is finite.\label{prp:ground:component:a}
\end{alphaenum}

If \(J\) is finite, then
\begin{alphaenum}[resume]
\item \(\GroundComponent{P}{\Ir{I}}{\Ir{J}} = \Simp{\TransR{P}{\Ir{J} \cup J}}{\Ir{I}, \Ir{J} \cup J}\) and\label{prp:ground:component:b}
\item \(\Head{\GroundComponent{P}{\Ir{I}}{\Ir{J}}} = J\).\label{prp:ground:component:c}
\end{alphaenum}
\end{proposition}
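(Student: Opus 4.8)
The plan is to read \cref{fun:ground:component} as a Kleene iteration of the relative one-step provability operator and to transfer, pass by pass, the one-shot characterization of \cref{lem:propagate}. Write $Q = \IDReductP{\TransA{P}}{\Ir{I}}$; since the \FOID-reduct is positive, \cref{prop:rel-step-monotone} makes $\StepRO{Q}{\Ir{J}}$ monotone, so by \cref{thm:knaster-tarski} it has a least fixed point, which by definition of the relative stable operator is $J = \StableR{\TransA{P}}{\Ir{J}}{\Ir{I}}$. Recalling $\StepR{Q}{\Ir{J}}{X} = \Step{Q}{\Ir{J} \cup X}$, let $U_0 = \emptyset$ and $U_{k+1} = \Step{Q}{\Ir{J} \cup U_k}$ be the Kleene iterates; they increase and their union is $J$, and an easy induction using \cref{prop:step-monotone} shows every $U_k \subseteq J$. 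I will write $J_k$ for the value of the algorithm's variable $J$ after the $k$-th pass through the loop (so $J_0 = \Ir{J}$), and $\Fv{G}_k$, $\Fe{G}_k$, $\Fa{G}_k$, $\Ia{J}_k$ for the accumulated sets.

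The heart of the argument is a loop invariant, proved by induction on $k$: after the $k$-th pass, (i) $J_k = \Ir{J} \cup U_k$; (ii) $\Fv{G}_k = \GrdSimp{\Fv{P}}{\Ir{I},J_{k-1}}$ and $\Fe{G}_k = \GrdSimp{\Fe{P}}{\Ir{I},J_{k-1}}$; (iii) $\Ia{J}_k = \Propagate{P}{\Fv{G}_k}{\Fe{G}_k}{\Ir{I}}{J_{k-1}}$; and (iv) $\Fa{G}_k = \GrdSimp{\Fa{P}}{\Ir{I}, J_{k-1} \cup \Ia{J}_k}$. In the base case $f = \True$ and $J' = \emptyset$, so \nref{lem:ground:rule:split}{a} turns each incremental \cref{fun:ground:rule} call into the full ground set, yielding (ii)--(iv) at $k=1$; then \nref{lem:propagate}{b} gives $\Head{\Fa{G}_1} = \Step{Q}{\Ir{J}} = U_1$, so $J_1 = \Ir{J} \cup U_1$. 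For the step, \nref{lem:ground:rule:split}{b} (with $f=\False$) merges the previously accumulated sets grounded w.r.t.\ $J_{k-2}$ with the new instances grounded w.r.t.\ $J_{k-1}$, the bookkeeping variables $J'$ and $\Ipa{J}$ supplying exactly the previous arguments and the filter $\IDPos{\Body{r\sigma}} \nsubseteq J'$ discarding exactly the duplicates. Applying \nref{lem:propagate}{b} once more gives $\Head{\Fa{G}_k} = \Step{Q}{J_{k-1}}$, and combining with (i) at $k-1$ yields $J_k = J_{k-1} \cup \Step{Q}{J_{k-1}} = \Ir{J} \cup U_{k-1} \cup U_k = \Ir{J} \cup U_k$ by monotonicity of the iterates.

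Granting the invariant, part~\nref{prp:ground:component}{a} follows from the behaviour of the chain $U_k$. Each pass terminates whenever $J_{k-1}$ is finite, since safety bounds the number of instances of any rule whose positive body is satisfied by finitely many possible atoms; hence $\Ia{J}_k$ is finite, all \cref{fun:ground:rule} and \KwPropagate\ calls return finite sets, and the loop halts precisely when $U_k$ stabilizes. If $J$ is finite, the increasing chain $U_k \subseteq J$ lives in a finite lattice and so stabilizes at some stage $n$; a stabilized value is a fixed point of $\StepRO{Q}{\Ir{J}}$, hence contains $J$, and being contained in $J$ it equals $J$, so the loop terminates. Conversely, if the loop terminates at stage $n$, then $J_n = J_{n-1}$ forces $U_n = U_{n-1}$, so $U_{n-1}$ is a fixed point with $J \subseteq U_{n-1} \subseteq J$; thus $J = U_{n-1} \subseteq J_{n-1}$, which is finite because a terminating run manipulates only finite sets.

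Finally, for parts~\nref{prp:ground:component}{b} and~\nref{prp:ground:component}{c}, assume $J$ finite and let $n$ be the terminating stage, so that $J_{n-1} = \Ir{J} \cup J$ by the invariant and the fixed-point identity $J = \Step{Q}{\Ir{J} \cup J}$. The invariant shows that the returned data $\Fv{G}$, $\Fe{G}$, $\Ia{J}$, $\Fa{G}$ are exactly the quantities named in \cref{lem:propagate} at the finite interpretation $(\Ir{I}, \Ir{J} \cup J)$; so \nref{lem:propagate}{a} gives $\Assemble{\Fa{G}}{\Fe{G}} = \Simp{\TransR{P}{\Ir{J}\cup J}}{\Ir{I},\Ir{J}\cup J}$, which is~\nref{prp:ground:component}{b}, and \nref{lem:propagate}{b} together with head-preservation of \KwAssemble\ gives $\Head{\Assemble{\Fa{G}}{\Fe{G}}} = \Step{Q}{\Ir{J}\cup J} = J$, which is~\nref{prp:ground:component}{c}. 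I expect the main obstacle to be the inductive step of the invariant, specifically establishing that the placeholder atoms accumulate monotonically, $\Ia{J}_{k-1} \subseteq \Ia{J}_k$, so that the hypothesis $J' \cup \Ipa{J} \subseteq J \cup \Ia{J}$ of \nref{lem:ground:rule:split}{b} is met when grounding $\Fa{P}$; since placeholder predicates never appear in $J_{k-1}$, this reduces to showing that \KwPropagate\ is monotone in its possible-atom argument as the gathered element sets $\Fe{G}$ grow, which must be reconciled with the non-monotone syntactic shape of the restricted translation $\TransAD{G}{a\sigma}$ under the \FOID-reduct.
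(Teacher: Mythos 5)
Your proposal is correct and follows essentially the same route as the paper's own proof: both characterize the loop as Kleene iteration of $\StepRO{\IDReductP{\TransA{P}}{\Ir{I}}}{\Ir{J}}$, use \cref{lem:ground:rule:split} to identify the accumulated sets with $\GrdSimp{\cdot}{\cdot,\cdot}$ at the current interpretation, invoke \nref{lem:propagate}{b} to equate each pass with one application of the operator, and then read off termination from stabilization of the finite chain and parts~\ref{prp:ground:component:b},~\ref{prp:ground:component:c} from \cref{lem:propagate} at the fixed point $(\Ir{I},\Ir{J}\cup J)$. The one obstacle you flag — monotone accumulation of the placeholder atoms $\Ia{J}$ — does hold and is resolved by noting that, for the gathered element sets, $\TransAD{G}{a\sigma}$ coincides with the $J$-restricted translation, so by \nref{prp:translation:restricted:properties}{b} the propagation test is equivalent to $J \models \IDReductP{\TransA{a\sigma}}{I}$, a positive formula whose satisfaction is monotone in $J$ by \cref{lem:monotone-positive}.
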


\begin{algorithm}[ht]
\caption{Grounding Programs}\label{fun:ground:program}
\Fn{\(\GroundProgram{P}\)}{\Let\ \({(\Fi{P}{i})}_{i\in\Index}\) be a refined instantiation sequence for \(P\)\;\label{fun:ground:program:sequence}
  \((F,G) \leftarrow (\emptyset,\emptyset)\)\;
  \ForEach{\(i \in \Index\)}{\label{fun:ground:program:loop:begin}
    \Let\ \(\Fpi{P}{i}\) be the program obtained from \(\Fi{P}{i}\) as in \cref{def:approximate}\;\label{fun:ground:program:rewrite}
    \(F \leftarrow F \cup \GroundComponent{\Fpi{P}{i}}{\Head{G}}{\Head{F}}\)\;\label{fun:ground:program:true}
    \(G \leftarrow G \cup \GroundComponent{\Fi{P}{i}}{\Head{F}}{\Head{G}}\)\;\label{fun:ground:program:possible}\label{fun:ground:program:loop:end}
  }
  \Return\ \(G\)\;\label{fun:ground:program:return}
}
\end{algorithm}

Finally,
\cref{fun:ground:program} grounds an aggregate program by iterating over the components of one of its refined instantiation sequences.
Just as \cref{fun:ground:component} reflects the application of a stable operator,
function \KwGroundProgram\ follows the definition of an approximate model when grounding a component (cf.\ \cref{def:approximate}).
At first, facts are computed in \cref{fun:ground:program:true} by using the program stripped from
rules being involved in a negative cycle overlapping with the present or subsequent components.
The obtained head atoms are then used in \cref{fun:ground:program:possible} as certain context atoms when computing the ground version of the component at hand.
The possible atoms are provided by the head atoms of the ground program built so far,
and with roles reversed in \cref{fun:ground:program:true}.
Accordingly, the whole iteration aligns with the approximate model of the chosen refined instantiation
sequence (cf.\ \cref{def:approximate:sequence}), as made precise next.

Our grounding algorithm computes implicitly the approximate model of the chosen instantiation sequence
and outputs the corresponding ground program;
it terminates whenever the approximate model is finite.

\begin{theorem}\label{thm:ground:program}
Let \(P\) be an aggregate program, \({(\Fi{P}{i})}_{i \in \Index}\) be a refined instantiation sequence for \(P\),
and \((\Iri{I}{i},\Iri{J}{i})\) and \((\Ii{I}{i},\Ii{J}{i})\) be defined as in \cref{def:approximate:sequence:relative,def:approximate:sequence:intermediate}.

If \({(\Fi{P}{i})}_{i \in \Index}\) is selected by \cref{fun:ground:program} in \cref{fun:ground:program:sequence},
then we have that
\begin{alphaenum}
\item
the call \(\GroundProgram{P}\) terminates iff \(\ApproxM{{(\Fi{P}{i})}_{i\in\Index}}\) is finite, and\label{thm:ground:program:a}
\item
if \(\ApproxM{{(\Fi{P}{i})}_{i\in\Index}}\) is finite, then
\(\GroundProgram{P} = \bigcup_{i \in \Index}\Simp{\TransR{\Fi{P}{i}}{\Iri{J}{i} \cup \Ii{J}{i}}}{(\Iri{I}{i},\Iri{J}{i}) \sqcup (\Ii{I}{i},\Ii{J}{i})}\).\label{thm:ground:program:b}
\end{alphaenum}
\end{theorem}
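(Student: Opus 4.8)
\noindent\emph{Proof plan.}
The plan is to proceed by induction along the refined instantiation sequence \({(\Fi{P}{i})}_{i\in\Index}\) selected in \cref{fun:ground:program:sequence}, analyzing at each iteration the two calls to \KwGroundComponent\ in \cref{fun:ground:program:true,fun:ground:program:possible} by means of \cref{prp:ground:component}. Since \(P\) is a finite aggregate program, \(\Index\) is finite and the loop performs finitely many iterations; thus the only possible source of nontermination is an individual \KwGroundComponent\ call. Writing \(F\) and \(G\) for the program-valued variables of \cref{fun:ground:program}, I would maintain as a loop invariant that, upon entering the iteration for index \(i\), \((\Head{F},\Head{G}) = (\Iri{I}{i},\Iri{J}{i})\); equivalently, after the iteration for \(i\) we have \(\Head{F} = \bigcup_{j \leq i}\Ii{I}{j}\) and \(\Head{G} = \bigcup_{j \leq i}\Ii{J}{j}\), in line with \cref{def:approximate:sequence:relative}.

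For the inductive step, assume the invariant on entry to iteration \(i\). The program \(\Fpi{P}{i}\) of \cref{fun:ground:program:rewrite} is exactly the program \(P'\) of \cref{def:approximate} for \(\Fi{P}{i}\) with external predicates \(\Ii{\PredE}{i}\). By \nref{prp:ground:component}{c}, the first call \(\GroundComponent{\Fpi{P}{i}}{\Head{G}}{\Head{F}}\) has head atoms \(\StableR{\TransA{\Fpi{P}{i}}}{\Head{F}}{\Head{G}} = \StableR{\TransA{\Fpi{P}{i}}}{\Iri{I}{i}}{\Iri{J}{i}}\), which is \(\Ii{I}{i}\) by the defining equation for the certain component in \cref{def:approximate}. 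Hence after \cref{fun:ground:program:true} we have \(\Head{F} = \Iri{I}{i} \cup \Ii{I}{i}\). The second call \(\GroundComponent{\Fi{P}{i}}{\Head{F}}{\Head{G}}\), now using this updated \(\Head{F}\), has by \nref{prp:ground:component}{c} the head atoms \(\StableR{\TransA{\Fi{P}{i}}}{\Head{G}}{\Head{F}} = \StableR{\TransA{\Fi{P}{i}}}{\Iri{J}{i}}{\Iri{I}{i} \cup \Ii{I}{i}}\), which equals \(\Ii{J}{i}\) by the defining equation for the possible component in \cref{def:approximate}. Thus after \cref{fun:ground:program:possible} we obtain \(\Head{G} = \Iri{J}{i} \cup \Ii{J}{i}\), restoring the invariant for \(i+1\). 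The finiteness precondition of \cref{prp:ground:component} is met since \((\Iri{I}{i},\Iri{J}{i})\) is finite by the induction hypothesis and \(\Ii{I}{i}\) is finite once the first call has returned.

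Part \ref{thm:ground:program:a} then follows by combining the per-call termination criterion of \nref{prp:ground:component}{a}: the first call terminates iff \(\Ii{I}{i}\) is finite and the second iff \(\Ii{J}{i}\) is finite, so all (finitely many) calls terminate iff every \((\Ii{I}{i},\Ii{J}{i})\) is finite, which is precisely finiteness of \(\ApproxM{{(\Fi{P}{i})}_{i\in\Index}} = \bigsqcup_{i\in\Index}(\Ii{I}{i},\Ii{J}{i})\). For part \ref{thm:ground:program:b}, I would apply \nref{prp:ground:component}{b} to the second call: its output is \(\Simp{\TransR{\Fi{P}{i}}{\Head{G}\cup\Ii{J}{i}}}{\Head{F},\,\Head{G}\cup\Ii{J}{i}}\), and substituting \(\Head{F}=\Iri{I}{i}\cup\Ii{I}{i}\) and \(\Head{G}=\Iri{J}{i}\) yields exactly \(\Simp{\TransR{\Fi{P}{i}}{\Iri{J}{i}\cup\Ii{J}{i}}}{(\Iri{I}{i},\Iri{J}{i})\sqcup(\Ii{I}{i},\Ii{J}{i})}\). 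As \(G\) is returned in \cref{fun:ground:program:return} as the union of the second-call outputs over \(i\in\Index\), the claimed identity results.

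The step I expect to be the main obstacle is the careful bookkeeping of which interpretation serves as parameter and which as argument of the relative stable operator across the two calls, since \cref{fun:ground:program} deliberately swaps \(F\) and \(G\) between \cref{fun:ground:program:true,fun:ground:program:possible} so as to realize the asymmetric treatment of certain versus possible atoms in \cref{def:approximate}. Pinning down this alignment — together with the interleaving of the termination argument, where the input to the second call depends on the first having terminated and hence on \(\Ii{I}{i}\) being finite — is where the argument must be most precise; the remaining identifications reduce directly to \cref{prp:ground:component} and the definitions.
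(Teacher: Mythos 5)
Your proposal is correct and follows essentially the same route as the paper's proof: both track the loop variables \(F\) and \(G\) across iterations, identify via \cref{prp:ground:component} the two \KwGroundComponent\ calls in \cref{fun:ground:program:true,fun:ground:program:possible} with the two applications of the relative stable operator in \cref{def:approximate}, and then read off termination and the output program componentwise. Your bookkeeping of which set is parameter and which is argument of \(\StableRO{}{}\) in each call is exactly right, and matches what the paper leaves implicit.
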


As already indicated by \cref{thm:approximate:sequence:simplification}, grounding is governed by the series of consecutive
approximate models \((\Ii{I}{i},\Ii{J}{i})\) in~\eqref{def:approximate:sequence:intermediate}
delineating the stable models of each ground program in a (refined) instantiation sequence.
Whenever each of them is finite, we also obtain a finite grounding of the original program.
Note that the entire ground program is composed of the ground programs of each component in the chosen instantiation sequence.
Hence, different sequences may result in different overall ground programs.

Most importantly, our grounding machinery guarantees that an obtained finite ground program has the
same stable models as the original non-ground program.

\begin{corollary}[Main result]\label{col:main-result}
Let \(P\) be an aggregate program.

If \(\GroundProgram{P}\) terminates, then \(P\) and \(\GroundProgram{P}\) have the same well-founded and stable models.
\end{corollary}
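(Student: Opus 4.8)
The plan is to obtain the main result as an essentially immediate combination of the two central theorems established in \cref{sec:dependency} and \cref{sec:algorithms}, namely \cref{thm:approximate:sequence:simplification} and \cref{thm:ground:program}. First I would fix the refined instantiation sequence \({(\Fi{P}{i})}_{i\in\Index}\) that \cref{fun:ground:program} selects in \cref{fun:ground:program:sequence}, together with the auxiliary interpretations \((\Iri{I}{i},\Iri{J}{i})\) and \((\Ii{I}{i},\Ii{J}{i})\) from \cref{def:approximate:sequence:relative,def:approximate:sequence:intermediate} induced by that sequence. Assuming \(\GroundProgram{P}\) terminates, \nref{thm:ground:program}{a} gives that the approximate model \(\ApproxM{{(\Fi{P}{i})}_{i\in\Index}}\) is finite, and \nref{thm:ground:program}{b} then pins down the returned program \emph{syntactically} as \(\GroundProgram{P} = \bigcup_{i \in \Index}\Simp{\TransR{\Fi{P}{i}}{\Iri{J}{i} \cup \Ii{J}{i}}}{(\Iri{I}{i},\Iri{J}{i}) \sqcup (\Ii{I}{i},\Ii{J}{i})}\). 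Since this is an equality of \(\FormulasR\)-programs, \(\GroundProgram{P}\) and the displayed union trivially have the very same well-founded and stable models.

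Next I would apply \cref{thm:approximate:sequence:simplification} to this same sequence, which asserts that precisely this union of simplified \(\FormulasR\)-programs and \(\TransA{P}\) share both their well-founded and their stable models. Composing the two identities yields that \(\GroundProgram{P}\) and \(\TransA{P}\) have the same well-founded and stable models, which is the crux of the statement.

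It remains to relate \(\TransA{P}\) to the aggregate program \(P\) itself. Throughout \cref{sec:aggregates} the semantics of an aggregate program is given by way of its infinitary translation, so the well-founded and stable models attributed to \(P\) are exactly those of \(\TransA{P}\); moreover, \cref{prp:translation:equivalence} shows that \(\TransA{a}\) is strongly equivalent to the Ferraris-style \(\Trans{a}\) for every closed aggregate \(a\), so the stable-model part of this convention coincides with the established aggregate semantics of \(\Trans{P}\). With this identification the chain \(P \equiv \TransA{P} = (\text{the union}) = \GroundProgram{P}\) closes and the corollary follows.

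Since the genuine content resides entirely in the cited theorems, the only real point of care is \emph{bookkeeping}: one must ensure that the \emph{same} refined instantiation sequence, and hence the same interpretations \((\Iri{I}{i},\Iri{J}{i})\) and \((\Ii{I}{i},\Ii{J}{i})\), is used in both \cref{thm:ground:program} and \cref{thm:approximate:sequence:simplification}, so that the two program identities may legitimately be chained. I expect this to be the main (and essentially only) obstacle, all the heavy lifting—handling aggregates through the \KwPropagate/\KwAssemble\ machinery, transporting not merely the stable but also the well-founded models across the move from \(\TransSym\) to \(\TransASym\), and comparing approximate with well-founded models—having already been discharged upstream. In particular, note that the well-founded-model claim (and not just the stable-model claim) is already built into the conclusion of \cref{thm:approximate:sequence:simplification}, so no separate argument for it is required here.
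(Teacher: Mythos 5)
Your proposal is correct and matches the paper's own argument, which likewise obtains the corollary as a direct consequence of \cref{thm:approximate:sequence:simplification} and \cref{thm:ground:program} applied to the refined instantiation sequence chosen by \cref{fun:ground:program}. The additional bookkeeping you flag (using the same sequence and interpretations in both theorems, and identifying the semantics of \(P\) with that of \(\TransA{P}\)) is exactly the implicit content of the paper's one-line proof.
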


\begin{example}\label{ex:ground:normal}
\begin{table}[ht]
\begin{subtable}[2.3cm]{.5\linewidth}
\centering
\vphantom{\includegraphics{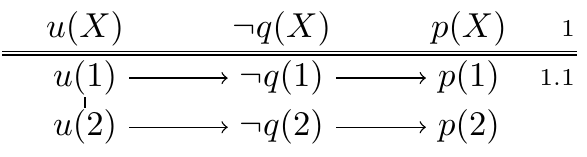}}
\raisebox{.5cm}{\(\Fpi{P}{5,1} = \emptyset\)}\\
\caption{\(\Ii{I}{5,1} = \Ii{I}{4} \cup \Head{\GroundComponentAbbrev{\Fpi{P}{5,1}}{\Ii{J}{4}}{\Ii{I}{4}}}\)\label{tab:ground:normal:a}}
\end{subtable}\begin{subtable}[2.3cm]{.5\linewidth}
\centering
\includegraphics{grd_J51}
\caption{\(\Ii{J}{5,1} = \Ii{J}{4} \cup \Head{\GroundComponentAbbrev{\vphantom{\Fpi{P}{5,1}}\Fi{P}{5,1}}{\Ii{I}{5,1}}{\Ii{J}{4}}}\)\label{tab:ground:normal:b}}
\end{subtable}

\begin{subtable}[2.3cm]{.5\linewidth}
\centering
\includegraphics{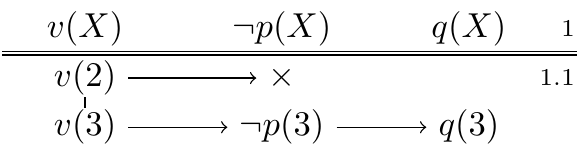}
\caption{\(\Ii{I}{5,2} = \Ii{I}{5,1} \cup \Head{\GroundComponentAbbrev{\Fpi{P}{5,2}}{\Ii{J}{5,1}}{\Ii{I}{5,1}}}\)\label{tab:ground:normal:c}}
\end{subtable}\begin{subtable}[2.3cm]{.5\linewidth}
\centering
\includegraphics{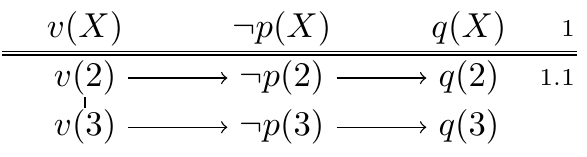}
\caption{\(\Ii{J}{5,2} = \Ii{J}{5,1} \cup \Head{\GroundComponentAbbrev{\vphantom{\Fpi{P}{5,2}}\Fi{P}{5,2}}{\Ii{I}{5,2}}{\Ii{J}{5,1}}}\)\label{tab:ground:normal:d}}
\end{subtable}

\begin{subtable}[1.8cm]{.5\linewidth}
\centering
\includegraphics{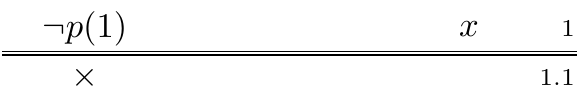}
\caption{\(\Ii{I}{6} = \Ii{I}{5,2} \cup \Head{\GroundComponentAbbrev{\Fpi{P}{6}}{\Ii{J}{5,2}}{\Ii{I}{5,2}}}\)\label{tab:ground:normal:e}}
\end{subtable}\begin{subtable}[1.8cm]{.5\linewidth}
\centering
\includegraphics{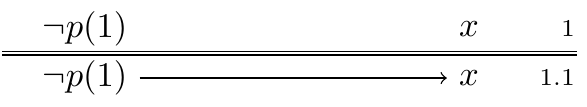}
\caption{\(\Ii{J}{6} = \Ii{J}{5,2} \cup \Head{\GroundComponentAbbrev{\vphantom{\Fpi{P}{6}}\Fi{P}{6}}{\Ii{I}{6}}{\Ii{J}{5,2}}}\)\label{tab:ground:normal:f}}
\end{subtable}

\begin{subtable}[1.8cm]{.5\linewidth}
\centering
\includegraphics{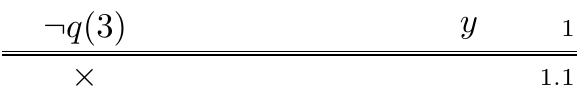}
\caption{\(\Ii{I}{7} = \Ii{I}{6} \cup \Head{\GroundComponentAbbrev{\Fpi{P}{7}}{\Ii{J}{6}}{\Ii{I}{6}}}\)\label{tab:ground:normal:g}}
\end{subtable}\begin{subtable}[1.8cm]{.5\linewidth}
\centering
\includegraphics{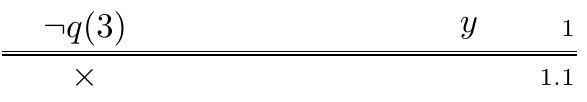}
\caption{\(\Ii{J}{7} = \Ii{J}{6} \cup \Head{\GroundComponentAbbrev{\vphantom{\Fpi{P}{7}}\Fi{P}{7}}{\Ii{I}{7}}{\Ii{J}{6}}}\)\label{tab:ground:normal:h}}
\end{subtable}

\caption{Grounding of components \(\Fi{P}{5,1}\), \(\Fi{P}{5,2}\), \(\Fi{P}{6}\), and \(\Fi{P}{7}\) from \Cref{ex:dependency} where \(\protect\KwGroundComponentShort = \protect\KwGroundComponent\).\label{tab:ground:normal}}
\end{table}
The execution of the grounding algorithms on \cref{ex:dependency} is illustrated in \cref{tab:ground:normal}.
Each individual table depicts a call to \(\KwGroundComponent\)
where the header above the double line contains the (literals of the) rules to be grounded
and the rows below trace how nested calls to \(\KwGroundRule\) proceed.
The rules in the header contain the body literals in the order as they are selected by \(\KwGroundRule\)
with the rule head as the last literal.
Calls to \(\KwGroundRule\) are depicted with vertical lines and horizontal arrows.
A vertical line represents the iteration of the loop in \crefrange{fun:ground:rule:loop:begin}{fun:ground:rule:loop:end}.
A horizontal arrow represents a recursive call to \(\KwGroundRule\) in \cref{fun:ground:rule:recurse}.
Each row in the table not marked with \(\times\) corresponds to a ground instance as returned by \(\KwGroundRule\).
Furthermore, because all components are stratified,
we only show the first iteration of the loop in \crefrange{fun:ground:component:loop:begin}{fun:ground:component:loop:end} of \cref{fun:ground:component}
as the second iteration does not produce any new ground instances.

Grounding components \(\Fi{P}{1}\) to \(\Fi{P}{4}\) results in the programs \(F = G = \{ u(1) \leftarrow \top, u(2) \leftarrow \top, v(2) \leftarrow \top, v(3) \leftarrow \top \}\).
Since grounding is driven by the sets of true and possible atoms, we focus on the interpretations \(\Ii{I}{i}\) and \(\Ii{J}{i}\) where \(i\) is a component index in the refined instantiation sequence.
We start tracing the grounding starting with \(\Ii{I}{4} = \Ii{J}{4} = \{ u(1), u(2), v(2), v(3) \}\).

The grounding of \(\Fi{P}{5,1}\) is depicted in \cref{tab:ground:normal:a,tab:ground:normal:b}.
We have \(\PredE = \{ b/1 \}\) because predicate \(b/1\) is used in the head of the rule in \(\Fi{P}{5,2}\).
Thus, \(\GroundComponent{\emptyset}{\Ii{J}{4}}{\Ii{I}{4}}\) in \cref{fun:ground:program:true} returns the empty set because \(\Fpi{P}{5,1} = \emptyset\).
We get \(\Ii{I}{5,1} = \Ii{I}{4}\).
In the next line, the algorithm calls \(\GroundComponent{\Fi{P}{5,1}}{\Ii{I}{5,1}}{\Ii{J}{4}}\) and we get \(\Ii{J}{5,1} = \{ p(1), p(2) \}\).
Note that at this point, it is not known that \(q(1)\) is not derivable and so the algorithm does not derive \(p(1)\) as a fact.

The grounding of \(\Fi{P}{5,2}\) is given in \cref{tab:ground:normal:c,tab:ground:normal:d}.
This time, we have \(\PredE = \emptyset\) and \(\Fi{P}{5,2} = \Fpi{P}{5,2}\).
Thus, the first call to \(\KwGroundRule\) determines \(q(3)\) to be true
while the second call additionally determines the possible atom \(q(2)\).

The grounding of \(\Fi{P}{6}\) is illustrated in \cref{tab:ground:normal:e,tab:ground:normal:f}.
Note that we obtain that \(x\) is possible because \(p(1)\) was not determined to be true.

The grounding of \(\Fi{P}{7}\) is depicted in \cref{tab:ground:normal:g,tab:ground:normal:h}.
Note that, unlike before, we obtain that \(y\) is false because \(q(3)\) was determined to be true.

Furthermore, observe that the choice of the refined instantiation sequence determines the output of the algorithm.
In fact, swapping \(\Fi{P}{5,1}\) and \(\Fi{P}{5,2}\) in the sequence would result in \(x\) being false and \(y\) being possible.

\begin{table}
\begin{align*}
u(1) & &
u(2) & \\
v(2) & &
v(3) & \\
p(1) & \leftarrow \Naf q(1) \wedge {\DG u(1)} &
p(2) & \leftarrow \Naf q(2) \wedge {\DG u(2)} \\
q(2) & \leftarrow \Naf p(2) \wedge {\DG v(2)} &
q(3) & \leftarrow {\DG \Naf p(3)} \wedge {\DG v(3)} \\
x & \leftarrow \Naf p(1)
\end{align*}
\caption{Grounding of \cref{ex:dependency} as output by \gringo.}\label{ex:ground:normal:gringo}
\end{table}

To conclude, we give the grounding of the program as output by \gringo\ in \cref{ex:ground:normal:gringo}.
The grounder furthermore omits the true body literals marked in green.
\end{example}

\begin{example}
We illustrate the grounding of aggregates on the company controls example in \cref{ex:company}
using the grounding sequence \(\Fsq{\Fi{P}{i}}{1 \leq i \leq 9}\) and the set of facts~\(F\) from \cref{ex:dependency:company}.
Observe that the grounding of components \(\Fi{P}{1}\) to \(\Fi{P}{8}\) produces the program \(\{ a \leftarrow \top \mid a \in F \}\).
We focus on how component \(\Fi{P}{9}\) is grounded.
Because there are no negative dependencies,
the components \(\Fi{P}{9}\) and \(\Fpi{P}{9}\) in \cref{fun:ground:program:rewrite} of \cref{fun:ground:program} are equal.
To ground component \(\Fi{P}{9}\), we use the rewriting from \cref{ex:rewrite}.

\begin{table}
\includegraphics{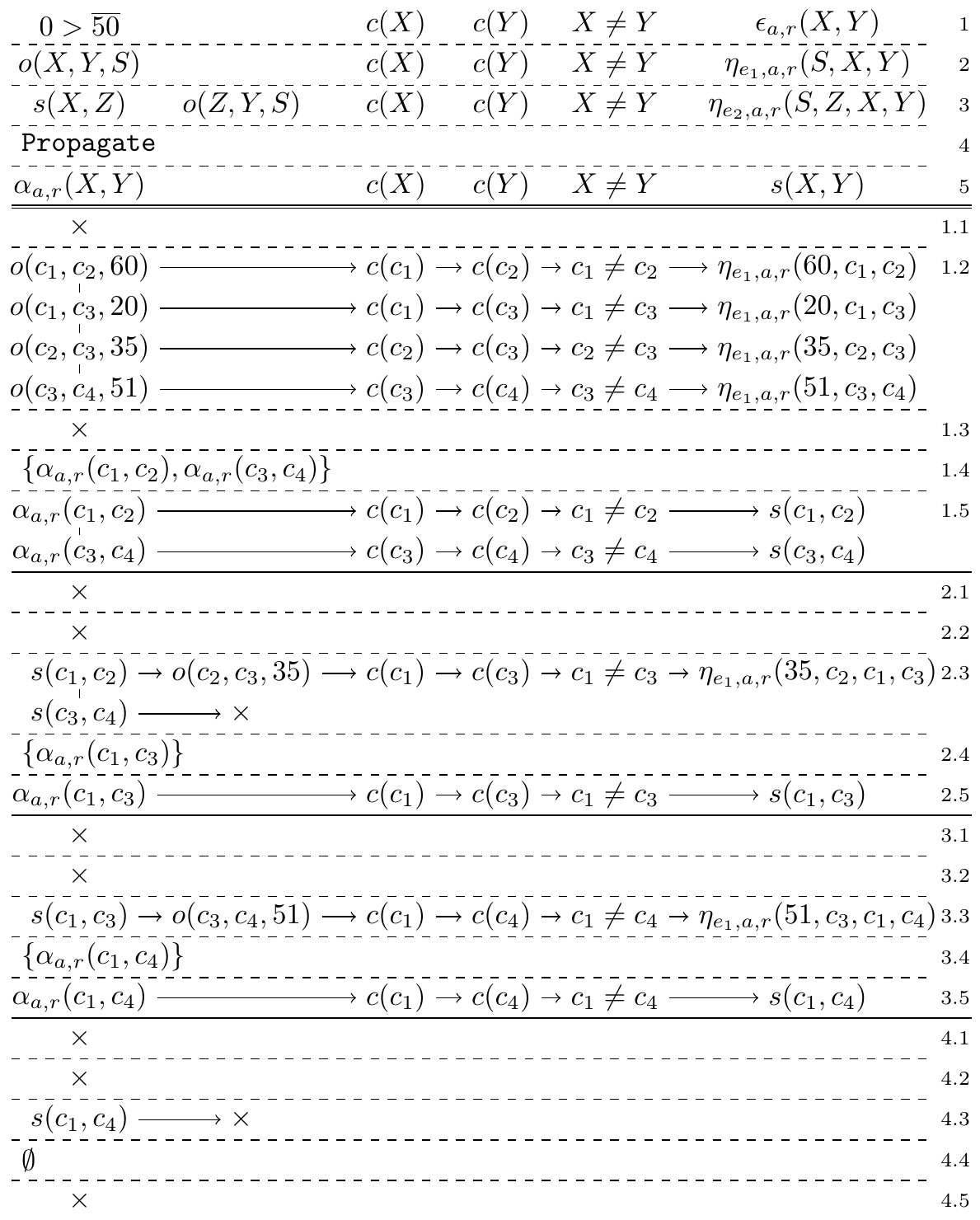}
\caption{Tracing grounding of component \(\Fi{P}{9}\) where \(c=\Pred{company}\), \(o=\Pred{owns}\), and \(s=\Pred{controls}\).\label{tab:ground:company}}
\end{table}

The grounding of component \(\Fi{P}{9}\) is illustrated in \cref{tab:ground:company},
which follows the same conventions as in \cref{ex:ground:normal}.
Because the program is positive,
the calls in \cref{fun:ground:program:true,fun:ground:program:possible} in \cref{fun:ground:program} proceed in the same way
and we depict only one of them.
Furthermore, because this example involves a recursive rule with an aggregate,
the header consists of five rows separated by dashed lines
processed by \cref{fun:ground:component}.
The first row corresponds to \(\Fvi{P}{9}\) grounded in \cref{fun:ground:component:empty},
the second and third to \(\Fei{P}{9}\) grounded in \cref{fun:ground:component:elements},
the fourth to aggregate propagation in \cref{fun:ground:component:propagate}, and
the fifth to \(\Fai{P}{9}\) grounded in \cref{fun:ground:component:aggregate}.
After the header follow the iterations of the loop in \crefrange{fun:ground:component:loop:begin}{fun:ground:component:loop:end}.
Because the component is recursive, processing the component requires four iterations, which are separated by solid lines in the table.
The right-hand side column of the table contains the iteration number and a number indicating which row in the header is processed.
The row for aggregate propagation lists the aggregate atoms that have been propagated.

The grounding of rule~\(r_2\) in Row~1.1 does not produce any rule instances in any iteration
because the comparison~\(0 > 50\) is false.
By first selecting this literal when grounding the rule,
the remaining rule body can be completely ignored.
Actual systems implement heuristics to prioritize such literals.
Next, in the grounding of rule~\(r_3\) in Row~1.2,
direct shares given by facts over \(\Pred{owns}/3\) are accumulated.
Because the rule does not contain any recursive predicates,
it only produces ground instances in the first iteration.
Unlike this, rule \(r_4\) contains the recursive predicate \(\Pred{controls/2}\).
It does not produce instances in the first iteration in Row~1.3 because there are no corresponding atoms yet.
Next, aggregate propagation is triggered in Row~1.4,
resulting in aggregate atoms~\(\AggregateAtom{a}{r}{c_1,c_2}\) and \(\AggregateAtom{a}{r}{c_3,c_4}\),
for which enough shares have been accumulated in Row~1.2.
Note that this corresponds to the propagation of the sets~\(G_1\) and~\(G_2\) in \cref{ex:rewrite:progagate}.
With these atoms,
rule \(r_1\) is instantiated in Row~1.5,
leading to new atoms over \(\Pred{controls}/2\).
Observe that, by selecting atom \(\AggregateAtom{a}{r}{X,Y}\) first,
\(\KwGroundRule\) can instantiate the rule without backtracking.

In the second iteration,
the newly obtained atoms over predicate \(\Pred{controls}/2\) yield atom \(\ElementAtom{e_1}{a}{r}{35,c_2,c_1,c_3}\) in Row~2.3,
which in turn leads to the aggregate atom \(\AggregateAtom{a}{r}{c_1,c_3}\)
resulting in further instances of \(r_4\).
Note that this corresponds to the propagation of the set~\(G_3\) in \cref{ex:rewrite:progagate}.

The following iterations proceed in a similar fashion until no new atoms
are accumulated and the grounding loop terminates.
Note that the utilized selection strategy affects the amount of backtracking in rule instantiation.
One particular strategy used in \gringo\ is to prefer atoms over recursive predicates.
If there is only one such atom,
\(\KwGroundRule\) can select this atom first and only has to consider newly derived atoms for instantiation.
The table is made more compact by applying this strategy.
Further techniques are available in the database literature~\cite{ullman88a}
that also work in case of multiple atoms over recursive predicates.

To conclude, we give the ground rules as output by a grounder like \gringo\ in \cref{ex:company:grounding}.
We omit the translation of aggregates because its main objective is to show correctness of the algorithms.
Solvers like \clasp\ implement translations or even native handling of aggregates geared toward efficient solving~\cite{gekakasc09a}.
Since our example program is positive, \gringo\ is even able to completely evaluate the rules to facts omitting true literals from rule bodies marked in green.
\begin{table}
\begin{align*}
\Pred{controls}(c_1,c_2) \leftarrow {}
& \DG \Sum^+ \{ 60 : \Pred{owns}(c_1,c_2,60) \} > 50 \\
{} \wedge {}
& {\DG \Pred{company}(c_1)} \wedge {\DG \Pred{company}(c_2)} \wedge {\DG c_1 \neq c_2} \\
\Pred{controls}(c_3,c_4) \leftarrow {}
& \DG \Sum^+ \{ 51 : \Pred{owns}(c_3,c_4,51) \} > 50 \\
{} \wedge {}
& {\DG \Pred{company}(c_3)} \wedge {\DG \Pred{company}(c_4)} \wedge {\DG c_3 \neq c_4} \\
\Pred{controls}(c_1,c_3) \leftarrow {}
& \!\begin{aligned}[t]
\DG \Sum^+ \{
& \DG 20 : \Pred{owns}(c_1,c_3,20); \\
& \DG 35,c_2: \Pred{controls}(c_1,c_2) \wedge \Pred{owns}(c_2,c_3,35) \} > 50
\end{aligned} \\
{} \wedge {}
& {\DG \Pred{company}(c_1)} \wedge {\DG \Pred{company}(c_3)} \wedge {\DG c_1 \neq c_3} \\
\Pred{controls}(c_1,c_4) \leftarrow {}
& \DG \Sum^+ \{ 51,c_3: \Pred{controls}(c_1,c_3) \wedge \Pred{owns}(c_3,c_4,51) \} > 50 \\
{} \wedge {}
& {\DG \Pred{company}(c_1)} \wedge {\DG \Pred{company}(c_4)} \wedge {\DG c_1 \neq c_4}
\end{align*}
\caption{Grounding of the company controls problem from \cref{ex:company} as output by \gringo.}\label{ex:company:grounding}
\end{table}

\end{example}

 \section{Refinements}\label{sec:refinements}

Up to now, we were primarily concerned by characterizing the theoretical and algorithmic cornerstones of grounding.
This section refines these concepts by further detailing aggregate propagation, algorithm specifics, and the treatment
of language constructs from \gringo's input language.

\subsection{Aggregate propagation}\label{sec:propagate-aggregates}

We used in \cref{sec:algorithms} the relative translation of aggregates for propagation,
namely, formula \(\TransAD{G}{a\sigma}\) in \cref{def:propagate}, to check whether an aggregate is satisfiable.
In this section, we identify several aggregate specific properties that allow us to implement more efficient algorithms to perform this check.

To begin with,
we establish some properties that greatly simplify the treatment of (arbitrary) monotone or antimonotone aggregates.

We have already seen in \cref{prp:translation:monotonicity} that
\(\IDReductP{\TransA{a}}{I}\) is classically equivalent to \(\TransA{a}\)
for any closed aggregate \(a\) and two-valued interpretation~\(I\).
Here is its counterpart for antimonotone aggregates.

\begin{proposition}\label{prp:translation:antimonotonicity}
Let \(a\) be a closed aggregate.

If \(a\) is antimonotone, then
\(\IDReductP{\TransA{a}}{I}\) is classically equivalent to \(\top\) if \(I \models \TransA{a}\) and \(\bot\) otherwise
for any two-valued interpretation~\(I\).
\end{proposition}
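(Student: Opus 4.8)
The plan is to compute the \FOID-reduct $\IDReductP{\TransA{a}}{I}$ explicitly and simplify it classically, deferring the use of antimonotonicity to the very end. First I would unfold $\TransA{a}$ from \cref{def:translation:aggregate} as the conjunction, over all $\ElemJ \subseteq \Grd{E}$ with $\ElemJ \notjustifies a$, of the implications $\Trans{\ElemJ}^\wedge \rightarrow \TransAD{a}{\ElemJ}^\vee$, and push the reduct through the outer conjunction. For a single such implication, $\IDReductPC{F \rightarrow G}{I} = \IDReductN{F}{I} \rightarrow \IDReductP{G}{I}$. Here $F = \Trans{\ElemJ}^\wedge$ is a (nested) conjunction of the condition atoms $\bigcup_{e \in \ElemJ}\Body{e}$, so its \emph{negative} reduct replaces each atom by $\top$ or $\bot$ and is classically equivalent to $\top$ exactly when all these atoms lie in $I$, that is, when $\ElemJ \subseteq \Restrict{\Grd{E}}{I}$, and to $\bot$ otherwise. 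The consequent $G = \TransAD{a}{\ElemJ}^\vee$ is a positive formula (a disjunction of conjunctions of atoms), so its \emph{positive} reduct leaves it unchanged.

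Consequently, $\IDReductP{\TransA{a}}{I}$ is classically equivalent to the conjunction $\bigwedge \{ \TransAD{a}{\ElemJ}^\vee \mid \ElemJ \subseteq \Restrict{\Grd{E}}{I},\ \ElemJ \notjustifies a \}$, since every implication whose antecedent collapses to $\bot$ is vacuously true. At this point I would record the bridge between satisfaction of the translation and justification, namely that $I \models \TransA{a}$ iff $\Restrict{\Grd{E}}{I} \justifies a$ — this is the usual soundness of the Ferraris-style translation and can be read off the same computation, the critical direction being that $\ElemJ = \Restrict{\Grd{E}}{I}$, should it fail to justify $a$, witnesses $I \not\models \TransA{a}$.

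The final step is the case split driven by antimonotonicity of $a$. If $\Restrict{\Grd{E}}{I} \justifies a$, then antimonotonicity forces every subset of $\Restrict{\Grd{E}}{I}$ to justify $a$ as well, so the index set $\{ \ElemJ \subseteq \Restrict{\Grd{E}}{I} \mid \ElemJ \notjustifies a \}$ is empty and the remaining conjunction is the empty conjunction $\top$; moreover $I \models \TransA{a}$ by the bridge. If instead $\Restrict{\Grd{E}}{I} \notjustifies a$, then $\ElemJ = \Restrict{\Grd{E}}{I}$ itself appears in the conjunction, and I claim $\TransAD{a}{\ElemJ}$ is empty: any $\ElemC \subseteq \Grd{E} \setminus \Restrict{\Grd{E}}{I}$ with $\ElemC \cup \Restrict{\Grd{E}}{I} \justifies a$ would, by antimonotonicity applied to $\Restrict{\Grd{E}}{I} \subseteq \ElemC \cup \Restrict{\Grd{E}}{I}$, give $\Restrict{\Grd{E}}{I} \justifies a$, a contradiction. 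Hence this conjunct equals $\emptyset^\vee = \bot$, so the whole reduct is $\bot$, and $I \not\models \TransA{a}$ by the bridge. This matches the two cases of the claim.

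I expect the main obstacle to be the bookkeeping of the \FOID-reduct on the nested conjunctions and disjunctions — precisely verifying that the negative reduct of the antecedent collapses to a truth value while the positive reduct of the consequent is inert — together with pinning down the bridge equivalence $I \models \TransA{a}$ iff $\Restrict{\Grd{E}}{I} \justifies a$ cleanly enough to invoke it in both cases. Once these are in place, antimonotonicity is used twice in an entirely routine way, mirroring the structure of the monotone counterpart \cref{prp:translation:monotonicity}.
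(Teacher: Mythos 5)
Your proof is correct, but it takes a somewhat different route from the paper's. The paper applies antimonotonicity once and globally: for \emph{every} \(\ElemJ \subseteq \Grd{E}\) with \(\ElemJ \notjustifies a\), the consequent \(\TransAD{a}{\ElemJ}\) is empty (any \(\ElemC\) with \(\ElemC \cup \ElemJ \justifies a\) would force \(\ElemJ \justifies a\)), so \(\TransA{a}\) is a conjunction of negations \(\Trans{\ElemJ}^\wedge \rightarrow \bot\); the \FOID-reduct then evaluates every atom occurrence and yields a truth constant, and \emph{which} constant it is follows immediately from \nref{lem:foid-basic}{b} (\(I \models F\) iff \(I \models \IDReductP{F}{I}\)) with no further case analysis. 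You instead compute the reduct explicitly as \({\{ \TransAD{a}{\ElemJ}^\vee \mid \ElemJ \subseteq \Restrict{\Grd{E}}{I},\ \ElemJ \notjustifies a \}}^\wedge\), establish the bridge \(I \models \TransA{a}\) iff \(\Restrict{\Grd{E}}{I} \justifies a\), and then use antimonotonicity twice in a case split on whether \(\Restrict{\Grd{E}}{I}\) justifies \(a\). Both arguments are sound; yours is longer because it re-derives, via the bridge, exactly what \nref{lem:foid-basic}{b} already provides, but in exchange it produces an explicit normal form of the reduct and the concrete satisfaction criterion \(\Restrict{\Grd{E}}{I} \justifies a\), which the paper's more economical argument never makes visible.
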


\begin{example}
In \cref{ex:rewrite:progagate}, we check whether the interpretation \(J\) satisfies the formulas~\(\IDReductP{\TransAD{G_1}{a\sigma_1}}{I}\) to \(\IDReductP{\TransAD{G_4}{a\sigma_4}}{I}\).

Using \cref{prp:translation:monotonicity}, this boils down to checking \(\sum_{e \in G_i, J \models \Body{e}}\Head{e} > 50\) for each \(1 \leq i \leq 4\).
We get \(60 > 50\), \(51 > 50\), \(55 > 50\), and \(35 \not> 50\) for each  \(G_i\),
which agrees with checking \(J \models \IDReductP{\TransAD{G_i}{a\sigma_i}}{I}\).
\end{example}

An actual implementation can maintain a counter for the current value of the sum for each closed aggregate instance,
which can be updated incrementally and compared with the bound as new instances of aggregate elements are grounded.

Next, we see that such counter based implementations are also possible \(\Sum\) aggregates
using the \(<\), \(\leq\), \(>\), or \(\geq\) relations.
We restrict our attention to finite interpretations
because \cref{prp:sum-aggregate-pm} is intended to give an idea on
how to implement an actual propagation algorithm for aggregates
(infinite interpretations would add more special cases).
Furthermore, we just consider the case that the bound is an integer here;
the aggregate is constant for any other ground term.

\begin{proposition}\label{prp:sum-aggregate-pm}
Let \(I\) be a finite two-valued interpretation,
\(E\) be a set of aggregate elements, and
\(b\) be an integer.

For \(T = \Tuple{\{e \in \Grd{E} \mid I \models \Cond{e}\}}\),
we get
\begin{alphaenum}
\item
\(\IDReductP{\TransA{\SumA{\ElemF}{\succ}{b}}}{I}\) is classically equivalent to \(\TransA{\SumPA{\ElemF}{\succ}{b'}}\)\\ with
\({\succ} \in \{ {\geq}, {>} \}\) and
\(b' = b - \SumM(T)\), and\label{prp:sum-aggregate-pm:a}
\item
\(\IDReductP{\TransA{\SumA{\ElemF}{\prec}{b}}}{I}\) is classically equivalent to \(\TransA{\SumMA{\ElemF}{\prec}{b'}}\)\\ with
\({\prec} \in \{ {\leq}, {<} \}\) and
\(b' = b - \SumP(T)\).\label{prp:sum-aggregate-pm:b}
\end{alphaenum}
\end{proposition}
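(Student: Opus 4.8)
The statement asserts a classical equivalence, so the plan is to fix an arbitrary two-valued interpretation $X$ and show that $X \models \IDReductP{\TransA{\SumA{\ElemF}{\succ}{b}}}{I}$ holds exactly when $X \models \TransA{\SumPA{\ElemF}{\succ}{b'}}$ does; part~(b) then follows by a sign-reversing dualization. I write $E_I = \{e \in \Grd{\ElemF} \mid I \models \Cond{e}\}$ and $E_X = \{e \in \Grd{\ElemF} \mid X \models \Cond{e}\}$, so that $T = \Tuple{E_I}$. First I would record the \emph{base characterization} that, for any closed aggregate $a$ with element set $\ElemF$, one has $X \models \TransA{a}$ iff $E_X \justifies a$: the direction ``$\Leftarrow$'' satisfies each conjunct using $\ElemC = E_X \setminus \ElemJ$, and ``$\Rightarrow$'' instantiates the offending conjunct at $\ElemJ = E_X$ and uses that $\Trans{\ElemC}^\wedge$ forces $\ElemC \subseteq E_X$, hence $\ElemC = \emptyset$ and $E_X \justifies a$. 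Applied to the target this already gives $X \models \TransA{\SumPA{\ElemF}{\succ}{b'}}$ iff $\SumP(\Tuple{E_X}) \succ b'$.

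Next I would compute the reduct on the left. Since $\TransA{a}$ is a conjunction of implications $\Trans{\ElemJ}^\wedge \rightarrow \TransAD{a}{\ElemJ}^\vee$ ranging over $\ElemJ \notjustifies a$, the \FOID-reduct treats each antecedent negatively and each consequent positively. The antecedent $\IDReductN{\Trans{\ElemJ}^\wedge}{I}$ evaluates to $\top$ when $\ElemJ \subseteq E_I$ and to $\bot$ otherwise, so exactly the conjuncts with $\ElemJ \subseteq E_I$ survive while the positively occurring consequent is left unchanged. Unfolding the surviving disjunctions yields the model condition: $X \models \IDReductP{\TransA{a}}{I}$ iff for every $\ElemJ \subseteq E_I$ with $\ElemJ \notjustifies a$ there is some $\ElemC \subseteq E_X \setminus \ElemJ$ with $\ElemC \cup \ElemJ \justifies a$.

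The arithmetic core — and the step I expect to be the main obstacle — is to collapse this universally quantified condition, for $a = \SumA{\ElemF}{\geq}{b}$, into the single inequality $\SumP(\Tuple{E_X}) \geq b - \SumM(T)$. Additivity of $\Sum$ over sets of tuples, together with the crucial fact that $\Weight{\cdot}$ is a function of the tuple (so positive and negative tuples never coincide and removing one never suppresses the other), lets me evaluate the inner existential in closed form: the best choice of $\ElemC$ realizes every positive tuple available from $E_X$, giving $\max_{\ElemC} \Sum(\Tuple{\ElemC \cup \ElemJ}) = \Sum(\Tuple{\ElemJ} \cup P_X)$ where $P_X$ is the set of positive tuples of $E_X$. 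An exchange argument then shows this quantity is minimized over $\ElemJ \subseteq E_I$ by $D^{*} = \{ e \in E_I \mid \Weight{\Tuple{e}} < 0 \}$: including every negative element drives $\SumM(\Tuple{\ElemJ})$ down to $\SumM(T)$, while dropping all positive elements keeps the positive part at $\SumP(\Tuple{E_X})$, so the minimum value equals $\SumM(T) + \SumP(\Tuple{E_X})$. Since $\Sum(\Tuple{D^{*}}) = \SumM(T)$, the set $D^{*}$ also witnesses that the constraint $\ElemJ \notjustifies a$ is non-vacuous precisely when $\SumM(T) < b$; in the vacuous case both sides hold trivially because then $b - \SumM(T) \leq 0 \leq \SumP(\Tuple{E_X})$. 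Comparing the minimum against $b$ thus gives $\SumP(\Tuple{E_X}) \geq b - \SumM(T)$, i.e.\ $E_X \justifies \SumPA{\ElemF}{\geq}{b'}$, which by the base characterization is $X \models \TransA{\SumPA{\ElemF}{\geq}{b'}}$. The relation $>$ is handled identically with strict inequalities throughout.

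Finally, part~(b) is the mirror image: for ${\prec} \in \{\leq, <\}$ the inner choice of $\ElemC$ realizes every \emph{negative} tuple available from $E_X$, contributing $\SumM(\Tuple{E_X})$, and the binding $\ElemJ$ is the set of positive elements of $E_I$, contributing $\SumP(T)$; comparing against $b$ yields $\SumM(\Tuple{E_X}) \leq b - \SumP(T)$, i.e.\ the $\SumM$-aggregate with bound $b' = b - \SumP(T)$. Throughout I would use finiteness of $I$ (hence of $E_I$ and of $T$) to ensure $\SumM(T)$ and $\SumP(T)$ are genuine integers, so that $b'$ is well defined; the monotonicity of $\SumP$ and $\SumM$ aggregates (\cref{prp:aggregate:monotonicity}) serves as a consistency check, since by \cref{prp:translation:monotonicity} the right-hand side is already reduct-invariant.
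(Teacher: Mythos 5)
Your proof is correct, and the combinatorial heart of it --- that the extremal antecedent set consists of exactly the negative-weight elements whose conditions hold in \(I\) --- coincides with the paper's key construction; but the architecture is genuinely different, so a comparison is worthwhile. The paper argues by contraposition in both directions: from a conjunct \({\Trans{\ElemJ}}^\wedge \rightarrow \TransAD{a}{\ElemJ}^\vee\) falsified under the reduct it passes to the saturated set \(\widehat{\ElemJ} = \ElemJ \cup \{ e \in \Grd{E} \mid I \models \Trans{e}, \Weight{\Tuple{e}} < 0 \}\), checks that saturation preserves falsification, and then compares the consequents of the two aggregates at \(\widehat{\ElemJ}\) by a clause-subsumption argument (every disjunct of \(\TransAD{a_+}{\widehat{\ElemJ}}\) involving a negative-weight element is subsumed by one without it), never needing a global satisfaction criterion. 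You instead first prove the closed-form characterizations \(X \models \TransA{a}\) iff \(E_X \justifies a\) and \(X \models \IDReductP{\TransA{a}}{I}\) iff every \(\ElemJ \subseteq E_I\) with \(\ElemJ \notjustifies a\) admits some \(\ElemC \subseteq E_X \setminus \ElemJ\) with \(\ElemC \cup \ElemJ \justifies a\), and then collapse the latter by an explicit min--max computation whose optimum is your \(D^{*}\) --- which is precisely the paper's \(\widehat{\ElemJ}\) taken at \(\ElemJ = \emptyset\). Your route costs an extra lemma but makes the arithmetic fully explicit, cleanly isolates the vacuous case \(\SumM(T) \succeq b\) that the paper leaves implicit, and avoids the paper's tacit appeal to \cref{prp:translation:monotonicity} to pass from equivalence of the two reducts to equivalence with \(\TransA{\SumPA{\ElemF}{\succ}{b'}}\) itself. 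One small caveat: finiteness of \(I\) does not literally force \(E_I\) or \(T\) to be finite (elements with empty conditions survive any interpretation), but the paper makes the same tacit assumption, and your argument only needs \(\SumM(T)\) and \(\SumP(T)\) to be integers, which the statement presupposes by declaring \(b'\) an integer bound.
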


The remaining propositions identify properties that can be exploited when propagating aggregates over the \(=\) and \(\neq\) relations.

\begin{proposition}\label{prp:aggregate-generic-implies}
Let \(I\) be a two-valued interpretation,
\(E\) be a set of aggregate elements, and
\(b\) be a ground term.

We get the following properties:
\begin{alphaenum}
\item \(\IDReductP{\TransA{\Aggregate{f}{\ElemF}{<}{b}}}{I} \vee \IDReductP{\TransA{\Aggregate{f}{\ElemF}{>}{b}}}{I}
  \text{ implies }
  \IDReductP{\TransA{\Aggregate{f}{\ElemF}{\neq}{b}}}{I}\), and\label{prp:aggregate-generic-implies:a}
\item \(\IDReductP{\TransA{\Aggregate{f}{\ElemF}{=}{b}}}{I}
  \text{ implies }
  \IDReductP{\TransA{\Aggregate{f}{\ElemF}{\leq}{b}}}{I} \wedge \IDReductP{\TransA{\Aggregate{f}{\ElemF}{\geq}{b}}}{I}\).\label{prp:aggregate-generic-implies:b}
\end{alphaenum}
\end{proposition}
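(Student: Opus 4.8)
The plan is to reduce both implications to a conjunct-wise comparison of the augmented translations $\TransA{a_<},\dots,\TransA{a_\geq}$, where I abbreviate $\Aggregate{f}{\ElemF}{\prec}{b}$ by $a_\prec$. The enabling step is a normal form for $\IDReductP{\TransA{a_\prec}}{I}$. Recall that every conjunct of $\TransA{a_\prec}$ has the shape $\Trans{\ElemJ}^\wedge \rightarrow \TransAD{a_\prec}{\ElemJ}^\vee$ with $\ElemJ \notjustifies a_\prec$, that conditions of aggregate elements are conjunctions of atoms, and hence that the antecedent $\Trans{\ElemJ}^\wedge$ consists solely of negatively occurring atoms while the consequent $\TransAD{a_\prec}{\ElemJ}^\vee$ consists solely of positively occurring ones and contains no implications. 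Therefore the \FOID-reduct turns the antecedent into $\IDReductN{\Trans{\ElemJ}^\wedge}{I}$, which is classically $\top$ when $\ElemJ \subseteq \Restrict{\Grd{E}}{I}$ and $\bot$ otherwise, and returns the consequent verbatim. Dropping the conjuncts whose antecedent became $\bot$, I obtain that
\begin{align*}
\IDReductP{\TransA{a_\prec}}{I} \text{\ is classically equivalent to\ } {\{ \TransAD{a_\prec}{\ElemJ}^\vee \mid \ElemJ \subseteq \Restrict{\Grd{E}}{I},\ \ElemJ \notjustifies a_\prec \}}^\wedge.
\end{align*}

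Next I record how the relation symbol alone governs both the index sets $\{\ElemJ \mid \ElemJ \notjustifies a_\prec\}$ and the consequent sets $\TransAD{a_\prec}{\ElemJ}$. Since $\ElemJ \justifies a_\prec$ means $f(\Head{\ElemJ}) \prec b$ and ground terms are totally ordered, I have for every $\ElemC \subseteq \Grd{E}$ that either of $\ElemC \justifies a_<$ and $\ElemC \justifies a_>$ implies $\ElemC \justifies a_{\neq}$, and that $\ElemC \justifies a_=$ implies both $\ElemC \justifies a_\leq$ and $\ElemC \justifies a_\geq$; dually, $\ElemJ \notjustifies a_{\neq}$ implies $\ElemJ \notjustifies a_<$ and $\ElemJ \notjustifies a_>$, while $\ElemJ \notjustifies a_\leq$ and $\ElemJ \notjustifies a_\geq$ each imply $\ElemJ \notjustifies a_=$. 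Applying the first group of implications to the defining condition $\ElemC \cup \ElemJ \justifies a_\prec$ of $\TransAD{a_\prec}{\ElemJ}$ yields the inclusions $\TransAD{a_<}{\ElemJ} \subseteq \TransAD{a_{\neq}}{\ElemJ}$, $\TransAD{a_>}{\ElemJ} \subseteq \TransAD{a_{\neq}}{\ElemJ}$, $\TransAD{a_=}{\ElemJ} \subseteq \TransAD{a_\leq}{\ElemJ}$, and $\TransAD{a_=}{\ElemJ} \subseteq \TransAD{a_\geq}{\ElemJ}$; since a disjunction over a subset entails the disjunction over the superset, the corresponding $(\cdot)^\vee$ entailments follow.

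Assembling these pieces gives both claims. For part~(a), suppose an interpretation $X$ satisfies $\IDReductP{\TransA{a_<}}{I} \vee \IDReductP{\TransA{a_>}}{I}$; without loss of generality let $X \models \IDReductP{\TransA{a_<}}{I}$. Given any $\ElemJ \subseteq \Restrict{\Grd{E}}{I}$ with $\ElemJ \notjustifies a_{\neq}$, the dual implications give $\ElemJ \notjustifies a_<$, so by the normal form $X \models \TransAD{a_<}{\ElemJ}^\vee$, whence $X \models \TransAD{a_{\neq}}{\ElemJ}^\vee$; as $\ElemJ$ was arbitrary, $X \models \IDReductP{\TransA{a_{\neq}}}{I}$, and the case $X \models \IDReductP{\TransA{a_>}}{I}$ is symmetric. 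For part~(b), suppose $X \models \IDReductP{\TransA{a_=}}{I}$. For each $\ElemJ \subseteq \Restrict{\Grd{E}}{I}$ with $\ElemJ \notjustifies a_\leq$ the dual implications give $\ElemJ \notjustifies a_=$, so $X \models \TransAD{a_=}{\ElemJ}^\vee$ and hence $X \models \TransAD{a_\leq}{\ElemJ}^\vee$; running the same argument with $a_\geq$ shows $X \models \IDReductP{\TransA{a_\geq}}{I}$, so $X$ satisfies the conjunction on the right.

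The step I expect to be the main obstacle is the normal-form equivalence of the first paragraph: it requires careful bookkeeping of positive and negative occurrences through the \FOID-reduct, confirming that each antecedent is a conjunction of negatively occurring atoms that collapses to $\top$ or $\bot$ according to whether $\ElemJ \subseteq \Restrict{\Grd{E}}{I}$, and that the consequent, being strictly positive, is left unchanged. Once this is in place, the remainder is the routine monotonicity of the justifies relation in the relation symbol, applied conjunct by conjunct.
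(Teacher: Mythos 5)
Your proof is correct and follows essentially the same route as the paper's: a conjunct-by-conjunct comparison driven by the fact that $\ElemJ \notjustifies a_{\neq}$ implies $\ElemJ \notjustifies a_{<}$ (resp.\ $\ElemJ \notjustifies a_{\leq}$ implies $\ElemJ \notjustifies a_{=}$) together with the inclusion of the consequent sets $\TransAD{a_\prec}{\ElemJ}$. Your explicit normal form for $\IDReductP{\TransA{a_\prec}}{I}$ is just a cleaner packaging of the step the paper states as ``$I \not\models \Trans{D}^\wedge$ or $J \models \TransAD{a_{<}}{\ElemJ}^\vee$'', so the two arguments coincide in substance.
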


The following proposition identifies special cases when the implications in \cref{prp:aggregate-generic-implies} are equivalences.
Another interesting aspect of this proposition is that we can actually replace \(\Sum\) aggregates over \(=\) and \(\neq\) with a conjunction or disjunction, respectively,
at the expense of calculating a less precise approximate model.
The conjunction is even strongly equivalent to the original aggregate under Ferraris' semantics but not the disjunction.

\begin{proposition}\label{prp:sum-aggregate-implies}
Let \(I\) and \(J\) be two-valued interpretations,
\(f\) be an aggregate function among \(\Count\), \(\SumP\), \(\SumM\) or \(\Sum\),
\(E\) be a set of aggregate elements, and
\(b\) be an integer.

We get the following properties:
\begin{alphaenum}
\item for \(I \subseteq J\), we have
\(J \models \IDReductP{\TransA{\Aggregate{f}{\ElemF}{<}{b}}}{I} \vee \IDReductP{\TransA{\Aggregate{f}{\ElemF}{>}{b}}}{I}\) iff
\(J \models \IDReductP{\TransA{\Aggregate{f}{\ElemF}{\neq}{b}}}{I}\), and\label{prp:sum-aggregate-implies:a}
\item for \(J \subseteq I\), we have
\(J \models \IDReductP{\TransA{\Aggregate{f}{\ElemF}{=}{b}}}{I}\) iff
\(J \models \IDReductP{\TransA{\Aggregate{f}{\ElemF}{\leq}{b}}}{I} \wedge \IDReductP{\TransA{\Aggregate{f}{\ElemF}{\geq}{b}}}{I}\).\label{prp:sum-aggregate-implies:b}
\end{alphaenum}
\end{proposition}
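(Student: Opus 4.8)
The plan is to convert both biconditionals into a single combinatorial criterion on the subset lattice of aggregate element instances, to dispatch the two forward directions to \cref{prp:aggregate-generic-implies}, and to prove each reverse direction by a short lattice argument that exploits the respective hypothesis \(I \subseteq J\) or \(J \subseteq I\). Throughout I would write \(a_{\prec}\) for the aggregate \(f\{E\}\prec b\), abbreviate \(E_X = \Restrict{\Grd{E}}{X} = \{e \in \Grd{E} \mid \Body{e} \subseteq X\}\) for the instances whose condition holds in \(X\) (cf.\ \cref{def:translation:aggregate:restricted}), and note that \(\ElemJ \justifies a_{\prec}\) iff \(f(\Head{\ElemJ}) \prec b\). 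Since \(\Body{e}\) is a conjunction of atoms, \(E_X\) is monotone in \(X\), so \(I \subseteq J\) gives \(E_I \subseteq E_J\) and \(J \subseteq I\) gives \(E_J \subseteq E_I\).

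First I would establish the criterion. Unfolding \cref{def:translation:aggregate}, \(\TransA{a_{\prec}}\) is a conjunction of implications \(\Trans{\ElemJ}^\wedge \to \TransAD{a_{\prec}}{\ElemJ}^\vee\) ranging over \(\ElemJ \notjustifies a_{\prec}\), in which every antecedent atom occurs negatively and every consequent atom strictly positively. Hence the \FOID-reduct turns each antecedent conjunction into \(\top\) exactly when \(\ElemJ \subseteq E_I\) (and into \(\bot\) otherwise) while leaving the consequent atoms untouched; evaluating the surviving disjunctions in \(J\) keeps precisely those \(\ElemC\) with \(\ElemC \subseteq E_J\). This yields
\[
J \models \IDReductP{\TransA{a_{\prec}}}{I}
\iff
\forall\,\ElemJ \subseteq E_I\;\bigl(f(\Head{\ElemJ}) \not\prec b \Rightarrow \exists\,\ElemC \subseteq E_J\setminus\ElemJ\; f(\Head{\ElemJ\cup\ElemC}) \prec b\bigr).
\]
I would record this as a lemma; its proof is the routine computation sketched above and already explains why the two interpretations play complementary roles.

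With the criterion in hand, the forward directions are free: \cref{prp:aggregate-generic-implies:a,prp:aggregate-generic-implies:b} give the required classical implications at the level of formulas, so any model \(J\) of the left-hand side models the right-hand side, with no hypothesis on \(I,J\). For the reverse direction of~\ref{prp:sum-aggregate-implies:a} I would use \(E_I \subseteq E_J\): assuming the \(\neq\)-reduct holds but both the \(<\)- and \(>\)-reducts fail, the criterion yields \(\ElemJ_1 \subseteq E_I\) whose whole up-set \([\ElemJ_1,E_J]\) stays \(\ge b\) and \(\ElemJ_2 \subseteq E_I\) whose up-set \([\ElemJ_2,E_J]\) stays \(\le b\). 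Then \(\ElemJ_1 \cup \ElemJ_2 \subseteq E_I\) lies in both up-sets, so its value is pinned to \(b\); applying the \(\neq\)-reduct to it produces an extension inside \([\ElemJ_1,E_J]\cap[\ElemJ_2,E_J]\) of value \(\neq b\), contradicting the pinning.

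The reverse direction of~\ref{prp:sum-aggregate-implies:b} I would argue contrapositively using \(E_J \subseteq E_I\). If the \(=\)-reduct fails, the criterion supplies \(\ElemJ_0 \subseteq E_I\) whose entire reachable interval \([\ElemJ_0,\ElemJ_0 \cup E_J]\) avoids the value \(b\). The tempting move---adding elements of \(E_J\) one at a time to interpolate to value exactly \(b\)---is precisely what breaks down for \(\Sum\) (and even for \(\Count\) under \(=\)), since a single element may carry weight larger than one and the value can jump across \(b\); this is the main obstacle. The clean fix avoids interpolation altogether: take the top element \(U = \ElemJ_0 \cup E_J\) of the interval, which satisfies \(U \subseteq E_I\), \(E_J \setminus U = \emptyset\), and \(f(\Head{U}) \neq b\). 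If \(f(\Head{U}) > b\) the \(\le\)-reduct already fails at \(U\) (no proper extension exists), and if \(f(\Head{U}) < b\) the \(\ge\)-reduct fails at \(U\); either way the conjunction of the \(\le\)- and \(\ge\)-reducts cannot hold. Both reverse arguments are uniform in \(f\): the restriction to \(\Count\), \(\SumP\), \(\SumM\), \(\Sum\) and an integer bound enters only through the totality of the order on aggregate values, which justifies the trichotomy and the reading of \(\not\prec\) as the complementary relation used throughout.
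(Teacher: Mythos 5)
Your proposal is correct, but it takes a genuinely different route from the paper. The paper restricts attention to \(f=\Sum\) (treating the other functions as special cases), converts each reduct \(\IDReductP{\TransA{\SumA{E}{\prec}{b}}}{I}\) into a monotone \(\SumP\)/\(\SumM\) aggregate with a shifted bound via \cref{prp:sum-aggregate-pm}, collapses satisfaction of those monotone translations to single numeric inequalities via \cref{prp:translation:monotonicity}, and then does arithmetic on \(\SumP(T_I),\SumM(T_J),\dots\) to force \(\Sum(T_I)=\Sum(T_J)=b\), from which a violated implication of the \(\neq\)- (resp.\ \(=\)-) reduct is exhibited. You instead unfold \(\IDReductP{\TransA{a_\prec}}{I}\) once into the combinatorial criterion over \(D\subseteq\Restrict{\Grd{E}}{I}\) and \(C\subseteq\Restrict{\Grd{E}}{J}\setminus D\) (this unfolding is indeed routine and correct: antecedents reduce to \(\top\) exactly on \(D\subseteq\Restrict{\Grd{E}}{I}\), consequents are untouched), dispatch the easy directions to \cref{prp:aggregate-generic-implies}, and handle the hard directions by a lattice argument: for~(a), the union \(\ElemJ_1\cup\ElemJ_2\) of the two witnesses lies in both up-sets, so every extension inside \(\Restrict{\Grd{E}}{J}\) is pinned to value \(b\), contradicting the \(\neq\)-criterion; for~(b), evaluating at the top element \(\ElemJ_0\cup\Restrict{\Grd{E}}{J}\) (which lies in \(\Restrict{\Grd{E}}{I}\) because \(J\subseteq I\)) leaves no room for a witness \(C\), so one of the \(\leq\)-/\(\geq\)-reducts fails by trichotomy. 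Both containments \(\Restrict{\Grd{E}}{I}\subseteq\Restrict{\Grd{E}}{J}\) (case a) and \(\Restrict{\Grd{E}}{J}\subseteq\Restrict{\Grd{E}}{I}\) (case b) are used exactly where needed, and the trichotomy check survives the \(\pm\infty\) conventions. What your route buys is uniformity: it needs nothing specific to sum-like functions beyond a totally ordered value domain, and it avoids the detour through \cref{prp:sum-aggregate-pm} (which is only stated for finite interpretations). What the paper's route buys is the explicit numeric characterization \(\Sum(T_I)=\Sum(T_J)=b\), which ties directly into the counter-based propagation machinery of \cref{sec:propagate-aggregates}. The only cosmetic blemish is your interval notation \([\ElemJ_1,E_J]\), whose top should strictly be \(\ElemJ_1\cup\Restrict{\Grd{E}}{J}\); this is harmless in case~(a) since \(\ElemJ_1\subseteq\Restrict{\Grd{E}}{J}\), and you write the top correctly in case~(b).
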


The following proposition shows
that full propagation of \(\Sum\), \(\SumP\), or \(\SumM\) aggregates over relations \(=\) and \(\neq\) involves solving the subset sum problem~\cite{martot90a}.
We assume that we propagate w.r.t.\ some polynomial number of aggregate elements.
Propagating possible atoms when using the \(=\) relation, i.e., when \(I \subseteq J\), involves deciding an NP problem
and propagating certain atoms when using the \(\neq\) relation, i.e., when \(J \subseteq I\), involves deciding a co-NP problem.\footnote{Note that \clingo's grounding algorithm does not attempt to solve these problems in all cases.
It simply over- or underapproximates the satisfiability using \cref{prp:aggregate-generic-implies}.}
Note that the decision problem for \(\Count\) aggregates is polynomial, though.

\begin{proposition}\label{prp:propagate-generic-iff}
Let \(I\) and \(J\) be finite two-valued interpretations,
\(f\) be an aggregate function,
\(E\) be a set of aggregate elements, and
\(b\) be a ground term.

For
\(T_I = \{ \Head{e} \mid e \in \Grd{E}, I \models \Body{e} \}\) and
\(T_J = \{ \Head{e} \mid e \in \Grd{E}, J \models \Body{e} \}\),
we get the following properties:
\begin{alphaenum}
\item for \(J \subseteq I\), we have \(J \models \IDReductP{\TransA{\Aggregate{f}{\ElemF}{\neq}{b}}}{I}\) iff
there is no set \(X \subseteq T_I\) such that \(f(X \cup T_J) = b\), and\label{prp:propagate-generic-iff:a}
\item for \(I \subseteq J\), we have \(J \models \IDReductP{\TransA{\Aggregate{f}{\ElemF}{=}{b}}}{I}\) and iff\label{prp:propagate-generic-iff:b}
there is a set \(X \subseteq T_J\) such that \(f(X \cup T_I) = b\).
\end{alphaenum}
\end{proposition}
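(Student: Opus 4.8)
The plan is to compute the \FOID-reduct \(\IDReductP{\TransA{a}}{I}\) explicitly enough to decide when \(J\) satisfies it, and then to translate the resulting combinatorial condition on sets of aggregate elements into the two subset-sum statements. Throughout I write \(a = \Aggregate{f}{E}{\prec}{b}\) with \({\prec} \in \{{=},{\neq}\}\), set \(\Grd{E}_K = \{ e \in \Grd{E} \mid K \models \Body{e}\}\) for a two-valued interpretation \(K\), and note \(T_I = \Head{\Grd{E}_I}\) and \(T_J = \Head{\Grd{E}_J}\). Since every aggregate element condition is a conjunction of positive atoms, the subformulas \(\Trans{\ElemJ}^\wedge\) and \(\Trans{\ElemC}^\wedge\) occurring in \(\TransA{a}\) are conjunctions of atoms, so the \FOID-reduct acts on them trivially; this is what makes the reduct tractable.

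First I would establish the key reduction: \(J \models \IDReductP{\TransA{a}}{I}\) holds iff for every \(\ElemJ \subseteq \Grd{E}_I\) with \(\ElemJ \notjustifies a\) there is some \(\ElemC \subseteq \Grd{E}_J \setminus \ElemJ\) with \(\ElemC \cup \ElemJ \justifies a\). This follows directly from \cref{def:translation:aggregate}: each conjunct of \(\TransA{a}\) is \(\Trans{\ElemJ}^\wedge \rightarrow \TransAD{a}{\ElemJ}^\vee\) with \(\ElemJ \notjustifies a\); the antecedent reduct \(\IDReductN{(\Trans{\ElemJ}^\wedge)}{I}\) is classically \(\top\) exactly when \(\ElemJ \subseteq \Grd{E}_I\) and \(\bot\) otherwise (so those conjuncts with \(\ElemJ \not\subseteq \Grd{E}_I\) are trivially satisfied), while the consequent reduct leaves each positive conjunction \(\Trans{\ElemC}^\wedge\) unchanged; hence \(J\) satisfies the reduced disjunction iff some \(\ElemC\) with \(\ElemC \cup \ElemJ \justifies a\) has all its conditions true in \(J\), i.e.\ \(\ElemC \subseteq \Grd{E}_J\).

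Next I would pass from sets of elements to sets of tuples, exploiting that by the definitions of the aggregate functions each \(f\) depends only on the underlying tuple set \(\Head{\cdot}\). The crucial \emph{achievability} step is: for fixed \(\ElemJ\), as \(\ElemC\) ranges over \(\Grd{E}_J \setminus \ElemJ\), the value \(\Head{\ElemC \cup \ElemJ}\) ranges over exactly the tuple sets \(W\) with \(\Head{\ElemJ} \subseteq W \subseteq \Head{\ElemJ} \cup T_J\) — for each target tuple pick a witnessing element of \(\Grd{E}_J\), which automatically lies outside \(\ElemJ\) precisely because its tuple is new. Writing \(Y = \Head{\ElemJ}\), which ranges over all subsets of \(T_I\), the reduction becomes the master equivalence: \(J \models \IDReductP{\TransA{a}}{I}\) iff for every \(Y \subseteq T_I\) with \(f(Y) \not\prec b\) there is \(W\) with \(Y \subseteq W \subseteq Y \cup T_J\) and \(f(W) \prec b\).

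It then remains to specialize. For \cref{prp:propagate-generic-iff:b} (\(\prec\) is \(=\) and \(I \subseteq J\), whence \(T_I \subseteq T_J\) and \(Y \cup T_J = T_J\)) the condition reads: every \(Y \subseteq T_I\) with \(f(Y) \neq b\) admits \(W \in [Y, T_J]\) with \(f(W) = b\); instantiating \(Y = T_I\) yields some \(W \in [T_I, T_J]\), i.e.\ an \(X \subseteq T_J\) with \(f(X \cup T_I) = b\), and conversely any such \(W\) witnesses the condition for every \(Y \subseteq T_I \subseteq W\). For \cref{prp:propagate-generic-iff:a} (\(\prec\) is \(\neq\) and \(J \subseteq I\), whence \(T_J \subseteq T_I\)) the condition reads: every \(Y \subseteq T_I\) with \(f(Y) = b\) admits \(W \in [Y, Y \cup T_J]\) with \(f(W) \neq b\); taking \(W = Y \cup T_J \in [T_J, T_I]\) shows that the nonexistence of \(X \subseteq T_I\) with \(f(X \cup T_J) = b\) implies the condition, while a putative \(W^{*} \in [T_J, T_I]\) with \(f(W^{*}) = b\) gives the counterexample \(Y = W^{*}\), whose interval \([W^{*}, W^{*} \cup T_J]\) collapses to \(\{W^{*}\}\) and forces the contradiction \(f(W^{*}) \neq b\). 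The main obstacle I expect is exactly this middle step: pinning down which tuple sets are achievable as \(\Head{\ElemC \cup \ElemJ}\) under the disjointness constraint \(\ElemC \cap \ElemJ = \emptyset\), and tracking the two opposite containments \(I \subseteq J\) and \(J \subseteq I\) that make each case collapse onto a single well-chosen \(Y\). Finiteness of \(I\) and \(J\) is used only to make \(T_I, T_J\) finite so the conditions become genuine subset-sum decision problems; the equivalences themselves hold in general.
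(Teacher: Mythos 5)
Your proof is correct, and its core combinatorial content coincides with the paper's: both arguments rest on the observation that the \FOID-reduct turns each antecedent \(\Trans{\ElemJ}^\wedge\) into a test for \(\ElemJ\subseteq\{e\in\Grd{E}\mid I\models\Cond{e}\}\) while leaving the positive consequents untouched, and both exploit the ``saturation'' trick --- adjoining to \(\ElemJ\) all elements whose conditions hold in \(J\) so that the disjunction \(\TransAD{a}{\ElemJ}^\vee\) collapses. The difference is one of decomposition. The paper proves each direction of each part separately by contraposition, constructing an explicit violating element set \(\ElemJ=\{e\in\Grd{E}\mid I\models\Cond{e},\Tuple{e}\in X\cup T_J\}\) in one direction and reading \(X=\Head{\ElemJ}\) off a violated conjunct in the other. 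You instead first isolate a ``master equivalence'' --- \(J\models\IDReductP{\TransA{a}}{I}\) iff every \(Y\subseteq T_I\) with \(f(Y)\not\prec b\) admits some \(W\) with \(Y\subseteq W\subseteq Y\cup T_J\) and \(f(W)\prec b\) --- and then both parts become short interval arguments on tuple sets. This buys uniformity (the lemma is independent of \(\prec\) and of which of \(I\subseteq J\) or \(J\subseteq I\) holds, so it would also cover other relations) and makes the subset-sum structure transparent; the paper's version is shorter per case but repeats the same construction twice. Your passage from element sets to tuple sets (the ``achievability'' step) is sound: the witnessing element for a new tuple necessarily lies outside \(\ElemJ\), and the reachable interval \([\Head{\ElemJ},\Head{\ElemJ}\cup T_J]\) depends only on \(Y=\Head{\ElemJ}\), so the universal quantification over \(\ElemJ\) indeed descends to one over \(Y\). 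One pedantic remark on part (b): your forward direction instantiates \(Y=T_I\), which is only a legitimate instance when \(f(T_I)\neq b\); in the complementary case \(f(T_I)=b\) the witness \(X=T_I\) (equivalently \(W=T_I\)) serves directly, so the gap is a one-line fix rather than a flaw.
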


\subsection{Algorithmic refinements}

The calls in \cref{fun:ground:program:true,fun:ground:program:possible} in \cref{fun:ground:program} can sometimes be combined to calculate certain and possible atoms simultaneously.
This can be done whenever a component does not contain recursive predicates.
In this case, it is sufficient to just calculate possible atoms along with rule instances in \cref{fun:ground:program:possible}
augmenting \cref{fun:ground:rule} with an additional check to detect whether a rule instance produces a certain atom.
Observe that this condition applies to all stratified components but can also apply to components depending on unstratified components.
In fact, typical programs following the generate, define, and test methodology~\cite{lifschitz02a,niemela08b} of ASP,
where the generate part uses choice rules~\cite{siniso02a} (see below),
do not contain unstratified negation at all.
When a grounder is combined with a solver built to store rules and perform inferences,
one can optimize for the case that there are no negative recursive predicates in a component.
In this case, it is sufficient to compute possible atoms along with their rule instances and leave the computation of certain atoms to the solver.
Finally, note that \gringo\ currently does not separate the calculation of certain and possible atoms
at the expense of computing a less precise approximate model and possibly additional rule instances.

\begin{example}
For the following example, \gringo\ computes atom~\(p(4)\) as unknown but the algorithms in \cref{sec:algorithms} identify it as true.
\begin{align*}
r(1,4) & & p(1) & \leftarrow \Naf q(1) \\
r(2,3) & & q(1) & \leftarrow \Naf p(1) \\
r(3,1) & & p(2) & \\
p(Y)   & \leftarrow \rlap{\(p(X) \wedge r(X,Y)\)}.
\end{align*}
When grounding the last rule, \gringo\ determines \(p(4)\) to be possible in the first iteration because \(p(1)\) is unknown at this point.
In the second iteration, it detects that \(p(1)\) is a fact but does not use it for grounding again.
If there were further rules depending negatively on predicate~\(p/1\), inapplicable rules might appear in \gringo's output.
\end{example}

Another observation is that the loop in \cref{fun:ground:component} does not produce further rule instances in a second iteration for components without recursive predicates.
\Gringo\ maintains an index~\cite{gaulwi09a} for each positive body literal to speed up matching of literals;
whenever none of these indexes, used in rules of the component at hand, are updated, further iterations can be skipped.

Just like \dlv's grounder, \gringo\ adapts algorithms for semi-naive evaluation from the field of databases.
In particular, it works best on linear programs~\cite{abhuvi95a},
having at most one positive literal occurrence over a recursive predicate in a rule body.
The program in \cref{ex:company} for the company controls problem is such a linear program
because \(\Pred{controls}/2\) is the only recursive predicate.
\Cref{fun:ground:rule} can easily be adapted to efficiently ground linear programs
by making sure that the recursive positive literal is selected first.
We then only have to consider matches that induce atoms not already used for instantiations in previous iterations of the loop in \cref{fun:ground:component} to reduce the amount of backtracking to find rule instances.
In fact, the order in which literals are selected in \cref{fun:ground:rule:select} is crucial for the performance of \Cref{fun:ground:rule}.
\Gringo\ uses an adaptation of the selection heuristics presented by~\citeN{lepesc01a} that additionally takes into account recursive predicates and terms with function symbols.

To avoid unnecessary backtracking when grounding general logic programs,
\gringo\ instantiates rules using an algorithm similar to the improved semi-naive evaluation with optimizations for linear rules~\cite{abhuvi95a}.

\subsection{Capturing \gringo's input language}\label{sec:gringo}

We presented aggregate programs where rule heads are simple atoms.
Beyond that, \gringo's input language offers more elaborate language constructs to ease modeling.

A prominent such construct are so-called choice rules~\cite{siniso02a}.
Syntactically, one-element choice rules have the form \(\{a\}\leftarrow B\), where \(a\) is an atom and \(B\) a body.
Semantically, such a rule amounts to \(a\vee\neg a\leftarrow B\) or equivalently \(a\leftarrow \neg\neg a\wedge B\).
We can easily add support for grounding choice rules,
that is, rules where the head is not a plain atom but an atom marked as a choice,
by discarding choice rules when calculating certain atoms and treating them like normal rules when grounding possible atoms.
A translation that allows for supporting head aggregates using a translation to aggregate rules and choice rules
is given by~\citeN{gehakalisc15a}.
Note that \gringo\ implements further refinements to omit deriving head atoms if a head aggregate cannot be satisfied.

Another language feature that can be instantiated in a similar fashion as body aggregates are conditional literals.
\Gringo\ adapts the rewriting and propagation of body aggregates to also support grounding of conditional literals.

Yet another important language feature are disjunctions in the head of rules~\cite{gellif91a}.
As disjunctive logic programs,
aggregate programs allow us to solve problems from the second level of the polynomial hierarchy.
In fact, using {\L}ukasiewicz' theorem~\cite{lukasiewicz41a}, we can write a disjunctive rule of form
\begin{align*}
a \vee b & \leftarrow B
\intertext{as the shifted strongly equivalent \FormulasR-program:}
a & \leftarrow (b \rightarrow a)\wedge B\\
b & \leftarrow (a \rightarrow b)\wedge B.
\end{align*}
We can use this as a template to design grounding algorithms for disjunctive programs.
In fact, \gringo\ calculates the same approximate model for the disjunctive rule and the shifted program.

The usage of negation as failure is restricted in \FormulasR-programs.
Note that any occurrence of a negated literal~\(l\) in a rule body can be replaced by an auxiliary atom~\(a\) adding rule \(a \leftarrow l\) to the program.
The resulting program preserves the stable models modulo the auxiliary atoms.
This translation can serve as a template for double negation or negation in aggregate elements as supported by \gringo.

Integrity constraints are a straightforward extension of logic programs.
They can be grounded just like normal rules deriving an auxiliary atom that stands for \(\bot\).
Grounding can be stopped whenever the auxiliary atom is derived as certain.
Integrity constraints also allow for supporting negated head atoms,
which can be shifted to rule bodies~\cite{janhunen01a} resulting in integrity constraints,
and then treated like negation in rule bodies.

A frequently used convenience feature of \gringo\ are term pools~\cite{PotasscoUserGuide,gehakalisc15a}.
The grounder handles them by removing them in a rewriting step.
For example, a rule of form
\begin{align*}
h(X;Y,Z) & \leftarrow p(X;Y), q(Z)
\intertext{is factored out into the following rules:}
h(X,Z) & \leftarrow p(X), q(Z)\\
h(X,Z) & \leftarrow p(Y), q(Z)\\
h(Y,Z) & \leftarrow p(X), q(Z)\\
h(Y,Z) & \leftarrow p(Y), q(Z)
\end{align*}
We can then apply the grounding algorithms developed in \cref{sec:algorithms}.

To deal with variables ranging over integers, \gringo\ supports interval terms~\cite{PotasscoUserGuide,gehakalisc15a}.
Such terms are handled by a translation to inbuilt range predicates.
For example the program
\begin{align*}
h(l..u) &
\intertext{for terms \(l\) and \(u\) is rewritten into}
h(A) & \leftarrow \Pred{rng}(A,l,u)
\end{align*}
by introducing auxiliary variable \(A\) and range atom \(\Pred{rng}(A,l,u)\).
The range atom provides matches including all substitutions that assign integer values between \(l\) and \(u\) to \(A\).
Special care has to be taken regarding rule safety,
the range atom can only provide bindings for variable \(A\) but needs variables in the terms \(l\) and \(u\) to be provided elsewhere.

A common feature used when writing logic programs are terms involving arithmetic expressions and assignments.
Both influence which rules are considered safe by the grounder.
For example, the rule
\begin{align*}
h(X,Y) & \leftarrow p(X+Y,Y) \wedge X = Y + Y
\intertext{is rewritten into}
h(X,Y) & \leftarrow p(A,Y) \wedge X = Y + Y \wedge A = X + Y
\end{align*}
by introducing auxiliary variable \(A\).
The rule is safe because we can match the literals in the order as given in the rewritten rule.
The comparison \(X = Y + Y\) extends the substitution with an assignment for \(X\) and the last comparison serves as a test.
\Gringo\ does not try to solve complicated equations but supports simple forms like the one given above.

Last but not least,
\gringo\ does not just support terms in assignments but it also supports aggregates in assignments.
To handle such kind of aggregates,
the rewriting and propagation of aggregates has to be extended.
This is achieved by adding an additional variable to aggregate replacement atoms~\eqref{eq:rewrite:aggregate:atom}, which is assigned by propagation.
For example, the rule
\begin{align*}
\rlap{\(\underbrace{h(X,Y) \leftarrow q(X) \wedge \overbrace{Y = \Sum \{ Z: p(X,Z) \}}^a}_r\)}
\phantom{h(X,Y)} & \phantom{\leftarrow q(X) \wedge \overbrace{Y = \Sum \{ Z: p(X,Z) \}}^a}
\intertext{is rewritten into}
\EmptyAtom{a}{r}{X} & \leftarrow q(X)\\
\ElementAtom{e}{a}{r}{Z,X} & \leftarrow p(X,Z) \wedge q(X)\\
h(X,Y) & \leftarrow \AggregateAtom{a}{r}{X,Y}
\end{align*}
Aggregate elements are grounded as before but values for variable \(Y\) are computed during aggregate propagation.
In case of multiple assignment aggregates, additional care has to to be taken during the rewriting to ensure that the rewritten rules are safe.

 \section{Related work}\label{sec:related}
\newcommand\lfp[1]{\ensuremath{\mathit{lfp}(#1)}}
\newcommand\MirekFittingS{\ensuremath{\mathit{\Phi}}}
\newcommand\MirekStableS{\ensuremath{\mathit{St}}}
\newcommand\MirekWellFoundedS{\ensuremath{\mathit{Wf}}}
\newcommand\MirekFitting[2]{\ensuremath{\MirekFittingS_{#1}(#2)}}
\newcommand\MirekStable[2]{\ensuremath{\MirekStableS_{#1}(#2)}}
\newcommand\MirekWellFounded[2]{\ensuremath{\MirekWellFoundedS_{#1}(#2)}}

This section aims at inserting our contributions into the literature, starting with theoretical aspects over
algorithmic ones to implementations.

Splitting for infinitary formulas has been introduced by~\citeN{harlif16a}
generalizing results of~\citeN{jaoitowo07a} and~\citeN{felelipa09a}.
To this end, the concept of an \(A\)-stable model is introduced~\cite{harlif16a}.
We obtain the following relationship between our definition of a stable model relative to a set \(\Ir{I}\) and \(A\)-stable models:
For an \FormulasN-program~\(P\), we have that
if \(X\) is a stable model of \(P\) relative to \(\Ir{I}\),
then \(X \cup \Ir{I}\) is an \((\Hb\setminus\Ir{I})\)-stable model of \(P\).
Similarly, we get that
if \(X\) is an \(A\)-stable model of \(P\),
then \(\StableR{P}{X \setminus A}{X}\) is a stable model of \(P\) relative to \(X \setminus A\).
The difference between the two concepts is that we fix atoms \(\Ir{I}\) in our definition while \(A\)-stable models
allow for assigning arbitrary truth values to atoms in \(\Hb \setminus A\)~\cite[Proposition~1]{harlif16a}.
With this, let us compare our handling of program sequences to symmetric splitting~\cite{harlif16a}.
Let \(\Fsqi{P}{\Index}{i}\) be a refined instantiation sequence of aggregate program \(P\), and
\(F = \bigcup_{i < j} \TransA{\Fi{P}{i}}\) and \(G = \bigcup_{i \geq j} \TransA{\Fi{P}{i}} \)
for some \(j \in \Index\) such that \(\Head{F} \neq \Head{G}\).
We can use the infinitary splitting theorem of~\citeN{harlif16a} to calculate the stable model of \(F^\wedge \wedge G^\wedge\)
through the \(\Head{F}\)- and \(\Hb \setminus \Head{F}\)-stable models of \(F^\wedge \wedge G^\wedge\).
Observe that instantiation sequences do not permit positive recursion between their components
and infinite walks are impossible because an aggregate program consists of finitely many rules inducing a finite dependency graph.
Note that we additionally require the condition \(\Head{F} \neq \Head{G}\)
because components can be split even if their head atoms overlap.
Such a split can only occur if overlapping head atoms in preceding components are not involved in positive recursion.

Next, let us relate our operators to the ones defined by~\citeN{truszczynski12a}.
First of all, it is worthwhile to realize that the motivation of~\citeN{truszczynski12a} is to conceive
operators mimicking model expansion in \FOID-logic by adding certain atoms.
More precisely, let \MirekFittingS, \MirekStableS, and \MirekWellFoundedS\ stand for the versions of
the Fitting, stable, and well-founded operators defined by~\citeN{truszczynski12a}.
Then, we get the following relations to the operators defined in the previous sections:
\begin{align*}
\MirekStable{P,\Ir{I}}{J}
& = \lfp{\MirekFitting{P,\Ir{I}}{\cdot,J}}\\
& = \lfp{T^{\Ir{I}}_{P_J}} \cup \Ir{I}\\
& = \StableR{P}{\Ir{I}}{J} \cup \Ir{I}.
\intertext{For the well-founded operator we obtain}
\MirekWellFounded{P,\Ir{I}}{I,J}
& = \WellR{P}{\Ir{I},\Ir{I}}{I,J} \sqcup \Ir{I}.
\end{align*}
Our operators allow us to directly calculate the atoms derived by a program.
The versions of~\citeN{truszczynski12a} always include the input facts in their output and
the well-founded operator only takes certain but not possible atoms as input.

In fact, we use operators as~\citeN{dematr00a} to approximate the well-founded model and to obtain a ground program.
While we apply operators to infinitary formulas (resulting from a translation of aggregates) as introduced by~\citeN{truszczynski12a},
there has also been work on applying operators directly to aggregates.
\citeN{vabrde21a} provide an overview.
Interestingly, the high complexity of approximating the aggregates pointed out in \cref{prp:propagate-generic-iff} has already been identified by~\citeN{pedebr07a}.

Simplification can be understood as a combination of unfolding
(dropping rules if a literal in the positive body is not among the head atoms of a program, i.e., not among the possible atoms)
and negative reduction
(dropping rules if an atom in the negative body is a fact, i.e., the literal is among the certain atoms)~\cite{bradix99a,brdifrzu01a}.
Even the process of grounding can be seen as a directed way of applying unfolding (when matching positive body literals)
and negative reduction (when matching negative body literals).
When computing facts,
only rules whose negative body can be removed using positive reduction are considered.

The algorithms of~\citeN{kestsr91a} to calculate well-founded models perform a computation
inspired by the alternating sequence to define the well-founded model as~\citeN{vangelder93a}.
Our work is different in so far as we are not primarily interested in computing the well-founded model but the grounding of a program.
Hence, our algorithms stop after the second application of the stable operator (the first to compute certain and the second to compute possible atoms).
At this point, a grounder can use algorithms specialized for propositional programs to simplify the logic program at hand.
Algorithmic refinements for normal logic programs as proposed by~\citeN{kestsr91a} also apply in our setting.

Last but not least, let us outline the evolution of grounding systems over the last two decades.

The \lparse~\cite{lparseManual} grounder introduced domain- or omega-restricted programs~\cite{syrjanen01a}.
Unlike safety, omega-restrictedness is not modular.
That is, the union of two omega-restricted programs is not necessarily omega-restricted while the union of two safe programs is safe.
Apart from this,
\lparse\ supports recursive monotone and antimonotone aggregates.
However, our company controls encoding in~\cref{ex:company} is not accepted
because it is not omega-restricted.
For example,
variable~\(X\) in the second aggregate element needs a domain predicate.
Even if we supplied such a domain predicate,
\lparse\ would instantiate variable \(X\) with all terms provided by the domain predicate
resulting in a large grounding.
As noted by~\citeN{ferlif02a}, recursive nonmonotone aggregates (sum aggregates with negative weights) are not supported correctly by \lparse.

\Gringo~1 and~2 add support for lambda-restricted programs~\cite{gescth07a} extending omega-restricted programs.
This augments the set of predicates that can be used for instantiation but is still restricted as compared to safe programs.
That is, lambda-restrictedness is also not modular and our company controls program is still not accepted.
At the time,
the development goal was to be compatible to \lparse\ but extend the class of accepted programs.
Notably, \gringo~2 adds support for additional aggregates~\cite{gekaosscth09a}.
Given its origin, \gringo\ up to version~\(4\) handles recursive nonmonotone aggregates in the same incorrect
way as \lparse.

The grounder of the \dlv~system has been the first one to implement grounding algorithms based on semi-naive evaluation~\cite{dlv97a}.
Furthermore, it implements various techniques to efficiently ground logic programs~\cite{lepesc01a,falepepf01a,pesccale07a}.
The \(\dlv^\mathcal{A}\) system is the first \dlv-based system to support recursive aggregates~\cite{defaielepf03a},
which is nowadays also available in recent versions of \idlv~\cite{cafupeza17a}.

\Gringo~3 closed up to \dlv\ being the first \gringo\ version to implement grounding algorithms based on semi-naive evaluation~\cite{gekakosc11a}.
The system accepts safe rules but still requires lambda-restrictedness for predicates within aggregates.
Hence, our company controls encoding is still not accepted.

\Gringo~4 implements grounding of aggregates with algorithms similar to the ones presented in \cref{sec:algorithms}~\cite{gekasc15a}.
Hence, it is the first version that accepts our company controls encoding.

Finally, \gringo~5 refines the translation of aggregates as proposed by~\citeN{alfage15a} to properly support nonmonotone recursive aggregates and refines the semantics of pools and undefined arithmetics~\cite{gehakalisc15a}.

Another system with a grounding component is the \idp\ system~\cite{debobrde14a}.
Its grounder instantiates a theory by assigning sorts to variables.
Even though it supports inductive definitions,
it relies solely on the sorts of variables~\cite{wimade10a} to instantiate a theory.
In case of inductive definitions, this can lead to instances of definitions that can never be applied.
We believe that the algorithms presented in \cref{sec:algorithms} can also be implemented in an \idp\ system
decreasing the instantiation size of some problems (e.g., the company controls problem presented in \cref{ex:company}).

Last but not least, we mention that not all ASP systems follow a two-phase approach of grounding and
solving but rather adapt a lazy approach by grounding on-the-fly during solving~\cite{padoporo09a,lebestga17a,wetafr20a}.

 \section{Conclusion}\label{sec:conclusion}

We have provided a first comprehensive elaboration of the theoretical foundations of grounding in {ASP}.
This was enabled by the establishment of semantic underpinnings of ASP's modeling language in terms of
infinitary (ground) formulas~\cite{haliya14a,gehakalisc15a}.
Accordingly,
we start by identifying a restricted class of infinitary programs, namely, \(\FormulasR\)-programs,
by limiting the usage of implications.
Such programs allow for tighter semantic characterizations than general \(\FormulasF\)-programs,
while being expressive enough to capture logic programs with aggregates.
Interestingly, we rely on well-founded models~\cite{brdetr16a,truszczynski18a} to approximate the stable models of \(\FormulasR\)-programs
(and simplify them in a stable-models preserving way).
This is due do the fact that the (\FOID-)well-founded-operator enjoys monotonicity, which lends itself to the characterization
of iterative grounding procedures.
The actual semantics of non-ground aggregate programs is then defined via a translation to \(\FormulasR\)-programs.
This setup allows us to characterize the inner workings of our grounding algorithms for aggregate programs in
terms of the operators introduced for \(\FormulasR\)-programs.
It turns out that grounding amounts to calculating an approximation of the well-founded model together with a ground program simplified with that model.
This does not only allow us to prove the correctness of our grounding algorithms but moreover to characterize
the output of a grounder like \gringo\ in terms of established formal means.
To this end,
we have shown how to split aggregate programs into components and to compute their approximate models
(and corresponding simplified ground programs).
The key instruments for obtaining finite ground programs with finitary subformulas have been
dedicated forms of program simplification and aggregate translation.
Even though, we limit ourselves to \(\FormulasR\)-programs,
we capture the core aspects of grounding: a monotonically increasing set of possibly derivable atoms and on-the-fly (ground) rule generation.
Additional language features of \gringo's input language are relatively straightforward to accommodate by extending the algorithms presented in this paper.

For reference,
we implemented the presented algorithms in a prototypical grounder, $\mu$-\gringo, supporting aggregate programs (see \cref{fn:mu-gringo}).
While it is written to be as concise as possible and not with efficiency in mind,
it may serve as a basis for experiments with custom grounder implementations.
The actual \gringo\ system supports a much larger language fragment.
There are some differences compared to the algorithms presented here.
First, certain atoms are removed from rule bodies if not explicitly disabled via a command line option.
Second, translation~\TransASym\ is only used to characterize aggregate propagation.
In practice, \gringo\ translates ground aggregates to monotone aggregates~\cite{alfage15a}.
Further translation~\cite{bogeja14a} or even native handling~\cite{gekakasc09a} of them is left to the solver.
Finally, in some cases, \gringo\ might produce more rules than the algorithms presented above.
This should not affect typical programs.
A tighter integration of grounder and solver to further reduce the number of ground rules is an interesting topic of future research.

\bibliographystyle{acmtrans}
\bibliography{krr,procs}

\clearpage
\appendix
\section{Proofs}

We use the following reduct-based characterization of strong equivalence~\cite{turner03a}.

\begin{lemma}\label{prp:strong:equivalence}
Sets \(\mathcal{H}_1\) and \(\mathcal{H}_2\) of infinitary formulas are strongly equivalent
iff
\(\Reduct{\mathcal{H}_1}{I}\) and \(\Reduct{\mathcal{H}_2}{I}\) are classically equivalent for all two-valued interpretations~\(I\).
\end{lemma}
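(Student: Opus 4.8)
The plan is to establish the two implications separately, exploiting the fact that the full reduct distributes over unions, i.e., \(\Reduct{(\mathcal{H}_1 \cup \mathcal{H})}{I} = \Reduct{\mathcal{H}_1}{I} \cup \Reduct{\mathcal{H}}{I}\) for every interpretation \(I\) and every set \(\mathcal{H}\). The right-to-left direction will be immediate, while the left-to-right direction will be handled by contraposition, building an explicit separating context.

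For the direction from classical equivalence of reducts to strong equivalence, suppose \(\Reduct{\mathcal{H}_1}{I}\) and \(\Reduct{\mathcal{H}_2}{I}\) have the same classical models for every \(I\). Fixing an arbitrary set \(\mathcal{H}\) and an interpretation \(I\), distributivity shows that \(\Reduct{(\mathcal{H}_1 \cup \mathcal{H})}{I}\) and \(\Reduct{(\mathcal{H}_2 \cup \mathcal{H})}{I}\) arise from classically equivalent formula sets by adjoining the common set \(\Reduct{\mathcal{H}}{I}\); hence they too are classically equivalent and share the same minimal models. Consequently \(I\) is among the minimal models of \(\Reduct{(\mathcal{H}_1 \cup \mathcal{H})}{I}\) iff it is among those of \(\Reduct{(\mathcal{H}_2 \cup \mathcal{H})}{I}\), i.e., \(\mathcal{H}_1 \cup \mathcal{H}\) and \(\mathcal{H}_2 \cup \mathcal{H}\) have the same stable models. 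As \(\mathcal{H}\) was arbitrary, \(\mathcal{H}_1\) and \(\mathcal{H}_2\) are strongly equivalent.

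For the converse I would argue contrapositively: assume some \(I\) with \(\Reduct{\mathcal{H}_1}{I}\) and \(\Reduct{\mathcal{H}_2}{I}\) not classically equivalent, say there is an \(X\) with \(X \models \Reduct{\mathcal{H}_1}{I}\) but \(X \not\models \Reduct{\mathcal{H}_2}{I}\) (the roles of \(\mathcal{H}_1,\mathcal{H}_2\) are symmetric). Since the reduct contains no atom false in \(I\), satisfaction of a reduct depends only on \(X \cap I\), so I may take \(X \subseteq I\). Using the standard reduct property that \(I \models F\) iff \(I \models \Reduct{F}{I}\) (proved by induction on formulas), I split into two cases. If \(I \not\models \mathcal{H}_2\), then \(\Reduct{\mathcal{H}_2}{I}\) is unsatisfiable while \(\Reduct{\mathcal{H}_1}{I}\) is not, so \(I \models \mathcal{H}_1\); take the context \(\mathcal{H} = \{ a \mid a \in I \} \cup \{ \Naf a \mid a \notin I \}\), whose unique classical model is \(I\), and check that \(I\) is the stable model of \(\mathcal{H}_1 \cup \mathcal{H}\) whereas \(\mathcal{H}_2 \cup \mathcal{H}\) has none. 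Otherwise \(I \models \mathcal{H}_2\), which forces \(X \subsetneq I\), and I would take \(\mathcal{H} = \{ a \mid a \in X \} \cup \{ \Naf a \mid a \notin I \} \cup \{ a \rightarrow b \mid a,b \in I \setminus X \}\); the coupling implications ensure that the only models of \(\mathcal{H}\) lying between \(X\) and \(I\) are \(X\) and \(I\) themselves. One then verifies, via the reduct of the combined programs w.r.t.\ \(I\), that \(I\) is a stable model of \(\mathcal{H}_2 \cup \mathcal{H}\) (because \(X \not\models \Reduct{\mathcal{H}_2}{I}\) blocks \(X\) from being a smaller model) but not of \(\mathcal{H}_1 \cup \mathcal{H}\) (because \(X \models \Reduct{\mathcal{H}_1}{I}\) yields \(X \subsetneq I\) as a model of the reduct, destroying minimality). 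Either way strong equivalence fails.

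I expect the main obstacle to be the second case of the converse: getting the coupling context right so that the separating interpretation \(X\) is admitted as a counter-model of the reduct on exactly one side, and carefully checking the reduct computations for the auxiliary formulas (namely \(\Naf a\) collapsing to \(\top\) and the positive implications surviving unchanged), together with the facts recalled in the excerpt that minimal models of a reduct are subsets of \(I\) and that \(I\) satisfies its own reduct. These verifications are routine but must be carried out in the infinitary setting, where \(I \setminus X\) and \(\Sigma \setminus I\) may be infinite, so that the coupling and constraint parts of \(\mathcal{H}\) are infinite sets of formulas; here I would rely on the structural induction underlying the reduct definition to argue that infiniteness causes no difficulty.
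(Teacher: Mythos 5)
Your proof is correct, but it takes a genuinely different route from the paper. The paper disposes of this lemma in a few lines by translating it into the language of HT-models: it cites Lemma~1 of Harrison et al.\ to identify satisfaction of \(\Reduct{\mathcal{H}}{J}\) by \(I\subseteq J\) with \((I,J)\) being an HT-model of \(\mathcal{H}\), and then invokes their Theorem~3(iii), which characterizes strong equivalence of infinitary formulas as HT-equivalence. You instead reprove the characterization from scratch in the style of Turner and Lifschitz--Pearce--Valverde: the easy direction via distributivity of the reduct over unions (classically equivalent reducts stay classically equivalent after adjoining the common context \(\Reduct{\mathcal{H}}{I}\), hence have the same minimal models), and the hard direction by an explicit separating context, split on whether \(I\models\mathcal{H}_2\), with the facts-plus-constraints context in the first case and the coupling implications \(a\rightarrow b\) over \(I\setminus X\) in the second. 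Your case analysis is sound: in the first case satisfiability of \(\Reduct{\mathcal{H}_1}{I}\) does force \(I\models\mathcal{H}_1\) (a \(\bot\) member would make the reduct unsatisfiable), so \(I\) is stable for \(\mathcal{H}_1\cup\mathcal{H}\) but not even a model of \(\Reduct{\mathcal{H}_2}{I}\); in the second case the coupling implications survive the reduct unchanged (both atoms lie in \(I\)) and confine the candidate models between \(X\) and \(I\) to exactly those two, so \(X\not\models\Reduct{\mathcal{H}_2}{I}\) makes \(I\) stable on one side while \(X\models\Reduct{\mathcal{H}_1}{I}\) destroys minimality on the other. What the paper's route buys is brevity and reuse of the established infinitary HT machinery; what yours buys is self-containedness --- it needs nothing beyond the reduct properties already recalled in Section~2.3 and works verbatim in the infinitary setting since every verification is per-formula.
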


\begin{proof}
Let \(I\) and \(J\) be two-valued interpretations, and \(\mathcal{H}\) be an infinitary formula.
Clearly, \(I \models \Reduct{\mathcal{H}}{J}\) iff \(I \cap J \models \Reduct{\mathcal{H}}{J}\).
Thus, we only need to consider interpretations such that \(I \subseteq J\).
By Lemma~1 due to~\citeN{halipeva17a}, we have that \(I \models \Reduct{\mathcal{H}}{J}\) iff \((I,J)\) is an HT-model of \(\mathcal{H}\).
The proposition holds because by Theorem~3 Item~(iii) due to~\citeN{halipeva17a},
we have that \(\mathcal{H}_1\) and \(\mathcal{H}_2\) are strongly equivalent iff they have the same HT models.
\end{proof}

\begin{proof}[Proof of \cref{prp:formula-program}]
Let \(F = {\{ \Body{r} \rightarrow \Head{r} \mid r \in P \}}^\wedge\).

First, we consider the case \(I\models F\).
By Lemma~1 due to~\citeN{truszczynski12a},
this implies that \(\Reduct{P}{I}\) and \(\Reduct{F}{I}\) are classically equivalent and
thus have the same minimal models.\footnote{To be precise, Lemma~1 by~\citeN{truszczynski12a} is stated for a set of formulas, which can be understood as an infinitary conjunction.}
Thus, \(I\) is a stable model of \(P\) iff \(I\) is a stable model of \(F\).

Second, we consider the case \(I\not\models F\) and show that \(I\) is neither a stable model of \(F\) nor \(P\).
Proposition~1 by~\citeN{truszczynski12a} states that \(I\) is a model of \(F\) iff \(I\) is a model of \(\Reduct{F}{I}\).
Thus, \(I\) is not a stable model of \(F\).
Furthermore, because \(I \not\models F\), there is a rule \(r \in P\) such that \(I \not\models \Body{r} \rightarrow \Head{r}\).
Consequently, we have \(I \models \Body{r}\) and \(I \not\models \Head{r}\).
Using the above proposition again, we get \(I \models \Reduct{\Body{r}}{I}\).
Because \(I \models \Reduct{\Body{r}}{I}\) and \(I \not\models \Head{r}\), we get \(I \not\models r^I\) and in turn \(I \not\models \Reduct{P}{I}\).
Thus, \(I\) is not a stable model of \(P\) either.
\end{proof}

\begin{lemma}\label{lem:monotone-positive}
Let \(F\) be a formula, and \(I\) and \(J\) be interpretations.

If \(F\) is positive and \(I \subseteq J\), then \(I \models F\) implies \(J \models F\).
\end{lemma}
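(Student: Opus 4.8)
The plan is to prove the statement by structural induction on $F$, but a direct induction on this single claim fails. If $F = G \rightarrow H$ is positive, then every atom of $H$ retains its parity (so $H$ is positive), whereas every atom of $G$ has its parity flipped and must therefore occur \emph{negatively} in $G$. Thus the antecedent of a positive implication is not positive, and no induction hypothesis of the desired form is available for $G$. The key idea is to strengthen the claim to a simultaneous induction that also handles \emph{negative} formulas (those in which every atom occurs negatively) with the dual, antitone behaviour.

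Concretely, I would prove by simultaneous structural induction on $F$ the two statements: (P) if $F$ is positive and $I \subseteq J$, then $I \models F$ implies $J \models F$; and (N) if $F$ is negative and $I \subseteq J$, then $J \models F$ implies $I \models F$. For the base case, an atom $a$ occurs positively (with zero enclosing antecedents), so it is positive but never negative; statement (P) is immediate from $a \in I \subseteq J$, and (N) holds vacuously.

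The connective cases become routine once the right induction is in place. For $\FormulasH^\wedge$ and $\FormulasH^\vee$, positivity (respectively negativity) of the compound formula is equivalent to positivity (respectively negativity) of every member of $\FormulasH$, since conjunction and disjunction preserve the parity of each occurrence; the satisfaction clauses then transport the property member by member through the induction hypothesis. The only interesting case is the implication $G \rightarrow H$. For (P), positivity forces $G$ negative and $H$ positive; assuming $I \models G \rightarrow H$ and $J \models G$, I apply (N) to $G$ to obtain $I \models G$, hence $I \models H$, and then (P) to $H$ to obtain $J \models H$, so $J \models G \rightarrow H$. For (N), negativity forces $G$ positive and $H$ negative, and one argues symmetrically.

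The main obstacle is conceptual rather than computational: recognising that one must carry the antitone statement (N) alongside (P), precisely because the antecedent position of an implication flips occurrence parity. Once the claim is strengthened in this way, each case reduces to an application of the satisfaction definition together with the appropriate half of the induction hypothesis. The only subtlety worth stating carefully is the definition of a negative formula and the remark that a bare atom is positive but never negative, which is what makes the (N) base case vacuous.
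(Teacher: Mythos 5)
Your proof is correct and matches the paper's approach: the paper simply states that the lemma ``can be shown by induction over the rank of the formula'' without giving details, and your argument is exactly the worked-out version of that induction. Strengthening the induction hypothesis with the dual, antitone statement for negative formulas is precisely the step needed to handle the antecedent of an implication, and your case analysis (including the observation that a positive implication has a negative antecedent and a positive consequent) is sound.
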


\begin{proof}
This property can be shown by induction over the rank of the formula.
\end{proof}

The following two propositions shed some light on the two types of reducts.

\begin{lemma}\label{lem:ferraris-basic}
Let \(F\) be a formula, and \(I\) and \(J\) be interpretations.

Then,
\begin{alphaenum}
\item if \(F\) is positive then \(\Reduct{F}{I}\) is positive,\label{lem:ferraris-basic:a}
\item \(I \models F\) iff \(I \models \Reduct{F}{I}\),\label{lem:ferraris-basic:b}
\item if \(F\) is strictly positive and \(I \subseteq J\) then \(I \models F\) iff \(I \models \Reduct{F}{J}\).\label{lem:ferraris-basic:d}
\end{alphaenum}
\end{lemma}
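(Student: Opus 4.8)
The plan is to prove all three parts by induction on the rank of $F$, i.e.\ the least $i$ with $F \in \FormulasF_i$, exactly as suggested for \cref{lem:monotone-positive}. The single structural fact driving everything is that the reduct never introduces new material: by its definition, $\Reduct{F}{I}$ either collapses $F$ to the atom-free formula $\bot$ (when $I \not\models F$) or recurses through the top connective. Hence every atom occurrence of $\Reduct{F}{I}$ sits at a position inherited from $F$, with the same conjunction/disjunction/implication nesting above it. I would exploit this separately for each part, and part \cref{lem:ferraris-basic:b} will additionally serve as a sub-lemma inside part \cref{lem:ferraris-basic:d}.

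For part \cref{lem:ferraris-basic:a} I would strengthen the claim to a polarity-tracking statement: every positive (resp.\ negative) occurrence of an atom in $\Reduct{F}{I}$ corresponds to a positive (resp.\ negative) occurrence in $F$. The atom, conjunction, and disjunction cases are immediate, and the $\bot$-collapse contributes no occurrences. The only case needing care is $F = G \rightarrow H$ with $I \models F$, where $\Reduct{F}{I} = \Reduct{G}{I} \rightarrow \Reduct{H}{I}$: an occurrence inside $\Reduct{G}{I}$ has its polarity flipped by the surrounding antecedent, but so does the corresponding occurrence inside $G$, so the induction hypothesis still matches the polarities. Positivity of $F$ means $\IDNeg{F}=\emptyset$; since any negative occurrence in $\Reduct{F}{I}$ would force one in $F$, we conclude $\IDNeg{\Reduct{F}{I}}=\emptyset$.

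For part \cref{lem:ferraris-basic:b} I would first dispatch the case $I \not\models F$ uniformly: then $\Reduct{F}{I} = \bot$, which $I$ does not satisfy, so both sides are false. When $I \models F$ I show $I \models \Reduct{F}{I}$ by the structural cases, which are routine for atoms and for $\wedge$/$\vee$ using the induction hypothesis on the satisfied subformulas. For $F = G \rightarrow H$ I split on $G$: if $I \models G$ then $I \models H$ (from $I \models F$) and the hypothesis gives $I \models \Reduct{H}{I}$; if $I \not\models G$ then $\Reduct{G}{I} = \bot$ falsifies the antecedent. Either way $I \models \Reduct{G}{I} \rightarrow \Reduct{H}{I}$.

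Part \cref{lem:ferraris-basic:d} is where the real work lies, and I expect the implication case to be the main obstacle. The enabling observation is that strict positivity forbids atoms beneath any antecedent, so whenever a strictly positive $F$ has the form $G \rightarrow H$ the antecedent $G$ is atom-free (while $H$ stays strictly positive, and the components of $\wedge$/$\vee$ inherit strict positivity). An atom-free formula has a constant truth value, so in particular $I \models G \iff J \models G$ despite $I \subseteq J$. With this in hand the induction runs as follows. For atoms I use $I \subseteq J$: if $a \notin J$ then $a \notin I$ and $\Reduct{a}{J} = \bot$, and otherwise $\Reduct{a}{J} = a$. For $\wedge$ and $\vee$, when $J \models F$ I recurse directly; when $J \not\models F$ some (for $\wedge$) or every (for $\vee$) subformula $G$ has $J \not\models G$, whence $\Reduct{G}{J} = \bot$ and the hypothesis yields $I \not\models G$, matching $I \not\models \Reduct{F}{J} = \bot$. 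For $F = G \rightarrow H$ with $G$ atom-free I case on $J$: if $J \not\models F$ then $J \models G$ and $J \not\models H$, so $I \models G$ by constancy and $I \not\models H$ by the hypothesis on $H$ (via $\Reduct{H}{J} = \bot$), giving $I \not\models F$ against $\Reduct{F}{J} = \bot$; if $J \models F$ then $\Reduct{F}{J} = \Reduct{G}{J} \rightarrow \Reduct{H}{J}$, and I invoke \cref{lem:ferraris-basic:b} on the atom-free $G$ to see that $G$ and $\Reduct{G}{J}$ carry the same constant truth value under $I$, so both implications collapse to comparing $H$ with $\Reduct{H}{J}$, which is precisely the induction hypothesis on $H$. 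Keeping the atom-free antecedents synchronized through \cref{lem:ferraris-basic:b} is the one delicate point; the rest is bookkeeping.
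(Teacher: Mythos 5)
Your proposal is correct and follows essentially the same route as the paper, which disposes of \cref{lem:ferraris-basic:a} with the same one-line observation, cites \citeN{truszczynski12a} for \cref{lem:ferraris-basic:b}, and merely asserts that \cref{lem:ferraris-basic:d} follows ``by induction over the rank of the formula.'' Your write-up supplies the details the paper omits — in particular the key point for \cref{lem:ferraris-basic:d} that strict positivity forces antecedents to be atom-free and hence interpretation-independent — and these details check out.
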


\begin{proof}
\begin{proofprop}[\ref{lem:ferraris-basic:a}]
Because the reduct only replaces subformulas by \(\bot\), the resulting formula is still positive.
\end{proofprop}

\begin{proofprop}[\ref{lem:ferraris-basic:b}]
Corresponds to Proposition~1 by~\citeN{truszczynski12a}.
\end{proofprop}

\begin{proofprop}[\ref{lem:ferraris-basic:d}]
This property can be shown by induction over the rank of the formula.\qedhere
\end{proofprop}
\end{proof}

\begin{lemma}\label{lem:foid-basic}
Let \(F\) be a formula, and \(I\), \(J\), and \(X\) be interpretations.

Then,
\begin{alphaenum}
\item \(\IDReductP{F}{I}\) is positive,\label{lem:foid-basic:a}
\item \(I \models F\) iff \(I \models \IDReductP{F}{I}\),\label{lem:foid-basic:b}
\item if \(F\) is positive then \(F = \IDReductP{F}{I}\), and\label{lem:foid-basic:c}
\item if \(I \subseteq J\) then \(X \models \IDReductP{F}{J}\) implies \(X \models \IDReductP{F}{I}\).\label{lem:foid-basic:d}
\end{alphaenum}
\end{lemma}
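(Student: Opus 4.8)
The plan is to prove all four parts by induction on the rank of the formula $F$ (the least $i$ with $F\in\FormulasF_i$), with base cases the atoms together with $\top=\emptyset^\wedge$ and $\bot=\emptyset^\vee$, and inductive cases $\FormulasH^\wedge$, $\FormulasH^\vee$, and $F\to G$. The recurring feature in every part is that the positive reduct $\IDReductP{\cdot}{I}$ and the negative reduct $\IDReductN{\cdot}{I}$ are defined by \emph{mutual} recursion, swapping roles at each implication; hence none of the four claims can be pushed through on its own. In each case I would strengthen the induction hypothesis to a simultaneous statement about both reducts, engineered so that the implication step --- where the swap happens --- closes.

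For part (a) the strengthened claim I would carry is: every atom occurs positively in $\IDReductP{F}{I}$ and negatively in $\IDReductN{F}{I}$. The only interesting step is the implication. Since $\IDReductP{F\to G}{I}=\IDReductN{F}{I}\to\IDReductP{G}{I}$, the atoms of $\IDReductN{F}{I}$ (negative by hypothesis) land in an antecedent, flipping to positive, while the atoms of $\IDReductP{G}{I}$ (positive) sit in the consequent and keep their polarity; the dual computation handles $\IDReductN{F\to G}{I}$. Part (b) follows the same template with the paired claim $I\models F$ iff $I\models\IDReductP{F}{I}$ iff $I\models\IDReductN{F}{I}$; here the atomic base case is exactly where the definition $\IDReductN{a}{I}=\top$ if $a\in I$ and $\bot$ otherwise does the work, and the implication step is a routine unfolding of $\models$ using both halves of the hypothesis.

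The main obstacle is part (c). I would strengthen it to: if $F$ is positive then $\IDReductP{F}{I}=F$, and if $F$ is negative then $\IDReductN{F}{I}=F$, where \emph{negative} means all atoms occur negatively. The delicate point is the implication case: I must first argue, purely from the parity-of-antecedents definition of occurrences, that whenever $F\to G$ is positive its antecedent $F$ is negative and its consequent $G$ is positive (adding one enclosing implication flips the antecedent's parity but preserves the consequent's), and dually when $F\to G$ is negative. Only then do the two induction hypotheses apply to yield $\IDReductP{F\to G}{I}=\IDReductN{F}{I}\to\IDReductP{G}{I}=F\to G$. Getting this polarity bookkeeping right --- and noticing that the base cases behave correctly (an atom is positive but never negative, and the constants $\top,\bot$ are vacuously both) --- is the crux.

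Finally, for part (d) I would prove the two monotonicity directions simultaneously, assuming $I\subseteq J$: $X\models\IDReductP{F}{J}$ implies $X\models\IDReductP{F}{I}$, \emph{and}, in the reversed direction, $X\models\IDReductN{F}{I}$ implies $X\models\IDReductN{F}{J}$. The reversal is forced by the atomic base case, since for $a\in J\setminus I$ we have $\IDReductN{a}{I}=\bot$ and $\IDReductN{a}{J}=\top$. With both directions in hand the implication step chains cleanly: from $X\models\IDReductN{F}{J}\to\IDReductP{G}{J}$, assuming $X\models\IDReductN{F}{I}$, the negative direction gives $X\models\IDReductN{F}{J}$, then the hypothesis gives $X\models\IDReductP{G}{J}$, and the positive direction gives $X\models\IDReductP{G}{I}$. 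The conjunction and disjunction cases in all four parts are immediate from the hypothesis, so I expect the entire lemma to reduce to carefully discovering and verifying these four paired statements.
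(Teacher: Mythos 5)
Your proposal is correct and follows essentially the same route as the paper: the paper's proof of part (d) is exactly your paired induction (strengthening to the simultaneous statement about $\IDReductP{F}{J}$ and $\IDReductN{F}{I}$, with the reversal forced by the atomic case), and its one-line justifications of (a)--(c) are just compressed versions of the mutual inductions you spell out. The polarity bookkeeping you flag as the crux of (c) is handled correctly in your sketch, so no gaps remain.
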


\begin{proof}
\begin{proofprop}[\ref{lem:foid-basic:a}]
Because the \FOID-reduct replaces all negative occurrences of atoms, the resulting formula is positive.
\end{proofprop}

\begin{proofprop}[\ref{lem:foid-basic:b}]
This property holds
because when the reduct replaces an atom \(a\), it is replaced by either \(\top\) or \(\bot\) depending on whether \(I \models a\) or \(I \not\models a\).
This does not change the satisfaction of the subformula w.r.t.\ \(I\).
\end{proofprop}

\begin{proofprop}[\ref{lem:foid-basic:c}]
Because a positive formula does not contain negative occurrences of atoms, it is not changed by the \FOID-reduct.
\end{proofprop}

\begin{proofprop}[\ref{lem:foid-basic:d}]
We prove by induction over the rank of formula \(F\) that
\begin{align}
X \models \IDReductP{F}{J} & \ImpliesT X \models \IDReductP{F}{I} \text{\ and}\label{foid-basic:e:p}\\
X \models \IDReductN{F}{I} & \ImpliesT X \models \IDReductN{F}{J}.\label{foid-basic:e:n}
\end{align}

\begin{proofpart}[Base]
We consider the case that \(F\) is a formula of rank \(0\).
\begin{align*}
\text{\(F\) is a formula of rank \(0\)} & \ImpliesT \text{\(F\) is an atom.}
\intertext{\indent First, we show~\eqref{foid-basic:e:p}.
We assume \(X \models \IDReductP{F}{J}\):}
\text{\(F\) is an atom} & \ImpliesT \IDReductP{F}{I} = \IDReductP{F}{J} = F. \\
\IDReductP{F}{I} = \IDReductP{F}{J} \AndT X \models \IDReductP{F}{J} & \ImpliesT X \models \IDReductP{F}{I}.
\intertext{\indent Second, we show~\eqref{foid-basic:e:n}.
We assume \(X \models \IDReductN{F}{I}\):}
\text{\(F\) is an atom} \AndT X \models \IDReductN{F}{I} & \ImpliesT \IDReductN{F}{I} = \top. \\
\IDReductN{F}{I} = \top & \ImpliesT F \in I. \\
F \in I \AndT I \subseteq J & \ImpliesT F \in J. \\
\text{\(F\) is an atom} \AndT F \in J & \ImpliesT \IDReductN{F}{J} = \top.\\
\IDReductN{F}{J} = \top & \ImpliesT X \models \IDReductN{F}{J}.
\end{align*}
\end{proofpart}

\begin{proofpart}[Hypothesis]
We assume that~\eqref{foid-basic:e:p} and~\eqref{foid-basic:e:n}
hold for formulas \(F\) of ranks smaller than \(i\).
\end{proofpart}

\begin{proofpart}[Step]
We only show~\eqref{foid-basic:e:p} because~\eqref{foid-basic:e:n} can be shown in a similar way.
We consider formulas \(F\) of rank \(i\).

First, we consider the case that \(F\) is a conjunction of form \(\FormulasH^\wedge\).
\begin{align*}
X \models \IDReductP{\FormulasH^\wedge}{J} & \ImpliesT X \models \IDReductP{G}{J} \ForAllT G \in \FormulasH.\\
X \models \IDReductP{G}{J} & \ImpliesT X \models \IDReductP{G}{I} \ByT{hypothesis}.\\
X \models \IDReductP{G}{I} & \ImpliesT X \models \IDReductP{\FormulasH^\wedge}{I}.
\intertext{The case for disjunctions can be proven in a similar way.\endgraf Last, we consider the case that \(F\) is an implication of form \(G \rightarrow H\).
Observe that}
\IDReductP{F}{I} & = \IDReductN{G}{I} \rightarrow \IDReductP{H}{I} \text{\ and}\\
\IDReductP{F}{J} & = \IDReductN{G}{J} \rightarrow \IDReductP{H}{J}.
\intertext{First, we consider the case \(X \not\models \IDReductN{G}{J}\):}
X \not\models \IDReductN{G}{J} & \ImpliesT X \not\models \IDReductN{G}{I} \ByT{hypothesis}.\\
X \not\models \IDReductN{G}{I} & \ImpliesT X \models \IDReductP{F}{I}.\\
\intertext{Second, we consider the case \(X \models \IDReductP{H}{J}\):}
X \models \IDReductP{H}{J} & \ImpliesT X \models \IDReductP{H}{I} \ByT{hypothesis}.\\
X \models \IDReductP{H}{I} & \ImpliesT X \models \IDReductP{F}{I}.\qedhere
\end{align*}
\end{proofpart}
\end{proofprop}
\end{proof}

\begin{proof}[Proof of \cref{lem:well-founded:consistent}]
This lemma follows from Proposition~14 by~\citeN{dematr00a} observing that the well-founded operator is a monotone symmetric operator.
The proposition is actually a bit more general stating that the operator maps any consistent four-valued interpretation to a consistent four-valued interpretation.
\end{proof}

\begin{lemma}\label{lem:operator:less}
Let \(\OpSym\) and \(\OppSym\) be monotone operators over complete lattice~\((L,\leq)\) with \(\OppApp{x} \leq \OpApp{x}\) for each \(x \in L\).

Then, we get \(x' \leq x\) where \(x'\) and \(x\) are the least fixed points of \(\OppSym\) and \(\OpSym\), respectively.
\end{lemma}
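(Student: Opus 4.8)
The plan is to lean entirely on the Knaster-Tarski theorem, specifically the identification in \nref{thm:knaster-tarski}{a} of the least fixed point of a monotone operator with its least \emph{prefixed} point. The key observation is that the least fixed point~\(x\) of the larger operator~\OpSym\ is automatically a prefixed point of the smaller operator~\OppSym. Once this is established, minimality among prefixed points of~\OppSym\ delivers the conclusion immediately.

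Concretely, I would first recall that both \OpSym\ and \OppSym\ are monotone on the complete lattice~\((L,\leq)\), so by \nref{thm:knaster-tarski}{a} each has a least fixed point coinciding with its least prefixed point; call these \(x\) and \(x'\), respectively. Next I would verify that \(x\) is a prefixed point of \OppSym: since \(x\) is a fixed point of~\OpSym\ we have \(\OpApp{x} = x\), and the hypothesis \(\OppApp{x} \leq \OpApp{x}\) then gives \(\OppApp{x} \leq \OpApp{x} = x\), which is exactly the prefixed-point condition \(\OppApp{x} \leq x\). Finally, because \(x'\) is the \emph{least} prefixed point of~\OppSym\ and \(x\) is \emph{a} prefixed point of~\OppSym, minimality yields \(x' \leq x\), as required.

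There is essentially no computational obstacle here; the entire content is the one-line chain \(\OppApp{x} \leq \OpApp{x} = x\) together with the correct invocation of the prefixed-point characterization. The only point demanding care is conceptual rather than technical: one must apply Knaster-Tarski to the \emph{correct} operator at the \emph{correct} moment, namely using the fixed-point property of~\OpSym\ to evaluate \(\OpApp{x}\), but the least-prefixed-point property of~\OppSym\ to compare \(x'\) against the witness~\(x\). Note in particular that we do not need any relationship between \(x'\) and the behavior of~\OpSym, nor do we use monotonicity of the operators beyond what is already packaged inside \cref{thm:knaster-tarski}; the pointwise domination hypothesis is used exactly once.
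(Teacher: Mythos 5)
Your argument is correct and follows essentially the same route as the paper's proof: both reduce the claim to the observation that \(x\) is a prefixed point of \(\OppSym\) (via \(\OppApp{x} \leq \OpApp{x} \leq x\)) and then invoke the least-prefixed-point characterization from \nref{thm:knaster-tarski}{a}. The paper phrases this slightly more generally, showing every prefixed point of \(\OpSym\) is one of \(\OppSym\), but the content is identical.
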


\begin{proof}
Let \(y\) be a prefixed point of \(\OpSym\).
We have \(\OpApp{y} \leq y\).
Because \(\OppApp{y} \leq \OpApp{y}\), we get \(\OppApp{y} \leq y\).
So each prefixed point of \(\OpSym\) is also a prefixed point of \(\OppSym\).

Let \(S'\) and \(S\) be the set of all prefixed points of \(\OppSym\) and \(\OpSym\), respectively.
We obtain \(S \subseteq S'\).
By \Nref{thm:knaster-tarski}{a}, we get that \(x'\) is the greatest lower bound of \(S'\).
Observe that \(x'\) is a lower bound for \(S\).
By construction of \(S\), we have \(x \in S\).
Hence, we get \(x' \leq x\).
\end{proof}

\begin{lemma}\label{lem:program:positive:subset}
Let \(P\) and \(P'\) be \(\FormulasF\)-programs and \(I\) be an interpretation.

Then, \(P' \subseteq P\) implies \(\Stable{P'}{I} \subseteq \Stable{P}{I}\).
\end{lemma}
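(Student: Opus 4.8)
The plan is to unfold the definition of the stable operator and reduce the claim to a statement about least fixed points of provability operators, then invoke \cref{lem:operator:less}. Recall that $\Stable{P'}{I} = \LeastM{\IDReductP{P'}{I}}$ and $\Stable{P}{I} = \LeastM{\IDReductP{P}{I}}$. So the goal $\Stable{P'}{I} \subseteq \Stable{P}{I}$ amounts to showing that the least model of $\IDReductP{P'}{I}$ is contained in the least model of $\IDReductP{P}{I}$.

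First I would record two immediate facts. Since the \FOID-reduct of a program is formed rule by rule, i.e.\ $\IDReductP{P}{I} = \{\IDReductP{r}{I} \mid r \in P\}$, the assumption $P' \subseteq P$ yields $\IDReductP{P'}{I} \subseteq \IDReductP{P}{I}$. Moreover, by \nref{lem:foid-basic}{a} each \FOID-reduct is positive, so both $Q' \coloneqq \IDReductP{P'}{I}$ and $Q \coloneqq \IDReductP{P}{I}$ are positive $\FormulasF$-programs; hence by \cref{prop:step-monotone} their provability operators $\StepO{Q'}$ and $\StepO{Q}$ are monotone on the complete lattice of interpretations ordered by $\subseteq$, and by \nref{thm:knaster-tarski}{a} their least fixed points $\LeastM{Q'}$ and $\LeastM{Q}$ exist.

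Next I would establish the pointwise domination of the two operators. Directly from the definition of one-step provability, for any interpretation $X$ we have
\begin{align*}
\Step{Q'}{X} &= \{ \Head{r} \mid r \in Q',\ X \models \Body{r} \}
           \subseteq \{ \Head{r} \mid r \in Q,\ X \models \Body{r} \}
           = \Step{Q}{X},
\end{align*}
where the inclusion is simply because $Q' \subseteq Q$. Thus $\StepO{Q'}$ and $\StepO{Q}$ are monotone operators satisfying $\Step{Q'}{X} \subseteq \Step{Q}{X}$ for every $X$.

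Finally, applying \cref{lem:operator:less} to the operators $\StepO{Q'}$ and $\StepO{Q}$ gives $\LeastM{Q'} \subseteq \LeastM{Q}$, which is exactly $\Stable{P'}{I} \subseteq \Stable{P}{I}$. I do not expect any genuine obstacle here: the argument is a routine chain of reductions, and the only point requiring a moment's care is the pointwise inclusion $\Step{Q'}{X} \subseteq \Step{Q}{X}$, which is immediate once the provability operator is written out. The lemma is essentially a packaging of monotonicity of least fixed points in the sub-program, so the substantive work has already been done in \cref{prop:step-monotone,lem:operator:less,lem:foid-basic}.
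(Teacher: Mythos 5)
Your proof is correct and takes essentially the same route as the paper's: the paper's own proof is a one-line appeal to \cref{lem:operator:less}, observing that the one-step provability operator derives fewer consequences for \(P'\), which is exactly the pointwise inclusion \(\Step{\IDReductP{P'}{I}}{X} \subseteq \Step{\IDReductP{P}{I}}{X}\) that you spell out before invoking that lemma.
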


\begin{proof}
This lemma is a direct consequence of \cref{lem:operator:less}
observing that the one-step provability operator derives fewer consequences for~\(P'\).
\end{proof}

\begin{lemma}\label{lem:stable:possible}
Let \(P\) be an \(\FormulasR\)-program,
\(I\) be a two-valued interpretation, and
\(J = \Stable{P}{I}\).

Then, \(X\) is a stable model of \(P\), \(I \subseteq X\), and \(I \subseteq J\) implies \(X \subseteq J\).
\end{lemma}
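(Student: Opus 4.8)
The plan is to sandwich $X$ by exhibiting $Y := X \cap J$ as a model of the (Ferraris) reduct $\Reduct{P}{X}$. Since $I \subseteq X$ and $I \subseteq J$ by assumption, we have $I \subseteq Y \subseteq X$, and this chain of containments is the crucial lever throughout. Once $Y \models \Reduct{P}{X}$ is established, minimality of the stable model $X$ (which is by definition a minimal model of $\Reduct{P}{X}$) together with $Y \subseteq X$ forces $Y = X$, whence $X = X \cap J \subseteq J$. It thus suffices to show, for every rule $r \in P$, that $Y \models \Reduct{\Body{r}}{X}$ implies $\Head{r} \in Y$, that is, $\Head{r} \in X$ and $\Head{r} \in J$.

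The membership $\Head{r} \in X$ is the easy half and needs no induction: if $X \not\models \Body{r}$ then $\Reduct{\Body{r}}{X} = \bot$, contradicting $Y \models \Reduct{\Body{r}}{X}$, so $X \models \Body{r}$ and hence $X \models \Reduct{\Body{r}}{X}$ by \nref{lem:ferraris-basic}{b}. As $X$ is a stable model, it is in particular a model of $\Reduct{P}{X}$, so it satisfies $\Reduct{\Body{r}}{X} \rightarrow \Head{r}$, giving $\Head{r} \in X$. For the membership $\Head{r} \in J$, I would use that $J = \LeastM{\IDReductP{P}{I}}$ is the least fixed point of $\StepO{\IDReductP{P}{I}}$; since the $r$-instance of this reduced program is $\Head{r} \leftarrow \IDReductP{\Body{r}}{I}$, it is enough to prove $J \models \IDReductP{\Body{r}}{I}$.

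This reduces the lemma to the key claim, to be proved by structural induction on $F \in \FormulasR$: if $I \subseteq X$, $I \subseteq J$ and $Y = X \cap J$, then $Y \models \Reduct{F}{X}$ implies $J \models \IDReductP{F}{I}$. The atomic, conjunctive, and disjunctive cases are routine from the induction hypothesis, using that $Y \models \Reduct{F}{X}$ already forces $X \models F$, so the reduct takes its recursive form and distributes over the connectives; subformulas of $\FormulasR$-formulas again lie in $\FormulasR$, so the hypothesis applies.

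The implication case is where the restriction to $\FormulasR$-programs is essential and where I expect the main difficulty. For $F = G \rightarrow H \in \FormulasR$ the antecedent $G$ contains no implications, hence no negations, so $G$ is strictly positive; consequently $\IDReductN{G}{I}$ contains no atoms and $J \models \IDReductN{G}{I}$ iff $I \models G$. From $Y \models \Reduct{G \rightarrow H}{X} = \Reduct{G}{X} \rightarrow \Reduct{H}{X}$ I distinguish two cases. If $Y \models \Reduct{H}{X}$, the induction hypothesis gives $J \models \IDReductP{H}{I}$, and we are done. Otherwise $Y \not\models \Reduct{G}{X}$; here $\Reduct{G}{X}$ is positive by \nref{lem:ferraris-basic}{a}, so from $I \subseteq Y$ and \cref{lem:monotone-positive} I obtain $I \not\models \Reduct{G}{X}$, and then $I \not\models G$ by \nref{lem:ferraris-basic}{d} (using that $G$ is strictly positive and $I \subseteq X$). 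Thus $J \not\models \IDReductN{G}{I}$, so $J \models \IDReductN{G}{I} \rightarrow \IDReductP{H}{I} = \IDReductP{F}{I}$. The delicate point is that the two reducts are formed with respect to different interpretations ($X$ versus $I$) and evaluated at different points ($Y$ versus $J$); the chain $I \subseteq Y \subseteq X$ and the strict positivity of antecedents are precisely what let the positive-monotonicity lemmas bridge the gap. That $\FormulasR$ cannot be dropped is already visible on $p \leftarrow \Naf\Naf p$ from \cref{ex:stable}, whose doubly negated body violates the strict positivity of antecedents and for which the conclusion indeed fails.
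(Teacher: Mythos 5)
Your proposal is correct and follows essentially the same route as the paper: the heart of both arguments is a structural induction showing that satisfaction of the Ferraris reduct \(\Reduct{\Body{r}}{X}\) forces satisfaction of \(\IDReductP{\Body{r}}{I}\) by \(J\), with the implication case resting on the strict positivity of antecedents in \(\FormulasR\), \cref{lem:monotone-positive}, and \nref{lem:ferraris-basic}{d}. The only cosmetic difference is that you evaluate the reduct at \(Y = X \cap J\) and conclude \(Y = X\) by minimality, whereas the paper shows \(J \models \Reduct{P}{X}\) and appeals to \(X\) being the unique minimal model of \(\Reduct{P}{X}\) — your variant merely makes that last step explicit.
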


\begin{proof}

Because \(X\) is a stable model of \(P\), it is the only minimal model of \(\Reduct{P}{X}\).
Furthermore, we have that \(J\) is a model of \(\IDReductP{P}{I}\).
To show that \(X \subseteq J\), we show that \(J\) is also a model of \(\Reduct{P}{X}\).
For this, it is enough to show that for each rule \(r \in P\) we have \(J \not\models \IDReductP{\Body{r}}{I}\) implies \(J \not\models \Reduct{\Body{r}}{X}\).
We prove inductively over the rank of the formula \(F = \Body{r}\) that \(J \not\models \IDReductP{F}{I}\) implies \(J \not\models \Reduct{F}{X}\).

\begin{proofpart}[Base]
We consider the case that \(F\) is a formula of rank \(0\).

If \(X \not\models F\), we get \(J \not\models \Reduct{F}{X}\) because \(\Reduct{F}{X} = \bot\).
Thus, we only have to consider the case \(X \models F\):
\begin{align*}
\text{\(F\) is a formula of rank \(0\)} & \ImpliesT \text{\(F\) is an atom.}\\
\text{\(F\) is an atom} & \ImpliesT \text{\(\IDReductP{F}{I} = F\).} \\
X \models F \AndT \text{\(F\) is an atom} & \ImpliesT \Reduct{F}{X} = F. \\
\IDReductP{F}{I} = F \AndT \Reduct{F}{X} = F & \ImpliesT \IDReductP{F}{I} = \Reduct{F}{X}. \\
J \not\models \IDReductP{F}{I} \AndT \IDReductP{F}{I} = \Reduct{F}{X} & \ImpliesT J \not\models \Reduct{F}{X}.
\end{align*}
\end{proofpart}

\begin{proofpart}[Hypothesis]
We assume that \(J \not\models \IDReductP{F}{I}\) implies \(J \not\models \Reduct{F}{X}\) holds for formulas \(F\) of ranks smaller than \(i\).
\end{proofpart}

\begin{proofpart}[Step]
We consider the case that \(F\) is a formula of rank \(i\).

As in the base case, we only have to consider the case \(X \models F\).
Furthermore, we have to distinguish the cases that \(F\) is a conjunction, disjunction, or implication.

We first consider the case that \(F\) is a conjunction of form \(\mathcal{F}^\wedge\):
\begin{align*}
X \models F & \ImpliesT \Reduct{F}{X} = \{{\Reduct{G}{X} \mid G \in \mathcal{F}\}}^\wedge. \\
J \not\models \IDReductP{F}{I} \AndT \IDReductP{F}{I} = {\{ \IDReductP{G}{I} \mid G \in \mathcal{F} \}}^\wedge & \ImpliesT J \not\models \IDReductP{G}{I} \ForSomeT G \in \mathcal{F}. \\
G \in \mathcal{F} \AndT \text{\(F\) has rank \(i\)} & \ImpliesT \text{\(G\) has rank less than \(i\).} \\
J \not\models \IDReductP{G}{I} \AndT \text{\(G\) has rank less than \(i\)} & \ImpliesT J \not\models \Reduct{G}{X} \ByT{hypothesis}. \\
J \not\models \Reduct{G}{X} \AndT \Reduct{F}{X} = \{{\Reduct{G}{X} \mid G \in F\}}^\wedge & \ImpliesT J \not\models \Reduct{F}{X}.
\end{align*}

The case that \(F\) is a disjunction can be shown in a similar way to the case that \(F\) is a conjunction.

Last, we consider the case that \(F\) is an implication of form \(G \rightarrow H\).
Observe that \(G\) is positive because \(F\) has no occurrences of implications in its antecedent
and, furthermore, given that \(F\) is a formula of rank \(i\), \(H\) is a formula of rank less than \(i\).

We show \(I \models G\):
\begin{align*}
J \not\models \IDReductP{F}{I} \AndT \IDReductP{F}{I} = \IDReductN{G}{I} \rightarrow \IDReductP{H}{I} & \ImpliesT J \models \IDReductN{G}{I}\\
J \models \IDReductN{G}{I} \AndT \text{\(G\) is positive} & \ImpliesT I \models G \text{\ because\ } \IDReductN{G}{I} \equiv \top.
\intertext{\indent We show \(J \models \Reduct{G}{X}\):}
\text{\(G\) is positive} \CT I \subseteq X \CandT I \models G & \ImpliesT X \models G \ByT{\cref{lem:monotone-positive}.} \\
X \models F \CT X \models G \CandT F = G \rightarrow H & \ImpliesT X \models H \text{.}\\
\text{\(G\) is positive} \CT I \subseteq X \CandT I \models G & \ImpliesT I \models \Reduct{G}{X} \ByT{\Nref{lem:ferraris-basic}{d}.} \\
\text{\(G\) is positive} & \ImpliesT \text{\(\Reduct{G}{X}\) is positive} \ByT{\Nref{lem:ferraris-basic}{a}.} \\
\Reduct{G}{X} \text{\ is positive} \CT I \subseteq J \CandT I \models \Reduct{G}{X} & \ImpliesT J \models \Reduct{G}{X} \ByT{\cref{lem:monotone-positive}.} \\
\intertext{\indent We show \(J \not\models \Reduct{H}{X}\):}
I \models G \AndT \IDReductP{F}{I} = \IDReductN{G}{I} \rightarrow \IDReductP{H}{I} & \ImpliesT \IDReductP{F}{I} \equiv \IDReductP{H}{I} \text{\ because\ } \IDReductN{G}{I} \equiv \top. \\
\IDReductP{F}{I} \equiv \IDReductP{H}{I} \AndT J \not\models \IDReductP{F}{I} & \ImpliesT J \not\models \IDReductP{H}{I}. \\
J \not\models \IDReductP{H}{I} \text{\ and\ \(H\) has rank less than \(i\)} & \ImpliesT J \not\models \Reduct{H}{X} \text{\ by hypothesis.}
\end{align*}

Because \(X \models F\), we have \(\Reduct{F}{X} = \Reduct{G}{X} \rightarrow \Reduct{H}{X}\).
Using \(J \models \Reduct{G}{X}\) and \(J \not\models \Reduct{H}{X}\), we get \(J \not\models \Reduct{F}{X}\).\qedhere
\end{proofpart}
\end{proof}

\begin{proof}[Proof of \cref{thm:stable:well-founded}]
Let \(X\) be a stable model of \(P\).

We prove by transfinite induction over the sequence of postfixed points leading to the well-founded model:
\begin{align*}
(\Ii{I}{0}, \Ii{J}{0}) & = (\emptyset, \Signature),\\
(\Ii{I}{\alpha+1}, \Ii{J}{\alpha+1}) & = \Well{P}{\Ii{I}{\alpha},\Ii{J}{\alpha}} \text{\ for ordinals \(\alpha\)}, \AndT\\
(\Ii{I}{\beta},\Ii{J}{\beta}) & = (\bigcup_{\alpha < \beta} \Ii{I}{\alpha}, \bigcap_{\alpha < \beta} \Ii{J}{\alpha}) \text{\ for limit ordinals \(\beta\).}
\end{align*}
We have that \(\alpha < \beta\) implies \((\Ii{I}{\alpha},\Ii{J}{\alpha}) \leq_p (\Ii{I}{\beta},\Ii{J}{\beta})\) for ordinals \(\alpha\) and \(\beta\),
\(\Ii{I}{\alpha} \subseteq \Ii{J}{\alpha}\) for ordinals \(\alpha\), and
there is a least ordinal \(\alpha\) such that \((I,J) = (\Ii{I}{\alpha},\Ii{J}{\alpha})\).

\begin{proofpart}[Base]
We have \(\Ii{I}{0} \subseteq X \subseteq \Ii{J}{0}\).
\end{proofpart}

\begin{proofpart}[Hypothesis]
We assume \(\Ii{I}{\beta} \subseteq X \subseteq \Ii{J}{\beta}\) for all ordinals \(\beta < \alpha\).
\end{proofpart}

\begin{proofpart}[Step]
If \(\alpha = \beta + 1\) is a successor ordinal we have
\begin{align*}
(\Ii{I}{\alpha}, \Ii{J}{\alpha})
& = \Well{P}{\Ii{I}{\beta},\Ii{J}{\beta}}\\
& = (\Stable{P}{\Ii{J}{\beta}}, \Stable{P}{\Ii{I}{\beta}}).
\end{align*}
By the induction hypothesis we have \(\Ii{I}{\beta} \subseteq X \subseteq \Ii{J}{\beta}\).

First, we show \(\Ii{I}{\alpha} \subseteq X\):
\begin{align*}
\text{\(X\) is a (stable) model} & \ImpliesT \Stable{P}{X} \subseteq X.\\
X \subseteq \Ii{J}{\beta} & \ImpliesT \Stable{P}{\Ii{J}{\beta}} \subseteq \Stable{P}{X}.\\
\Stable{P}{X} \subseteq X \AndT \Stable{P}{\Ii{J}{\beta}} \subseteq \Stable{P}{X} & \ImpliesT \Stable{P}{\Ii{J}{\beta}} \subseteq X.\\
\Ii{I}{\alpha} = \Stable{P}{\Ii{J}{\beta}} \AndT \Stable{P}{\Ii{J}{\beta}} \subseteq X & \ImpliesT \Ii{I}{\alpha} \subseteq X.
\end{align*}

Second, we show \(X \subseteq \Ii{J}{\alpha}\):
\begin{align*}
\beta < \alpha & \ImpliesT (\Ii{I}{\beta},\Ii{J}{\beta}) \leq_p (\Ii{I}{\alpha},\Ii{J}{\alpha})\\
(\Ii{I}{\beta},\Ii{J}{\beta}) \leq_p (\Ii{I}{\alpha},\Ii{J}{\alpha}) \AndT \Ii{I}{\alpha} \subseteq \Ii{J}{\alpha} & \ImpliesT \Ii{I}{\beta} \subseteq \Ii{J}{\alpha}\\
\text{\shortstack[l]{\(X\) is a stable model, \(\Ii{I}{\beta} \subseteq X\), \\\(\Ii{J}{\alpha} = \Stable{P}{\Ii{I}{\beta}}\), and \(\Ii{I}{\beta} \subseteq \Ii{J}{\alpha}\)}} & \ImpliesT \text{\(X \subseteq \Ii{J}{\alpha}\) by \cref{lem:stable:possible}}.
\end{align*}
We have shown \(\Ii{I}{\alpha} \subseteq X \subseteq \Ii{J}{\alpha}\) for successor ordinals.

If \(\alpha\) is a limit ordinal we have
\begin{align*}
(\Ii{I}{\alpha}, \Ii{J}{\alpha}) & = (\bigcup_{\beta < \alpha} \Ii{I}{\beta},\bigcap_{\beta < \alpha} \Ii{J}{\beta}).
\end{align*}

Let \(x \in \Ii{I}{\alpha}\).
There must be an ordinal \(\beta < \alpha\) such that \(x \in \Ii{I}{\beta}\).
Since \(\Ii{I}{\beta} \subseteq X\) by the hypothesis, we have \(x \in X\).
Thus, \(\Ii{I}{\alpha} \subseteq X\).

Let \(x \in X\).
For each ordinal \(\beta < \alpha\) we have \(x \in \Ii{J}{\beta}\) because \(X \subseteq \Ii{J}{\beta}\) by the hypothesis.
Thus, we get \(x \in \Ii{J}{\alpha}\).
It follows that \(X \subseteq \Ii{J}{\alpha}\).

We have shown \(\Ii{I}{\alpha} \subseteq X \subseteq \Ii{J}{\alpha}\) for limit ordinals.\qedhere
\end{proofpart}
\end{proof}

\begin{lemma}\label{lem:ground:rule:derive}
Let \(P\) be an \(\FormulasF\)-program and \((I,J)\) be a four-valued interpretation.

Then, we have \(\Head{\Simp{P}{I,J}} = \Step{\IDReductP{P}{I}}{J}\).
\end{lemma}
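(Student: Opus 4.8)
The plan is to prove the set equality purely by unfolding the definitions of the three operators involved and observing that both sides collapse to the same set; indeed, the statement merely formalizes the observation recorded immediately after \cref{def:simplification}.

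First I would expand the left-hand side. By \cref{def:simplification} we have $\Simp{P}{I,J} = \{ r \in P \mid J \models \IDReductP{\Body{r}}{I} \}$, and applying the head operator to this set of rules gives
\begin{align*}
\Head{\Simp{P}{I,J}} = \{ \Head{r} \mid r \in P \text{ and } J \models \IDReductP{\Body{r}}{I} \}.
\end{align*}
Next I would expand the right-hand side using the definition of the one-step provability operator, namely $\Step{\IDReductP{P}{I}}{J} = \{ \Head{r'} \mid r' \in \IDReductP{P}{I} \text{ and } J \models \Body{r'} \}$. By the definition of the \FOID-reduct of a program, every $r' \in \IDReductP{P}{I}$ is of the form $r' = \IDReductP{r}{I} = \Head{r} \leftarrow \IDReductP{\Body{r}}{I}$ for a rule $r \in P$; since the reduct leaves the head of $r$ unaffected, we have $\Head{r'} = \Head{r}$ and $\Body{r'} = \IDReductP{\Body{r}}{I}$. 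Substituting these identities yields
\begin{align*}
\Step{\IDReductP{P}{I}}{J} = \{ \Head{r} \mid r \in P \text{ and } J \models \IDReductP{\Body{r}}{I} \},
\end{align*}
which coincides with the expression obtained for $\Head{\Simp{P}{I,J}}$, establishing the claim.

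There is essentially no obstacle here beyond the bookkeeping of the rule-wise correspondence between $P$ and $\IDReductP{P}{I}$: the reduct acts on each rule individually and preserves heads, so the map $r \mapsto \IDReductP{r}{I}$ carries exactly those rules satisfying $J \models \IDReductP{\Body{r}}{I}$ to the reduced rules whose bodies $J$ satisfies, with matching heads. (Note that even if two distinct rules of $P$ were to produce the same reduced rule, the set-valued head collections on both sides would still agree, so no injectivity is needed.) The only definitions invoked are those of program simplification, the \FOID-reduct of a program, and the one-step provability operator, all given earlier in the excerpt.
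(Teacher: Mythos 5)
Your proof is correct and follows exactly the paper's argument: the paper's own proof is a two-sentence observation that $\Simp{P}{I,J}$ consists precisely of the rules $r \in P$ with $J \models \IDReductP{\Body{r}}{I}$, which are exactly the rules whose heads the $T$ operator collects. You have merely spelled out the same rule-wise correspondence in more detail, including the (correct) remark that injectivity of $r \mapsto \IDReductP{r}{I}$ is not needed.
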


\begin{proof}
The program \(\Simp{P}{I,J}\) contains all rules \(r \in P\) such that \(J \models \IDReductP{\Body{r}}{I}\).
This are exactly the rules whose heads are gathered by the \(T\) operator.
\end{proof}

\begin{lemma}\label{prp:simplification:well-founded}
Let \(P\) be an \(\FormulasF\)-program and \((I,J)\) be the well-founded model of \(P\).

Then, we have
\begin{alphaenum}
\item \(\Stable{\Simp{P}{I,J}}{I'} = J\) for all \(I' \subseteq I\), and\label{prp:simplification:well-founded:a}
\item \(\Stable{\Simp{P}{I,J}}{J'} = \Stable{P}{J'}\) for all \(J \subseteq J'\).\label{prp:simplification:well-founded:b}
\end{alphaenum}
\end{lemma}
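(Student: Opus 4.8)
The plan is to prove each equality by two inclusions, exploiting that $(I,J)$ is a fixed point of $\WellO{P}$, so that $I = \Stable{P}{J}$ and $J = \Stable{P}{I}$, together with the fact $I \subseteq J$. Throughout I write $Q = \Simp{P}{I,J}$ and use \cref{lem:ground:rule:derive}, which gives $\Head{Q} = \Step{\IDReductP{P}{I}}{J}$; since $J = \Stable{P}{I} = \LeastM{\IDReductP{P}{I}}$ is a fixed point of $\StepO{\IDReductP{P}{I}}$, this simplifies to $\Head{Q} = J$.

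For \ref{prp:simplification:well-founded:a}, fix $I' \subseteq I$ and show $\Stable{Q}{I'} = J$. The inclusion $\Stable{Q}{I'} \subseteq J$ is immediate: every rule $r \in Q$ has $\Head{r} \in \Head{Q} = J$, so $J$ satisfies $\IDReductP{r}{I'} = \Head{r} \leftarrow \IDReductP{\Body{r}}{I'}$ regardless of the body; hence $J$ is a model of the positive program $\IDReductP{Q}{I'}$ (\Nref{lem:foid-basic}{a}), i.e.\ a prefixed point of $\StepO{\IDReductP{Q}{I'}}$, so the least fixed point lies below it. For the converse I compute $J = \LeastM{\IDReductP{P}{I}}$ by transfinite iteration of $\StepO{\IDReductP{P}{I}}$ from $\emptyset$ and show every iterate is contained in $M := \Stable{Q}{I'}$. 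The step rests on the sub-claim that $\Step{\IDReductP{P}{I}}{X} \subseteq \Step{\IDReductP{Q}{I'}}{X}$ for all $X \subseteq J$: if $r \in P$ fires under $\IDReductP{\cdot}{I}$ at $X$, then $J \models \IDReductP{\Body{r}}{I}$ by \cref{lem:monotone-positive} (using $X \subseteq J$), so $r \in Q$, and $X \models \IDReductP{\Body{r}}{I'}$ follows from $I' \subseteq I$ via \Nref{lem:foid-basic}{d}. Monotonicity of $\StepO{\IDReductP{Q}{I'}}$ (\cref{prop:step-monotone}) together with $M$ being a fixed point then closes the induction, yielding $J \subseteq M$ and hence equality.

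For \ref{prp:simplification:well-founded:b}, fix $J \subseteq J'$ and show $\Stable{Q}{J'} = \Stable{P}{J'}$. Since $Q \subseteq P$, \cref{lem:program:positive:subset} gives $\Stable{Q}{J'} \subseteq \Stable{P}{J'}$ at once. For the reverse inclusion I again iterate $\StepO{\IDReductP{P}{J'}}$ from $\emptyset$ and show each iterate lies in $M := \Stable{Q}{J'}$; the crux is that every rule firing in this computation actually belongs to $Q$. First I observe $\Stable{P}{J'} \subseteq \Stable{P}{J} = I$ by antimonotonicity of the stable operator (\cref{lem:stable:operator:antimonotone}) and $J \subseteq J'$, so all iterates stay inside $I$. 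Were some $r \in P \setminus Q$ to fire, i.e.\ $X \models \IDReductP{\Body{r}}{J'}$ for an iterate $X \subseteq I$, then $I \models \IDReductP{\Body{r}}{J'}$ (\cref{lem:monotone-positive}), hence $I \models \IDReductP{\Body{r}}{I}$ (from $I \subseteq J'$ and \Nref{lem:foid-basic}{d}), and finally $J \models \IDReductP{\Body{r}}{I}$ (from $I \subseteq J$ and \cref{lem:monotone-positive}), contradicting $r \notin Q$. Thus every firing rule lies in $Q$, and the induction concludes exactly as in part~\ref{prp:simplification:well-founded:a}.

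The main obstacle is the two backward inclusions $J \subseteq \Stable{Q}{I'}$ and $\Stable{P}{J'} \subseteq \Stable{Q}{J'}$, which do not follow from $Q \subseteq P$ and instead demand the transfinite comparison of the two provability operators. Within these, the delicate point is the case analysis in part~\ref{prp:simplification:well-founded:b} ruling out that a simplified-away rule $r \in P \setminus Q$ ever contributes to $\Stable{P}{J'}$; this is exactly where the chain of inclusions $I \subseteq J \subseteq J'$ must be threaded through \cref{lem:monotone-positive} and \Nref{lem:foid-basic}{d}, and it crucially relies on antimonotonicity of the stable operator to confine the whole computation below $I$.
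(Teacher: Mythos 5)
Your proof is correct and follows essentially the same route as the paper's: both directions of each equality hinge on showing that the rules discarded by the simplification can never fire in the relevant computation, threaded through the same chain of \cref{lem:monotone-positive}, \nref{lem:foid-basic}{d}, \cref{lem:program:positive:subset}, and the antimonotonicity of the stable operator (in particular to confine $\Stable{P}{J'}$ below $I$ in part~\ref{prp:simplification:well-founded:b}). The only difference is presentational: where you unfold an explicit transfinite induction on the iterates of the one-step operator, the paper argues once at the level of the resulting least models, observing that the least model of the simplified reduct is already a model of the full reduct.
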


\begin{proof}

Throughout the proof we use
\begin{align*}
\Stable{P}{J} & = I, \\
\Stable{P}{I} & = J,\\
\Simp{P}{I,J} & \subseteq P\text{, and}\\
I & \subseteq J \text{\ because the well-founded model is consistent.}
\end{align*}

\begin{proofprop}[\ref{prp:simplification:well-founded:a}]
We show \(J = \Stable{\Simp{P}{I,J}}{I}\).
Let \(\hat{J} = \Stable{\Simp{P}{I,J}}{I}\) and \(r \in P \setminus \Simp{P}{I,J}\):
\begin{align*}
\Simp{P}{I,J} \subseteq P & \AndT \\
\hat{J} = \Stable{\Simp{P}{I,J}}{I} & \AndT \\
J = \Stable{P}{I} & \ImpliesT \hat{J} \subseteq J\text{\ by \cref{lem:program:positive:subset}.}\\
r \notin \Simp{P}{I,J} & \ImpliesT J \not\models \IDReductP{\Body{r}}{I}.\\
\hat{J} \subseteq J \AndT J \not\models \IDReductP{\Body{r}}{I} & \ImpliesT \hat{J} \not\models \IDReductP{\Body{r}}{I}\text{\ by \cref{lem:monotone-positive}.}\\
\hat{J} = \Stable{\Simp{P}{I,J}}{I} \AndT \hat{J} \not\models \IDReductP{\Body{r}}{I} & \ImpliesT \hat{J} \models \IDReductP{P}{I}.\\
\hat{J} \models \IDReductP{P}{I} \AndT J = \Stable{P}{I} & \ImpliesT J \subseteq \hat{J}.\\
J \subseteq \hat{J} \AndT \hat{J} \subseteq J & \ImpliesT J = \hat{J}.
\end{align*}
Thus, we get that \(\Stable{\Simp{P}{I,J}}{I} = J\).

With this we can continue to prove \(\Stable{\Simp{P}{I,J}}{I'} = J\).
Let \(r \in \Simp{P}{I,J}\):
\begin{align*}
r \in \Simp{P}{I,J} & \ImpliesT J \models \IDReductP{\Body{r}}{I}.\\
r \in \Simp{P}{I,J} \AndT \Simp{P}{I,J} \subseteq P & \ImpliesT r \in P.\\
J \models \IDReductP{\Body{r}}{I} \CT r \in P, \AndT \Stable{P}{I} = J & \ImpliesT \Head{r} \in J.\\
J \models \IDReductP{\Body{r}}{I} \AndT I' \subseteq I & \ImpliesT J \models \IDReductP{\Body{r}}{I'}\text{\ by  \Nref{lem:foid-basic}{d}.}\\
\Head{r} \in J \AndT J \models \IDReductP{\Body{r}}{I'} & \ImpliesT \Stable{\Simp{P}{I,J}}{I'} \subseteq J.\\
I' \subseteq I \AndT J = \Stable{\Simp{P}{I,J}}{I} & \ImpliesT J \subseteq \Stable{\Simp{P}{I,J}}{I'}.
\end{align*}
Thus, we get \(\Stable{\Simp{P}{I,J}}{I'} = J\).
\end{proofprop}

\begin{proofprop}[\ref{prp:simplification:well-founded:b}]
Let \(I' = \Stable{\Simp{P}{I,J}}{J'}\) and \(r \in P \setminus \Simp{P}{I,J}\):
\begin{align*}
r \notin \Simp{P}{I,J} & \ImpliesT J \not\models \IDReductP{\Body{r}}{I}.\\
I \subseteq J \CT J \subseteq J' \CandT J \not\models \IDReductP{\Body{r}}{I} & \ImpliesT J \not\models \IDReductP{\Body{r}}{J'}.\text{\ by  \Nref{lem:foid-basic}{d}.}.\\
I' \subseteq I \CT I \subseteq J \CandT J \not\models \IDReductP{\Body{r}}{J'} & \ImpliesT I' \not\models \IDReductP{\Body{r}}{J'}.\text{\ by  \cref{lem:monotone-positive}.}\\
I' = \Stable{\Simp{P}{I,J}}{J'} \AndT I' \not\models \IDReductP{\Body{r}}{J'} & \ImpliesT \Stable{P}{J'} \subseteq \Stable{\Simp{P}{I,J}}{J'}.\\
\Simp{P}{I,J} \subseteq P & \ImpliesT \Stable{\Simp{P}{I,J}}{J'} \subseteq \Stable{P}{J'} \text{\ by \cref{lem:program:positive:subset}.}
\end{align*}
Thus, we get \(\Stable{\Simp{P}{I,J}}{J'} = \Stable{P}{J'}\).\qedhere
\end{proofprop}
\end{proof}

\begin{proof}[Proof of \cref{thm:well-founded:simplification}]
By \cref{prp:simplification:well-founded}, we have \((I,J) = \Well{\Simp{P}{I,J}}{I,J}\).
Furthermore, we let \((\widehat{I},\widehat{J}) = \WellM{\Simp{P}{I,J}}\):
\begin{align*}
(\widehat{I},\widehat{J}) = \WellM{\Simp{P}{I,J}} \AndT (I,J) = \Well{\Simp{P}{I,J}}{I,J} & \ImpliesT (\widehat{I},\widehat{J}) \leq_p (I,J).\\
& \text{\ by \Nref{thm:knaster-tarski}{c}.}\\
\widehat{I} \subseteq I & \ImpliesT \Stable{\Simp{P}{I,J}}{\widehat{I}} = \Stable{\Simp{P}{I,J}}{I}\\
& \text{\ by \Nref{prp:simplification:well-founded}{a}.} \\
\hat{J} = \Stable{\Simp{P}{I,J}}{\widehat{I}} = \Stable{\Simp{P}{I,J}}{I} = J& \ImpliesT \widehat{J} = J. \\
\widehat{I} = \Stable{\Simp{P}{I,J}}{\widehat{J}} \CT \Stable{\Simp{P}{I,J}}{J} = I \CandT \widehat{J} = J & \ImpliesT \widehat{I} = I.
\end{align*}
We obtain \((I,J) = (\widehat{I}, \widehat{J})\).
\end{proof}

\begin{proof}[Proof of \cref{thm:stable:simplification}]
We first show that all rule bodies removed by the simplification are falsified by \(X\).
Let \(r \in P \setminus \Simp{P}{I,J}\) and assume \(X \models {\Body{r}}\):
\begin{align*}
X \models \Body{r} & \ImpliesT X \models \IDReductP{\Body{r}}{X} \ByT{\Nref{lem:foid-basic}{b}}.\\
X \models \IDReductP{\Body{r}}{X} \AndT I \subseteq X & \ImpliesT X \models \IDReductP{\Body{r}}{I} \ByT{\Nref{lem:foid-basic}{d}}.\\
X \models \IDReductP{\Body{r}}{I} \AndT X \subseteq J & \ImpliesT J \models \IDReductP{\Body{r}}{I} \ByT{\cref{lem:monotone-positive}}.
\intertext{This is a contradiction and, thus, \(X \not\models \Body{r}\).
We use the following consequence in the proof below:}
X \not\models \Body{r} & \ImpliesT \ReductC{P \setminus \Simp{P}{I,J}}{X} \equiv \emptyset.
\end{align*}

To show the theorem, we show that \(\Reduct{P}{X}\) and \(\ReductC{\Simp{P}{I,J}}{X}\) have the same minimal models.
Clearly, we have \(\Reduct{P}{X} = \ReductC{\Simp{P}{I,J}}{X} \cup \ReductC{P \setminus \Simp{P}{I,J}}{X}\).
Using this and \(\ReductC{P \setminus \Simp{P}{I,J}}{X} \equiv \emptyset\),
we obtain that \(\Reduct{P}{X}\) and \(\ReductC{\Simp{P}{I,J}}{X}\) have the same minimal models.
\end{proof}

\begin{proof}[Proof of \cref{cor:stable:simplification}]
The result follows from \cref{thm:stable:simplification,thm:well-founded:simplification,thm:stable:well-founded}.
\end{proof}

\begin{proof}[Proof of \cref{thm:well-founded:simplification:superset}]
By \cref{lem:program:positive:subset},
we have \(\Stable{\Simp{P}{I,J}}{X} \subseteq \Stable{Q}{X} \subseteq \Stable{P}{X}\) for any two-valued interpretation~\(X\).
Thus, by \cref{thm:well-founded:simplification}, we get \((I,J) = \Well{Q}{I,J}\).

Let \((\widehat{I}, \widehat{J})\) be a prefixed point of \(\WellO{Q}\) with \((\widehat{I}, \widehat{J}) \leq_p (I,J)\).
We have \((\Stable{Q}{\widehat{J}}, \Stable{Q}{\widehat{I}}) \leq_p (\widehat{I}, \widehat{J}) \leq_p (I,J)\).
\begin{align*}
J \subseteq \widehat{J}
& \ImpliesT \Stable{\Simp{P}{I,J}}{\widehat{J}} = \Stable{Q}{\widehat{J}} = \Stable{P}{\widehat{J}}\\
& \ByT{\Nref{prp:simplification:well-founded}{b}}.\\
\Stable{Q}{\widehat{J}} = \Stable{P}{\widehat{J}} \AndT \Stable{Q}{\widehat{J}} \subseteq \widehat{I}
& \ImpliesT \Stable{P}{\widehat{J}} \subseteq \widehat{I}.\\
\widehat{J} \subseteq \Stable{Q}{\widehat{I}} \AndT \Stable{Q}{\widehat{I}} \subseteq \Stable{P}{\widehat{I}}
& \ImpliesT \widehat{J} \subseteq \Stable{P}{\widehat{I}}.\\
\Stable{P}{\widehat{J}} \subseteq \widehat{I} \AndT \widehat{J} \subseteq \Stable{P}{\widehat{I}}
& \ImpliesT \Well{P}{\widehat{I}, \widehat{J}} \leq_p (\widehat{I}, \widehat{J}).\\
\Well{P}{\widehat{I}, \widehat{J}} \leq_p (\widehat{I}, \widehat{J})
& \ImpliesT (I,J) \leq_p (\widehat{I}, \widehat{J})\\
& \ByT{\Nref{thm:knaster-tarski}{a}}.\\
(I,J) \leq_p (\widehat{I}, \widehat{J}) \AndT (\widehat{I}, \widehat{J}) \leq_p (I,J)
& \ImpliesT (I,J) = (\widehat{I}, \widehat{J}).
\end{align*}
By \Nref{thm:knaster-tarski}{a}, we obtain that \(\WellM{Q}=(I,J)\).
\end{proof}

\begin{proof}[Proof of \cref{cor:stable:simplification:superset}]
Observe that \(\Simp{P}{I,J} = \Simp{Q}{I,J}\).
With this, the corollary follows from \cref{cor:stable:simplification,thm:well-founded:simplification:superset}.
\end{proof}

Alternatively,
the incorporation of context atoms can also be seen as a form of partial evaluation applied to the underlying program.

\begin{definition}
Let \(\Ir{I}\) be a two-valued interpretation.

We define the \emph{partial evaluation} of an \(\FormulasF\)-formula w.r.t.\ \(\Ir{I}\) as follows:
\begin{align*}
\IDEvalP{a}{\Ir{I}}                   & = \top \text{\ if } a \in \Ir{I}                           &
\IDEvalN{a}{\Ir{I}}                   & = a                                                        \\
\IDEvalP{a}{\Ir{I}}                   & = a \text{\ if } a \notin \Ir{I}                           &
                                      &                                                            \\
\IDEvalP{\FormulasH^{\wedge}}{\Ir{I}} & = {\{ \IDEvalP{F}{\Ir{I}} \mid F \in \FormulasH \}}^\wedge &
\IDEvalN{\FormulasH^{\wedge}}{\Ir{I}} & = {\{ \IDEvalN{F}{\Ir{I}} \mid F \in \FormulasH \}}^\wedge \\
\IDEvalP{\FormulasH^{\vee}}{\Ir{I}}   & = {\{ \IDEvalP{F}{\Ir{I}} \mid F \in \FormulasH \}}^\vee   &
\IDEvalN{\FormulasH^{\vee}}{\Ir{I}}   & = {\{ \IDEvalN{F}{\Ir{I}} \mid F \in \FormulasH \}}^\vee   \\
\IDEvalPC{F \rightarrow G}{\Ir{I}}    & = \IDEvalN{F}{\Ir{I}} \rightarrow \IDEvalP{G}{\Ir{I}}      &
\IDEvalNC{F \rightarrow G}{\Ir{I}}    & = \IDEvalP{F}{\Ir{I}} \rightarrow \IDEvalN{G}{\Ir{I}}
\end{align*}
where \(a\) is an atom, \(\FormulasH\) a set of formulas, and \(F\) and \(G\) are formulas.
\end{definition}

The partial evaluation of an \(\FormulasF\)-program~\(P\) w.r.t.\ a two-valued interpretation~\(\Ir{I}\) is
\(\IDEvalP{P}{\Ir{I}} = \{ \IDEvalP{r}{\Ir{I}} \mid r \in P \}\) where
\(\IDEvalP{r}{\Ir{I}} = \Head{h} \leftarrow \IDEvalP{\Body{r}}{\Ir{I}}\).
Accordingly,
the partial evaluation of rules boils down to replacing satisfied
positive occurrences of atoms in rule bodies by \(\top\).

We observe the following relationship between the relative one-step operators and partial evaluations.

\begin{observation}\label{obs:step:relative:evaluate}
Let \(P\) be a positive \(\FormulasF\)-program and \(\Ir{I}\) be a two-valued interpretation.

Then, we have for any two-valued interpretation \(I\) that
\begin{align*}
\StepR{P}{\Ir{I}}{I} &= \Step{\IDEvalP{P}{\Ir{I}}}{I}.
\end{align*}
\end{observation}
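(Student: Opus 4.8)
The plan is to reduce the claim to a rule-wise statement about bodies, and then to a single satisfaction lemma proved by structural induction. First I would unfold both sides. By definition $\StepR{P}{\Ir{I}}{I} = \Step{P}{\Ir{I} \cup I} = \{\Head{r} \mid r \in P,\ \Ir{I} \cup I \models \Body{r}\}$, while $\Step{\IDEvalP{P}{\Ir{I}}}{I} = \{\Head{r'} \mid r' \in \IDEvalP{P}{\Ir{I}},\ I \models \Body{r'}\}$. Since partial evaluation leaves heads untouched and yields $\Body{\IDEvalP{r}{\Ir{I}}} = \IDEvalP{\Body{r}}{\Ir{I}}$, it suffices to show that for each rule $r \in P$ we have $\Ir{I} \cup I \models \Body{r}$ iff $I \models \IDEvalP{\Body{r}}{\Ir{I}}$: granting this, a head $h$ lies in the first set iff some $r \in P$ with $\Head{r} = h$ satisfies $\Ir{I} \cup I \models \Body{r}$, iff some such $r$ satisfies $I \models \IDEvalP{\Body{r}}{\Ir{I}}$, iff $h$ lies in the second set. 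Because $P$ is positive, every body $\Body{r}$ is a positive formula, so everything follows from the lemma below.

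The lemma I would prove is: for every formula $F$ and all two-valued interpretations $\Ir{I}, I$, (i) if $F$ is positive then $\Ir{I} \cup I \models F$ iff $I \models \IDEvalP{F}{\Ir{I}}$, and (ii) if $F$ is \emph{negative} (every atom of $F$ occurs negatively in $F$) then $\Ir{I} \cup I \models F$ iff $I \models \IDEvalN{F}{\Ir{I}}$. I would establish both items simultaneously by induction on the rank of $F$. The dual formulation is forced by the implication case: if $F = G \to H$ is positive then $G$ is negative and $H$ is positive, and $\IDEvalP{F}{\Ir{I}} = \IDEvalN{G}{\Ir{I}} \to \IDEvalP{H}{\Ir{I}}$, so item (i) for $F$ reduces to item (ii) for $G$ and item (i) for $H$; the negative implication case is symmetric. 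In the base case $F = a$, only item (i) applies (a bare atom occurs positively), and $I \models \IDEvalP{a}{\Ir{I}}$ holds iff $a \in \Ir{I}$ (so $\IDEvalP{a}{\Ir{I}} = \top$) or $a \in I$, i.e.\ iff $a \in \Ir{I} \cup I$; for $\top$ and $\bot$ both items hold vacuously. The conjunction and disjunction cases are immediate, since these connectives preserve polarity and $\IDEvalP{\cdot}{\Ir{I}}$ and $\IDEvalN{\cdot}{\Ir{I}}$ distribute over them.

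The main obstacle is the polarity bookkeeping at implications: one must verify that the positive/negative evaluation mode flips exactly at antecedents and that this matches the parity convention defining positive and negative occurrences, so that the induction hypothesis is always invoked in the correct mode. It is precisely here that positivity of $P$ is indispensable: the equivalence $\Ir{I} \cup I \models F$ iff $I \models \IDEvalP{F}{\Ir{I}}$ fails for non-positive $F$, as witnessed by $F = \Naf a$ with $a \in \Ir{I} \setminus I$, where $I \models \IDEvalP{\Naf a}{\Ir{I}} = \Naf a$ but $\Ir{I} \cup I \not\models \Naf a$; leaving negative occurrences untouched (i.e.\ $\IDEvalN{a}{\Ir{I}} = a$) is exactly what makes the dual statement close under induction. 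Once the lemma is in place, substituting it into the rule-wise reduction of the first paragraph yields $\StepR{P}{\Ir{I}}{I} = \Step{\IDEvalP{P}{\Ir{I}}}{I}$, which completes the proof.
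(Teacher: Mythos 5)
Your proof is correct. The paper states this as an \emph{Observation} and gives no proof of its own, so there is nothing to diverge from; your reduction to a rule-wise claim about bodies, followed by a simultaneous induction on the positive and negative evaluation modes (with the polarity flip at antecedents matching the parity convention for positive/negative occurrences), is exactly the argument the paper implicitly relies on, and your counterexample $\Naf a$ with $a \in \Ir{I} \setminus I$ correctly isolates why positivity of $P$ is needed.
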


Note that \({\IDReductP{\IDEvalP{P}{\Ir{I}}}{J}} = {\IDEvalP{\IDReductP{P}{J}}{\Ir{I}}}\).

\begin{proof}[Proof of \cref{prop:rel-step-monotone}]
Clearly, \(\IDEvalP{P}{\Ir{I}}\) is positive whenever \(P\) is positive.
Using \cref{obs:step:relative:evaluate},
we obtain that \(\StepRO{P}{\Ir{I}}\) is monotone.

The second property directly follows from the monotonicity of the one-step provability operator.
\end{proof}

\begin{lemma}\label{lem:step-op-eval}
Let \(P\) be an \(\FormulasF\)-program and \(\Ir{I}\) be a two-valued interpretation.

For any two-valued interpretation~\(J\),
we get
\begin{align*}
\StableR{P}{\Ir{I}}{J} &= \LeastM{\IDReductP{\IDEvalP{P}{\Ir{I}}}{J}}.
\end{align*}
\end{lemma}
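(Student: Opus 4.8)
The plan is to show that the two operators whose least fixed points appear on either side of the claimed equation are literally the same operator, so that their least fixed points coincide. Recall that by definition \(\StableR{P}{\Ir{I}}{J}\) is the least fixed point of \(\StepRO{\IDReductP{P}{J}}{\Ir{I}}\), while \(\LeastM{\IDReductP{\IDEvalP{P}{\Ir{I}}}{J}}\) is by definition the least fixed point of \(\StepO{\IDReductP{\IDEvalP{P}{\Ir{I}}}{J}}\). The latter least model exists because \(\IDReductP{\cdot}{J}\) always produces a positive program by \Nref{lem:foid-basic}{a}, so that \cref{prop:step-monotone,thm:knaster-tarski} apply; likewise the former least fixed point exists by \cref{prop:rel-step-monotone}, again using positivity of \(\IDReductP{P}{J}\).

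First I would observe that \(\IDReductP{P}{J}\) is a positive \(\FormulasF\)-program, by \Nref{lem:foid-basic}{a}. This licenses applying \cref{obs:step:relative:evaluate} with its program instantiated to \(\IDReductP{P}{J}\), yielding for every two-valued interpretation \(I\)
\begin{align*}
\StepR{\IDReductP{P}{J}}{\Ir{I}}{I} = \Step{\IDEvalP{\IDReductP{P}{J}}{\Ir{I}}}{I}.
\end{align*}
Hence the relative one-step operator \(\StepRO{\IDReductP{P}{J}}{\Ir{I}}\) and the ordinary one-step operator \(\StepO{\IDEvalP{\IDReductP{P}{J}}{\Ir{I}}}\) agree on every argument, so they are the same operator and share the same least fixed point. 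In particular, \(\StableR{P}{\Ir{I}}{J} = \LeastM{\IDEvalP{\IDReductP{P}{J}}{\Ir{I}}}\).

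Second, I would invoke the commutation identity noted immediately after \cref{obs:step:relative:evaluate}, namely \(\IDReductP{\IDEvalP{P}{\Ir{I}}}{J} = \IDEvalP{\IDReductP{P}{J}}{\Ir{I}}\). Substituting this into the expression just obtained replaces \(\LeastM{\IDEvalP{\IDReductP{P}{J}}{\Ir{I}}}\) by \(\LeastM{\IDReductP{\IDEvalP{P}{\Ir{I}}}{J}}\), which is exactly the right-hand side of the claim. Chaining the identifications then gives \(\StableR{P}{\Ir{I}}{J} = \LeastM{\IDReductP{\IDEvalP{P}{\Ir{I}}}{J}}\).

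The only care required is confirming that every program appearing is positive, so that the relevant least models and least fixed points are guaranteed to exist, and that the partial-evaluation/reduct commutation holds; both are immediate from \Nref{lem:foid-basic}{a} and the remark following \cref{obs:step:relative:evaluate}. Consequently I do not anticipate any genuine obstacle: the statement is essentially a bookkeeping composition of the observation with the noted commutation, and the proof is correspondingly short.
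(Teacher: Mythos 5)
Your proof is correct and follows exactly the route the paper intends: the paper's own proof consists of the single remark that the lemma ``immediately follows from \cref{obs:step:relative:evaluate}'', and your argument simply spells out that application — instantiating the observation with the positive program \(\IDReductP{P}{J}\) (positivity via \Nref{lem:foid-basic}{a}) and then using the commutation \(\IDReductP{\IDEvalP{P}{\Ir{I}}}{J} = \IDEvalP{\IDReductP{P}{J}}{\Ir{I}}\) noted after the observation. Your extra care about positivity is exactly the hypothesis needed to invoke the observation, so nothing is missing.
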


\begin{proof}
This lemma immediately follows from \cref{obs:step:relative:evaluate}.
\end{proof}

\begin{proof}[Proof of \cref{prp:stable:relative:monotonicity}]
Both properties can be shown by inspecting the reduced programs.

\begin{proofprop}[\(J' \subseteq J\) implies \(\StableR{P}{\Ir{I}}{J} \subseteq \StableR{P}{\Ir{I}}{J'}\)]
Observe that we can use \cref{lem:step-op-eval} to equivalently write \(\StableR{P}{\Ir{I}}{J} = \Stable{\IDEvalP{P}{\Ir{I}}}{J}\) and \(\StableR{P}{\Ir{I}}{J'} = \Stable{\IDEvalP{P}{\Ir{I}}}{J'}\).
With this and \cref{lem:stable:operator:antimonotone}, we see that the relative stable operator is antimonotone just as the stable operator.
\end{proofprop}

\begin{proofprop}[\(\Ipr{I} \subseteq \Ir{I}\) implies \(\StableR{P}{\Ipr{I}}{J} \subseteq \StableR{P}{\Ir{I}}{J}\)]
Observe that \(\StableR{P}{\Ir{I}}{J}\) is equal to the least fixed point of \(\StepRO{\IDReductP{P}{J}}{\Ir{I}}\)
and \(\StableR{P}{\Ipr{I}}{J}\) is equal to the least fixed point of \(\StepRO{\IDReductP{P}{J}}{\Ipr{I}}\).
Furthermore, observe that \(\StepR{\IDReductP{P}{J}}{\Ipr{I}}{X} \subseteq \StepR{\IDReductP{P}{J}}{\Ir{I}}{X}\) for any two-valued interpretation \(X\) because \(\Ipr{I} \subseteq \Ir{I}\) and the underlying \(\StepSym\) operator is monotone.
With this and \cref{lem:operator:less}, we have shown the property.\qedhere
\end{proofprop}
\end{proof}

\begin{observation}\label{obs:stable:relative:properties}
Let \(P\) be an \(\FormulasF\)-program, and \(\Ir{I}\) and \(J\) be two-valued interpretations.

We get the following properties:
\begin{alphaenum}
\item \(\StableR{P}{\emptyset}{J} = \Stable{P}{J}\),\label{obs:stable:relative:properties:a}
\item \(\StableR{P}{\Ir{I}}{J} \subseteq \Head{P}\), and\label{obs:stable:relative:properties:b}
\item \(\StableR{P}{\Ir{I}}{J} = \StableR{P}{\Ir{I} \cap \IDPos{\Body{P}}}{J \cap \IDNeg{\Body{P}}}\).\label{obs:stable:relative:properties:c}
\end{alphaenum}
\end{observation}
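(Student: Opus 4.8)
The plan is to treat the three items separately, working directly from the definition of \(\StableRO{P}{\Ir{I}}\) as the least fixed point of \(\StepRO{\IDReductP{P}{J}}{\Ir{I}}\). Recall that \(\IDReductP{P}{J}\) is positive by \Nref{lem:foid-basic}{a}, so this operator is monotone by \cref{prop:rel-step-monotone} and the least fixed point exists. I will repeatedly use that for any program \(Q\) we have \(\StepR{Q}{\Ir{I}}{X} = \Step{Q}{\Ir{I}\cup X} \subseteq \Head{Q}\), together with the fact that the \FOID-reduct leaves heads intact, so that \(\Head{\IDReductP{P}{J}} = \Head{P}\).

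Item~\ref{obs:stable:relative:properties:a} is immediate: for \(\Ir{I}=\emptyset\) we have \(\StepR{\IDReductP{P}{J}}{\emptyset}{X} = \Step{\IDReductP{P}{J}}{X}\), so the operator \(\StepRO{\IDReductP{P}{J}}{\emptyset}\) coincides with \(\StepO{\IDReductP{P}{J}}\) and hence shares its least fixed point, namely \(\LeastM{\IDReductP{P}{J}} = \Stable{P}{J}\). Item~\ref{obs:stable:relative:properties:b} follows because \(\StableR{P}{\Ir{I}}{J}\) is a fixed point, whence \(\StableR{P}{\Ir{I}}{J} = \StepR{\IDReductP{P}{J}}{\Ir{I}}{\StableR{P}{\Ir{I}}{J}} \subseteq \Head{\IDReductP{P}{J}} = \Head{P}\).

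Item~\ref{obs:stable:relative:properties:c} is the substantive one, and I would prove it by showing that the two operators whose least fixed points are the two sides of the equation are in fact the \emph{same} monotone function; equal monotone operators have equal least fixed points. I would first reduce the argument \(J\). By a routine induction on the rank of a formula one shows that the \FOID-reduct \(\IDReductP{F}{J}\) depends on \(J\) only through \(J \cap \IDNeg{F}\), since only negative occurrences of atoms are evaluated against \(J\). As \(\IDNeg{\Body{r}} \subseteq \IDNeg{\Body{P}}\) for every rule \(r\), this yields \(\IDReductP{P}{J} = \IDReductP{P}{J\cap\IDNeg{\Body{P}}}\); call this common positive program \(Q\). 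It then remains to reduce the parameter, i.e.\ to show that \(\StepRO{Q}{\Ir{I}}\) and \(\StepRO{Q}{\Ir{I}\cap\IDPos{\Body{P}}}\) agree on every \(X\). The key observation, again a one-line induction on rank, is that \(\IDReductP{\Body{r}}{J}\) is positive and all of its atoms lie in \(\IDPos{\Body{r}}\subseteq\IDPos{\Body{P}}\); hence whether \(\Ir{I}\cup X\) satisfies it depends only on the intersection of the interpretation with \(\IDPos{\Body{P}}\). Since \((\Ir{I}\cup X)\cap\IDPos{\Body{P}} = \big((\Ir{I}\cap\IDPos{\Body{P}})\cup X\big)\cap\IDPos{\Body{P}}\), the two parameters satisfy exactly the same rule bodies of \(Q\) and therefore produce the same one-step image, giving \(\StableR{P}{\Ir{I}}{J} = \StableR{P}{\Ir{I}\cap\IDPos{\Body{P}}}{J\cap\IDNeg{\Body{P}}}\).

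I expect the main obstacle to lie entirely in the two supporting inductions underpinning Item~\ref{obs:stable:relative:properties:c}: that the \FOID-reduct depends on \(J\) only through \(\IDNeg{F}\), and that \(\IDReductP{F}{J}\) is positive with atoms confined to \(\IDPos{F}\). Both are conceptually clear but must be carried out carefully, tracking polarity through the simultaneous positive/negative recursion of the \FOID-reduct (proving the two polarity statements together, as in \Nref{lem:foid-basic}{d}). Once these are in place, everything else is the bookkeeping above. As an aside, Item~\ref{obs:stable:relative:properties:c} could alternatively be obtained from \cref{lem:step-op-eval} and the commutation \(\IDReductP{\IDEvalP{P}{\Ir{I}}}{J} = \IDEvalP{\IDReductP{P}{J}}{\Ir{I}}\) noted before \cref{prop:rel-step-monotone}, together with the analogous dependence lemma for partial evaluation (that \(\IDEvalP{F}{\Ir{I}}\) depends on \(\Ir{I}\) only through \(\IDPos{F}\)); I prefer the direct operator-equality route since it confines the required inductions to the \FOID-reduct alone.
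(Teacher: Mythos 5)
Your proof is correct. The paper states this result as an \emph{observation} and provides no proof of its own in the appendix, so there is no official argument to compare against; your treatment---item~(a) by the operators literally coinciding, item~(b) from the fixed-point equation together with the fact that the \FOID-reduct preserves heads, and item~(c) by showing the two one-step operators are identical via the two polarity facts (that \(\IDReductP{F}{J}\) depends on \(J\) only through \(J\cap\IDNeg{F}\), and that its remaining atoms lie in \(\IDPos{F}\))---is exactly the routine justification the authors evidently had in mind, and both supporting inductions follow the same simultaneous positive/negative recursion already carried out in \Nref{lem:foid-basic}{d}.
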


\begin{proof}[Proof of \cref{prp:well-founded:relative:monotonicity}]
Both properties can be shown by using the monotonicity of the underlying relative stable operator:
\begin{proofprop}[\((I',J') \leq_p (I,J)\) implies \(\WellR{P}{\Ir{I},\Ir{J}}{I',J'} \leq_p \WellR{P}{\Ir{I},\Ir{J}}{I,J} \)]
Given that \(\StableRO{P}{\Ir{I}}\) is antimonotone and \(J' \cup \Ir{J} \subseteq J \cup \Ir{J}\),
we have \(\StableR{P}{\Ir{I}}{J \cup \Ir{J}} \subseteq \StableR{P}{\Ir{I}}{J' \cup \Ir{J}}\).
Analogously, we can show \(\StableR{P}{\Ir{J}}{I' \cup \Ir{I}} \subseteq \StableR{P}{\Ir{J}}{I \cup \Ir{I}}\).
We get \( (\StableR{P}{\Ir{I}}{J \cup \Ir{J}}, \StableR{P}{\Ir{J}}{I \cup \Ir{I}}) \leq_p (\StableR{P}{\Ir{I}}{J' \cup \Ir{J}}, \StableR{P}{\Ir{J}}{I' \cup \Ir{I}})\).

Hence, \(\WellRO{P}{\Ir{I},\Ir{J}}\) is monotone.
\end{proofprop}
\begin{proofprop}[\((\Ipr{I},\Ipr{J}) \leq_p (\Ir{I},\Ir{J})\) implies \(\WellR{P}{\Ipr{I},\Ipr{J}}{I,J} \leq_p \WellR{P}{\Ir{I},\Ir{J}}{I,J}\)]
We have to show \((\StableR{P}{\Ipr{I}}{J \cup \Ipr{J}}, \StableR{P}{\Ipr{J}}{I \cup \Ipr{I}}) \leq_p (\StableR{P}{\Ir{I}}{J \cup \Ir{J}},\allowbreak \StableR{P}{\Ir{J}}{I \cup \Ir{I}})\).

Given that \(\Ipr{I} \subseteq \Ir{I}\) and \(J \cup \Ir{J} \subseteq J \cup \Ipr{J}\), we obtain \(\StableR{P}{\Ipr{I}}{J \cup \Ipr{J}} \subseteq \StableR{P}{\Ir{I}}{J \cup \Ir{J}}\) using \cref{prp:stable:relative:monotonicity}.
The same argument can be used for the possible atoms of the four-valued interpretations.
Given that \(\Ir{J} \subseteq \Ipr{J}\) and \(I \cup \Ipr{I} \subseteq I \cup \Ir{I}\), we obtain \(\StableR{P}{\Ir{J}}{I \cup \Ir{I}} \subseteq \StableR{P}{\Ipr{J}}{I \cup \Ipr{I}}\) using \cref{prp:stable:relative:monotonicity}.

Hence, we have shown \(\WellR{P}{\Ipr{I},\Ipr{J}}{I,J} \leq_p \WellR{P}{\Ir{I},\Ir{J}}{I,J}\).\qedhere
\end{proofprop}
\end{proof}

\begin{observation}\label{obs:evaluation:properties}
Let \(P\) be an \(\FormulasF\)-program, and \(I\), \(I'\) and \(\Ir{I}\) be two-valued interpretations.

We get the following properties:
\begin{alphaenum}
\item \(I \models P\) and \(\Ir{I} \subseteq I\) implies \(I \models \IDEvalP{P}{\Ir{I}}\),\label{obs:evaluation:properties:a}
\item \(I \models \IDEvalP{P}{\Ir{I}}\) and \(I' \cap \Body{P}^+ \subseteq \Ir{I}\) implies \(I \cup I' \models \IDEvalP{P}{\Ir{I}}\), and\label{obs:evaluation:properties:b}
\item \(I \models \IDEvalP{P}{\Ir{I}}\) implies \(I \models P\).\label{obs:evaluation:properties:c}
\end{alphaenum}
\end{observation}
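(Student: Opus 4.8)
The plan is to reduce all three items to two structural induction lemmas comparing the satisfaction of a body formula \(F\) with that of its partial evaluations \(\IDEvalP{F}{\Ir{I}}\) and \(\IDEvalN{F}{\Ir{I}}\), exactly in the style of \cref{lem:foid-basic}. Since partial evaluation leaves rule heads untouched, with \(\IDEvalP{r}{\Ir{I}} = \Head{r} \leftarrow \IDEvalP{\Body{r}}{\Ir{I}}\), each item will follow by inspecting, rule by rule, the antecedent \(\IDEvalP{\Body{r}}{\Ir{I}}\). The first lemma I would prove, by simultaneous induction on the rank of \(F\), is the unconditional monotonicity statement: for all \(I\) and \(\Ir{I}\), \emph{(i)} \(I \models F\) implies \(I \models \IDEvalP{F}{\Ir{I}}\), and \emph{(ii)} \(I \models \IDEvalN{F}{\Ir{I}}\) implies \(I \models F\). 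The base case is immediate since \(\IDEvalP{a}{\Ir{I}}\) is either \(\top\) or \(a\) and \(\IDEvalN{a}{\Ir{I}} = a\); the implication step uses \(\IDEvalP{G \rightarrow H}{\Ir{I}} = \IDEvalN{G}{\Ir{I}} \rightarrow \IDEvalP{H}{\Ir{I}}\) and crosses (i) and (ii) precisely as in \Nref{lem:foid-basic}{d}. The second lemma, again by simultaneous induction, is the faithfulness statement under the extra hypothesis \(\Ir{I} \subseteq I\): then \(I \models \IDEvalP{F}{\Ir{I}}\) iff \(I \models F\), and likewise for \(\IDEvalN{F}{\Ir{I}}\); here the only nontrivial base case is an atom \(a \in \Ir{I}\), where \(\IDEvalP{a}{\Ir{I}} = \top\) while \(a \in \Ir{I} \subseteq I\) makes \(I \models a\), so both sides are true.

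Items \ref{obs:evaluation:properties:c} and \ref{obs:evaluation:properties:a} then fall out directly. For \ref{obs:evaluation:properties:c}, assume \(I \models \IDEvalP{P}{\Ir{I}}\) and fix \(r \in P\); if \(I \models \Body{r}\) then the first lemma gives \(I \models \IDEvalP{\Body{r}}{\Ir{I}}\), whence \(I \models \Head{r}\), so \(I \models \Body{r} \rightarrow \Head{r}\) and thus \(I \models P\). For \ref{obs:evaluation:properties:a}, with \(\Ir{I} \subseteq I\) the second lemma equates satisfaction of \(\Body{r}\) and \(\IDEvalP{\Body{r}}{\Ir{I}}\); since heads coincide, \(I \models r\) iff \(I \models \IDEvalP{r}{\Ir{I}}\), and \(I \models P\) yields \(I \models \IDEvalP{P}{\Ir{I}}\).

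Item \ref{obs:evaluation:properties:b} is where the actual work lies, and I would prepare two auxiliary facts. The first is a polarity-tracking lemma, proved by simultaneous induction tracking \(\IDPos{\cdot}\) and \(\IDNeg{\cdot}\) of both \(\IDEvalP{F}{\Ir{I}}\) and \(\IDEvalN{F}{\Ir{I}}\): partial evaluation preserves polarities and deletes exactly the positive atom occurrences lying in \(\Ir{I}\), so in particular \(\IDPos{\IDEvalP{F}{\Ir{I}}} \subseteq \IDPos{F} \setminus \Ir{I}\). The second is the downward-monotonicity counterpart of \cref{lem:monotone-positive}: if \(I \subseteq J\) and \((J \setminus I) \cap \IDPos{G} = \emptyset\), then \(J \models G\) implies \(I \models G\) (equivalently, \(I \not\models G\) implies \(J \not\models G\)). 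Granting these, assume \(I \models \IDEvalP{P}{\Ir{I}}\) and \(I' \cap \Body{P}^+ \subseteq \Ir{I}\), set \(J = I \cup I'\), and fix \(r \in P\). If \(\Head{r} \in J\) the evaluated rule holds since its consequent is true; otherwise \(\Head{r} \notin I\), so \(I \models \IDEvalP{P}{\Ir{I}}\) forces \(I \not\models \IDEvalP{\Body{r}}{\Ir{I}}\). Writing \(G = \IDEvalP{\Body{r}}{\Ir{I}}\) and \(K = J \setminus I \subseteq I'\), the polarity lemma gives \(\IDPos{G} \subseteq \Body{r}^+ \setminus \Ir{I}\), and \(K \cap \Body{r}^+ \subseteq I' \cap \Body{P}^+ \subseteq \Ir{I}\), so \(K \cap \IDPos{G} = \emptyset\); the monotonicity lemma then yields \(J \not\models G\), i.e.\ the antecedent is false under \(J\) and the rule is satisfied. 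Hence \(I \cup I' \models \IDEvalP{P}{\Ir{I}}\).

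The routine content is the pair of satisfaction inductions mirroring \cref{lem:foid-basic}; the main obstacle is item \ref{obs:evaluation:properties:b}, specifically the two polarity inductions. I expect the delicate point to be verifying \(\IDPos{\IDEvalP{F}{\Ir{I}}} \subseteq \IDPos{F} \setminus \Ir{I}\) through the antecedent/consequent polarity swap in the \(\IDEvalP{\cdot}{\Ir{I}}\)/\(\IDEvalN{\cdot}{\Ir{I}}\) recursion, ensuring that evaluating a positive atom to \(\top\) genuinely removes its positive occurrence without turning any negative occurrence positive, together with establishing the downward-monotonicity lemma dual to \cref{lem:monotone-positive}. Everything else is bookkeeping over the rank of formulas.
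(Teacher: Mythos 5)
Your proof is correct; note, though, that the paper itself gives no proof of this statement---it is one of several unproved observations (cf.\ \cref{obs:step:relative:evaluate,obs:stable:relative:properties}) whose content is only invoked later, inside the proof of \cref{prp:splitting:stable}---so there is no official argument to compare yours against, and what you supply fills that gap soundly. Your two satisfaction inductions are the right analogues of \cref{lem:foid-basic}: the crossing of the two polarities at implications, driven by \(\IDEvalP{G \rightarrow H}{\Ir{I}} = \IDEvalN{G}{\Ir{I}} \rightarrow \IDEvalP{H}{\Ir{I}}\), is exactly what makes (i)/(ii) go through, and they yield \ref{obs:evaluation:properties:c} directly and, together with the faithfulness lemma under \(\Ir{I} \subseteq I\), also \ref{obs:evaluation:properties:a}. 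For \ref{obs:evaluation:properties:b}, both auxiliary facts you flag as delicate do hold, in fact with equality: a simultaneous induction gives \(\IDPos{\IDEvalP{F}{\Ir{I}}} = \IDPos{F} \setminus \Ir{I}\) and \(\IDNeg{\IDEvalP{F}{\Ir{I}}} = \IDNeg{F}\), alongside the dual equalities \(\IDPos{\IDEvalN{F}{\Ir{I}}} = \IDPos{F}\) and \(\IDNeg{\IDEvalN{F}{\Ir{I}}} = \IDNeg{F} \setminus \Ir{I}\), because the two evaluations replace exactly the positive (resp.\ negative) occurrences of atoms in \(\Ir{I}\) by \(\top\) and leave every other occurrence, and its polarity, untouched. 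The downward-monotonicity lemma likewise holds by a simultaneous induction with its dual (atoms added only at negative occurrences cannot make a false formula true; atoms added only at positive occurrences cannot make a true formula false), the two halves again crossing at implications. Given these, your case split on whether \(\Head{r} \in I \cup I'\) is airtight: when the head is false, \(I \not\models \IDEvalP{\Body{r}}{\Ir{I}}\), and the fresh atoms \(K = (I \cup I') \setminus I \subseteq I'\) satisfy \(K \cap \IDPos{\IDEvalP{\Body{r}}{\Ir{I}}} = \emptyset\), since \(K \cap \IDPos{\Body{r}} \subseteq \Ir{I}\) while \(\IDPos{\IDEvalP{\Body{r}}{\Ir{I}}}\) avoids \(\Ir{I}\); hence the evaluated body remains false under \(I \cup I'\) and every rule of \(\IDEvalP{P}{\Ir{I}}\) stays satisfied.
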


\begin{lemma}\label{prp:splitting:stable}
Let \(\Fb{P}\) and \(\Ft{P}\) be \(\FormulasF\)-programs,
\(\Ir{I}\) and \(J\) be two-valued interpretations,
\(I = \StableR{\Fb{P} \cup \Ft{P}}{\Ir{I}}{J}\),
\(\Ie{I} = I \cap (\IDPos{\Body{\Fb{P}}} \cap \Head{\Ft{P}})\),
\(\Ib{I} = \StableR{\Fb{P}}{\Ir{I} \cup \Ie{I}}{J}\), and
\({\It{I}} = \StableR{\Ft{P}}{\Ir{I} \cup \Ib{I}}{J}\).

Then, we have \(I = \Ib{I} \cup {\It{I}}\).
\end{lemma}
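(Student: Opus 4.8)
The plan is to reduce everything to least fixed points of monotone one-step operators on \emph{positive} programs and then verify the two inclusions separately via the least-prefixed-point characterization of \Nref{thm:knaster-tarski}{a}. Since the negative context $J$ is fixed throughout, set $P_b=\IDReductPC{\Fb{P}}{J}$ and $P_t=\IDReductPC{\Ft{P}}{J}$; both are positive by \Nref{lem:foid-basic}{a}, and $\IDReductPC{\Fb{P}\cup\Ft{P}}{J}=P_b\cup P_t$. Because the \FOID-reduct alters neither heads nor positively occurring body atoms, we have $\Head{P_t}\subseteq\Head{\Ft{P}}$ and $\IDPos{\Body{P_b}}\subseteq\IDPos{\Body{\Fb{P}}}$, and since $P_b$ is positive, $\IDPosNeg{\Body{r}}=\IDPos{\Body{r}}$ for every $r\in P_b$. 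By definition, $I$, $\Ib{I}$, and $\It{I}$ are the least fixed points of the operators $X\mapsto\Step{P_b\cup P_t}{\Ir{I}\cup X}$, $X\mapsto\Step{P_b}{\Ir{I}\cup\Ie{I}\cup X}$, and $X\mapsto\Step{P_t}{\Ir{I}\cup\Ib{I}\cup X}$, each monotone by \cref{prop:rel-step-monotone}. I would repeatedly use that $\Step{P_b\cup P_t}{Y}=\Step{P_b}{Y}\cup\Step{P_t}{Y}$, that $\Step{P}{Y}$ is monotone in the program, and the following locality principle: for a positive formula $F$, if $Z\models F$ and $Z\cap\IDPosNeg{F}\subseteq Z'$, then $Z'\models F$ (immediate from \cref{lem:monotone-positive}, since satisfaction of $F$ depends only on the atoms occurring in $F$).

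For the inclusion $\Ib{I}\cup\It{I}\subseteq I$, I would show that $I$ is a prefixed point of the second and third operators. As $\Ie{I}\subseteq I$, we get $\Step{P_b}{\Ir{I}\cup\Ie{I}\cup I}=\Step{P_b}{\Ir{I}\cup I}\subseteq\Step{P_b\cup P_t}{\Ir{I}\cup I}=I$, so $\Ib{I}\subseteq I$ by \Nref{thm:knaster-tarski}{a}; using $\Ib{I}\subseteq I$, likewise $\Step{P_t}{\Ir{I}\cup\Ib{I}\cup I}=\Step{P_t}{\Ir{I}\cup I}\subseteq I$, giving $\It{I}\subseteq I$.

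Writing $K=\Ib{I}\cup\It{I}$, the converse $I\subseteq K$ I would obtain by showing that $K$ is a prefixed point of $X\mapsto\Step{P_b\cup P_t}{\Ir{I}\cup X}$, so that $I\subseteq K$. The top part is a fixed-point identity: since $\Ir{I}\cup K=\Ir{I}\cup\Ib{I}\cup\It{I}$, we have $\Step{P_t}{\Ir{I}\cup K}=\It{I}\subseteq K$. For the bottom part, fix $r\in P_b$ with $\Ir{I}\cup K\models\Body{r}$. I claim every $b\in(\Ir{I}\cup K)\cap\IDPos{\Body{r}}$ lies in $\Ir{I}\cup\Ie{I}\cup\Ib{I}$: if $b\in\Ir{I}$ or $b\in\Ib{I}$ this is clear; and if $b\in\It{I}$, then $b\in I$ (by the previous paragraph) and $b\in\Head{\Ft{P}}$ (by \Nref{obs:stable:relative:properties}{b}), while $b\in\IDPos{\Body{r}}\subseteq\IDPos{\Body{\Fb{P}}}$, so $b\in I\cap\IDPos{\Body{\Fb{P}}}\cap\Head{\Ft{P}}=\Ie{I}$. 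By the locality principle applied to the positive body $\Body{r}$, it follows that $\Ir{I}\cup\Ie{I}\cup\Ib{I}\models\Body{r}$, hence $\Head{r}\in\Step{P_b}{\Ir{I}\cup\Ie{I}\cup\Ib{I}}=\Ib{I}$. Thus $\Step{P_b}{\Ir{I}\cup K}\subseteq\Ib{I}\subseteq K$, the prefixed-point check is complete, and $I=\Ib{I}\cup\It{I}$.

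The hard part is precisely this last classification step. The difficulty is that $\Ib{I}$ is computed with the bottom context $\Ir{I}\cup\Ie{I}$, whereas the prefixed-point test evaluates $P_b$ against the larger context $\Ir{I}\cup K$ that also carries $\It{I}$; the argument must certify that this enlargement is inert on the positive bodies of the bottom program. This is exactly where the definition of the external atoms pays off: any atom of $\It{I}$ that can feed a positive body literal of $\Fb{P}$ is, by $\It{I}\subseteq I\cap\Head{\Ft{P}}$, already external and hence supplied as part of $\Ie{I}$. Note that the reverse containment $\Ie{I}\subseteq\It{I}$ can fail—an external atom may be produced by $\Fb{P}$ itself rather than by $\Ft{P}$—so the split is genuinely not a head-partition, and routing atoms of $\It{I}$ through $\Ie{I}$ while leaving the remaining support in $\Ib{I}$ is what makes the decomposition faithful.
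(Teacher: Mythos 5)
Your proof is correct and follows essentially the same route as the paper's: both establish the two inclusions via (pre)fixed-point/model arguments for the positive reduced programs, and both hinge on the same key observation that \(\It{I} \cap \IDPos{\Body{\Fb{P}}} \subseteq \Ie{I}\) (because \(\It{I} \subseteq I \cap \Head{\Ft{P}}\)), so that enlarging the bottom context by \(\It{I}\) is inert on the positive bodies of \(\Fb{P}\). The only cosmetic difference is that the paper phrases the argument via partial evaluation and models of the evaluated programs, whereas you keep the context explicit in the relative one-step operators and work with prefixed points directly.
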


\begin{proof}
Let \(\Iwt{I} = \Ib{I} \cup {\It{I}}\).
Furthermore, we use the following programs:
\begin{align*}
\Fwhb{P} & = \IDEvalP{\IDReductP{\Fb{P}}{J}}{\Ir{I}} &
\Fwtb{P} & = \IDEvalP{\IDReductP{\Fb{P}}{J}}{\Ir{I} \cup \Ie{I}} = \IDEvalP{\Fwhb{P}}{\Ie{I}} \\
\Fwht{P} & = \IDEvalP{\IDReductP{\Ft{P}}{J}}{\Ir{I}} &
\Fwtt{P} & = \IDEvalP{\IDReductP{\Ft{P}}{J}}{\Ir{I} \cup \Ib{I}} = \IDEvalP{\Fwht{P}}{\Ib{I}}
\end{align*}
Observe that
\begin{align*}
I & = \StableR{\Fb{P} \cup \Ft{P}}{\Ir{I}}{J} = \LeastM{\Fwhb{P} \cup \Fwht{P}}, \\
\Ib{I} & = \StableR{\Fb{P}}{\Ir{I} \cup \Ie{I}}{J} = \LeastM{\Fwtb{P}} \text{, and}\\
\It{I} & = \StableR{\Ft{P}}{\Ir{I} \cup \Ib{I}}{J} = \LeastM{\Fwtt{P}}.
\end{align*}
To show that \(\Iwt{I} \subseteq I\), we show that \(I\) is a model of both \(\Fwtb{P}\) and \(\Fwtt{P}\).
To show that \(I \subseteq \Iwt{I}\), we show that \(\Iwt{I}\) is a model of both \(\Fwhb{P}\) and \(\Fwht{P}\).

\begin{proofprop}[\(I \models \Fwtb{P}\)]
\begin{align*}
I = \LeastM{\Fwhb{P} \cup \Fwht{P}} & \ImpliesT I \models \Fwhb{P}.\\
I \models \Fwhb{P} \AndT \Ie{I} \subseteq I & \ImpliesT I \models \Fwtb{P}\\
& \ByT{\Nref{obs:evaluation:properties}{a}}.
\end{align*}
\end{proofprop}

\begin{proofprop}[\(I \models \Fwtt{P}\)]
\begin{align*}
I = \LeastM{\Fwhb{P} \cup \Fwht{P}} & \ImpliesT I \models \Fwht{P}.\\
I \models \Fwtb{P} \AndT \Ib{I} = \LeastM{\Fwtb{P}} & \ImpliesT \Ib{I} \subseteq I.\\
I \models \Fwht{P} \AndT \Ib{I} \subseteq I & \ImpliesT I \models \Fwtt{P}.\\
& \ByT{\Nref{obs:evaluation:properties}{a}}.
\end{align*}
\end{proofprop}

\begin{proofprop}[\(\Iwt{I} \models \Fwhb{P}\)]
Let \(E = \IDPos{\Body{\Fb{P}}} \cap \Head{\Ft{P}}\):
\begin{align*}
\Iwt{I} \subseteq I \AndT \Ie{I} = I \cap E & \ImpliesT \It{I} \cap E \subseteq \Ie{I}. \\
\It{I} = \LeastM{\Fwtt{P}} & \ImpliesT \It{I} \subseteq \Head{\Fwtt{P}}.\\
\It{I} \cap E \subseteq \Ie{I} \AndT \It{I} \subseteq \Head{\Fwtt{P}} & \ImpliesT \It{I} \cap \IDPos{\Body{\Fb{P}}} \subseteq \Ie{I}. \\
\It{I} \cap \IDPos{\Body{\Fb{P}}} \subseteq \Ie{I} & \ImpliesT \It{I} \cap \IDPos{\Body{\Fwhb{P}}} \subseteq \Ie{I}. \\
\It{I} \cap \IDPos{\Body{\Fwhb{P}}} \subseteq \Ie{I} \AndT \Ib{I} = \LeastM{\Fwtb{P}} & \ImpliesT \Iwt{I} \models \Fwtb{P}\\
& \ByT{\Nref{obs:evaluation:properties}{b}}.\\
\Iwt{I} \models \Fwtb{P} & \ImpliesT \Iwt{I} \models \Fwhb{P}\\
& \ByT{\Nref{obs:evaluation:properties}{c}}.
\end{align*}
\end{proofprop}

\begin{proofprop}[\(\Iwt{I} \models \Fwht{P}\)]
\begin{align*}
\It{I} = \LeastM{\Fwtt{P}} & \ImpliesT \Iwt{I} \models \Fwtt{P}\\
& \ByT{\Nref{obs:evaluation:properties}{b}}.\\
\Iwt{I} \models \Fwtt{P} & \ImpliesT \Iwt{I} \models \Fwht{P}\\
& \ByT{\Nref{obs:evaluation:properties}{c}}.\qedhere
\end{align*}
\end{proofprop}
\end{proof}

\begin{proof}[Proof of \cref{prp:splitting:well-founded}]\allowdisplaybreaks Let \(P = \Fb{P} \cup \Ft{P}\) and \(E = \IDPosNeg{\Body{\Fb{P}}} \cap \Head{\Ft{P}}\).
We begin by evaluating \(P\), \(\Fb{P}\) and \(\Ft{P}\) w.r.t. \((I,J)\) and obtain
\begin{align*}
(I,J)
& = \WellR{P}{\Ir{I},\Ir{J}}{I,J} \\
& = (\StableR{P}{\Ir{I}}{\Ir{J} \cup J}, \StableR{P}{\Ir{J}}{\Ir{I} \cup I}), \\
(\Iwhb{I},\Iwhb{J})
& = \WellR{\Fb{P}}{(\Ir{I},\Ir{J}) \sqcup (\Ie{I},\Ie{J})}{I,J} \\
& = (\StableR{\Fb{P}}{\Ir{I} \cup \Ie{I}}{\Ir{J} \cup \Ie{J} \cup J}, \StableR{\Fb{P}}{\Ir{J} \cup \Ie{J}}{\Ir{I} \cup \Ie{I} \cup I})\text{, and}\\
(\Iwht{I},\Iwht{J})
& = \WellR{\Ft{P}}{(\Ir{I},\Ir{J}) \sqcup (\Iwhb{I},\Iwhb{J})}{I,J} \\
& = (\StableR{\Ft{P}}{\Ir{I} \cup \Iwhb{I}}{\Ir{J} \cup \Iwhb{J} \cup J}, \StableR{\Ft{P}}{\Ir{J} \cup \Iwhb{J}}{\Ir{I} \cup \Iwhb{I} \cup I}).\\
\intertext{Using \((\Ie{I},\Ie{J}) \sqsubseteq (I,J)\), we get}
(\Iwhb{I},\Iwhb{J})
& = (\StableR{\Fb{P}}{\Ir{I} \cup \Ie{I}}{\Ir{J} \cup J}, \StableR{\Fb{P}}{\Ir{J} \cup \Ie{J}}{\Ir{I} \cup I}). \\
\intertext{By \cref{prp:splitting:stable} and \nref{obs:stable:relative:properties}{c}, we get}
(\Iwhb{I},\Iwhb{J})
& \sqsubseteq (I,J),\\
(\Iwht{I},\Iwht{J})
& = (\StableR{\Ft{P}}{\Ir{I} \cup \Ie{I}}{\Ir{J} \cup J}, \StableR{\Ft{P}}{\Ir{J} \cup \Ie{J}}{\Ir{I} \cup I})\text{, and} \\
(I,J)
& = (\Iwhb{I},\Iwhb{J}) \sqcup (\Iwht{I},\Iwht{J}).
\end{align*}
We first show \((\Ib{I},\Ib{J}) = (\Iwhb{I},\Iwhb{J})\) and then \(({\It{I}}, {\It{J}}) = (\Iwht{I},\Iwht{J})\).

\begin{proofprop}[\((\Ib{I},\Ib{J}) \leq_p (\Iwhb{I},\Iwhb{J})\)]

\begin{align*}
(\Iwhb{I},\Iwhb{J}) \sqsubseteq (I,J) & \AndT \\
(\Ie{I}, \Ie{J}) = (I,J) \sqcap E & \ImpliesT (\Iwhb{I}, \Iwhb{J}) \sqcup (\Ie{I}, \Ie{J}) \sqsubseteq (I,J).\\
(I,J) = (\Iwhb{I},\Iwhb{J}) \sqcup (\Iwht{I},\Iwht{J}) & \ImpliesT (\Iwht{I},\Iwht{J}) \sqsubseteq (I,J).\\
(\Iwht{I},\Iwht{J}) \sqsubseteq \Head{\Ft{P}} & \ImpliesT (\Iwht{I},\Iwht{J}) \sqcap \IDPosNeg{\Body{\Fb{P}}} \sqsubseteq (\Iwht{I},\Iwht{J}) \sqcap E.\\
\rlap{\((\Iwht{I},\Iwht{J}) \sqcap \IDPosNeg{\Body{\Fb{P}}} \sqsubseteq (\Iwht{I},\Iwht{J}) \sqcap E \AndT\)}
\phantom{(\Iwht{I},\Iwht{J}) \sqcap \IDPosNeg{\Body{\Fb{P}}} \sqsubseteq (\Ie{I}, \Ie{J})} \\
(\Iwht{I},\Iwht{J}) \sqsubseteq (I,J) & \ImpliesT (\Iwht{I},\Iwht{J}) \sqcap \IDPosNeg{\Body{\Fb{P}}} \sqsubseteq (\Ie{I}, \Ie{J}).\\
(\Iwht{I},\Iwht{J}) \sqcap \IDPosNeg{\Body{\Fb{P}}} \sqsubseteq (\Ie{I}, \Ie{J}) & \AndT \\
(I,J) = (\Iwhb{I},\Iwhb{J}) \sqcup (\Iwht{I},\Iwht{J}) & \ImpliesT (I,J) \sqcap \IDPosNeg{\Body{\Fb{P}}} \sqsubseteq (\Iwhb{I},\Iwhb{J}) \sqcup (\Ie{I},\Ie{J}).
\end{align*}

With the above, we use \Nref{obs:stable:relative:properties}{c} to show that \((\Iwhb{I},\Iwhb{J})\) is a fixed point of \(\WellRO{\Fb{P}}{(\Ir{I},\Ir{J}) \sqcup (\Ir{J},\Ie{J})}\):
\begin{align*}
(\Iwhb{I},\Iwhb{J})
& = \WellR{\Fb{P}}{(\Ir{I},\Ir{J}) \sqcup (\Ie{I},\Ie{J})}{I,J} \\
& = \WellR{\Fb{P}}{(\Ir{I},\Ir{J}) \sqcup (\Ie{I},\Ie{J})}{(I,J) \sqcap \IDPosNeg{\Body{\Fb{P}}}} \\
& = \WellR{\Fb{P}}{(\Ir{I},\Ir{J}) \sqcup (\Ie{I},\Ie{J})}{(\Iwhb{I},\Iwhb{J}) \sqcup (\Ie{I},\Ie{J})} \\
& = \WellR{\Fb{P}}{(\Ir{I},\Ir{J}) \sqcup (\Ie{I},\Ie{J})}{\Iwhb{I},\Iwhb{J}}
\end{align*}

Thus, by \Nref{thm:knaster-tarski}{c}, \((\Ib{I},\Ib{J}) \leq_p (\Iwhb{I},\Iwhb{J})\).
\end{proofprop}

\begin{proofprop}[\((\Ib{I},\Ib{J}) = (\Iwhb{I},\Iwhb{J})\)]
To show the property, let
\begin{align*}
(\Iwt{I},\Iwt{J})
& = (\Ib{I}, \Ib{J}) \sqcup (\Ie{I},\Ie{J}) \sqcup (\Iwht{I},\Iwht{J}),\\
(\Iwte{I},\Iwte{J})
& = \WellR{P}{\Ir{I},\Ir{J}}{\Iwt{I},\Iwt{J}} \sqcap E,\\
(\Iwtb{I},\Iwtb{J})
& = (\StableR{\Fb{P}}{\Ir{I} \cup \Iwte{I}}{\Ir{J} \cup \Iwt{J}}, \StableR{\Fb{P}}{\Ir{J} \cup \Iwte{J}}{\Ir{I} \cup \Iwt{I}}),\\
(\Iwtt{I},\Iwtt{J})
& = (\StableR{\Ft{P}}{\Ir{I} \cup \Iwtb{I}}{\Ir{J} \cup \Iwt{J}}, \StableR{\Ft{P}}{\Ir{J} \cup \Iwtb{J}}{\Ir{I} \cup \Iwt{I}}),\text{\ and}\\
\WellR{P}{\Ir{I},\Ir{J}}{\Iwt{I},\Iwt{J}} & = (\Iwtb{I},\Iwtb{J}) \sqcup (\Iwtt{I},\Iwtt{J}) \text{\ by \cref{prp:splitting:stable}.}
\end{align*}

We get:
\begin{align*}
(\Ib{I},\Ib{J}) \leq_p (\Iwhb{I},\Iwhb{J}) & \ImpliesT (\Iwt{I},\Iwt{J}) \leq_p (I,J).\\
(\Iwt{I},\Iwt{J}) \leq_p (I,J) & \ImpliesT \WellR{P}{\Ir{I},\Ir{J}}{\Iwt{I},\Iwt{J}} \leq_p (I,J).\\
\WellR{P}{\Ir{I},\Ir{J}}{\Iwt{I},\Iwt{J}} \leq_p (I,J) & \ImpliesT (\Iwte{I},\Iwte{J}) \leq_p (\Ie{I},\Ie{J}).\\
(\Iwht{I},\Iwht{J}) \sqcap \IDPosNeg{\Body{\Fb{P}}} \sqsubseteq (\Ie{I}, \Ie{J}) & \ImpliesT (\Iwt{I},\Iwt{J}) \sqcap \IDPosNeg{\Body{\Fb{P}}} \sqsubseteq (\Iwhb{I},\Iwhb{J}) \sqcup (\Ie{I},\Ie{J}).\\
(\Iwt{I},\Iwt{J}) \sqcap \IDPosNeg{\Body{\Fb{P}}} \sqsubseteq (\Iwhb{I},\Iwhb{J}) \sqcup (\Ie{I},\Ie{J}) & \ImpliesT \Iwtb{I} = \StableR{\Fb{P}}{\Ir{I} \cup \Iwte{I}}{\Ir{J} \cup \Ie{J} \cup \Ib{J}}\\
& \ImpliesAndT \Iwtb{J} = \StableR{\Fb{P}}{\Ir{J} \cup \Iwte{J}}{\Ir{I} \cup \Ie{I} \cup \Ib{I}}.\\
& \ByT{\Nref{obs:stable:relative:properties}{c}}.\\
\Iwtb{I} = \StableR{\Fb{P}}{\Ir{I} \cup \Iwte{I}}{\Ir{J} \cup \Ie{J} \cup \Ib{J}} & \AndT \\
\Iwtb{J} = \StableR{\Fb{P}}{\Ir{J} \cup \Iwte{J}}{\Ir{I} \cup \Ie{I} \cup \Ib{I}} & \AndT \\
\Ib{I} = \StableR{\Fb{P}}{\Ir{I} \cup \Ie{I}}{\Ir{J} \cup \Ie{J} \cup \Ib{J}} & \AndT \\
\Ib{J} = \StableR{\Fb{P}}{\Ir{J} \cup \Ie{J}}{\Ir{I} \cup \Ie{I} \cup \Ib{I}} & \AndT \\
(\Iwte{I},\Iwte{J}) \leq_p (\Ie{I},\Ie{J}) & \ImpliesT (\Iwtb{I},\Iwtb{J}) \leq_p (\Ib{I},\Ib{J})\\
& \ByT{\cref{prp:well-founded:relative:monotonicity}}.\\
(\Iwtb{I},\Iwtb{J}) \leq_p (\Ib{I},\Ib{J}) & \AndT \\
(\Ib{I},\Ib{J}) \leq_p (\Iwhb{I},\Iwhb{J}) & \ImpliesT (\Iwtb{I},\Iwtb{J}) \leq_p (\Iwhb{I},\Iwhb{J}). \\
\Iwht{I} = \StableR{\Ft{P}}{\Ir{I} \cup \Iwhb{I}}{\Ir{J} \cup J} & \AndT \\
\Iwht{J} = \StableR{\Ft{P}}{\Ir{J} \cup \Iwhb{J}}{\Ir{I} \cup I} & \AndT \\
\Iwtt{I} = \StableR{\Ft{P}}{\Ir{I} \cup \Iwtb{I}}{\Ir{J} \cup \Iwt{J}} & \AndT \\
\Iwtt{J} = \StableR{\Ft{P}}{\Ir{J} \cup \Iwtb{J}}{\Ir{I} \cup \Iwt{I}} & \AndT \\
(\Iwtb{I},\Iwtb{J}) \leq_p (\Iwhb{I},\Iwhb{J}) & \AndT \\
(\Iwt{I},\Iwt{J}) \leq_p (I,J) & \ImpliesT  (\Iwtt{I},\Iwtt{J}) \leq_p (\Iwht{I},\Iwht{J}) \\
& \ByT{\cref{prp:stable:relative:monotonicity}}.\\
\WellR{P}{\Ir{I},\Ir{J}}{\Iwt{I},\Iwt{J}} = (\Iwtb{I},\Iwtb{J}) \sqcup (\Iwtt{I},\Iwtt{J}) & \AndT\\
(\Iwtb{I},\Iwtb{J}) \leq_p (\Ib{I},\Ib{J}) & \AndT \\
(\Iwtt{I},\Iwtt{J}) \leq_p (\Iwht{I},\Iwht{J}) & \ImpliesT
\WellR{P}{\Ir{I},\Ir{J}}{\Iwt{I},\Iwt{J}} \leq_p (\Ib{I},\Ib{J}) \sqcup (\Iwht{I},\Iwht{J}).\\
(\Iwte{I},\Iwte{J}) \leq_p (\Ie{I},\Ie{J}) & \AndT\\
(\Iwte{I},\Iwte{J}) \sqsubseteq \WellR{P}{\Ir{I},\Ir{J}}{\Iwt{I},\Iwt{J}} & \AndT\\
\WellR{P}{\Ir{I},\Ir{J}}{\Iwt{I},\Iwt{J}} \leq_p (\Ib{I},\Ib{J}) \sqcup (\Iwht{I},\Iwht{J}) & \AndT\\
(\Iwt{I},\Iwt{J}) = (\Ib{I}, \Ib{J}) \sqcup (\Ie{I},\Ie{J}) \sqcup (\Iwht{I},\Iwht{J}) & \ImpliesT
\WellR{P}{\Ir{I},\Ir{J}}{\Iwt{I},\Iwt{J}} \leq_p (\Iwt{I},\Iwt{J}).\\
\WellRM{P}{\Ir{I},\Ir{J}} = (I,J) & \AndT \\
\WellR{P}{\Ir{I},\Ir{J}}{\Iwt{I},\Iwt{J}} \leq_p (\Iwt{I},\Iwt{J}) & \ImpliesT
(I,J) \leq_p (\Iwt{I},\Iwt{J}) \\
& \ByT{\Nref{thm:knaster-tarski}{a}}.\\
(\Iwt{I},\Iwt{J}) \leq_p (I,J) \AndT (I,J) \leq_p (\Iwt{I},\Iwt{J}) & \ImpliesT
(I,J) = (\Iwt{I},\Iwt{J}).\\
(\Iwt{I},\Iwt{J}) = (\Ib{I}, \Ib{J}) \sqcup (\Ie{I},\Ie{J}) \sqcup (\Iwht{I},\Iwht{J}) & \ImpliesT (\Ib{I}, \Ib{J}) \sqcup (\Ie{I}, \Ie{J}) \sqsubseteq (\Iwt{I},\Iwt{J}).\\
(\Iwt{I},\Iwt{J}) = (\Ib{I}, \Ib{J}) \sqcup (\Ie{I},\Ie{J}) \sqcup (\Iwht{I},\Iwht{J}) & \AndT \\
(\Iwht{I},\Iwht{J}) \sqcap \IDPosNeg{\Body{\Fb{P}}} \sqsubseteq (\Ie{I}, \Ie{J}) & \ImpliesT (\Iwt{I},\Iwt{J}) \sqcap \IDPosNeg{\Body{\Fb{P}}} \sqsubseteq (\Ib{I},\Ib{J}) \sqcup (\Ie{I},\Ie{J}).\\
(I,J) = (\Iwt{I},\Iwt{J}) & \AndT \\
(\Ib{I}, \Ib{J}) \sqcup (\Ie{I}, \Ie{J}) \sqsubseteq (\Iwt{I},\Iwt{J}) & \AndT\\
(\Iwt{I},\Iwt{J}) \sqcap \IDPosNeg{\Body{\Fb{P}}} \sqsubseteq (\Ib{I},\Ib{J}) \sqcup (\Ie{I},\Ie{J}) & \AndT \\
(\Ib{I},\Ib{J}) = \WellR{\Fb{P}}{(\Ir{I},\Ir{J}) \sqcup (\Ie{I},\Ie{J})}{\Ib{I},\Ib{J}} & \AndT\\
(\Iwhb{I},\Iwhb{J}) = \WellR{\Fb{P}}{(\Ir{I},\Ir{J}) \sqcup (\Ie{I},\Ie{J})}{I,J} & \ImpliesT (\Ib{I}, \Ib{J}) = (\Iwhb{I},\Iwhb{J})\\
& \ByT{\Nref{obs:stable:relative:properties}{c}}.
\end{align*}
\end{proofprop}

\begin{proofprop}[\((\Iwht{I},\Iwht{J}) = ({\It{I}},{\It{J}})\)]
Observe that the lemma can be applied with \(\Fb{P}\) and \(\Ft{P}\) exchanged.
Let
\begin{align*}
\Iwt{E}             & = \IDPosNeg{\Body{\Ft{P}}} \cap \Head{\Fb{P}}\text{,}\\
(\Iwte{I},\Iwte{J}) & = (I,J) \cap \Iwt{E} \text{,}\\
(\Iwtt{I},\Iwtt{J}) & = \WellRM{\Ft{P}}{(\Ir{I},\Ir{J}) \sqcup (\Iwte{I},\Iwte{J})}\text{, and}\\
(\Iwtb{I},\Iwtb{J}) & = \WellR{\Fb{P}}{(\Ir{I},\Ir{J}) \sqcup (\Iwtt{I},\Iwtt{J})}{I,J}.\\
\intertext{Using the properties shown so far, we obtain}
(I,J) & = (\Iwtt{I},\Iwtt{J}) \sqcup (\Iwtb{I},\Iwtb{J}).
\end{align*}
With this we get:
\begin{align*}
(\Ib{I},\Ib{J}) = (\Iwhb{I},\Iwhb{J}) & \AndT\\
(I,J) = (\Iwhb{I},\Iwhb{J}) \sqcup (\Iwht{I},\Iwht{J}) & \AndT \\
(\Ib{I},\Ib{J}) \sqsubseteq \Head{\Fb{P}} & \AndT\\
(\Iwte{I}, \Iwte{J}) = (I,J) \sqcap \Iwt{E} & \ImpliesT
(\Ib{I},\Ib{J}) \sqcap \IDPosNeg{\Body{\Ft{P}}} \sqsubseteq (\Iwte{I},\Iwte{J}).\\
(I,J) = (\Iwtt{I},\Iwtt{J}) \sqcup (\Iwtb{I},\Iwtb{J}) & \AndT \\
(\Iwtb{I},\Iwtb{J}) \sqsubseteq \Head{\Fb{P}} & \AndT\\
(\Iwte{I}, \Iwte{J}) = (I,J) \sqcap \Iwt{E} & \ImpliesT
(\Iwtb{I},\Iwtb{J}) \sqcap \IDPosNeg{\Body{\Ft{P}}} \sqsubseteq (\Iwte{I}, \Iwte{J}).\\
(\Iwtb{I},\Iwtb{J}) \sqcap \IDPosNeg{\Body{\Ft{P}}} \sqsubseteq (\Iwte{I}, \Iwte{J}) & \AndT \\
(I,J) = (\Iwtt{I},\Iwtt{J}) \sqcup (\Iwtb{I},\Iwtb{J}) & \ImpliesT
(I,J) \sqcap \IDPosNeg{\Body{\Fb{P}}} \sqsubseteq (\Iwtt{I},\Iwtt{J}) \sqcup (\Iwte{I},\Iwte{J}).\\
(\Ib{I},\Ib{J}) \sqcap \IDPosNeg{\Body{\Ft{P}}} \sqsubseteq (\Iwte{I},\Iwte{J}) & \AndT \\
(I,J) \sqcap \IDPosNeg{\Body{\Fb{P}}} \sqsubseteq (\Iwtt{I},\Iwtt{J}) \sqcup (\Iwte{I},\Iwte{J}) & \AndT \\
(\Iwtt{I},\Iwtt{J}) = \WellR{\Ft{P}}{(\Ir{I},\Ir{J}) \sqcup (\Iwte{I},\Iwte{J})}{\Iwtt{I},\Iwtt{J}} & \AndT \\
(\Iwht{I},\Iwht{J}) = \WellR{\Ft{P}}{(\Ir{I},\Ir{J}) \sqcup (\Ib{I},\Ib{J})}{I,J} & \ImpliesT
(\Iwtt{I},\Iwtt{J}) = (\Iwht{I},\Iwht{J})\\
& \ByT{\Nref{obs:stable:relative:properties}{c}}.\\
(\Ib{I},\Ib{J}) \sqcap \IDPosNeg{\Body{\Ft{P}}} \sqsubseteq (\Iwte{I},\Iwte{J}) & \AndT\\
(\Iwtt{I},\Iwtt{J}) = \WellRM{\Ft{P}}{(\Ir{I},\Ir{J}) \sqcup (\Iwte{I},\Iwte{J})} & \AndT \\
(\It{I},\It{J}) = \WellRM{\Ft{P}}{(\Ir{I},\Ir{J}) \sqcup (\Ib{I},\Ib{J})} & \ImpliesT
(\Iwtt{I},\Iwtt{J}) = (\It{I},\It{J})\\
& \ByT{\Nref{obs:stable:relative:properties}{c}}.
\end{align*}
Thus, we get \((\Iwht{I},\Iwht{J}) = ({\It{I}},{\It{J}})\).\qedhere
\end{proofprop}
\end{proof}

\begin{proof}[Proof of \cref{thm:sequence:well-founded}]
The theorem can be shown by transfinite induction over the sequence indices.
We do not give the full induction proof here but focus on the key idea.
Let \((\Ipi{I}{i}, \Ipi{J}{i})\) be the intermediate interpretations as in \cref{eq:well-founded:sequence:intermediate}
when computing the well-founded model of the sequence.
Furthermore, let
\begin{align*}
(\Ii{I}{i}, \Ii{J}{i}) & = \WellRM{\Fi{P}{i}}{(\Iri{I}{i},\Iri{J}{i}) \sqcup (\Iei{I}{i},\Iei{J}{i})}
\intertext{be the intermediate interpretations
where \((\Iri{I}{i}, \Iri{J}{i})\) is the union of the intermediate interpretations as in \cref{eq:well-founded:sequence:relative} and}
(\Iei{I}{i}, \Iei{J}{i}) & = (I,J) \cap E_i
\end{align*}
with \(E_i\) as in \cref{eq:well-founded:sequence:external}.

Observe that with \cref{prp:splitting:well-founded}, we have
\(\WellM{\bigcup_{i \in \Index}\Fi{P}{i}} = \bigcup_{i \in \Index}(\Ii{I}{i}, \Ii{J}{i})\).
By \cref{prp:well-founded:relative:monotonicity}, we have \((\Ipi{I}{i}, \Ipi{J}{i}) \leq_p (\Ii{I}{i}, \Ii{J}{i})\)
and, thus, we obtain that \(\WellM{{(\Fi{P}{i})}_{i \in \Index}} \leq_p \WellM{\bigcup_{i \in \Index}\Fi{P}{i}}\).
\end{proof}

\begin{proof}[Proof of \cref{thm:sequence:simplification}]
By \cref{thm:sequence:well-founded}, we have
\begin{align*}
\bigsqcup_{i\in\Index}(\Ii{I}{i},\Ii{J}{i}) & \leq_p (I,J). \\
\intertext{We get \(\bigcup_{i \in \Index}\Ii{I}{i} \subseteq I\) and, thus,}
\bigcup_{i \leq k}\Ii{I}{i}
& = \Iri{I}{k} \cup \Ii{I}{k} \subseteq I.\\
\intertext{Using \(J \subseteq \bigcup_{i \in \Index}\Ii{J}{i}\) and \(\Ii{J}{i} \subseteq \Head{\Fi{P}{i}}\), we get}
J \cap \IDPosNeg{\Body{\Fi{P}{k}}}
& \subseteq (\bigcup_{i \leq k}\Ii{J}{i} \cup \bigcup_{k < i}\Head{\Fi{P}{i}}) \cap \IDPosNeg{\Body{\Fi{P}{k}}}\\
& \subseteq (\bigcup_{i \leq k}\Ii{J}{i} \cup \Ii{E}{k}) \cap \IDPosNeg{\Body{\Fi{P}{k}}}\\
& \subseteq (\Iri{J}{k} \cup \Ii{J}{k} \cup \Ii{E}{k}) \cap \IDPosNeg{\Body{\Fi{P}{k}}}.
\end{align*}
Using both results, we obtain
\begin{align*}
((\Iri{I}{k},\Iri{J}{k}) \sqcup (\emptyset,\Ii{E}{k}) \sqcup (\Ii{I}{k},\Ii{J}{k})) \sqcap \IDPosNeg{\Body{\Fi{P}{k}}} & \leq_p (I,J) \sqcap \IDPosNeg{\Body{\Fi{P}{k}}}.
\end{align*}
Because the body literals determine the simplification, we get
\begin{align*}
\Simp{\Fi{P}{k}}{I,J} & \subseteq \Simp{\Fi{P}{k}}{(\Iri{I}{k},\Iri{J}{k}) \sqcup (\emptyset,\Ii{E}{k}) \sqcup (\Ii{I}{k},\Ii{J}{k})}.
\qedhere
\end{align*}
\end{proof}

\begin{lemma}\label{cor:sequence:simplification:superset}
Let \({(\Fi{P}{i})}_{i \in \Index}\) be a sequence of \(\FormulasR\)-programs,
and \((I,J)\) be the well-founded model of \(\bigcup_{i \in \Index}{\Fi{P}{i}}\).

Then, \(\bigcup_{i \in \Index}{\Fi{P}{i}}\) and \(\bigcup_{i \in \Index}{\Fi{Q}{i}}\)
with \(\Simp{\Fi{P}{i}}{I,J} \subseteq \Fi{Q}{i} \subseteq \Fi{P}{i}\)
have the same well-founded and stable models.
\end{lemma}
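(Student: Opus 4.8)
The plan is to reduce the claim to the two already-established ``superset'' results, \cref{thm:well-founded:simplification:superset} and \cref{cor:stable:simplification:superset}, by first showing that program simplification commutes with unions of programs. Throughout, I would abbreviate \(P = \bigcup_{i \in \Index} \Fi{P}{i}\) and \(Q = \bigcup_{i \in \Index} \Fi{Q}{i}\), so that \((I,J)\) is the well-founded model of \(P\) by hypothesis, and both \(P\) and \(Q\) are \(\FormulasR\)-programs since each \(\Fi{P}{i}\) and \(\Fi{Q}{i}\) is.

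The only genuinely new step is the distributivity identity \(\Simp{\bigcup_{i \in \Index} \Fi{P}{i}}{I,J} = \bigcup_{i \in \Index} \Simp{\Fi{P}{i}}{I,J}\). This is immediate from \cref{def:simplification}: simplification retains exactly those rules \(r\) with \(J \models \IDReductP{\Body{r}}{I}\), a test applied independently to each rule, so a rule survives in the simplification of the union iff it survives in the simplification of the component containing it. Using the hypothesis \(\Simp{\Fi{P}{i}}{I,J} \subseteq \Fi{Q}{i} \subseteq \Fi{P}{i}\) and taking unions over \(i \in \Index\), I would then obtain the chain
\[
\Simp{P}{I,J} = \bigcup_{i \in \Index} \Simp{\Fi{P}{i}}{I,J} \subseteq Q \subseteq P.
\]

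With this chain in hand, the conclusion follows by direct appeal to the cited results. Since \((I,J)\) is the well-founded model of \(P\) and \(\Simp{P}{I,J} \subseteq Q \subseteq P\), \cref{thm:well-founded:simplification:superset} applied to the pair \((P,Q)\) yields that \(P\) and \(Q\) have the same well-founded models. Because \(P\) and \(Q\) are moreover \(\FormulasR\)-programs, \cref{cor:stable:simplification:superset} applied to the same pair yields that \(P\) and \(Q\) are equivalent, i.e., share all stable models. Together these give the assertion.

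I do not expect a substantial obstacle here: the argument is essentially bookkeeping. The one point requiring care is that the well-founded model \((I,J)\) must be that of the full union \(P\) (as the hypothesis guarantees) rather than of the individual components \(\Fi{P}{i}\); it is exactly this global well-founded model that makes the inclusion \(\Simp{P}{I,J} \subseteq Q\) hold and lets both superset theorems apply verbatim.
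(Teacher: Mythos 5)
Your proposal is correct and matches the paper's argument: the paper likewise derives this lemma directly from \cref{thm:well-founded:simplification:superset} and \cref{cor:stable:simplification:superset} applied to the sandwich \(\Simp{P}{I,J} \subseteq Q \subseteq P\). You merely make explicit the rule-wise distributivity \(\Simp{\bigcup_i \Fi{P}{i}}{I,J} = \bigcup_i \Simp{\Fi{P}{i}}{I,J}\), which the paper leaves implicit in its one-line proof.
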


\begin{proof}
This lemma is a direct consequence of \cref{thm:well-founded:simplification:superset,cor:stable:simplification:superset,thm:sequence:simplification}.
\end{proof}

\begin{proof}[Proof of \cref{cor:sequence:simplification:stable}]
This corollary is a direct consequence of \cref{thm:sequence:simplification,cor:sequence:simplification:superset}.
\end{proof}

\begin{proof}[Proof of \cref{cor:sequence:well-founded:stratified}]
This can be proven in the same way as \cref{thm:sequence:well-founded} but note that because \(\Ii{E}{i}\) is empty,
we get \((\Ipi{I}{i}, \Ipi{J}{i}) = (\Ii{I}{i}, \Ii{J}{i})\).
\end{proof}

\begin{proof}[Proof of \cref{cor:sequence:simplification:stratified}]
This can be proven in the same way as \cref{thm:sequence:simplification} but note that because \(\Ii{E}{i}\) is empty,
all \(\leq_p\) and most \(\subseteq\) relations can be replaced with equivalences.
\end{proof}

Whenever head atoms do not interfere with negative body literals,
the relative well founded-model of a program can be calculated with just two applications of the relative stable operator.

\begin{lemma}\label{lem:well-founded:stratified}
Let \(P\) be an \FormulasF-program
such that \(\IDNeg{\Body{P}} \cap \Head{P} = \emptyset\)
and \((\Ir{I},\Ir{J})\) be a four-valued interpretation.

Then, \(\WellRM{P}{\Ir{I},\Ir{J}} = (\StableR{P}{\Ir{I}}{\Ir{J}}, \StableR{P}{\Ir{J}}{\Ir{I}})\).
\end{lemma}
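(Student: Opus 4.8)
The plan is to set $(I^{*},J^{*}) = (\StableR{P}{\Ir{I}}{\Ir{J}},\, \StableR{P}{\Ir{J}}{\Ir{I}})$ and to show that this pair is the least fixed point of $\WellRO{P}{\Ir{I},\Ir{J}}$, which by definition is $\WellRM{P}{\Ir{I},\Ir{J}}$. Since $\WellRO{P}{\Ir{I},\Ir{J}}$ is monotone with respect to the precision ordering by \cref{prp:well-founded:relative:monotonicity}, I would obtain its least fixed point by iterating from the bottom element $(\emptyset,\Signature)$ of the complete lattice of four-valued interpretations, using that the iterates form an increasing chain and lie below every prefixed point; it then suffices to show that this iteration reaches a fixed point, namely $(I^{*},J^{*})$, after two steps, so that by \Nref{thm:knaster-tarski}{a} this fixed point is the least one.

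The computational engine is \cref{obs:stable:relative:properties}: part~\Nref{obs:stable:relative:properties}{c} lets me replace the second argument $J$ of $\StableR{P}{\Ir{I}}{J}$ by $J \cap \IDNeg{\Body{P}}$, while part~\Nref{obs:stable:relative:properties}{b} gives $\StableR{P}{\cdot}{\cdot} \subseteq \Head{P}$. Together with the hypothesis $\IDNeg{\Body{P}} \cap \Head{P} = \emptyset$, these imply that atoms produced by the relative stable operator never occur in the negative part of any body, hence can be dropped from the negative context. Concretely, the first application of the operator to $(\emptyset,\Signature)$ yields $(\StableR{P}{\Ir{I}}{\Signature},\, \StableR{P}{\Ir{J}}{\Ir{I}}) = (I_{1},J^{*})$ (using $\Signature \cup \Ir{J} = \Signature$), so the possible component already equals $J^{*}$, with $I_{1} \subseteq \Head{P}$ by \Nref{obs:stable:relative:properties}{b}.

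For the second application, evaluated at $(I_{1},J^{*})$: in the first component, $J^{*} \subseteq \Head{P}$ gives $J^{*} \cap \IDNeg{\Body{P}} = \emptyset$, so by \Nref{obs:stable:relative:properties}{c} the context $J^{*}\cup\Ir{J}$ simplifies and $\StableR{P}{\Ir{I}}{J^{*}\cup\Ir{J}} = \StableR{P}{\Ir{I}}{\Ir{J}} = I^{*}$; symmetrically, $I_{1} \subseteq \Head{P}$ gives $\StableR{P}{\Ir{J}}{I_{1}\cup\Ir{I}} = \StableR{P}{\Ir{J}}{\Ir{I}} = J^{*}$. Thus the second iterate is $(I^{*},J^{*})$. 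Finally I would check that $(I^{*},J^{*})$ is genuinely a fixed point by the same two simplifications: since $I^{*},J^{*} \subseteq \Head{P}$, applying the operator once more reduces the negative contexts $J^{*}\cup\Ir{J}$ and $I^{*}\cup\Ir{I}$ to $\Ir{J}$ and $\Ir{I}$, returning $(I^{*},J^{*})$. The iteration from the bottom therefore stabilizes at stage two at a fixed point, which is consequently the least, giving $\WellRM{P}{\Ir{I},\Ir{J}} = (I^{*},J^{*})$.

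I expect the main obstacle to be purely one of bookkeeping rather than of idea: invoking \Nref{obs:stable:relative:properties}{c} in both directions — first to discard the freshly derived head atoms from the negative context via the disjointness hypothesis, and then to recognize the resulting restricted expression as $\StableR{P}{\cdot}{\cdot}$ evaluated at the original relative interpretation. No genuine fixed-point iteration beyond two steps occurs, so the only conceptual care needed is the justification, via monotonicity of $\WellRO{P}{\Ir{I},\Ir{J}}$ and the least element $(\emptyset,\Signature)$, that a fixed point reached by iterating from the bottom is the least fixed point.
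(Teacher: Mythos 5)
Your proof is correct and rests on the same computational core as the paper's: \nref{obs:stable:relative:properties}{b} shows that the output of the relative stable operator is contained in \(\Head{P}\), so the hypothesis \(\IDNeg{\Body{P}} \cap \Head{P} = \emptyset\) together with \nref{obs:stable:relative:properties}{c} lets the freshly derived atoms be dropped from the negative context. The only difference is packaging: the paper argues top-down, setting \((I,J)=\WellRM{P}{\Ir{I},\Ir{J}}\) and simplifying its two fixed-point equations \(I=\StableR{P}{\Ir{I}}{\Ir{J}\cup J}\) and \(J=\StableR{P}{\Ir{J}}{\Ir{I}\cup I}\) directly, which avoids both your intermediate iterate \(I_1\) and the (standard, but extra) justification that a fixed point reached by iterating upward from \((\emptyset,\Signature)\) is the least one.
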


\begin{proof}
Let \((I,J) = \WellRM{P}{\Ir{I},\Ir{J}}\).

We have \(J = \StableR{P}{\Ir{J}}{\Ir{I} \cup I}\).
By \Nref{obs:stable:relative:properties}{b}, we get \(J \subseteq \Head{P}\).
With this and \(\IDNeg{\Body{P}} \cap \Head{P} = \emptyset\), we get \({\Body{P}}^- \cap J = \emptyset\).
Thus, \(\StableR{P}{\Ir{I}}{\Ir{J} \cup J} = \StableR{P}{\Ir{I}}{\Ir{J}}\) by \Nref{obs:stable:relative:properties}{c}.

The same arguments apply to show \(\StableR{P}{\Ir{J}}{\Ir{I} \cup I} = \StableR{P}{\Ir{J}}{\Ir{I}}\).
\end{proof}

Any sequence as in \cref{cor:sequence:well-founded:stratified} in which
each \(P_i\) additionally satisfies the precondition of \cref{lem:well-founded:stratified}
has a total well-founded model.
Furthermore, the well-founded model of such a sequence can be calculated
with just two (independent) applications of the relative stable operator per program \(P_i\) in the sequence.

\begin{proof}[Proof of \cref{prp:translation:equivalence}]
We use \cref{prp:strong:equivalence} to show that both formulas are strongly equivalent.
\begin{proofprop}[\(I \models \Reduct{\TransA{a}}{J}\) implies \(I \models \Reduct{\Trans{a}}{J}\) for arbitrary interpretations \(I\)]
The formulas \(\TransA{a}\) and \(\Trans{a}\) only differ in the consequents of their implications.
Observe that the consequents in \(\TransA{a}\) are stronger than the ones in \(\Trans{a}\).
Thus, it follows that \(\TransA{a}\) is stronger than \(\Trans{a}\).
Furthermore, observe that the same holds for their reducts.
\end{proofprop}
\begin{proofprop}[\(I \not\models \Reduct{\TransA{a}}{J}\) implies \(I \not\models \Reduct{\Trans{a}}{J}\) for arbitrary interpretations \(I\)]
Let \(\ElemG\) be the set of all instance of the aggregate elements of \(a\).
Because \(I \not\models \Reduct{\TransA{a}}{J}\),
there must be a set \(\ElemJ \subseteq \ElemG\) such that
\(\ElemJ \notjustifies a\), \(I \models \ReductC{\Trans{\ElemJ}^\wedge}{J}\), and \(I \not\models \ReductC{\TransAD{a}{\ElemJ}^\vee}{J}\).
With this, we construct the set
\begin{align*}
\widehat{\ElemJ} = \ElemJ \cup \{ e \in \ElemG \setminus \ElemJ \mid I \models \Reduct{\Trans{e}}{J} \}.
\end{align*}
\begin{align*}
\text{The construction of \(\widehat{\ElemJ}\)} & \AndT\\
\ElemJ \notjustifies a & \AndT\\
I \not\models \ReductC{\TransAD{a}{\ElemJ}^\vee}{J}
& \ImpliesT \widehat{\ElemJ} \notjustifies a \AndT I \not\models \ReductC{\TransD{a}{\widehat{\ElemJ}}^\vee}{J}.\\
\text{The construction of \(\widehat{\ElemJ}\)} & \AndT\\
I \models \ReductC{\Trans{\ElemJ}^\wedge}{J}
& \ImpliesT I \models \ReductC{\Trans{\widehat{\ElemJ}}^\wedge}{J}.\\
\widehat{\ElemJ} \notjustifies a & \AndT \\
I \not\models \ReductC{\TransD{a}{\widehat{\ElemJ}}^\vee}{J} & \AndT \\
I \models \ReductC{\Trans{\widehat{\ElemJ}}^\wedge}{J}
& \ImpliesT I \not\models \Reduct{\Trans{a}}{J}.\qedhere
\end{align*}
\end{proofprop}
\end{proof}

\begin{proof}[Proof of \cref{prp:translation:monotonicity}]
Let \(\ElemG\) be the set of ground instances of the aggregate elements of~\(a\).
Furthermore, observe that a monotone aggregate \(a\) is either constantly true or not justified by the empty set.

In case that \(\TransA{a} \equiv \top\), we get \(\IDReductP{\TransA{a}}{I} \equiv \top\) and the lemma holds.

Next, we consider the case that the empty set does not justify the aggregate.
Observe that \(\TransAD{a}{\emptyset}\) is stronger than \(\TransAD{a}{\ElemJ}\) for any \(\ElemJ \subseteq \ElemG\).
And, we have that \(\TransA{a}\) contains the implication \(\top \rightarrow \TransAD{a}{\emptyset}\).
Because of this, we have \(\TransA{a} \equiv \TransAD{a}{\emptyset}\).
Furthermore, all consequents in \(\TransA{a}\) are positive formulas and, thus, not modified by the reduct.
Thus, the reduct \(\IDReductPC{\top \rightarrow \TransAD{a}{\emptyset}}{I}\) is equal to \(\top \rightarrow \TransAD{a}{\emptyset}\).
And as before, it is stronger than all other implications in \(\IDReductP{\TransA{a}}{I}\).
Hence, we get \(\IDReductP{\TransA{a}}{I} \equiv \TransAD{a}{\emptyset}\).
\end{proof}

\begin{proof}[Proof of \cref{prp:translation:restricted:properties}]
Remember that the translation \(\TransA{a}\) is a conjunction of implications.
The antecedents of the implications are conjunctions of aggregate elements and the consequents are disjunctions of conjunctions of aggregate elements.

\begin{proofprop}[\ref{prp:translation:restricted:properties:a}]
If the conjunction in an antecedent contains an element not in \(J\), then the conjunction is not satisfied by \(X\) and the implication does not affect the satisfiability of \(\TransA{a}\).
If a conjunction in a consequent contains an element not in \(J\), then \(X\) does not satisfy the conjunction and the conjunction does not affect the satisfiability of the encompassing disjunction.
Observe that both cases correspond exactly to those subformulas omitted in~\(\TransAD{J}{a}\).
\end{proofprop}

The remaining two properties follow for similar reasons.
\end{proof}

The next observation summarizes how dependencies transfer from non-ground aggregate programs to the corresponding ground \(\FormulasR\)-programs.

\begin{observation}\label{obs:depend}
Let \(\Fi{P}{1}\) and \(\Fi{P}{2}\) be aggregate programs, and \(G_1 = \TransA{P_1}\) and \(G_2 = \TransA{P_2}\).

Then,
\begin{alphaenum}
\item \(\Fi{P}{1}\) does not depend on \(\Fi{P}{2}\) implies \(\IDPosNeg{\Body{\Fi{G}{1}}} \cap \Head{\Fi{G}{2}} = \emptyset\),\label{obs:depend:a}
\item \(\Fi{P}{1}\) does not positively depend on \(\Fi{P}{2}\) implies \(\IDPos{\Body{\Fi{G}{1}}} \cap \Head{\Fi{G}{2}} = \emptyset\),\label{obs:depend:b}
\item \(\Fi{P}{1}\) does not negatively depend on \(\Fi{P}{2}\) implies \(\IDNeg{\Body{\Fi{G}{1}}} \cap \Head{\Fi{G}{2}} = \emptyset\).\label{obs:depend:c}
\end{alphaenum}
\end{observation}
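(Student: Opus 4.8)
The plan is to reduce all three items to a single \emph{occurrence lemma} that transports the syntactic dependency information of the non-ground programs to the positive and negative atom occurrences in the translated $\FormulasR$-programs, and then to argue by contraposition. Concretely, I would first prove, at the level of predicate symbols,
\begin{align*}
\Predicate{\IDPos{\Body{G_1}}} & \subseteq \Predicate{\IDPos{\Body{P_1}}} \CandT\\
\Predicate{\IDNeg{\Body{G_1}}} & \subseteq \Predicate{\IDNeg{\Body{P_1}}},
\end{align*}
together with the dual fact $\Predicate{\Head{G_2}} \subseteq \Predicate{\Head{P_2}}$. The head inclusion is immediate: $\TransA{}$ leaves rule heads untouched (a head is an atom and $\TransA{h}=h$), and forming ground instances preserves the predicate of an atom. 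Granting the occurrence lemma, item~\ref{obs:depend:a} follows because an atom $c\in\IDPosNeg{\Body{G_1}}\cap\Head{G_2}$ would force $\Predicate{c}\in\Predicate{\IDPosNeg{\Body{P_1}}}$ and $\Predicate{c}\in\Predicate{\Head{P_2}}$, i.e.\ some rule of $P_1$ depends on some rule of $P_2$, contradicting the hypothesis; items~\ref{obs:depend:b} and~\ref{obs:depend:c} are obtained the same way by keeping the $+$ and $-$ occurrences separate.

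The heart of the argument is therefore the occurrence lemma, which I would establish by case analysis on the kind of a body literal $b$ of a rule $r\in\Grd{P_1}$, using that the top-level conjunction $\TransA{\Body{r}}^\wedge$ is a positive context, so positive/negative occurrences in it are the unions of those of the individual $\TransA{b}$. For a positive atom $b$ we have $\TransA{b}=b$, a strictly positive occurrence matching $b\in\IDPos{b}$; for $b=\Naf a$ we have $\TransA{b}=a\rightarrow\bot$, so $a$ occurs negatively, matching $a\in\IDNeg{b}$; for a comparison $\TransA{b}\in\{\top,\bot\}$ contributes no atoms, matching $\IDPos{b}=\IDNeg{b}=\emptyset$. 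The delicate case is an aggregate $a$ with elements $E$: in $\TransA{a}$ the atoms of $\Cond{e}$ occur in antecedents of implications (hence negatively) and in consequents (hence positively), and all such atoms are ground instances of atoms in $\bigcup_{e\in E}\Body{e}$, whose predicates are exactly $\Predicate{\IDPos{a}}=\Predicate{\IDNeg{a}}$ in the non-monotone case. For \emph{monotone} aggregates I would invoke the augmented translation (justified by \cref{prp:translation:monotonicity}), which replaces $\TransA{a}$ by the strictly positive formula $\IDReductP{\TransA{a}}{\emptyset}$: there all condition atoms occur only positively, matching $\IDNeg{a}=\emptyset$ by definition. Finally, since $r$ is an instance of some $r'\in P_1$ and instantiation (including the grounding of local variables inside aggregates via $\Grd{E}$) preserves predicates, the per-literal bounds lift to the claimed predicate-level inclusions for all of $G_1$.

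The main obstacle I anticipate is precisely the bookkeeping of positive versus negative occurrences through the nested implicational structure of $\TransA{a}$, combined with making the monotone case align exactly with the stipulated $\IDNeg{a}=\emptyset$. This requires being explicit that the definitions of $\IDPos{a}$ and $\IDNeg{a}$ in the dependency section were \emph{designed} to agree with the occurrences in the (augmented) translation, so the lemma is a verification rather than a discovery; the only real care is to confirm that the monotone refinement genuinely removes every negative occurrence and that trivially satisfied aggregates or comparisons yield only the harmless inclusion ($\subseteq$ rather than $=$), which is all the contrapositive argument needs. Everything else—the reduction to the occurrence lemma and the head fact—is routine.
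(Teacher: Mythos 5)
Your proof is correct. There is nothing to compare it against in the paper: the statement is an unproven \emph{observation}, treated by the authors as immediate from the definitions, and your argument is exactly the verification they are implicitly relying on. You also correctly isolate the one point where the claim is not purely routine: item (c) would \emph{fail} for a monotone aggregate $a$ under the unaugmented translation, because the antecedents $\Trans{\ElemJ}^\wedge$ of the implications in $\TransA{a}$ contribute negative occurrences of the condition atoms while the dependency analysis stipulates $a^-=\emptyset$; the item holds only because the paper redefines $\TransASym$ on monotone aggregates to the strictly positive formula $\IDReductP{\TransA{a}}{\emptyset}$, which is precisely what you invoke. The rest — heads preserved by $\TransASym$ and by instantiation, polarity bookkeeping through the single level of implication in the aggregate translation, comparisons and the negation $\Naf a = a\rightarrow\bot$ contributing what the definitions of $b^+$ and $b^-$ say they should — is routine, and the predicate-level inclusions are all the contrapositive needs (a common predicate in $\IDPosNeg{\Body{G_1}}\cap\Head{G_2}$ yields a rule of $P_1$ depending on a rule of $P_2$). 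One cosmetic remark: for the non-ground side the paper's notation is $\Predicate{\Body{P_1}^+}$ rather than $\Predicate{\IDPos{\Body{P_1}}}$, since $\IDPos{\cdot}$ is defined for formulas and $(\cdot)^+$ for aggregate bodies, but your intent is unambiguous.
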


The next two lemmas pin down important properties of instantiation sequences.
First of all,
there are no external atoms in the components of instantiation sequences.

\begin{lemma}\label{lem:sequence:external}
Let \(P\) be an aggregate program and \(\Fsqi{P}{\Index}{i}\) be an instantiation sequence for \(P\).

Then, for the sequence \(\Fsqi{G}{\Index}{i}\) with \(\Fi{G}{i} = \TransA{\Fi{P}{i}}\),
we have \(\Ii{E}{i}=\emptyset\) for each \(i \in \Index\) where \(\Ii{E}{i}\) is defined as in \cref{eq:well-founded:sequence:external}.
\end{lemma}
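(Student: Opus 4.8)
The plan is to argue by contradiction, using the dependency transfer recorded in \cref{obs:depend} together with the topological ordering that defines an instantiation sequence. Write $\Fi{G}{i} = \TransA{\Fi{P}{i}}$ and recall that, by \eqref{eq:well-founded:sequence:external} applied to the ground sequence, $\Ii{E}{i} = \IDPosNeg{\Body{\Fi{G}{i}}} \cap \bigcup_{i < j} \Head{\Fi{G}{j}}$. First I would suppose, for contradiction, that $\Ii{E}{i} \neq \emptyset$ for some $i \in \Index$. Then there is an index $j$ with $i < j$ and an atom $a \in \IDPosNeg{\Body{\Fi{G}{i}}} \cap \Head{\Fi{G}{j}}$, so in particular $\IDPosNeg{\Body{\Fi{G}{i}}} \cap \Head{\Fi{G}{j}} \neq \emptyset$.

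Next I would invoke the contrapositive of \nref{obs:depend}{a}: since $\IDPosNeg{\Body{\Fi{G}{i}}} \cap \Head{\Fi{G}{j}} \neq \emptyset$, the component $\Fi{P}{i}$ depends on the component $\Fi{P}{j}$. Now the defining property of an instantiation sequence states that $i' < j'$ whenever $\Fi{P}{j'}$ depends on $\Fi{P}{i'}$; applying this to $\Fi{P}{i}$ depending on $\Fi{P}{j}$ (i.e.\ with $j' = i$ as the depending component and $i' = j$ as the depended-upon component) yields $j < i$. This contradicts $i < j$, since $\Index$ is well ordered and hence totally ordered. Therefore $\Ii{E}{i} = \emptyset$ for every $i \in \Index$.

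There is essentially no hard part: the statement is a direct consequence of packaging together the predicate-level dependency transfer of \cref{obs:depend} with the topological ordering condition on components. The only point requiring a moment's care is bookkeeping the \emph{direction} of the dependency — a nonempty intersection forces $\Fi{P}{i}$ to depend on $\Fi{P}{j}$, whereas the sequence condition constrains the index of a depended-upon component to precede that of the depending one, and it is the clash between these two directions (together with $i \neq j$, which follows from $i < j$) that produces the contradiction.
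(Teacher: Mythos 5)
Your proof is correct and follows the same route as the paper, which disposes of the lemma in one line by citing \nref{obs:depend}{a} together with the compatibility of component dependencies with the index order of an instantiation sequence (guaranteed by the anti-symmetry of the dependency relation). Your version simply spells out the contrapositive of \nref{obs:depend}{a} and the directional bookkeeping of the ordering condition, which is exactly the intended argument.
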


\begin{proof}
This lemma is a direct consequence of \Nref{obs:depend}{a} and the anti-symmetry
of the dependency relation between components.
\end{proof}

\begin{proof}[Proof of \cref{thm:instantiation-sequence:well-founded}]
This theorem is a direct consequence of \cref{lem:sequence:external,cor:sequence:well-founded:stratified}.
\end{proof}

Moreover, for each stratified component in an instantiation sequence, we obtain a total well-founded model.

\begin{lemma}\label{lem:sequence:well-founded:stratified}
Let \(P\) be an aggregate program and \(\Fsqi{P}{\Index}{i}\) be an instantiation sequence for \(P\).

Then, for the sequence \(\Fsqi{G}{\Index}{i}\) with \(\Fi{G}{i} = \TransA{\Fi{P}{i}}\),
we have \(\Ii{I}{i}=\Ii{J}{i}=\StableR{\Fi{G}{i}}{\Iri{I}{i}}{\Iri{I}{i}}\)
for each stratified component \(\Fi{P}{i}\) where
\((\Iri{I}{i},\Iri{J}{i})\) and \((\Ii{I}{i},\Ii{J}{i})\) are defined as in \cref{eq:well-founded:sequence:relative,eq:well-founded:sequence:intermediate}
in the construction of the well-founded model of \(\Fsqi{G}{\Index}{i}\) in \cref{def:well-founded:sequence}.
\end{lemma}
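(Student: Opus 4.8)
The plan is to proceed by transfinite induction over the well-ordered index set~$\Index$, proving for each stratified component~$\Fi{P}{i}$ that $\Ii{I}{i}=\Ii{J}{i}=\StableR{\Fi{G}{i}}{\Iri{I}{i}}{\Iri{I}{i}}$, while assuming the same claim for every stratified $\Fi{P}{j}$ with $j<i$. First I would simplify the defining equation for $(\Ii{I}{i},\Ii{J}{i})$. Since $\Fsqi{P}{\Index}{i}$ is an instantiation sequence, \cref{lem:sequence:external} gives $\Ii{E}{i}=\emptyset$, so \eqref{eq:well-founded:sequence:intermediate} collapses to $(\Ii{I}{i},\Ii{J}{i}) = \WellRM{\Fi{G}{i}}{\Iri{I}{i},\Iri{J}{i}}$. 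Because $\Fi{P}{i}$ is stratified it does not depend negatively on itself, so by \Nref{obs:depend}{c} (applied with $\Fi{P}{1}=\Fi{P}{2}=\Fi{P}{i}$) we have $\IDNeg{\Body{\Fi{G}{i}}}\cap\Head{\Fi{G}{i}}=\emptyset$. This is exactly the precondition of \cref{lem:well-founded:stratified}, which then yields $\Ii{I}{i}=\StableR{\Fi{G}{i}}{\Iri{I}{i}}{\Iri{J}{i}}$ and $\Ii{J}{i}=\StableR{\Fi{G}{i}}{\Iri{J}{i}}{\Iri{I}{i}}$.

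The heart of the argument is to show that the certain and possible parts of the incoming context agree on the atoms that actually matter for $\Fi{G}{i}$, namely that $\Iri{I}{i}\cap\IDPos{\Body{\Fi{G}{i}}} = \Iri{J}{i}\cap\IDPos{\Body{\Fi{G}{i}}}$ and $\Iri{I}{i}\cap\IDNeg{\Body{\Fi{G}{i}}} = \Iri{J}{i}\cap\IDNeg{\Body{\Fi{G}{i}}}$. I would establish the positive equality as follows, the negative one being symmetric. Since $\Iri{I}{i}=\bigcup_{j<i}\Ii{I}{j}$ and $\Iri{J}{i}=\bigcup_{j<i}\Ii{J}{j}$, any atom in either intersection lies in some $\Ii{I}{j}$ or $\Ii{J}{j}$ with $j<i$, and by \Nref{obs:stable:relative:properties}{b} both $\Ii{I}{j}$ and $\Ii{J}{j}$ are subsets of $\Head{\Fi{G}{j}}$. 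Hence such an atom lies in $\Head{\Fi{G}{j}}\cap\IDPos{\Body{\Fi{G}{i}}}$, and the contrapositive of \Nref{obs:depend}{b} forces $\Fi{P}{i}$ to depend positively on $\Fi{P}{j}$. As $\Fi{P}{i}$ is stratified and depends on $\Fi{P}{j}$, the component $\Fi{P}{j}$ is stratified as well, so the induction hypothesis gives $\Ii{I}{j}=\Ii{J}{j}$; this makes membership in $\Iri{I}{i}$ and in $\Iri{J}{i}$ interchangeable for the atom at hand, yielding the desired equality.

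With both equalities in place, I would conclude using \Nref{obs:stable:relative:properties}{c}, which lets me restrict each relative stable operator to the body atoms of $\Fi{G}{i}$. Writing $P^\ast=\Iri{I}{i}\cap\IDPos{\Body{\Fi{G}{i}}}$ and $N^\ast=\Iri{I}{i}\cap\IDNeg{\Body{\Fi{G}{i}}}$, both $\Ii{I}{i}$ and $\Ii{J}{i}$ reduce to $\StableR{\Fi{G}{i}}{P^\ast}{N^\ast}$, so they coincide; applying \Nref{obs:stable:relative:properties}{c} once more rewrites this common value as $\StableR{\Fi{G}{i}}{\Iri{I}{i}}{\Iri{I}{i}}$, which is the claim.

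I expect the main obstacle to be the bridging step between the ground and non-ground levels in the middle paragraph: one must argue carefully that a ground head atom appearing in $\Iri{I}{i}$ or $\Iri{J}{i}$ and occurring, positively or negatively, in the translated body $\Body{\Fi{G}{i}}$ genuinely witnesses a component-level dependency of $\Fi{P}{i}$ on the defining component $\Fi{P}{j}$. This is precisely what \cref{obs:depend} packages, so the real work is confirming that each intersection is covered by its contrapositive and that the resulting components are caught by the stratification hypothesis. The remaining bookkeeping, namely the base case $\Iri{I}{i}=\Iri{J}{i}=\emptyset$ at the least index and the treatment of limit indices, is routine, since the union representation of $(\Iri{I}{i},\Iri{J}{i})$ makes the ``some $j<i$'' reasoning uniform across successor and limit stages.
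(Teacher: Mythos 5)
Your proof is correct and follows essentially the same route as the paper's: induction over the sequence, using \cref{lem:sequence:external}, the contrapositive of \cref{obs:depend}, \nref{obs:stable:relative:properties}{b}--\ref{obs:stable:relative:properties:c}, and \cref{lem:well-founded:stratified}, with stratification propagating along dependencies via the induction hypothesis. The only differences are cosmetic — you argue atom-wise where the paper does a per-component case split ("depends on $\Fi{P}{j}$ or not"), and you apply \cref{lem:well-founded:stratified} before rather than after identifying the two contexts.
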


\begin{proof}
In the following, we use \(\Ii{E}{i}\) and \((\Iri{I}{i},\Iri{J}{i})\) for the sequence \(\Fsqi{G}{\Index}{i}\) as defined in \cref{eq:well-founded:sequence:external,eq:well-founded:sequence:relative}.
Note that, by \cref{lem:sequence:external}, we have \(\Ii{E}{i} = \emptyset\).

We prove by induction.

\begin{proofpart}[Base]
Let \(\Fi{P}{i}\) be a stratified component that does not depend on any other component.
Because \(\Fi{P}{i}\) does not depend on any other component, we have \(\bigcup_{j < i} \Head{\Fi{G}{j}} \cap \IDPosNeg{\Body{\Fi{G}{i}}} = \emptyset\).
Thus, by \Nref{obs:stable:relative:properties}{b}, we get \(\Iri{I}{i} \cap \IDPosNeg{\Body{\Fi{G}{i}}} = \Iri{J}{i} \cap \IDPosNeg{\Body{\Fi{G}{i}}} = \emptyset\).
By \Nref{obs:stable:relative:properties}{c}, we get \((\Ii{I}{i},\Ii{J}{i})=\WellRM{\Fi{G}{i}}{\Iri{I}{i},\Iri{J}{i}} = \WellRM{\Fi{G}{i}}{\Iri{I}{i},\Iri{I}{i}}\).
Because \(\Fi{P}{i}\) is stratified, we have \({\Body{\Fi{G}{i}}}^- \cap \Head{\Fi{G}{i}} = \emptyset\).
We then use \cref{lem:well-founded:stratified} to obtain \(\Ii{I}{i} = \Ii{J}{i} = \StableR{\Fi{G}{i}}{\Iri{I}{i}}{\Iri{I}{i}}\).
\end{proofpart}

\begin{proofpart}[Hypothesis]
We assume that the theorem holds for any component \(\Fi{P}{j}\) with \(j < i\).
\end{proofpart}

\begin{proofpart}[Step]
Let \(\Fi{P}{i}\) be a stratified component.
For any \(j < i\), component \(\Fi{P}{i}\) either depends on \(\Fi{P}{j}\) or not.
If \(\Fi{P}{i}\) depends on \(\Fi{P}{j}\), then \(\Fi{P}{j}\) is stratified and we get \(\Ii{I}{j} = \Ii{J}{j}\) by the induction hypothesis.
If \(\Fi{P}{i}\) does not depend on \(\Fi{P}{j}\), then \(\Ii{I}{j} \cap \IDPosNeg{\Body{\Fi{G}{i}}} = \Ii{J}{j} \cap \IDPosNeg{\Body{\Fi{G}{i}}} = \emptyset\).
By \Nref{obs:stable:relative:properties}{c}, we get \((\Ii{I}{i},\Ii{J}{i})=\WellRM{\Fi{G}{i}}{\Iri{I}{i},\Iri{J}{i}} = \WellRM{\Fi{G}{i}}{\Iri{I}{i},\Iri{I}{i}}\).
Just as in the base case, by \cref{lem:well-founded:stratified}, we get \(\Ii{I}{i} = \Ii{J}{i} = \StableR{\Fi{G}{i}}{\Iri{I}{i}}{\Iri{I}{i}}\).\qedhere
\end{proofpart}
\end{proof}

\begin{lemma}\label{lem:sequence:refined:external}
Let \(P\) be an aggregate program and \(\Fsqic{P}{\IndexAlt}{i,j}\) be a refined instantiation sequence for \(P\).

Then, for the sequence \(\Fsqic{G}{\IndexAlt}{i,j}\) with \(\Fi{G}{i,j} = \TransA{\Fi{P}{i,j}}\),
we have \(\Ii{E}{i,j} \cap \IDPos{\Body{\Fi{G}{i,j}}} = \emptyset\) for each \((i,j) \in \IndexAlt\)
where \(\Ii{E}{i,j}\) is defined as in \cref{eq:well-founded:sequence:external}.
\end{lemma}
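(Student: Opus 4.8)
The plan is to reduce the claim to an atom-disjointness statement about pairs of refined components and then dispatch it by a two-case analysis on the lexicographic order, reusing \Nref{obs:depend}{b}. First I would unfold the definition of $\Ii{E}{i,j}$ from \eqref{eq:well-founded:sequence:external}, instantiated for the refined sequence $\Fsqic{G}{\IndexAlt}{i,j}$ with the lexicographic ordering on $\IndexAlt$. Since $\IDPos{\Body{\Fi{G}{i,j}}} \subseteq \IDPosNeg{\Body{\Fi{G}{i,j}}}$, intersecting with $\IDPos{\Body{\Fi{G}{i,j}}}$ yields
\[
\Ii{E}{i,j} \cap \IDPos{\Body{\Fi{G}{i,j}}} = \IDPos{\Body{\Fi{G}{i,j}}} \cap \bigcup_{(i,j) < (k,l)} \Head{\Fi{G}{k,l}},
\]
so it suffices to show $\IDPos{\Body{\Fi{G}{i,j}}} \cap \Head{\Fi{G}{k,l}} = \emptyset$ for every $(k,l)$ with $(i,j) < (k,l)$. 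In particular, a lexicographically maximal index makes the union empty, so that boundary case is covered automatically.

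Next I would record the structural fact that a refined instantiation sequence partitions the rules of each outer component $\Fi{P}{i}$ into its inner components $\Fi{P}{i,j}$, whence $\Fi{P}{i,j} \subseteq \Fi{P}{i}$ and, because $\TransASym$ is applied rule-wise, $\Fi{G}{i,j} \subseteq \Fi{G}{i}$. This gives the monotonicity relations $\IDPos{\Body{\Fi{G}{i,j}}} \subseteq \IDPos{\Body{\Fi{G}{i}}}$ and $\Head{\Fi{G}{k,l}} \subseteq \Head{\Fi{G}{k}}$, which let me transport disjointness facts from full components down to refined ones.

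Then comes the two-case analysis on $(i,j) < (k,l)$. If $i < k$, the outer instantiation-sequence property together with antisymmetry of the component dependency relation forces $\Fi{P}{i}$ not to depend on $\Fi{P}{k}$; then \Nref{obs:depend}{a} gives $\IDPosNeg{\Body{\Fi{G}{i}}} \cap \Head{\Fi{G}{k}} = \emptyset$, and the subset relations above collapse this to $\IDPos{\Body{\Fi{G}{i,j}}} \cap \Head{\Fi{G}{k,l}} = \emptyset$. If instead $i = k$ and $j < l$, both components lie inside $\Fi{P}{i}$, whose inner sequence is ordered by positive dependencies only; the inner instantiation-sequence property forces $\Fi{P}{i,j}$ not to positively depend on $\Fi{P}{i,l}$, so \Nref{obs:depend}{b} directly yields $\IDPos{\Body{\Fi{G}{i,j}}} \cap \Head{\Fi{G}{i,l}} = \emptyset$. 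Taking the union over all later $(k,l)$ then closes the argument.

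I expect the only real subtlety to be the positive-versus-full dependency bookkeeping: the between-component case ($i<k$) can only be argued at the coarse level of $\Fi{P}{i}$ and $\Fi{P}{k}$ and must be transported down via the partition, whereas the within-component case ($i=k$) must exploit that the inner ordering is taken with respect to \emph{positive} dependencies alone --- which is precisely why the conclusion restricts to $\IDPos{\Body{\Fi{G}{i,j}}}$ rather than $\IDPosNeg{\Body{\Fi{G}{i,j}}}$. Keeping these two registers aligned, and making sure that \Nref{obs:depend}{b} is applied at the correct granularity in each case, is where I would be most careful.
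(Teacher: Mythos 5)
Your proof is correct and follows essentially the same route as the paper, which simply remarks that the argument for \cref{lem:sequence:external} carries over using \nref{obs:depend}{b}; your write-up is a faithful expansion of that remark, correctly separating the cross-component case (handled via the outer ordering) from the within-component case (where the restriction to positive dependencies is what licenses the weaker conclusion involving only \(\IDPos{\Body{\Fi{G}{i,j}}}\)). The only cosmetic difference is that you invoke \nref{obs:depend}{a} for the case \(i<k\), whereas \nref{obs:depend}{b} already suffices there as well, since not depending at all implies not depending positively.
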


\begin{proof}
The same arguments as in the proof of \cref{lem:sequence:external} can be used but using \Nref{obs:depend}{b} instead.
\end{proof}

\begin{proof}[Proof of \cref{lem:approximate:well-founded}]
Let \(G = \TransA{P}\), \(G' = \TransA{P'}\) with \(P'\) as in \cref{def:approximate}, \((I,J) = \ApproxRM{P}{E}{\Ir{I},\Ir{J}}\), and~\((I',J') = \WellRM{G}{\Ir{I},\Ir{J} \cup \Ir{E}}\).

We first show \(I \subseteq I'\), or equivalently
\begin{align*}
\StableR{G'}{\Ir{I}}{\Ir{J}}       & \subseteq \StableR{G}{\Ir{I}}{\Ir{J} \cup \Ir{E} \cup J'}.\\
\intertext{Because \(G' \subseteq G\), we get}
\StableR{G'}{\Ir{I}}{\Ir{J} \cup \Ir{E} \cup J'} & \subseteq \StableR{G}{\Ir{I}}{\Ir{J} \cup \Ir{E} \cup J'}.\\
\intertext{Because \(\Predicate{\Body{P'}^-} \cap E = \emptyset\), all rules~\(r \in G'\) satisfy \(\Body{r}^- \cap \Ir{E} \neq \emptyset\) and
we obtain}
\StableR{G'}{\Ir{I}}{\Ir{J} \cup J'} & \subseteq \StableR{G'}{\Ir{I}}{\Ir{J} \cup \Ir{E} \cup J'}.\\
\intertext{Because \(\Predicate{\Head{P}} \cap \Predicate{\Body{P}^-} \subseteq E\) and \(\Predicate{\Body{P'}^-} \cap \PredE = \emptyset\), all rules \(r \in G'\) satisfy \(\Body{r}^- \cap J' = \emptyset\) and
we obtain}
\StableR{G'}{\Ir{I}}{\Ir{J}} & = \StableR{G'}{\Ir{I}}{\Ir{J} \cup J'}\\
                       & \subseteq \StableR{G}{\Ir{I}}{\Ir{J} \cup \Ir{E} \cup J'}.
\intertext{To show \(J' \subseteq J\), we use \(I \subseteq I'\) and \cref{prp:stable:relative:monotonicity}:}
\StableR{G}{\Ir{J}}{\Ir{I} \cup I'} & \subseteq \StableR{G}{\Ir{J}}{\Ir{I} \cup I}.\qedhere
\end{align*}
\end{proof}

\begin{proof}[Proof of \cref{thm:approximate:sequence:well-founded}]
We begin by showing \(\ApproxM{\Fsqi{P}{\IndexAlt}{j}} \leq_p \WellM{\TransA{P}}\) and then show \(\ApproxM{\Fsqi{P}{\Index}{i}} \leq_p \ApproxM{\Fsqi{P}{\IndexAlt}{j}}\).
\begin{proofpart}[Property \((\ApproxM{\Fsqi{P}{\IndexAlt}{j}} \leq_p \WellM{\TransA{P}})\)]
Let \(\Ii{\PredE}{j}\), \((\Iri{I}{j},\Iri{J}{j})\), and \((\Ii{I}{j},\Ii{J}{j})\)
be defined as in \cref{def:approximate:sequence:external,def:approximate:sequence:relative,def:approximate:sequence:intermediate}
for the sequence \(\Fsqi{P}{\IndexAlt}{j}\).
Similarly, let \(\Ipi{E}{j}\), \((\Irpi{I}{j},\Irpi{J}{j})\), and \((\Ipi{I}{j},\Ipi{J}{j})\)
be defined as in \cref{eq:well-founded:sequence:intermediate,eq:well-founded:sequence:relative,eq:well-founded:sequence:external}
for the sequence \(\Fsqi{G}{\IndexAlt}{j}\) with \(\Fi{G}{j} = \TransA{\Fi{P}{j}}\).
Furthermore, let \(\Iri{E}{j}\) be the set of all ground atoms over atoms in \(\Ii{\PredE}{j}\).

We first show \(\Ipi{E}{j} \subseteq \Iri{E}{j}\) for each \(j \in \IndexAlt\) by showing that \(\Ipi{E}{j} \subseteq \Iri{E}{j}\).
By \cref{lem:sequence:refined:external}, only negative body literals have to be taken into account:
\begin{align*}
\Ipi{E}{j} & = \IDPosNeg{\Body{\Fi{G}{j}}} \cap \bigcup_{j < k} \Head{\Fi{G}{k}}\\
           & = \IDNeg{\Body{\Fi{G}{j}}} \cap \bigcup_{j < k} \Head{\Fi{G}{k}}.
\intertext{Observe that \(\Predicate{\IDNeg{\Body{\Fi{G}{j}}} \subseteq \Predicate{\IDNeg{\Body{\Fi{P}{j}}}}}\) and \(\Predicate{\Head{\Fi{G}{j}}} \subseteq \Predicate{\Head{\Fi{P}{j}}}\).
Thus, we get}
\Predicate{\Ipi{E}{j}}
& = \Predicate{\IDNeg{\Body{\Fi{G}{j}}} \cap \bigcup_{j < k} \Head{\Fi{G}{k}}}\\
& \subseteq \Predicate{\IDNeg{\Body{\Fi{P}{j}}}} \cap \Predicate{\bigcup_{j < k} \Head{\Fi{P}{k}}}\\
& \subseteq \Predicate{\IDNeg{\Body{\Fi{P}{j}}}} \cap \Predicate{\bigcup_{j \leq k} \Head{\Fi{P}{k}}}\\
& = \Ii{\PredE}{j}.
\end{align*}
It follows that \(\Ipi{E}{j} \subseteq \Iri{E}{j}\).

By \cref{thm:sequence:well-founded},
we have \(\bigsqcup_{j\in\IndexAlt} (\Ipi{I}{j},\Ipi{J}{j}) \leq_p \WellM{\TransA{G}}\).
To show the theorem, we show \((\Ii{I}{j},\Ii{J}{j}) \leq_p (\Ipi{I}{j},\Ipi{J}{j})\).
We omit the full induction proof and focus on the key idea:
Using \cref{lem:approximate:well-founded}, whose precondition holds by construction of \(\Ii{\PredE}{j}\), and \cref{prp:well-founded:relative:monotonicity}, we get
\begin{align*}
\ApproxRM{\Fi{P}{j}}{\Ii{\PredE}{j}}{\Iri{I}{j},\Iri{J}{j}}
 & \leq_p \WellRM{\Fi{G}{j}}{(\Iri{I}{j},\Iri{J}{j}) \sqcup (\emptyset,\Iri{E}{j})}\\
 & \leq_p \WellRM{\Fi{G}{j}}{(\Irpi{I}{j},\Irpi{J}{j}) \sqcup (\emptyset,\Ipi{E}{j})}.
\end{align*}
\end{proofpart}
\begin{proofpart}[Property \((\ApproxM{\Fsqi{P}{\Index}{i}} \leq_p \ApproxM{\Fsqic{P}{\IndexAlt}{i,j}})\)]
We omit a full induction proof for this property because it would be very technical.
Instead, we focus on the key idea why the approximate model of a refined instantiation sequence is at least as precise as the one of an instantiation sequence.

Let \(\Ii{\PredE}{i}\) and \(\Ii{\PredE}{i,j}\) be defined as in~\eqref{def:approximate:sequence:external} for the instantiation and refined instantiation sequence, respectively.
Clearly, we have \(\Ii{\PredE}{i,j} \subseteq \Ii{\PredE}{i}\) for each \((i,j) \in \IndexAlt\).
Observe, that (due to rule dependencies and \nref{obs:stable:relative:properties}{c})
calculating the approximate model of the refined sequence,
using \(\Ii{\PredE}{i}\) instead of \(\Ii{\PredE}{i,j}\) in~\eqref{def:approximate:sequence:intermediate},
would result in the same approximate model as for the instantiation sequence.
With this, the property simply follows from the monotonicity of the stable operator.\qedhere
\end{proofpart}
\end{proof}

\begin{proof}[Proof of \cref{thm:approximate:sequence:simplification}]
Let \(\Ii{\PredE}{i}\), \((\Iri{I}{i},\Iri{J}{i})\), and \((\Ii{I}{i},\Ii{J}{i})\)
be defined as in \cref{def:approximate:sequence:external,def:approximate:sequence:relative,def:approximate:sequence:intermediate}
for the sequence \(\Fsqi{P}{\Index}{i}\).
Similarly, let \(\Ipi{E}{i}\), \((\Irpi{I}{i},\Irpi{J}{i})\), and \((\Ipi{I}{i},\Ipi{J}{i})\)
be defined as in \cref{eq:well-founded:sequence:intermediate,eq:well-founded:sequence:relative,eq:well-founded:sequence:external}
for the sequence \(\Fsqi{G}{\Index}{i}\) with \(\Fi{G}{i} = \TransA{\Fi{P}{i}}\).
Furthermore, we assume w.l.o.g.\ that \(\Index = \{1,\dots,n\}\).

We have already seen in the proof of \cref{thm:approximate:sequence:well-founded}
that the atoms \(\Ipi{E}{i}\) are a subset of the ground atoms over predicates \(\Ii{\PredE}{i}\) and that \((\Ii{I}{i},\Ii{J}{i}) \leq_p (\Ipi{I}{i},\Ipi{J}{i})\).
Observing that ground atoms over predicates \(\Ii{\PredE}{i}\) can only appear negatively in rule bodies,
we obtain that \(\Simp{\Fi{G}{i}}{(\Irpi{I}{i},\Irpi{J}{i}) \cup (\Ipi{I}{i},\Ipi{J}{i}) \cup (\emptyset,\Ipi{E}{i})} = \Simp{\Fi{G}{i}}{(\Irpi{I}{i},\Irpi{J}{i}) \cup (\Ipi{I}{i},\Ipi{J}{i})}\).
By \cref{thm:sequence:simplification,cor:sequence:simplification:superset}, we obtain that \(\bigcup_{i \in \Index}\Simp{\Fi{G}{i}}{(\Iri{I}{i},\Iri{J}{i}) \sqcup (\Ii{I}{i},\Ii{J}{i})}\) and \(\TransA{P}\) have the same well-founded and stable models.
To shorten the notation, we let
\begin{align*}
\Fi{F}{i} & = \Simp{\TransA{\Fi{P}{i}}}{(\Iri{I}{i},\Iri{J}{i}) \sqcup (\Ii{I}{i},\Ii{J}{i})}, &
\Fi{H}{i} & = \Simp{\TransR{\Fi{P}{i}}{\Iri{J}{i} \cup \Ii{J}{i}}}{(\Iri{I}{i},\Iri{J}{i}) \sqcup (\Ii{I}{i},\Ii{J}{i})},\\
F & = \bigcup_{i \in \Index}\Fi{F}{i},\AndT &
H & = \bigcup_{i \in \Index}\Fi{H}{i}.
\end{align*}
With this, it remains to show that programs \(F\) and \(H\) have the same well-founded and stable models.

We let \(J = \bigcup_{i \in \Index}\Ii{J}{i}\).
Furthermore, we let \(\TransA{a}\) be a subformula in \(\Fi{F}{i}\) and \(\TransR{a}{\Iri{J}{i} \cup \Ii{J}{i}}\) be a subformula in \(\Fi{H}{i}\)
where both subformulas originate from the translation of the closed aggregate \(a\).
(We see below that existence of one implies the existence of the other because both formulas are identical in their context.)

Because an aggregate always depends positively on the predicates occurring in its elements,
the intersection between \(\bigcup_{i < k}\Head{\Fi{F}{k}} = \bigcup_{i < k}\Ii{J}{k}\) and the atoms occurring in \(\TransA{a}\) is empty.
Thus the two formulas \(\TransR{a}{\Iri{J}{i} \cup \Ii{J}{i}}\) and \(\TransR{a}{J}\) are identical.
Observe that each stable model of either \(F\) and \(H\) is a subset of \(J\).
By \cref{prp:translation:restricted:properties},
satisfiability of the aggregates formulas as well as their reducts is the same for subsets of \(J\).
Thus, both formulas have the same stable models.
Similarly, the well-founded model of both formulas must be more-precise than \((\emptyset,J)\).
By \cref{prp:translation:restricted:properties},
satisfiability of the aggregate formulas as well as their \FOID-reducts is the same.
Thus, both formulas have the same well-founded model.
\end{proof}

\begin{proof}[Proof of \cref{thm:approximate:sequence:well-founded:stratified}]
Clearly, we have \(\Fi{\PredE}{i}=\emptyset\) if all components are stratified.
With this, the theorem follows from \cref{lem:sequence:well-founded:stratified}.
\end{proof}

We can characterize the result of \cref{fun:ground:rule} as follows.

\newcommand\GRS[2]{\ensuremath{G_{#1,#2}}}

\begin{lemma}\label{prp:ground:rule}
Let \(r\) be a safe normal rule, \((I,J)\) be a finite four-valued interpretation, \(f \in \{ \True, \False\}\), and \(J'\) be a finite two-valued interpretation.

Then, a call to \(\GroundRule{r}{I}{J}{J'}{\iota}{\Body{r}}{f}\) returns the finite set of instances~\(g\) of \(r\) satisfying
\begin{align}
J \models \IDReductP{\Trans{\Body{g}}^\wedge}{I} \text{\ and } (f = \True \text{\ or\ } {\Body{g}}^+ \nsubseteq J').\label{eq:ground:rule:property}
\end{align}
\end{lemma}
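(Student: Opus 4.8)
The plan is to establish a strengthened statement about arbitrary recursive calls and then specialize it. Concretely, I would prove by induction on the number $|L|$ of remaining body literals that, for every substitution $\sigma$ for which $(\Body{r}\setminus L)\sigma$ is ground, the call $\GroundRule{r}{I}{J}{J'}{\sigma}{L}{f}$ returns exactly the set of instances $g=r\theta$, where $\theta$ is a ground substitution extending $\sigma$, such that $J\models\IDReductP{\Trans{L\theta}^\wedge}{I}$ and ($f=\True$ or ${\Body{g}}^+\nsubseteq J'$). The lemma is then the special case $\sigma=\iota$, $L=\Body{r}$, for which $\Body{r}\setminus L=\emptyset$ is vacuously ground, $L\theta=\Body{g}$, and $g$ is ground since every variable of a normal rule is global and hence instantiated (cf.\ \cref{def:safety}).

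A preliminary observation drives the whole argument: as $\Trans{\Body{g}}^\wedge$ is a conjunction of the per-literal translations, its \FOID-reduct distributes over the conjunction, so $J\models\IDReductP{\Trans{\Body{g}}^\wedge}{I}$ holds iff each conjunct is satisfied. Unfolding the definitions of $\TransSym$ and the \FOID-reduct shows that this is equivalent to the conjunction of the following per-literal conditions: $a\in J$ for every positive body atom $a$; $a\notin I$ for every negative literal $\Naf a$; and the comparison holding for every comparison literal. These are precisely the membership tests performed by $\Matches{\cdot}{I,J}{\cdot}$ in \cref{def:matches}, which is the bridge between the semantic condition and the algorithm.

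For the base case $L=\emptyset$, the substitution $\sigma$ grounds all of $\Body{r}$, so $g=r\sigma$ is ground, the condition $J\models\IDReductP{\Trans{L\theta}^\wedge}{I}$ is vacuous, and \cref{fun:ground:rule:result,fun:ground:rule:seen} return $\{g\}$ or $\emptyset$ exactly according to the test in \cref{fun:ground:rule:test}, which coincides with $f=\True$ or ${\Body{g}}^+\nsubseteq J'$ once one notes $\IDPos{\Body{r\sigma}}={\Body{g}}^+$. For the inductive step, $\Select{\sigma}{L}$ in \cref{fun:ground:rule:select} picks some $l\in L$ that is either positive or ground, and I would case on its type. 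If $l$ is a positive atom $a$, then $\Matches{a}{I,J}{\sigma}$ enumerates exactly the refinements $\sigma'$ with $a\sigma'\in J$, and summing the recursive calls of \cref{fun:ground:rule:recurse} (each with $L\setminus\{l\}$, to which the induction hypothesis applies) yields precisely the groundings contributing the conjunct $a\theta\in J$. If $l$ is a ground negative or comparison literal, $\Matches{l}{I,J}{\sigma}$ is $\{\sigma\}$ or $\emptyset$ according to whether $a\sigma\notin I$ (resp.\ the comparison holds), i.e.\ exactly whether the corresponding reduct conjunct is satisfied, and a single recursive call finishes the case. In every case the contribution of $l$ matches one conjunct of the reduct condition while the induction hypothesis accounts for the rest; that $\KwSelect$ never blocks follows from safety, since once no positive literal remains, every surviving negative or comparison literal is already ground.

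Finiteness and termination follow from the finiteness of $(I,J)$ and $J'$: each positive literal admits only finitely many matches (bounded by $|J|$), and the recursion depth is bounded by $|L|$, so the computation tree is finite and the returned set is finite. The main obstacle I anticipate is the bookkeeping of substitutions: one must verify that the compositions $\sigma\circ\sigma'$ produced by matching positive literals range over exactly the ground instances with $a\theta\in J$ (no more, no less) and interact correctly with the bindings already committed in $\sigma$, so that the union over matches reconstructs all and only the intended instances $g=r\theta$. Tied to this is justifying that the per-literal decomposition of the reduct is legitimate for $g$, which in turn relies on safety guaranteeing both that every returned $g$ is ground and that $\KwSelect$ can always make progress.
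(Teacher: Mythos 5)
Your proposal is correct and follows essentially the same route as the paper's proof: induction over the set of remaining body literals, a case split on the kind of literal returned by \(\Select{\sigma}{L}\), safety to guarantee groundness and that selection never blocks, and finiteness of \(J\) to bound the matches. The only difference is presentational — you fold the paper's separate invariants (I1)–(I3) about the already-processed literals into a strengthened inductive statement that conditions the reduct only on the remaining literals \(L\theta\), which is an equivalent bookkeeping of the same argument.
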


\begin{proof}
Observe that the algorithm does not modify \(f\), \(r\), \((I,J)\), and \(J'\).
To shorten the notation below, let \(\GRS{\sigma}{L} = \GroundRule{r}{I}{J}{J'}{\sigma}{L}{f}\).

Calling \(\GRS{\iota}{\Body{r}}\), the algorithm maintains the following invariants in subsequent calls~\(\GRS{\sigma}{L}\):
\begin{align*}
& {(\Body{r} \setminus L)}\sigma^+ \subseteq J,\tag{I1}\label{eq:ground:rule:invariant:positive}\\
& {(\Body{r} \setminus L)}\sigma^- \cap I = \emptyset, \AndT\tag{I2}\label{eq:ground:rule:invariant:negative}\\
& \text{each comparison in \((\Body{r} \setminus L)\sigma\) holds.}\tag{I3}\label{eq:ground:rule:invariant:comparison}
\end{align*}
We only prove the first invariant because the latter two can be shown in a similar way.
We prove by induction.

\begin{proofpart}[Base]
For the call~\(\GRS{\iota}{\Body{r}}\), the invariant holds because the set difference \(\Body{r} \setminus L\) is empty for \(L=\Body{r}\).
\end{proofpart}

\begin{proofpart}[Hypothesis]
We assume the invariant holds for call~\(\GRS{\sigma}{L}\) and show that it is maintained in subsequent calls.
\end{proofpart}

\begin{proofpart}[Step]
Observe that there are only further calls if \(L\) is non-empty.
In \cref{fun:ground:rule:select}, a body literal~\(l\) is selected from~\(L\).
Observe that it is always possible to select such a literal.
In case that there are positive literals in \(L\), we can select any one of them.
In case that there are no positive literals in \(L\),
\(\sigma\) replaces all variables in the positive body of \(r\).
Because \(r\) is safe,
all literals in \(L\sigma\) are ground and we can select any one of them.

In case that \(l\) is a positive literal,
all substitutions \(\sigma'\), obtained by calling \(\Matches{l}{I,J}{\sigma}\) in the following line, ensure
\begin{align*}
l\sigma' & \in J.\\
\intertext{Furthermore, \(\sigma\) is more general than \(\sigma'\).
Thus, we have}
{(\Body{r} \setminus L)\sigma'}^+ & = {(\Body{r} \setminus L)\sigma}^+ \\
                                  & \subseteq J.\\
\intertext{In \cref{fun:ground:rule:recurse}, the algorithm calls \(\GRS{\sigma'}{L'}\) with \(L' = L\setminus \{l\}\).
We obtain}
{(\Body{r} \setminus L')\sigma'}^+ & = {(\Body{r} \setminus L)\sigma'}^+ \cup \{ l\sigma' \} \\
                                   & \subseteq J.
\end{align*}

In case that \(l\) is a comparison or negative literal,
we get \({(\Body{r} \setminus L)}^+ = {(\Body{r} \setminus L \setminus \{l\})}^+\).
Furthermore, the substitution \(\sigma\) is either not changed or is discarded altogether.
Thus, the invariant is maintained in subsequent calls to \KwGroundRule.
\end{proofpart}

We prove by induction over subsets \(L\) of \(\Body{r}\) with corresponding substitution~\(\sigma\) satisfying invariants~\eqref{eq:ground:rule:invariant:positive}--\eqref{eq:ground:rule:invariant:comparison}
that \(G_{L,\sigma}\) is finite and that \(g \in G_{L,\sigma}\) iff \(g\) is a ground instance of \(r\sigma\) that satisfies~\eqref{eq:ground:rule:property}.

\begin{proofpart}[Base]
We show the base case for \(L = \emptyset\).
Using invariant~\eqref{eq:ground:rule:invariant:positive}, we only have to consider substitutions \(\sigma\) with \({\Body{r}}^+ \sigma \subseteq J\).
Because \(r\) is safe and \(\sigma\) replaces all variables in its positive body,
\(\sigma\) also replaces all variables in its head and negative body.
Thus, \(r\sigma\) is ground and the remainder of the algorithm just filters the set \(\{r\sigma\}\)
while the invariants~\eqref{eq:ground:rule:invariant:positive}--\eqref{eq:ground:rule:invariant:comparison} ensure that \(J \models \IDReductP{\Trans{\Body{r\sigma}}^\wedge}{I}\).
The condition in \cref{fun:ground:rule:next} cannot apply because \(L=\emptyset\).
The condition in \cref{fun:ground:rule:seen} discards rules \(r\sigma\) not satisfying \(f=\True\) or \({\Body{r\sigma}}^+ \nsubseteq J'\).
\end{proofpart}

\begin{proofpart}[Hypothesis]
We show that the property holds for \(L \neq \emptyset\)
assuming that it holds for subsets \(L' \subset L\) with corresponding substitutions \(\sigma'\).
\end{proofpart}

\begin{proofpart}[Step]
Because \(L \neq \emptyset\) we only have to consider the case in \cref{fun:ground:rule:next}.

First, the algorithm selects an element \(l \in L\).
We have already seen that it is always possible to select such an element.
Let \(L' = L \setminus \{l\}\).
The algorithm then loops over the set
\begin{align*}
\Sigma & = \Matches{l}{I,J}{\sigma}
\intertext{and, in \crefrange{fun:ground:rule:loop:begin}{fun:ground:rule:loop:end}, computes the union}
\GRS{\sigma}{L} &= \bigcup_{\sigma' \in \Sigma} \GRS{\sigma'}{L'}.
\end{align*}

First, we show that the set \(\GRS{\sigma}{L}\) is finite.
In case \(l\) is not a positive literal, the set \(\Sigma\) has at most one element.
In case \(l\) is a positive literal,
observe that there is a one-to-one correspondence between \(\Sigma\) and the set \(\{ l\sigma' \mid \sigma' \in \Sigma \}\).
We obtain that \(\Sigma\) is finite because \(\{ l\sigma' \mid \sigma' \in \Sigma\} \subseteq J\) and \(J\) is finite.
Furthermore, using the induction hypothesis, each set \(\GRS{\sigma'}{L'}\) in the union \(\GRS{\sigma}{L}\) is finite.
Hence, the set \(\GRS{\sigma}{L}\) returned by the algorithm is finite.

Second, we show \(g \in \GRS{\sigma}{L}\) implies that \(g\) is a ground instance of \(r\sigma\) satisfying~\eqref{eq:ground:rule:property}.
We have that \(g\) is a member of some \(\GRS{\sigma'}{L'}\).
By the induction hypothesis, \(g\) is a ground instance of \(r\sigma'\) satisfying~\eqref{eq:ground:rule:property}.
Observe that \(g\) is also a ground instance of \(r\sigma\) because \(\sigma\) is more general than \(\sigma'\).

Third, we show that each ground instance~\(g\) of \(r\sigma\) satisfying~\eqref{eq:ground:rule:property} is also contained in \(\GRS{\sigma}{L}\).
Because \(g\) is a ground instance of \(r\sigma\),
there is a substitution \(\theta\) more specific than \(\sigma\) such that \(g = r\theta\).
In case that the selected literal \(l \in L\) is a positive literal,
we have \(l\theta \in J\).
Then, there is also a substitution~\(\theta'\) such that \(\theta' \in \Match{l\sigma}{l\theta}\).
Let \(\sigma' = \sigma \circ \theta'\).
By \cref{def:matches}, we have \(\sigma' \in \Sigma\).
It follows that \(g \in \GRS{\sigma}{L}\) because \(g \in \GRS{\sigma'}{L'}\) by the induction hypothesis and \(\GRS{\sigma'}{L'} \subseteq \GRS{\sigma}{L}\).
In the case that \(l\) is not a positive literal, we have \(\sigma \in \Sigma\) and can apply a similar argument.
\end{proofpart}

Hence, we have shown that the proposition holds for \(\GRS{\iota}{\Body{r}}\).
\end{proof}

In terms of the program simplification in \cref{def:simplification},
the first condition in \cref{prp:ground:rule} amounts to checking whether
\(\Head{g} \leftarrow \Trans{\Body{g}}^\wedge \) is in \(\Simp{\Trans{P}}{I,J}\),
which is the simplification of the (ground) \(\FormulasR\)-program \(\Trans{P}\) preserving all stable models between \(I\) and  \(J\).
The two last conditions are meant to avoid duplicates from a previous invocation.
Since \(r\) is a normal rule, translation \TransSym\ is sufficient.

\begin{proof}[Proof of \cref{lem:ground:rule:split}]
The first property directly follows from \Cref{prp:ground:rule} and the definition of \(\GrdSimp{\{r\}}{I,J}\).

It remains to show the second property.
Let \(G\) be the set of all ground instances of \(r\)
and
\begin{align*}
G^{X,Y}_f & = \{ g \in G \mid Y \models \IDReductP{\Trans{\Body{g}}^\wedge}{I}, (f=\True \text{\ or\ } {\Body{g}}^+ \nsubseteq X) \}.
\intertext{By \cref{prp:ground:rule} and the first property, we can reformulate the second property of the proposition as \(G^{\emptyset,J}_{\True} = G^{\emptyset,J'}_{\True} \cup G^{J',J}_{\False}\).
We have}
G^{\emptyset,J}_{\True}  & = \{ g \in G \mid J \models \IDReductP{\Trans{\Body{g}}^\wedge}{I} \}\\
G^{\emptyset,J'}_{\True} & = \{ g \in G \mid J' \models \IDReductP{\Trans{\Body{g}}^\wedge}{I} \}\text{, and}\\
G^{J',J}_{\False}        & = \{ g \in G \mid J \models \IDReductP{\Trans{\Body{g}}^\wedge}{I}, {\Body{g}}^+ \nsubseteq J' \}.
\intertext{Observe that, given \(J' \subseteq J\), we can equivalently write \(G^{\emptyset,J'}_{\True}\) as}
G^{\emptyset,J'}_{\True} & = \{ g \in G \mid J \models \IDReductP{\Trans{\Body{g}}^\wedge}{I}, {\Body{g}}^+ \subseteq J' \}.\\
\intertext{Because \({\Body{g}}^+ \subseteq J'\) and \({\Body{g}}^+ \nsubseteq J'\) cancel each other, we get}
G_{\emptyset,J}  & = G_{\emptyset,J'} \cup G_{J',J}.\qedhere
\end{align*}
\end{proof}

\begin{proof}[Proof of \cref{lem:propagate}]
We first show \cref{lem:propagate:a} and then~\ref{lem:propagate:b}.

\begin{proofprop}[\ref{lem:propagate:a}]
For a rule \(r \in P\), we use \(\Fa{r}\) to refer to the corresponding rule with replaced aggregate occurrences in \(\Fa{P}\).
Similarly, for a ground instance \(g\) of \(r\), we use \(\Fa{g}\) to to refer to the corresponding instance of \(\Fa{r}\).
Observe that \(\Simp{\TransR{P}{J}}{I,J} = \TransR{\GrdSimp{P}{I,J}}{J}\).
We show that \(g \in \GrdSimp{P}{I,J}\) iff \(\Fa{g} \in \GrdSimp{\Fa{P}}{I,J \cup \Ia{J}}\).
In the following, because the rule bodies of \(g\) and \(\Fa{g}\) only differ regarding aggregates and their replacement atoms, we only consider rules with aggregates in their bodies.

\begin{proofpart}[Case \(g \in \GrdSimp{P}{I,J}\)]
Let \(r\) be a rule in \(P\) containing aggregate \(a\),
\(\alpha\) be the replacement atom of form~\eqref{eq:rewrite:aggregate:atom} for \(a\), and
\(\sigma\) be a ground substitution such that \(r\sigma = g\).
We show that for each aggregate~\(a\sigma \in \Body{g}\), we have
\(\AggrEmpty{r}{a}{\Fv{G}}{\sigma} \cup G \neq \emptyset\) and \(J \models \IDReductP{\TransAD{G}{a\sigma}}{I}\)
with \(G = \AggrElem{r}{a}{\Fe{G}}{\sigma}\)
and in turn \(\Ia{J} \models \alpha\sigma\).
Because \(J \models \IDReductP{\TransA{\Body{g}}^\wedge}{I}\),
we get \(J \models \IDReductP{\TransAD{J}{a\sigma}}{I}\).
It remains to show that \(\AggrEmpty{r}{a}{\Fv{G}}{\sigma} \cup G \neq \emptyset\) and \(\TransAD{J}{a\sigma} = \TransAD{G}{a\sigma}\).
Observe that \(\TransAD{J}{a\sigma} = \TransAD{G}{a\sigma}\)
because the set \(G\) obtained from rules in \(\Fe{G}\) contains all instances of elements of \(a\sigma\)
whose conditions are satisfied by \(J\)
while the remaining literals of these rules are contained in the body of \(g\).
Furthermore,
observe that if no aggregate element is satisfied by \(J\),
we get \(\AggrEmpty{r}{a}{\Fv{G}}{\sigma} \neq \emptyset\) because the corresponding ground instance of~\cref{eq:rewrite:empty:rule} is satisfied.
\end{proofpart}

\begin{proofpart}[Case \(\Fa{g} \in \GrdSimp{\Fa{P}}{I,J \cup \Ia{J}}\)]
Let \(\Fa{r}\) be a rule in \(\Fa{P}\) containing replacement atom \(\alpha\) of form~\eqref{eq:rewrite:aggregate:atom} for aggregate~\(a\) and
\(\sigma\) be a ground substitution such that \(\Fa{r}\sigma = \Fa{g}\).
Because \(J \cup \Ia{J} \models \IDReductP{\Trans{\Body{\Fa{g}}}^\wedge}{I}\), we have \(\alpha\sigma \in \Ia{J}\).
Thus, we get that \(J \models \IDReductP{\TransAD{G}{a\sigma}}{I}\) with \(G = \AggrElem{r}{a}{\Fe{G}}{\sigma}\).
We have already seen in the previous case that \(\TransAD{J}{a\sigma} = \TransAD{G}{a\sigma}\).
Thus, \(J \models \IDReductP{\TransAD{J}{a\sigma}}{I}\).
Observing that \(g = r\sigma\) and \(a\sigma \in \Body{g}\), we get \(g \in \GrdSimp{P}{I,J}\).
\end{proofpart}
\end{proofprop}

\begin{proofprop}[\ref{lem:propagate:b}]
This property follows from \cref{lem:propagate:a,prp:translation:restricted:properties,lem:ground:rule:derive}.\qedhere
\end{proofprop}
\end{proof}

\begin{proof}[Proof of \cref{prp:ground:component}]
We prove \cref{prp:ground:component:a,prp:ground:component:b} by showing that the function calculates the stable model by iteratively calling the \(T\) operator until a fixed-point is reached.
\begin{proofprop}[\ref{prp:ground:component:a} and~\ref{prp:ground:component:b}]
At each iteration~\(i\) of the loop starting with 1,
let \(\Iai{J}{i}\) be the value of \(\Propagate{P}{\Fv{G}}{\Fe{G}}{I}{J}\) in \cref{fun:ground:component:propagate},
\(\Fvi{G}{i}\), \(\Fei{G}{i}\), and \(\Fai{G}{i}\) be the values on the right-hand-side of the assignments in \cref{fun:ground:component:empty,fun:ground:component:elements,fun:ground:component:aggregate}, and
\(\Ii{J}{i} = \Head{\Fai{G}{i}}\).
Furthermore, let \(\Ii{J}{0} = \emptyset\).

By \cref{lem:ground:rule:split}, we get
\begin{align*}
\Fvi{G}{i} & = \GrdSimp{\Fv{P}}{\Ir{I},\Ir{J} \cup \Ii{J}{i-1}},\\
\Fei{G}{i} & = \GrdSimp{\Fe{P}}{\Ir{I},\Ir{J} \cup \Ii{J}{i-1}},\\
\Iai{J}{i} & = \Propagate{P}{\Fvi{G}{i}}{\Fei{G}{i}}{I}{J}\CandT\\
\Fai{G}{i} & = \GrdSimp{\Fa{P}}{\Ir{I},\Ir{J} \cup \Iai{J}{i} \cup \Ii{J}{i-1}}.
\intertext{Using \nref{lem:propagate}{b} and
observing the one-to-one correspondence between \(\Fai{G}{i}\) and \(\Simp{\TransA{P}}{\Ir{I},\Ir{J} \cup \Iri{J}{i}}\), we get}
J_i & = \Head{\Fai{G}{i}} \\
    & = \Step{\IDReductP{\TransA{P}}{\Ir{I}}}{\Ir{J} \cup \Ii{J}{i-1}}.
\end{align*}

Observe that, if the loop exits, then the algorithm computes the fixed point of \(\StepRO{\IDReductP{\TransA{P}}{\Ir{I}}}{\Ir{J}}\), i.e.,
\(J = \StableR{\TransA{P}}{\Ir{I}}{\Ir{J}}\).
Furthermore, observe that this fixed point calculation terminates whenever \(\StableR{\TransA{P}}{\Ir{I}}{\Ir{J}}\) is finite.
Finally, we obtain \(\GroundComponent{P}{\Ir{I}}{\Ir{J}} = \Simp{\TransR{P}{\Ir{J} \cup J}}{\Ir{I}, \Ir{J} \cup J}\) using \nref{lem:propagate}{a}.
\end{proofprop}
\begin{proofprop}[\ref{prp:ground:component:c}]
We have seen above that the interpretation~\(J\) is a fixed point of \(\StepRO{\IDReductP{\TransA{P}}{\Ir{I}}}{\Ir{J}}\).
By \nref{lem:propagate}{b} and observing that function \KwAssemble\ only modifies rule bodies,
we get \(\Head{\GroundComponent{P}{\Ir{I}}{\Ir{J}}} = J\).\qedhere
\end{proofprop}
\end{proof}

\begin{proof}[Proof of \cref{thm:ground:program}]
Since the program is finite, its instantiation sequences are finite, too.
We assume w.l.o.g.\ that \(\Index = \{1,\dots,n\}\) for some \(n \geq 0\).
We let \(\Fri{F}{i}\) and \(\Fri{G}{i}\) be the values of variables \(F\) and \(G\) at iteration \(i\) at the beginning of the loop in \crefrange{fun:ground:program:loop:begin}{fun:ground:program:loop:end},
and \(\Fi{F}{i}\) and \(\Fi{G}{i}\) be the results of the calls to \(\KwGroundComponent\) in \cref{fun:ground:program:true,fun:ground:program:possible} at iteration \(i\).

By \cref{prp:ground:component},
we get that \crefrange{fun:ground:program:rewrite}{fun:ground:program:possible} correspond to an application of the approximate model operator
as given in \cref{def:approximate}.
For each iteration \(i\), we get
\begin{align*}
(\Fri{F}{i}, \Fri{G}{i}) & = \bigsqcup_{j < i} (\Fi{F}{i}, \Fi{G}{i}),\\
(\Iri{I}{i}, \Iri{J}{i}) & = (\Head{\Fri{F}{i}}, \Head{\Fri{G}{i}}),\\
(\Ii{I}{i}, \Ii{J}{i})   & = (\Head{\Fi{F}{i}}, \Head{\Fi{G}{i}})\CandT\\
\Fi{G}{i}                & = \Simp{\TransR{\Fi{P}{i}}{\Iri{J}{i} \cup \Ii{J}{i}}}{(\Iri{I}{i},\Iri{J}{i}) \sqcup (\Ii{I}{i},\Ii{J}{i})}
\intertext{whenever \((\Ii{I}{i},\Ii{J}{i})\) is finite.
In case that each \((\Ii{I}{i},\Ii{J}{i})\) is finite, the algorithm returns
in \cref{fun:ground:program:return} the program}
\Fri{G}{n} \cup \Fi{G}{n} & = \bigcup_{i \in \Index} \Fi{G}{i} \\
                          & = \bigcup_{i \in \Index}\Simp{\TransR{\Fi{P}{i}}{\Iri{J}{i} \cup \Ii{J}{i}}}{(\Iri{I}{i},\Iri{J}{i}) \sqcup (\Ii{I}{i},\Ii{J}{i})}.
\end{align*}
Thus, the algorithm terminates iff each call to \KwGroundComponent\ is finite,
which is exactly the case when \(\ApproxM{{(\Fi{P}{i})}_{i\in\Index}}\) is finite.
\end{proof}

\begin{proof}[Proof of \cref{col:main-result}]
This is a direct consequence of \cref{thm:approximate:sequence:simplification,thm:ground:program}.
\end{proof}

\begin{proof}[Proof of \cref{prp:translation:antimonotonicity}]
Let \(\ElemG\) be the set of ground instances of the aggregate elements of \(a\)
and \(\ElemJ \subseteq \ElemG\) be a set such that \(\ElemJ \notjustifies a\).

Due to the antimonotonicity of the aggregate, we get \(\TransAD{a}{\ElemJ} = \bot\).
Thus, the reduct is constant because all consequents in \(\TransA{a}\) as well as \(\IDReductP{\TransA{a}}{I}\) are equal to \(\bot\) and the antecedents in \(\TransA{a}\) are completely evaluated by the reduct.
Hence, the lemma follows by \nref{lem:foid-basic}{b}.\qedhere
\end{proof}

\begin{proof}[Proof of \cref{prp:sum-aggregate-pm}]
We only show \cref{prp:sum-aggregate-pm:a} because the proof of \cref{prp:sum-aggregate-pm:b} is symmetric.

Let \(a = \SumA{\ElemF} \succ b\) and \(a_+ = \SumPA{\ElemF} \succ b'\).
Given an arbitrary two-valued interpretation \(J\), we consider the following two cases:

\begin{proofcase}[\(J \not\models \IDReductP{\TransA{a}}{I}\)]
There is a set \(\ElemJ \subseteq \ElemG\) such that \(\ElemJ \notjustifies a\), \(I \models \Trans{\ElemJ}^\wedge\), and \(J \not\models \TransAD{a}{\ElemJ}^\vee\).
Let \(\widehat{\ElemJ} = \ElemJ \cup \{ e \in \ElemG \mid I \models \Trans{e}, \Weight{\Tuple{e}} < 0 \}\).

Clearly, \(\widehat{\ElemJ} \notjustifies a\) and \(I \models \Trans{\widehat{\ElemJ}}^\wedge\).
Furthermore, \(J \not\models \TransAD{a}{\widehat{\ElemJ}}^\vee\)
because we constructed \(\widehat{\ElemJ}\) so that \(\TransAD{a}{\widehat{\ElemJ}}^\vee\) is stronger than \(\TransAD{a}{\ElemJ}^\vee\)
because more elements with negative weights have to be taken into considerations.

Next, observe that \(\widehat{\ElemJ} \notjustifies a_+\) holds
because we have \(\SumP(\Head{\widehat{\ElemJ}}) = \SumP(\Head{\ElemJ})\) and \(\SumM(\Head{\widehat{\ElemJ}}) = \SumM(T)\),
which corresponds to the value subtracted from the bound of~\(a_+\).
To show that \(J \not\models \TransAD{a_+}{\widehat{\ElemJ}}^\vee\),
we show \(\TransAD{a_+}{\widehat{\ElemJ}}^\vee\) is stronger than \(\TransAD{a}{\widehat{\ElemJ}}^\vee\).
Let \(\ElemC \subseteq \ElemG \setminus \widehat{\ElemJ}\) be a set of elements such that \(\widehat{\ElemJ} \cup \ElemC \justifies a_+\).
Because the justification of \(a_+\) is independent of elements with negative weights,
each clause in \(\TransAD{a_+}{\widehat{\ElemJ}}^\vee\) involving an element with a negative weight is subsumed by another clause without that element.
Thus, we only consider sets~\(\ElemC\) containing elements with positive weights.
Observe that \(\widehat{\ElemJ} \cup \ElemC \justifies a\) holds
because we have \(\Sum(\Head{\widehat{\ElemJ} \cup \ElemC}) = \SumP(\Head{\widehat{\ElemJ} \cup \ElemC}) + \SumM(T)\).
Hence, we get \(J \not\models \TransAD{a_+}{\widehat{\ElemJ}}^\vee\).
\end{proofcase}

\begin{proofcase}[\(J \not\models \IDReductP{\TransA{a_+}}{I}\)]
There is a set \(\ElemJ \subseteq \ElemG\) such that \(\ElemJ \notjustifies a_+\), \(I \models \Trans{\ElemJ}^\wedge\), and \(J \not\models \TransAD{a_+}{\ElemJ}^\vee\).
Let \(\widehat{\ElemJ} = \ElemJ \cup \{ e \in \ElemG \mid I \models \Trans{e}, \Weight{\Tuple{e}} < 0 \}\).

Observe that \(\widehat{\ElemJ} \notjustifies a_+\), \(I \models \Trans{\widehat{\ElemJ}}^\wedge\), and \(J \not\models \TransAD{a_+}{\widehat{\ElemJ}}^\vee\).
As in the previous case, we can show that \(\TransAD{a}{\widehat{\ElemJ}}^\vee\) is stronger than \(\TransAD{a_+}{\widehat{\ElemJ}}^\vee\)
because clauses in~\(\TransAD{a}{\widehat{\ElemJ}}^\vee\) involving elements with negative weights are subsumed.
Hence, we get~\(J \not\models \TransAD{a}{\widehat{\ElemJ}}^\vee\).
\qedhere
\end{proofcase}
\end{proof}

\begin{proof}[Proof of \cref{prp:aggregate-generic-implies}]
Let~\(\ElemG\) be the set of ground instances of~\(E\),
\(a_{\prec} = \Aggregate{f}{\ElemF}{\prec}{b}\) for aggregate relation~\({\prec}\),
and \(J\) be a two-valued interpretation.

\begin{proofprop}[\ref{prp:aggregate-generic-implies:a}]
We show that
\(J \models \IDReductP{\TransA{a_{<}}}{I} \vee \IDReductP{\TransA{a_{>}}}{I}\) implies
\(J \models \IDReductP{\TransA{a_{\neq}}}{I}\).

\begin{proofcase}[\(J \models \IDReductP{\TransA{a_{<}}}{I}\)]
Observe that \(\TransA{a_{\neq}}\) is conjunction of implications of form
\({\Trans{D}}^\wedge \rightarrow \TransAD{a_{\neq}}{\ElemJ}^\vee\) with \(\ElemJ \subseteq \ElemG\) and \(\ElemJ \notjustifies a_{\neq}\).
Furthermore, note that \(\ElemJ \notjustifies a_{\neq}\) implies \(\ElemJ \notjustifies a_{<}\).
Thus, \(\TransA{a_{<}}\) contains the implication \({\Trans{D}}^\wedge \rightarrow \TransAD{a_{<}}{\ElemJ}^\vee\).
Because \(J \models \IDReductP{\TransA{a_{<}}}{I}\), we get \(I \not\models {\Trans{D}}^\wedge\) or \(J \models \TransAD{a_{<}}{\ElemJ}^\vee\).
Hence, the property holds in this case because \(J \models \TransAD{a_{<}}{\ElemJ}^\vee\) implies \(J \models \TransAD{a_{\neq}}{\ElemJ}^\vee\).
\end{proofcase}

\begin{proofcase}[\(J \models \IDReductP{\TransA{a_{>}}}{I}\)]
The property can be shown analogously for this case.
\end{proofcase}
\end{proofprop}

\begin{proofprop}[\ref{prp:aggregate-generic-implies:a}]
This property can be shown in a similar way as the previous one.
We show by contraposition that
\(J \models \IDReductP{\TransA{a_{=}}}{I}\)
implies
\(J \models \IDReductP{\TransA{a_{\leq}}}{I} \wedge \IDReductP{\TransA{a_{\geq}}}{I}\).

\begin{proofcase}[\(J \not\models \IDReductP{\TransA{a_{\leq}}}{I}\)]
Observe that \(\TransA{a_{\leq}}\) is conjunction of implications of form
\({\Trans{D}}^\wedge \rightarrow \TransAD{a_{\leq}}{\ElemJ}^\vee\) with \(\ElemJ \subseteq \ElemG\) and \(\ElemJ \notjustifies a_{\leq}\).
Furthermore, note that \(\ElemJ \notjustifies a_{\leq}\) implies \(\ElemJ \notjustifies a_{=}\).
Thus, \(\TransA{a_{=}}\) contains the implication \({\Trans{D}}^\wedge \rightarrow \TransAD{a_{=}}{\ElemJ}^\vee\).
Because \(J \not\models \IDReductP{\TransA{a_{\leq}}}{I}\), we get \(I \models {\Trans{D}}^\wedge\) and \(J \not\models \TransAD{a_{<}}{\ElemJ}^\vee\)
for some \(\ElemJ \subseteq \ElemG\) with \(\ElemJ \notjustifies a_{\leq}\).
Hence, the property holds in this case because \(J \not\models \TransAD{a_{\leq}}{\ElemJ}^\vee\) implies \(J \not\models \TransAD{a_{=}}{\ElemJ}^\vee\).
\end{proofcase}
\begin{proofcase}[\(J \not\models \IDReductP{\TransA{a_{\geq}}}{I}\)]
The property can be shown analogously for this case. \qedhere
\end{proofcase}
\end{proofprop}

\end{proof}

\begin{proof}[Proof of \cref{prp:sum-aggregate-implies}]
We only consider the case that \(f\) is the \(\Sum\) function because the other ones are special cases of this function.
Furthermore, we only consider the only if directions because we have already established the other directions in~\cref{prp:aggregate-generic-implies}.

Let~\(\ElemG\) be the set of ground instances of~\(E\),
\(T_I = \Tuple{\{ g \in \ElemG \mid I \models \Cond{g} \}}\), and
\(T_J = \Tuple{\{ g \in \ElemG \mid J \models \Cond{g} \}}\).

\begin{proofprop}[\ref{prp:sum-aggregate-implies:a}]
Because \(I \subseteq J\), we get \(\SumP(T_I) \leq \SumP(T_J)\) and \(\SumM(T_J) \leq \SumM(T_I)\).
We prove by contraposition.
\begin{proofcase}[\(J \not\models \IDReductP{\TransA{a_{<}}}{I}\) and \(J \not\models \IDReductP{\TransA{a_{>}}}{I}\)]
We use \cref{prp:translation:monotonicity,prp:sum-aggregate-pm} to get the following two inequalities:
\begin{align*}
\SumM(T_J) & \geq b - \SumP(T_I)\BecauseT{\(J \not\models \IDReductP{\TransA{a_{<}}}{I}\) and}\\
\SumP(T_J) & \leq b - \SumM(T_I)\BecauseT{\(J \not\models \IDReductP{\TransA{a_{>}}}{I}\).}
\intertext{Using \(\SumM(T_J) \leq \SumM(T_I)\), we can rearrange as}
b - \SumP(T_I) & \leq \SumM(T_J)\\
& \leq{} \SumM(T_I)\\
& \leq b - \SumP(T_J).
\intertext{Using \(\SumP(T_I) \leq \SumP(T_J)\), we obtain}
\SumP(T_I) & = \SumP(T_J).
\intertext{Using \(\SumP(T_I) = \SumP(T_J)\), we get}
b - \SumP(T_I) & \leq \SumM(T_J)\\
& \leq \SumM(T_I)\\
& \leq b - \SumP(T_I)
\intertext{and, thus, obtain}
\SumM(T_I) & = \SumM(T_J)\AndT\\
b & = \Sum(T_I)\\
& = \Sum(T_J).
\end{align*}
Observe that this gives rise to an implication in \(\IDReductP{\TransA{a_{\neq}}}{I}\) that is not satisfied by \(J\).
Hence, we get \(J \not\models \IDReductP{\TransA{a_{\neq}}}{I}\).
\end{proofcase}
\end{proofprop}

\begin{proofprop}[\ref{prp:sum-aggregate-implies:b}]
Because \(J \subseteq I\), we get \(\SumP(T_J) \leq \SumP(T_I)\) and \(\SumM(T_I) \leq \SumM(T_J)\).
\begin{proofcase}[\(J \models \IDReductP{\TransA{a_{\leq}}}{I}\) and \(J \models \IDReductP{\TransA{a_{\geq}}}{I}\)]
Using \cref{prp:translation:monotonicity,prp:sum-aggregate-pm}, we get
\begin{align*}
\SumP(T_J) & \geq b - \SumM(T_I)\BecauseT{\(J \models \IDReductP{\TransA{a_{\geq}}}{I}\) and}\\
\SumM(T_J) & \leq b - \SumP(T_I)\BecauseT{\(J \models \IDReductP{\TransA{a_{\leq}}}{I}\).}
\intertext{Observe that we can proceed as in the proof of the previous property because the relation symbols are just flipped.
We obtain}
\SumM(T_I) & = \SumM(T_J)\AndT\\
b & = \Sum(T_I)\\
& = \Sum(T_J).
\end{align*}
We get \(J \models \IDReductP{\TransA{a_{=}}}{I}\)
because for any subset of tuples in \(T_I\) that do not satisfy the aggregate,
we have additional tuples in \(T_J\) that satisfy the aggregate.
\qedhere
\end{proofcase}
\end{proofprop}
\end{proof}

\begin{proof}[Proof of \cref{prp:propagate-generic-iff}]
Let \(a_{\prec} = \Aggregate{f}{\ElemF}{\prec}{b}\) for \({\prec} \in \{ {=}, {\neq} \}\).
\begin{proofprop}[\ref{prp:propagate-generic-iff:a}]
We prove by contraposition that \(J \models \IDReductP{\TransA{a_{\neq}}}{I}\) implies that
there is no set \(X \subseteq T_I\) such that \(f(X \cup T_J) = b\).
\begin{proofcase}[there is a set \(X \subseteq T_I\) such that \(f(X \cup T_J) = b\)]
Let \(\ElemJ = \{ e \in \ElemG \mid I \models \Cond{e}, \Tuple{e} \in X \cup T_J \}\).
Because \(T_J \subseteq T_I\) \(\ElemJ \notjustifies a_{\neq}\).
Furthermore, we have \(I \models \Trans{\ElemJ}^\wedge\).
Observe that \(D\) contains all elements with conditions satisfied by \(J\).
Hence, we get \(J \not\models \TransAD{a_{\neq}}{D}^\vee\) and, in turn, \(J \not\models \IDReductP{\TransA{a_{\neq}}}{I}\).
\end{proofcase}
We prove the remaining direction, again, by contraposition.
\begin{proofcase}[\(J \not\models \IDReductP{\TransA{a_{\neq}}}{I}\)]
There is a set \(D \subseteq \ElemG\) such that \(I \models \Trans{D}^\wedge\) and \(J \not\models \TransAD{a_{\neq}}{D}^\vee\).
Let \(X = \Head{D}\).
Because \(J \not\models \TransAD{a_{\neq}}{D}^\vee\), we get \(f(X \cup T_J) = b\).
\end{proofcase}
\end{proofprop}
\begin{proofprop}[\ref{prp:propagate-generic-iff:b}]
This property can be shown in a similar way as the previous one.\qedhere
\end{proofprop}
\end{proof}
 
\end{document}